\newcommand{\commentalg}[1]{{\small\em\textcolor{gray}{//{#1}}}\hspace{-3mm}}
\newcommand{\yesnum}{\addtocounter{equation}{1}\tag{\theequation}}
\newcommand{\tagnum}[1]{\addtocounter{equation}{1}{\tag{#1)\ \ (\theequation}}}
\newcommand{\customlabel}[2]{%
\protected@write \@auxout {}{\string \newlabel {#1}{{#2}{\thepage}{#2}{#1}{}} }%
\hypertarget{#1}{}
}
\newtheorem{assumption}{Assumption}
\newtheorem{theorem}{Theorem}[section]
\newtheorem{lemma}[theorem]{Lemma}
\newtheorem{definition}[theorem]{Definition}
\newtheorem{corollary}[theorem]{Corollary}
\newtheorem{proposition}[theorem]{Proposition}
\newtheorem{fact}[theorem]{Fact}
\newtheorem{problem}[]{Problem}
\newtheorem*{theorem*}{Theorem}
\newtheorem{remark}[theorem]{Remark}
\newtheorem{example}[theorem]{Example}
\crefname{equation}{Equation}{Equations}
\crefname{figure}{Figure}{Figures}
\crefname{table}{Table}{Tables}
\crefname{section}{Section}{Sections}
\crefname{appendix}{Section}{Sections}
\crefname{algorithm}{Algorithm}{Algorithms}
\crefname{assumption}{Assumption}{Assumptions}
\crefname{theorem}{Theorem}{Theorems}
\crefname{lemma}{Lemma}{Lemmas}
\crefname{definition}{Definition}{Definitions}
\crefname{conjecture}{Conjecture}{Conjectures}
\crefname{corollary}{Corollary}{Corollaries}
\crefname{construction}{Construction}{Constructions}
\crefname{claim}{Claim}{Claims}
\crefname{observation}{Observation}{Observations}
\crefname{proposition}{Proposition}{Propositions}
\crefname{fact}{Fact}{Facts}
\crefname{question}{Question}{Questions}
\crefname{problem}{Problem}{Problems}
\crefname{remark}{Remark}{Remarks}
\crefname{example}{Example}{Examples}
\newcommand{\white}[1]{\textcolor{white}{#1}}
\newcommand{\Ham}{Hamming}
\newcommand{\N}{\mathbb{N}}
\newcommand{\R}{\mathbb{R}}
\newcommand{\cA}{\mathcal{A}}
\newcommand{\cD}{\mathcal{D}}
\newcommand{\cE}{\ensuremath{\mathcal{E}}}
\newcommand{\cF}{\mathcal{F}}
\newcommand{\cL}{\mathcal{L}}
\newcommand{\cP}{\mathcal{P}}
\newcommand{\cX}{\mathcal{X}}
\newcommand{\wt}[1]{\smash{\widetilde{#1}}}
\newcommand{\nfrac}{\nicefrac}
\newcommand{\sfrac}[2]{#1/#2}
\newcommand{\st}{\ensuremath{\mathrm{s.t.}}}
\newcommand{\eps}{\varepsilon}
\renewcommand{\epsilon}{\varepsilon}
\newcommand{\argmin}{\operatornamewithlimits{argmin}}
\newcommand{\Ex}{\operatornamewithlimits{\mathbb{E}}}
\newcommand{\poly}{\mathop{\mbox{\rm poly}}}
\newcommand{\polylog}{\mathop{\mbox{\rm polylog}}}
\def\abs#1{\left| #1 \right|}
\def\sabs#1{| #1 |}
\newcommand{\sinparen}[1]{(#1)}
\newcommand{\sinbrace}[1]{\{#1\}}
\newcommand{\sinsquare}[1]{[#1]}
\newcommand{\inparen}[1]{\left(#1\right)}
\newcommand{\inbrace}[1]{\left\{#1\right\}}
\newcommand{\insquare}[1]{\left[#1\right]}
\newcommand{\inangle}[1]{\left\langle#1\right\rangle}
\newcommand{\ceil}[1]{\left\lceil#1\right\rceil}
\newcommand{\zo}{\{0,1\}}
\newcommand{\negsp}{\hspace{-0.5mm}}
\newcommand{\sexp}[1]{{\hbox{\tiny$($}}#1{\hbox{\tiny$)$}}}
\newcommand{\wh}[1]{\widehat{#1}}
\newcommand{\hD}{\widehat{D}}
\newcommand{\hS}{\smash{\widehat{S}}}
\newcommand{\hX}{\widehat{X}}
\newcommand{\hY}{\widehat{Y}}
\newcommand{\hZ}{\smash{\widehat{Z}}}
\newcommand{\hx}{\widehat{x}}
\newcommand{\hy}{\widehat{y}}
\newcommand{\hz}{\widehat{z}}
\newcommand{\evG}{\mathscr{G}}
\newcommand{\evJ}{\mathscr{J}}
\newcommand{\folder}{./figs/}
\newcommand{\akm}{{\bf AKM}}
  \newcommand{\prog}[1]{Program~\eqref{#1}}
  \newcommand{\errtolerant}{Program~\ref{prog:errtolerant}}
  \newcommand{\Stackrel}[2]{\stackrel{\mathmakebox[\widthof{\ensuremath{#2}}]{#1}}{#2}}
  \newcommand{\midsepremove}{\aboverulesep = 0mm \belowrulesep = 0mm}
  \newcommand{\midsepdefault}{\aboverulesep = 0.605mm \belowrulesep = 0.984mm}
  \newif\ifconf
  \newif\ifexpand
  \newcommand{\leftmarginINTERNAL}{18pt}
  \renewcommand{\leftmarginINTERNAL}{18pt}
  \renewcommand{\leftmarginINTERNAL}{\leftmargin}
  \newcommand{\itemsepINTERNAL}{0pt}
  \renewcommand{\itemsepINTERNAL}{0pt}
  \renewcommand{\itemsepINTERNAL}{\itemsep}
  \title{\bf Fair Classification with Adversarial Perturbations}
  \author{L. Elisa Celis \\ Yale University \and Anay Mehrotra \\ Yale University \and Nisheeth K. Vishnoi \\ Yale University}
\begin{document}

  \maketitle

  \begin{abstract}

    We study  fair classification in the presence of an omniscient adversary that, given an $\eta$, is allowed to choose an arbitrary $\eta$-fraction of the training samples and arbitrarily perturb  their protected attributes.
    The motivation comes from settings in which protected attributes can be incorrect due to strategic misreporting, malicious actors, or errors in imputation; and prior approaches that make stochastic or independence assumptions on  errors may not satisfy their guarantees in this adversarial setting.
    Our main contribution is an optimization framework to learn fair classifiers in this adversarial setting that comes with provable guarantees on accuracy and fairness.
    Our framework works with multiple and non-binary protected attributes, is designed for the large class of linear-fractional fairness metrics, and can also handle perturbations besides protected attributes.
    We  prove  near-tightness of our framework's guarantees for natural hypothesis classes: no algorithm can have significantly better accuracy and any algorithm with better fairness must have lower accuracy.
    Empirically, we evaluate the classifiers produced by our framework for statistical rate on real-world and synthetic datasets for a family of adversaries.
  \end{abstract}

  \newpage
  \setcounter{tocdepth}{2}
  \tableofcontents
  \addtocontents{toc}{\protect\setcounter{tocdepth}{2}}

  \newpage

  \section{Introduction}\label{sec:intro}

  It is increasingly common to deploy classifiers to assist in decision-making in applications such as criminal recidivism~\cite{northpointe2012compas}, credit lending~\cite{dedman1988color}, and predictive policing~\cite{hvistendahl2016can}.
  Hence, it is imperative to ensure that these classifiers are fair with respect to protected attributes such as gender and race.
  Consequently, there has been extensive work on approaches for fair classification~\cite{hardt2016equality, fish2016confidence, goh2016satisfying, chouldechova2017fair, ZafarVGG17, zafar17, menon2018the, DworkIKL18, goel2018non, AgarwalBD0W18, celis2019classification}.
  At a high level, a classifier $f$ is said to be ``fair'' with respect to a protected attribute $Z$ if it has a similar ``performance'' with respect to a given metric on different protected groups defined by $Z$.
  Given a fairness metric and a hypothesis class $\cF$, fair classification frameworks consider the problem of finding a classifier $f^\star\in \cF$ that   maximizes accuracy constrained to being fair with respect to the given fairness metric (and $Z$)~\cite{barocas-hardt-narayanan}.
  To specify  fairness constraints, these approaches need  protected attributes of  training data to be known.
  However,  protected attributes can be erroneous
  for various reasons; there could be uncertainties during  data collection or data cleaning process~\cite{council2004eliminating,saundersAccuracyOfRecordedEthnicInfo}, or the attributes could be strategically misreported~\cite{luh2019not}.
  Further, protected attributes may be missing entirely, as is often the case for racial and ethnic information in healthcare~\cite{council2004eliminating}
  or when data is scraped from the internet as with many image datasets~\cite{deng2009imagenet,lfw_data,fddbTech}.
  In these cases, protected attributes can be ``imputed''
  ~\cite{coldman1988classification,KallusMZ20,ChenKMSU19}, but this
  can also introduce errors~\cite{BuolamwiniG18}; further, imputation by machine-learning-based methods is known to be fragile to imperceptible changes in the inputs~\cite{goodfellow2014explaining} and to have correlated errors across samples~\cite{muthukumar2018understanding}.
  Perturbations in protected attributes, regardless of origin, have been shown to have adverse effects on fair classifiers, affecting their performance on both accuracy and fairness metrics; see e.g., \cite{ChenKMSU19,bagdasaryan2019differential,SolansB020}.

  Towards addressing this problem, several recent works have developed  fair classification algorithms for various models of errors in the protected attributes.
  \cite{LamyZ19} consider an extension of the ``mutually contaminated learning model''~\cite{scott2013classification} where, instead of observing samples from the ``true'' joint distribution,  distributions of observed group-conditional distributions are stochastic mixtures of their true counterparts.
  \cite{awasthi2020equalized} consider a binary protected attribute and  Bernoulli  perturbations that are independent of the labels (and of each other).
  \cite{celis2020fairclassification} consider the setting where each sample's protected attribute is independently flipped to a different value with a known probability.
  \cite{wang2020robust} considers two approaches to deal with perturbations.
  In their ``soft-weights'' approach, %
  they assume perturbations follow a fixed distribution and one has access to an auxiliary data containing independent draws of both the true and perturbed protected attributes.
  In their distributionally robust approach, %
  for each protected group, its feature and label distributions in the true data and the perturbed data are a known total variation distance away from each other.
  Finally, in an independent work, \cite{konstantinov2021fairness} study fair classification under  the Malicious noise model~\cite{valiant1984theory, KearnsL93} in which a fraction of the training samples are chosen uniformly at random, and can then be perturbed arbitrarily.

  \ifconf{\bf Our perturbation model.}\else\paragraph{Our perturbation model.}\fi

  We extend this line of work by studying fair classification under the following worst-case adversarial perturbation model:
  Given an $\eta>0$, after the training samples are independently drawn from a true distribution $\cD$, the adversary with unbounded computation power sees all the samples and can use this information to choose any $\eta$-fraction  of the samples and perturb their protected attributes arbitrarily.
  This model is a straightforward adaptation of the perturbation model of \cite{Hamming1950error}  to the fair classification setting and we refer to it as the $\eta$-\Ham{} model.
  Unlike  perturbation models studied before, this model can capture settings where the perturbations are strategic or arbitrarily correlated as can arise in the data collection stage or during imputation of the protected attributes, %
  and in which the errors cannot be ``estimated'' using auxiliary data.
  In fact, under this perturbation model, the  classifiers outputted by prior works can violate the fairness constraints by a large amount {or have an accuracy that is significantly lower than the accuracy of $f^\star$; see \ifexpand\cref{sec:empirics:synthetic,sec:prior_work_comparison}\else\cref{sec:empirics,sec:prior_work_comparison}\fi.}
  Taking these perturbed samples, a fairness metric $\Omega$, and a desired fairness threshold $\tau$ as input, the goal is to learn a classifier $f$ with the maximum accuracy with respect to the true distribution $\cD$ subject to having a fairness value, $\Omega_\cD(f)$, of at least $\tau$ with respect to the true distribution $\cD$.

  \ifconf{\bf Our contributions.}\else\paragraph{Our contributions.}\fi

  We present an optimization framework (\cref{def:errtolerantprogram}) that outputs fair classifiers for the $\eta$-\Ham{} model and comes with provable guarantees on accuracy and fairness (\cref{thm:main_result}).
  Our framework works for multiple and non-binary protected attributes, and the large class of linear-fractional fairness metrics (that capture most fairness metrics studied in the literature); see \cref{def:performance_metrics} and \cite{celis2019classification}.
  The framework provably outputs a classifier whose accuracy is within $2\eta$ of the accuracy of $f^\star$ and which violates the fairness constraint by at most $O(\nfrac\eta\lambda)$  additively (\cref{thm:main_result}), under the mild assumption that the {``performance'' of $f^\star$ on each protected group} is larger than a known constant $\lambda>0$ (\cref{asmp:1}).
  \cref{asmp:1} is drawn from the work of  \cite{celis2020fairclassification} for fair classification with stochastic perturbations.
  While it is not clear if the assumption is necessary in their model,
  we show that \cref{asmp:1} is necessary for fair classification in the  $\eta$-\Ham{} model:
  If $\lambda$ is not bounded away from $0$, then no algorithm can give a {non-trivial guarantee} on {\em both} accuracy and fairness value of the output classifier (\cref{thm:no_stable_classifier}).
  Moreover, we prove the near-tightness of our framework's guarantee under \cref{asmp:1}:
  No algorithm can guarantee to output a classifier with accuracy closer than $\eta$ to that of $f^\star$ and any algorithm that  violates the fairness constraint by less than $\nfrac{\eta}{(20\lambda)}$ additively has an accuracy at most $\nfrac{19}{20}$ (\cref{thm:imposs_under_assumption_sr_2,thm:imposs_under_assumption_sr}).
  Finally, we also extend our framework's guarantees to the Nasty Sample Noise model (\cref{sec:main_result:for_stronger_model}).
  The Nasty Sample Noise model is a generalization of the $\eta$-Hamming model, which was studied by	\cite{bshouty2002pac} in the context of PAC learning (without any fairness considerations), where the adversary can choose any $\eta$-fraction of the samples, and can arbitrarily perturb both their labels and features.

  We implement our framework for logistic loss function with linear classifiers and evaluate its performance on COMPAS~\cite{Angwin2016a}, Adult~\cite{adult}, and a synthetic dataset (\cref{sec:empirics}).
  We generate perturbations of these datasets  admissible in the $\eta$-\Ham{} model
  and compare the performance of our approach to several  baselines~\cite{LamyZ19,awasthi2020equalized,wang2020robust,celis2020fairclassification,konstantinov2021fairness}
  with statistical rate and false-positive rate as fairness metrics.\footnote{Let $q_\ell(f, {\rm SR})$ (respectively $q_\ell(f, {\rm FPR})$) be the fraction of positive predictions (respectively false-positive predictions) by $f$ in the $\ell$-th protected group.
  $f$'s statistical rate (respectively false-positive rate) is the ratio of the minimum value to the maximum value of $q_\ell(f, {\rm SR})$ (respectively $q_\ell(f, {\rm FPR})$) over all protected groups.
  }
  On the synthetic dataset, we compare against a method developed for fair classification under stochastic  perturbations~\cite{celis2020fairclassification} and demonstrate the comparative strength of the $\eta$-\Ham{} model;
  our results show that \cite{celis2020fairclassification}'s framework achieves a significantly lower accuracy than our framework for the same  statistical rate.
  Empirical results on COMPAS and Adult show that the classifier output by our framework can attain better statistical rate and false-positive rate than the accuracy maximizing classifier on the true distribution, with a small loss in accuracy.
  Further, our framework has a similar (or better) fairness-accuracy trade-off compared to all baselines we consider in a variety of settings, and \mbox{is not dominated by any other approach (\cref{fig:adversarial_noise_main_body,fig:fpr_main_results,fig:adult:adversarial_noise_appendix}).}

  \ifconf{\bf Techniques.}\else\paragraph{Techniques.}\fi
  The starting point of our optimization framework (\cref{def:errtolerantprogram}) is the ``standard'' optimization program for fair classification in the {\em absence} of any perturbations: Given a fairness metric $\Omega$ and a desired fairness threshold $\tau$ as input, find $f^\star\in \cF$ that maximizes the accuracy on the {\em given data} $\hS$ constrained to a fairness value at least $\tau$ on the given data.
  However, when $\hS$ is given to us by an $\eta$-\Ham{} adversary, this standard program, which imposes the fairness constraints with respect to the perturbed data $\hS$, may output a classifier with an accuracy/fairness-value worse than that of $f^\star$ when measured with respect to $\cD$.
  But, observe that the difference in accuracies of a classifier when measured with respect to the given data $\hS$ and data sampled from $\cD$ is at most $\eta$.
  Thus, if $f^\star\in \cF$ is feasible for the standard optimization program, this observation (used twice) implies that the accuracy of the output classifier measured with respect to $\cD$ is within $2\eta$ of the accuracy of $f^\star$ measured with respect $\cD$  (\cref{eq:2eta_proof}).
  However, without any modifications, the classifier output by the standard optimization program could still have a fairness value much lower than $\tau$ with respect to $\cD$ (see \cref{example:ham_adv_can_reduce_fairness}).
  To bypass this, we introduce the notion of $s$-stability that  allows us to lower bound the fairness value of a classifier with respect to $\cD$ given its fairness value on $\hS$.
  Roughly, $f\in \cF$ is said to be $s$-stable with respect to a fairness metric if for any $\hS$ that is generated by an $\eta$-\Ham{} adversary, the ratio of fairness value of $f$ with respect to $\cD$ and with respect to $\hS$ is between $s$ and $s^{-1}$ (see  \cref{def:stable_class}).
  It follows that any $s$-stable classifier that has fairness value $\tau^\prime>0$ with respect to  $\hS$, has fairness value at least $s\cdot \tau^\prime$ with respect to  $\cD$.
  Hence, an optimization program that ensures that all feasible classifiers  are $s$-stable (for a suitable choice of $s$) and have fairness value at least $\tau^\prime>0$ with respect to $\hS$, comes with a guarantee that any feasible classifier has a fairness value at least $s\cdot \tau^\prime$ (with respect to $\cD$).
  If such an optimization program could further ensure that $f^\star$ is feasible for it, then by arguments presented above, the classifier output by this optimization program would satisfy required guarantees on both fairness and accuracy (\cref{lem:opt_solution_satisfies_guarantees}).
  The issue is that, to directly enforce $s$-stability, one needs to compute the fairness values of classifiers with respect to $\cD$, but this is not possible in the absence of samples from $\cD$.
  We overcome this by present a ``proxy'' constraint on the classifier (\cref{eq:lower_bound}) that involves  only $\hS$ and ensures that any classifier that satisfies it is $s$-stable.
  Moreover, $f^\star$ satisfies this constraint under \cref{asmp:1}.
  Overall, modifying \prog{prog:target_fair} to include this constraint (\cref{eq:lower_bound}) with a suitable value of $s$, and setting an appropriate fairness threshold $\tau$ so that $f^\star$ remains feasible, leads us to our framework.

  \section{Related Work}
  In this section, we situate this paper in relation to lines of work which also consider fair classification with perturbed protected attributes; additional related work (e.g., on fair classification in the absence of protected attributes) are presented in \cref{sec:other_related_work}.

  \cite{LamyZ19} give a framework which comes with provable guarantees on the accuracy and fairness value of output classifiers for a binary protected attribute and either statistical rate or equalized-odds fairness metrics.
  \cite{awasthi2020equalized} identify conditions on the distribution of perturbations under which the post-processing algorithm of \cite{hardt2016equality} improves the fairness value of the accuracy-maximizing classifier with respect to equalized-odds on the true distribution with a binary protected attribute.
  \cite{wang2020robust} consider a non-binary protected attribute.
  In their ``soft-weights'' approach, they give provable guarantees on the accuracy (with respect to  $f^\star$) and fairness value of the output classifier  {\em in expectation} and
  in their distributionally robust approach, they give provable guarantees on the fairness value of the output classifiers.\footnote{\cref{sec:comparison_to_wang} gives an example where \cite{wang2020robust}'s distributionally robust approach outputs a classifier whose accuracy is arbitrarily close to $\nfrac12$.}
  \cite{celis2020fairclassification} give provable guarantees on the accuracy and fairness value of output classifiers for multiple non-binary protected attributes and the class of linear-fractional metrics.
  All of the aforementioned works~\cite{LamyZ19,awasthi2020equalized,wang2020robust,celis2020fairclassification} consider stochastic perturbation models, which are weaker than the model considered in this paper.
  Further, compared  to \cite{LamyZ19,awasthi2020equalized}, our approach (and that of \cite{celis2020fairclassification}) can handle multiple categorical protected attributes and multiple linear-fractional metrics (which include statistical rate and can ensure equalized-odds constraints).
  Compared to \cite{awasthi2020equalized,wang2020robust}, our work (and those of \cite{LamyZ19,celis2020fairclassification}) give provable guarantees on the accuracy (with respect to  $f^\star$) and fairness value of output classifiers {\em with high probability}.
  In another related work, \cite{konstantinov2021fairness} give an algorithm for a binary protected attribute which, under the realizable assumption (i.e., assuming there exists a classifier with perfect accuracy), outputs a classifier with  guarantees on accuracy and fairness value with respect to the true-positive rate fairness metric.
  They study the Malicious noise model, which can modify a uniformly randomly selected subset of samples arbitrarily; this is weaker than the Nasty Sample Noise  model~\cite{bshouty2002pac,Auer16}, and hence, than the model considered in this paper.
  Further, our framework works without the realizable assumption (i.e., in the agnostic setting), can handle multiple and non-binary protected attributes, and can ensure fairness with respect to multiple linear-fractional metrics (which include true-positive rate).

  Another line of work has studied PAC learning in the presence of adversarial (and stochastic) perturbations in the data, without considerations of fairness~\cite{KearnsL93,AngluinL87,bshouty2002pac,cesa1999sample, auer1998line};
  see also \cite{Auer16}.
  In particular, \cite{bshouty2002pac} study PAC learning (without fairness constraints) under the Nasty Sample Noise model.
  They use the empirical risk minimization framework (see, e.g., \cite{shalev2014understanding}) run on the perturbed samples to output a classifier.
  Our framework \errtolerant{} finds empirical risk minimizing classifiers that satisfy fairness constraints on the perturbed data, and that are also ``stable'' for the given fairness metric.
  While both frameworks show that the accuracy of the respective output classifiers is within $2\eta$ of the respective optimal classifiers when the data is unperturbed, the optimal classifiers can be quite different.
  For instance, while \cite{bshouty2002pac}'s framework is guaranteed to output a classifier with high accuracy, it can perform poorly on fairness metrics; see \cref{sec:empirics,sec:bshouty_poor}.

  \section{Model}\label{sec:model}
  Let the data domain be $D\coloneqq \cX\times\zo\times [p],$ where  $\cX$ is the set of non-protected features, $\zo$ is the set of binary labels, and $[p]$ is the set of $p$ protected attributes.
  Let $\cD$ be a distribution over $D$.
  Let $\cF\subseteq \zo^{\cX\times [p]}$ be a hypothesis class of binary classifiers.
  For  $f\in \cF$, let $${\rm Err}_\cD(f)\coloneqq \Pr\nolimits_{(X,Y,Z)\sim \cD}[f(X,Z)\neq Y]$$ denote $f$'s predictive error on draws from $\cD$.
  In the vanilla classification problem, the learner $\cL$'s goal is to find a classifier with  minimum error, i.e., to solve
  $$\min\nolimits_{f\in \cF} {\rm Err}_\cD(f).$$
  In the fair classification problem, the learner is restricted to pick classifiers that have a ``similar performance'' conditioned on $Z=\ell$ for all $\ell\in [p]$.
  We consider the following  class of metrics.
  \begin{definition}[\bf Linear/linear-fractional metrics \cite{celis2019classification}]\label{def:performance_metrics}
    Given  $f\in \cF$ and two events $\cE(f)$ and $\cE^\prime(f)$, that can depend on $f$,
    define the performance of $f$ on $Z=\ell$ ($\ell\in [p]$) as
    $q_{\ell}(f) \coloneqq \Pr\nolimits_\cD[\cE(f)\mid \cE^\prime(f), Z=\ell].$
    If $\cE^\prime$ depends on $f$, then $q_{\ell}(f)$ is said to be linear-fractional, otherwise linear.
  \end{definition}
  \noindent \cref{def:performance_metrics} captures most of the performance metrics considered in the literature.
  For instance, for $\cE\coloneqq (f=1)$ and $\cE^\prime\coloneqq \emptyset$, we get statistical rate (a linear metric).\footnote{We overload the notation $f$ to denote both the classifier as well as its prediction, and the terms, statistical rate and false-positive rate, to refer to both the linear/linear-fractional metric $q$ and the resulting fairness metric $\Omega$.}
  For $\cE\coloneqq (f=1)$ and $\cE^\prime\coloneqq (Y=0)$, we get false-positive rate (also a linear metric).
  For $\cE\coloneqq (Y=0)$ and $\cE^\prime\coloneqq (f=1)$, we get  false-discovery rate (a linear-fractional metric).
  Given a performance metric $q$, the corresponding fairness metric is defined as
  \begin{align*}
    \Omega_\cD(f)\coloneqq \frac{\min_{\ell\in [p]} q_\ell(f)}{\max_{\ell\in [p]} q_\ell(f)}.\yesnum
  \end{align*}
  When $\cD$ is the empirical distribution over samples $S$, we use $\Omega(f,S)$ to denote $\Omega_\cD(f)$.
  The goal of fair classification, given a fairness metric $\Omega$ and a threshold $\tau\in(0,1]$, is to (approximately) solve:
  \begin{align*}
    \min\nolimits_{f\in \cF} {\rm Err}_\cD(f) \quad \st,\quad \Omega_{\cD}(f)\geq \tau.\yesnum\label{prog:target_fair}
  \end{align*}
  \noindent If samples from $\cD$ are available, then one could try to solve this program.
  However, as discussed in \cref{sec:intro}, we do not have access to the {\em true} protected attribute $Z$, but instead only see a perturbed version, $\hZ\in [p]$, generated by the following adversary.

  \ifexpand\paragraph{$\eta$-\Ham{} model.}\else{\bf $\eta$-\Ham{} model.}\fi
  Given an $\eta\in [0,1]$, let $\cA(\eta)$ denote the set of all adversaries in the $\eta$-\Ham{} model.
  Any adversary $A\in \cA(\eta)$ is a randomized algorithm with {\em unbounded} computation resources that knows the true distribution $\cD$ and the algorithm of the learner $\cL$.
  In this model, the learner $\cL$ queries $A$ for $N\in \N$ samples from $\cD$ {\em exactly once}.
  On receiving the request, $A$ draws $N$ independent samples $S\coloneqq {\sinbrace{(x_i,y_i,z_i)}_{i\in [N]}}$ from $\cD$,
  then $A$ uses its knowledge of $\cD$ and $\cL$ to choose an arbitrary $\eta\cdot N$ samples ($\eta\in [0,1]$) and perturb their protected attribute arbitrarily to generate $\hS\coloneqq {\sinbrace{(x_i,y_i,\hz_i)}_{i\in [N]}}$.
  Finally, $A$ gives these perturbed samples $\hS$ to $\cL$.\noindent

  \ifexpand\paragraph{Learning model.}\else{\bf Learning model.}\fi
  Given $\hS$ and the $\eta$, the learner $\cL$ would like to (approximately) solve Program~\eqref{prog:target_fair}.
  \begin{definition}[\textbf{($\eps,\nu$)-learning}]\label{def:learning_model}
    Given bounds on error $\eps\in (0,1)$ and constraint violation $\nu\in (0,1)$,
    a learner $\cL$ is said to $(\eps, \nu)$-learn a hypothesis class $\cF\subseteq \zo^{\cX\times [p]}$ with perturbation rate $\eta\in [0,1]$
    and confidence $\delta\in (0,1)$
    if for all
    \begin{itemize}[itemsep=1pt]
      \item
      distributions $\cD$ over $\cX\times \zo \times [p]$ and
      \item
      adversaries $A\in \cA(\eta)$,
    \end{itemize}
    there exists a threshold $N_0(\eps,\nu,\delta,\eta)\in \N$, such that
    with probability at least $1-\delta$ over the draw of $N\geq N_0(\eps,\nu,\delta,\eta)$ iid samples $S\sim \cD$,
    given $\eta$ and the perturbed samples $\hS\coloneqq A(S)$,
    $\cL$ outputs $f\in \cF$ that satisfies
    \begin{align*}
      {\rm Err}_\cD(f) - {\rm Err}_\cD(f^\star) \leq \eps\quad \text{and}\quad \Omega_{\cD}(f) \geq \tau -\nu,
    \end{align*}
    where $f^\star$ is the optimal solution of Program~\eqref{prog:target_fair} (i.e., $f^\star\coloneqq \argmin\nolimits_{f\in \cF} {\rm Err}_\cD(f),\ \st,\ \Omega_{\cD}(f) \geq \tau$).
  \end{definition}
  \noindent Given a finite number of perturbed samples,
  \cref{def:learning_model} requires the learner to output a classifier that violates the fairness constraints additively by at most $\nu$ and that has a predictive error at most $\eps$ smaller than that of $f^\star$, with probability at least $1-\delta$.
  Like PAC learning~\cite{valiant1984theory}, for a given hypothesis class $\cF$, \cref{def:learning_model} requires the learner to succeed on all distributions $\cD$.

  \begin{problem}[\bf Fair classification with adversarial perturbations]\label{problem1}
    Given
    a hypothesis class $\cF\subseteq \zo^{\cX\times [p]}$,
    a fairness metric $\Omega$,
    a threshold $\tau\in [0,1]$,
    a perturbation rate $\eta\in [0,1]$, and
    perturbed samples $\wh{S}$,
    the goal is to ($\eps,\nu$)-learn $\cF$ for small $\eps,\nu\in (0,1)$.
  \end{problem}

  \section{Theoretical Results}\label{sec:theory}
  \ifexpand\subsection{An Optimization Framework with Provable Guarantees}\fi

  In this section, we present our results on learning fair classifiers under the $\eta$-\Ham{} model.
  Our optimization framework (\errtolerant{}) is a careful modification of Program~\eqref{prog:target_fair}.
  The main difficulty is that, unlike Program~\eqref{prog:target_fair}, it only has access to the perturbed samples $\hS$,
  and the ratio of a classifier's fairness with respect to the true distribution $\cD$ and with respect to $\hS$ can be arbitrarily small (see \cref{example:ham_adv_can_reduce_fairness} in \cref{sec:ham_captures}).
  To overcome this, our framework ensures that all feasible classifiers are ``stable'' (\cref{def:stable_class}).
  Then, as mentioned in \cref{sec:intro}, imposing the fairness constraint on $\hS$ guarantees (approximate) fairness on the true distribution $\cD$.
  The accuracy guarantee follows by ensuring that the optimal solution of Program~\eqref{prog:target_fair},
  $f^\star\in \cF$,
  is feasible for our framework.
  To ensure this, we require \cref{asmp:1} that also appeared in \cite{celis2020fairclassification}.
  \begin{assumption}\label{asmp:1}
    {There is a known constant $\lambda>0$ such that
    $\min_{\ell\in [p]}\Pr\nolimits_{\cD}[\cE(f^\star), \cE^\prime(f^\star), Z=\ell]\geq \lambda.$}
  \end{assumption}
  \noindent It can be shown that this assumption implies that  $\lambda$ is also a lower bound on the performances $q_1(f^\star),\dots, q_p(f^\star)$ that depend on $\cE$ and $\cE^\prime$.
  We expect $\lambda$ to be a non-vanishing positive constant in  applications.
  For example, if $q$ is statistical rate, the minority protected group makes at least 20\% of the population (i.e., $\min_{\ell\in [p]}\Pr_\cD[Z=\ell]\geq 0.2$), and for all $\ell\in [p]$, $\Pr[f^\star=1\mid Z=\ell]\geq \nfrac12$, then $\lambda\geq 0.1$.
  In practice, $\lambda$ is not known exactly, but it can be set based on the context (e.g., see \cref{sec:empirics} and \cite{celis2020fairclassification}).
  We show that \cref{asmp:1} is necessary for the $\eta$-\Ham{} model (see \cref{thm:no_stable_classifier}).
  \begin{definition}[\bf Error-tolerant program]\label{def:errtolerantprogram}
    Given a fairness metric $\Omega$ and corresponding events $\cE$ and $\cE^\prime$ (as in \cref{def:performance_metrics}),
    a perturbation rate $\eta\in [0,1]$, and constants $\lambda,\Delta\in (0,1]$, %
    we define the error-tolerant program for perturbed samples $\hS$, whose empirical distribution is $\smash{\hD}$, as
    \begin{align*}
      \min\nolimits_{f\in \cF}&\quad {\rm Err}_{\hD}(f),\quad\tagnum{ErrTolerant}\customlabel{prog:errtolerant}{(ErrTolerant)}\\
      \st,&\quad
      \Omega_{}(f, \hS) \geq \tau \cdot \inparen{\frac{1- (\eta+\Delta)/\lambda}  {  1 + (\eta+\Delta)/\lambda}}^2,
      \yesnum\label{eq:fairness_constraint_in_err_tol}\\
      &\quad
      \text{$\forall\ \ell\in [p]$,\ } \Pr\nolimits_{\hD}\insquare{\cE(f),\cE^\prime(f), \hZ=\ell} \geq \lambda - \eta - \Delta.\yesnum\label{eq:lower_bound}
    \end{align*}
  \end{definition}
  \noindent $\Delta$ acts as a relaxation parameter in \errtolerant{}, which can be fixed in terms of the other parameters; see \cref{thm:main_result}.
  Equation~\eqref{eq:fairness_constraint_in_err_tol} ensures all feasible classifiers satisfy fairness constraints with respect to the perturbed samples $\hS$.
  Equation~\eqref{eq:lower_bound} ensures that all feasible classifiers are ($1-O(\nfrac{\eta}{\lambda})$)-stable (see \cref{def:stable_class}).
  As mentioned in \cref{sec:intro}, this suffices to ensure that all feasible classifiers are fair with respect to $S$.
  Finally, to ensure the accuracy guarantee the thresholds in the right-hand side of Equations~\eqref{eq:fairness_constraint_in_err_tol} and \eqref{eq:lower_bound} are carefully tuned to ensure that $f^\star$ is feasible for \errtolerant{}; see \cref{lem:opt_solution_satisfies_guarantees}.
  %
  We refer the reader to the proof overview of \cref{thm:main_result} at the end of this section for further discussion of \errtolerant{}.

  Before presenting our result we require the definition of the Vapnik–Chervonenkis (VC) dimension.
  \begin{definition}
    Given a finite set $A$, define the collection of subsets $\cF_A\coloneqq \{\{a\in A\mid f(a)=1\}\mid f\in \cF\}$.
    We say that $\cF$ shatters a set $B$ if $|\cF_{B}|=2^{|B|}$.
    The VC dimension of $\cF$, ${\rm VC}(\cF)\in \N$,  {is the largest integer such that there exists a set $C$ of size ${\rm VC}(\cF)$ that is shattered by $\cF$.}
  \end{definition}

  \noindent  Our first result bounds the accuracy and fairness of an optimal solution $f_{\rm ET}$ of \errtolerant{} for any hypothesis class $\cF$ with a finite VC dimension using $O({\rm VC}(\cF))$ samples.
  \begin{theorem}[\textbf{Main result}]\label{thm:main_result}
    Suppose \cref{asmp:1} holds with constant $\lambda>0$ and $\cF$ has VC dimension $d\in\N$.
    Then, for all perturbation rates $\eta\in (0,\nfrac\lambda{2})$,
    fairness thresholds $\tau\in (0,1]$,
    bounds on error $\eps> 2\eta$ and constraint violation $\nu >  \nfrac{8\eta\tau}{(\lambda-2\eta)}$,
    and confidence parameters $\delta\in (0,1)$
    with probability at least $1-\delta$,
    the optimal solution $f_{\rm ET}\in \cF$ of \errtolerant{} with parameters $\eta$, $\lambda$, and $\Delta\coloneqq O\inparen{\min\inbrace{\eps-2\eta, \nu-\nfrac{8\eta\tau}{(\lambda-2\eta)},\lambda-2\eta}}$, and
    $N=\poly(d, \nfrac{1}{\Delta},  \log(\nfrac{p}\delta))$
    perturbed samples from the $\eta$-\Ham{} model satisfies
    \begin{align*}
      {\rm Err}_\cD(f_{\rm ET}) - {\rm Err}_\cD(f^\star) \leq \eps\quad \text{and}\quad \Omega_{\cD}(f_{\rm ET}) \geq \tau -\nu.
    \end{align*}
  \end{theorem}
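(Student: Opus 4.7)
The plan is to combine three ingredients: (i) a VC-based uniform concentration inequality showing that every joint probability of the form $\Pr[\cE(f),\cE'(f),Z=\ell]$ on the unseen clean sample $S$ is within some $\Delta'$ of its population value, (ii) the elementary observation that since the adversary perturbs only an $\eta$-fraction of the samples' protected attribute, the empirical distribution $\hD$ on $\hS$ disagrees with the empirical on $S$ on at most an $\eta$-fraction of points, so any event probability (and also ${\rm Err}(\cdot)$) shifts by at most $\eta$, and (iii) a stability lemma that uses constraint \eqref{eq:lower_bound} to turn closeness of joint probabilities into multiplicative closeness of the conditional quantities $q_\ell$. Chaining (i)–(ii) gives that, uniformly over $f\in\cF$ and $\ell\in[p]$, $\lvert\Pr_{\hD}[\cE(f),\cE'(f),\hZ=\ell]-\Pr_\cD[\cE(f),\cE'(f),Z=\ell]\rvert\le \eta+\Delta'$ and similarly for $\Pr[\cE'(f),Z=\ell]$ and ${\rm Err}(\cdot)$; the relevant event class has VC dimension $O(d)$, so $N=\poly(d,1/\Delta',\log(p/\delta))$ samples suffice with probability $\ge 1-\delta$. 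I will set $\Delta'$ to be a small constant multiple of $\Delta$ so that $\eta+\Delta'\le \eta+\Delta$ in every bound below.

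Next I would prove the stability lemma: if $f\in\cF$ satisfies $\Pr_{\hD}[\cE(f),\cE'(f),\hZ=\ell]\ge \lambda-\eta-\Delta$ for every $\ell$, then writing each $q_\ell$ as a ratio of two joint probabilities, the numerator and denominator on $\hS$ lie within an additive $\eta+\Delta$ of their $\cD$-counterparts; since both are bounded below by $\lambda-\eta-\Delta>0$ (the denominator dominates the numerator), this translates into the multiplicative bound $q_\ell(f,\cD)/q_\ell(f,\hS)\in[s,1/s]$ with $s\defeq (1-(\eta+\Delta)/\lambda)/(1+(\eta+\Delta)/\lambda)$, and therefore $\Omega(f,\hS)\ge s^2\,\Omega_\cD(f)$ and $\Omega_\cD(f)\ge s^2\,\Omega(f,\hS)$. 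The analogous statement holds when the population side itself satisfies Assumption~\ref{asmp:1}. Using this, I would verify that $f^\star$ is feasible for \errtolerant{}: \cref{asmp:1} and step (i)–(ii) give the \eqref{eq:lower_bound} constraint with slack, and the reverse stability bound applied to $f^\star$ yields $\Omega(f^\star,\hS)\ge s^2\,\Omega_\cD(f^\star)\ge s^2\tau$, which matches \eqref{eq:fairness_constraint_in_err_tol}.

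Given feasibility of $f^\star$, the conclusion follows. For accuracy, optimality of $f_{\rm ET}$ on $\hD$ gives ${\rm Err}_{\hD}(f_{\rm ET})\le {\rm Err}_{\hD}(f^\star)$, and applying the uniform bound $\lvert{\rm Err}_\cD(f)-{\rm Err}_{\hD}(f)\rvert\le \eta+\Delta'$ on both sides yields ${\rm Err}_\cD(f_{\rm ET})-{\rm Err}_\cD(f^\star)\le 2(\eta+\Delta')\le \eps$, using $\Delta=O(\eps-2\eta)$. For fairness, since $f_{\rm ET}$ is feasible it satisfies \eqref{eq:lower_bound}, so the stability lemma applies and gives $\Omega_\cD(f_{\rm ET})\ge s^2\,\Omega(f_{\rm ET},\hS)\ge s^2\cdot s^2\tau=s^4\tau$; a Taylor expansion gives $1-s^4=O((\eta+\Delta)/\lambda)$, so the stated choice $\Delta=O(\min\{\eps-2\eta,\,\nu-\nfrac{8\eta\tau}{\lambda-2\eta},\,\lambda-2\eta\})$ is exactly what is needed to make $\tau(1-s^4)\le \nu$, yielding $\Omega_\cD(f_{\rm ET})\ge \tau-\nu$. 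The sample complexity stated in the theorem arises from demanding $\Delta'\le c\Delta$ in step (i).

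The main obstacle is the stability lemma, and specifically the handling of linear-fractional metrics: because $q_\ell$ is a \emph{ratio}, additive $\eta$-perturbations in the numerator and the denominator must be converted into a multiplicative perturbation, which is impossible without lower bounds on both. Constraint \eqref{eq:lower_bound} is designed precisely to provide such a lower bound on the numerator (and hence, trivially, the denominator since $\cE\wedge\cE'\subseteq\cE'$), and making this work on the $\hS$-side while only \cref{asmp:1} is available on the $\cD$-side requires careful bookkeeping of the $\eta+\Delta$ slack through every inequality so that the final parameters line up with the thresholds $\eps>2\eta$ and $\nu>\nfrac{8\eta\tau}{\lambda-2\eta}$ stated in the theorem.
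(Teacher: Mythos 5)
Your proposal follows essentially the same route as the paper: a VC uniform-convergence bound combined with the at-most-$\eta$-fraction perturbation bound, a stability lemma driven by constraint \eqref{eq:lower_bound} that converts additive closeness of the joint probabilities into multiplicative closeness of the $q_\ell$'s, feasibility of $f^\star$ via \cref{asmp:1}, and then the two-sided chaining for accuracy and the $s^4$-type product for fairness. The only caveat is a small bookkeeping slip: since \eqref{eq:lower_bound} lower-bounds the joint probability on $\hD$ by $\lambda-\eta-\Delta$, the corresponding $\cD$-side lower bound is only $\lambda-2\eta-2\Delta$, so the stability factor should carry $\lambda-2\eta-2\Delta$ (not $\lambda$) in its denominator — exactly as in the paper's Corollary on stability — which is why the final fairness slack is $\nfrac{8\eta\tau}{(\lambda-2\eta)}$ rather than $\nfrac{8\eta\tau}{\lambda}$; this does not affect the validity of your argument once the constants are tracked as you indicate.
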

  \noindent Thus, \cref{thm:main_result} shows that any procedure that outputs $f_{\rm ET}$, given with a sufficiently large number of perturbed samples, ($\eps,\nu$)-learns $\cF$ for any $\eps>2\eta$ and $\nu=O(\nfrac{(\eta\cdot \tau)}{\lambda})$.
  \ifexpand\par\else\fi
  \cref{thm:main_result} can be  extended to provably satisfy multiple linear-fractional metrics (at the same time) and work for multiple non-binary protected attributes; see \cref{thm:multiple_fairness_metrics} in \cref{sec:multiple_fairness_metrics}.
  Moreover,	    \cref{thm:main_result} also holds for the Nasty Sample Noise model.
  The proof of this result is implicit in the proof of \cref{thm:main_result};
  we present the details in \cref{sec:main_result:for_stronger_model}.
  Finally, \errtolerant{} only requires an estimate of one parameter, $\lambda$. (Since  $\eta$ is known, $\tau$ is fixed by the user, and $\Delta$ can be set in terms of the other parameters.)
  If for each $\ell\in [p]$, we also have estimates of
  $\lambda_\ell\coloneqq\Pr\nolimits_{\cD}\insquare{\cE(f^\star), \cE^\prime(f^\star), Z=\ell}$ and $\gamma_\ell \coloneqq \Pr\nolimits_\cD[\cE^\prime(f^\star), Z=\ell]$,
  then we can use this information to ``tighten'' \errtolerant{} to the following program:
  \begin{align*}
    \min\nolimits_{f\in \cF}\quad  &{\rm Err}_{\hD}(f),\quad
    \tagnum{ErrTolerant+}\customlabel{prog:errtolerant_extnd}{(ErrTolerant+)}\\
    \st,\quad
    &\Omega_{}(f, \hS) \geq \tau\cdot s,\\
    &\text{$\forall\ \ell\in [p]$,\ } \Pr\nolimits_{\hD}\insquare{\cE(f),\cE^\prime(f), \hZ=\ell} \geq \lambda_\ell - \eta - \Delta.
  \end{align*}
  where the scaling parameter $s\in [0,1]$ is the solution of the following optimization program
  \begin{align*}
    \min_{\eta_1,\eta_2,\dots,\eta_p\geq 0}\ \min_{\ell,k\in [p]}\frac{1-\sfrac{\eta_\ell}{\lambda_\ell}}{1+\sfrac{(\eta_k-\eta_\ell)}{\gamma_\ell}}
    \cdot
    \frac{1 +\sfrac{(\eta_\ell-\eta_k)}{\gamma_k}}{1+\sfrac{\eta_\ell}{\lambda_k}},
    \quad \st, \quad \sum_{\ell\in [p]}\eta_\ell \leq \eta+\Delta.\yesnum\label{eq:...}
  \end{align*}
  If the classifiers in $\cF$ do not use the protected attributes for prediction, then we can show that Program~\ref{prog:errtolerant_extnd}  has a fairness guarantee of $(1-s)+\nfrac{4\eta\tau}{(\lambda-2\eta)}$ (which is always smaller than $\nfrac{8\eta\tau}{(\lambda-2\eta)}$) and an accuracy guarantee of $2\eta.$
  We prove this result in \cref{sec:guarantees_on_errTol_plus}.
  Thus, in applications where one can estimate $\lambda_1,\dots,\lambda_p$ and $\gamma_1,\dots,\gamma_p$,  Program~\ref{prog:errtolerant_extnd} offers better fairness guarantee than \errtolerant{} (up to constants).
  The proof of \cref{thm:main_result} appears in \cref{sec:proofof:thm:main_result}.

  As for computing $f_{\rm ET}$, note that in general, \errtolerant{} is a nonconvex optimization problem.
  In our simulations, we use the standard solver SLSQP in SciPy~\cite{scipy} to heuristically find $f_{\rm ET}$; see \cref{sec:more_empirical_results:implementation_details}.
  Theoretically, for any arbitrarily small $\alpha>0$, the techniques from \cite{celis2019classification} can be used to find an $f\in \cF$ that has the optimal objective value for \errtolerant{} and that additively violates its fairness {constraint \eqref{eq:fairness_constraint_in_err_tol}} by at most {$\alpha$ by solving a set of $O(\nfrac1{(\lambda\alpha}))$ convex programs; details appear in \cref{sec:efficient_algorithms_for_errtol}.}

  \ifexpand\subsection{Impossibility Results}\else{\bf Impossibility results.} \fi
  We now present  results complementing the guarantees of \cref{thm:main_result}.
  \begin{theorem}[\textbf{No algorithm can guarantee high accuracy {\em and} fairness without \cref{asmp:1}}]\label{thm:no_stable_classifier}
    For all perturbation rates $\eta\in (0,1]$,
    confidence parameters $\delta\in [0,\nfrac12)$,
    thresholds $\tau\in (\nfrac12,1)$, and
    bounds
    $$\eps<\frac12 \quad\text{and}\quad \nu<\tau-\frac12$$
    if the fairness metric is statistical rate, 
    then it is impossible to ($\eps,\nu$)-learn any hypothesis class $\cF\subseteq \zo^{\cX\times [p]}$ that shatters a set of 6 points of the form $\sinbrace{x_A,x_B,x_C}\times[2]\subseteq \cX\times [p]$ for some distinct $x_A,x_B,x_C\in \cX$.
  \end{theorem}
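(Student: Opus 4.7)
My plan is a two-hypothesis indistinguishability argument. I will construct two distributions $\cD_1,\cD_2$ on $\cX\times\zo\times[2]$ and two adversaries $A_1,A_2\in\cA(\eta)$ so that $A_1(S_1)$ and $A_2(S_2)$ (with $S_i\sim\cD_i^N$) have identical joint distributions. Since the learner's output $F$ is a function of the perturbed samples and its own randomness, $F$ then has the same joint distribution in both worlds; so it suffices to exhibit $\cD_1,\cD_2$ such that no single classifier $F$ simultaneously $(\epsilon,\nu)$-learns both. Disjointness of the two success events forces their probabilities to sum to at most $1$, so one of them is at most $1/2<1-\delta$ (using $\delta<1/2$), contradicting the $(\epsilon,\nu)$-learning guarantee.

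For the core regime $\eta<\min\{1/2,1-2\epsilon\}$ the construction is as follows: both distributions share the $(X,Y)$-marginal $\Pr[(X,Y)=(x_A,1)]=\Pr[(X,Y)=(x_B,0)]=1/2$, with $x_C$ carrying zero mass, but they differ in the conditional distribution of $Z$. Concretely, $\cD_1$ has $\Pr_{\cD_1}[Z=2\mid X,Y]=\eta$ for both $(X,Y)\in\{(x_A,1),(x_B,0)\}$, while $\cD_2$ has $\Pr_{\cD_2}[Z=2\mid x_A,1]=2\eta$ and $\Pr_{\cD_2}[Z=2\mid x_B,0]=0$. A direct computation gives $\|\cD_1-\cD_2\|_{\mathrm{TV}}=\eta$, so a standard maximal-coupling argument yields $A_1,A_2\in\cA(\eta)$, each flipping the protected attribute on at most an $\eta$-fraction of samples, whose outputs have a common joint distribution. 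Under $\cD_1$ the classifier predicting $1$ iff $X=x_A$ achieves error $0$ and statistical rate $1$, so it is an $f_1^\star$ for Program~\eqref{prog:target_fair}. Under $\cD_2$, however, the $Z=2$ group is concentrated on $(x_A,1)$, so this classifier has statistical rate $(1-2\eta)/(2(1-\eta))<1/2$; moreover any $f$ with $\Omega_{\cD_2}(f)>1/2$ must predict $1$ on $(x_B,1)$ and thereby incur error at least $1/2$ from the mass at $(x_B,0,1)$, which pins $\mathrm{Err}_{\cD_2}(f_2^\star)=1/2$.

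To rule out a common $F$, I will run a short case analysis on the four relevant values $F(x_A,1),F(x_A,2),F(x_B,1),F(x_B,2)\in\zo$. The $\cD_1$-accuracy constraint $\mathrm{Err}_{\cD_1}(F)\le\epsilon$ combined with $\eta<1-2\epsilon$ forces $F(x_A,1)=1$ and $F(x_B,1)=0$, because any other setting incurs error at least $(1-\eta)/2>\epsilon$ from the $Z=1$ mass. The $\cD_1$-fairness constraint $\Omega_{\cD_1}(F)>1/2$ then forces $F(x_A,2)+F(x_B,2)=1$, leaving exactly the two candidates $(1,1,0,0)$ and $(1,0,0,1)$. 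Direct evaluation shows their $\cD_2$-statistical rates to be $(1-2\eta)/(2(1-\eta))<1/2$ and $0$ respectively, both violating $\Omega_{\cD_2}(F)\ge\tau-\nu>1/2$; so the two success events are disjoint and the main argument goes through.

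The main obstacle will be the remaining corner range $\eta\ge\min\{1/2,1-2\epsilon\}$, where the case analysis above no longer forces $F(x_B,1)=0$. For this range I would use the third feature $x_C$ as a buffer: place a small amount of mass at $(x_C,\cdot,z)$ for each $z\in[2]$ to keep the statistical rate well-defined under both distributions, and then exploit the larger budget $\eta$ to let $A_1,A_2$ swap the bulk of the $Z$-assignments between $\cD_1$ and $\cD_2$. Verifying that the resulting $f_1^\star,f_2^\star$ retain the correct accuracy/fairness profile and that the two success events remain disjoint is a routine variant of the analysis above.
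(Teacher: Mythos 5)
Your proposal follows the same high-level template as the paper's proof (an adversary that makes samples from several distributions indistinguishable, plus a case analysis showing no single classifier is simultaneously accurate and fair for all of them), but your construction is genuinely different and somewhat leaner: you use two distributions that share the $(X,Y)$-marginal and differ only in $Z\mid(X,Y)$, concentrating the $Z=2$ mass on $(x_A,1)$ under $\cD_2$, and your case analysis runs over $2^4$ restrictions on $\{x_A,x_B\}\times[2]$. The paper instead builds three distributions supported on $\{x_A,x_B,x_C\}\times[2]$, each zeroing out a different point of the $Z=2$ row, and checks $2^6$ restrictions; the three-way symmetry is what lets it kill whichever $(x,2)$ point a classifier uses to balance its statistical rate, whereas you achieve the same effect by collapsing the $Z=2$ group onto $x_A$. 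I verified your disjointness argument: in the regime $\eta<\min\{1/2,\,1-2\eps\}$ the $\cD_1$-accuracy constraint forces $F(x_A,1)=1$, $F(x_B,1)=0$, the $\cD_1$-fairness constraint forces exactly one of $F(x_A,2),F(x_B,2)$ to be $1$, and both surviving candidates have $\Omega_{\cD_2}(F)\le\frac{1-2\eta}{2(1-\eta)}<\tfrac12<\tau-\nu$. Together with $f\equiv 1$ witnessing feasibility under $\cD_2$, this gives exactly the claimed bounds $\eps<\tfrac12$, $\nu<\tau-\tfrac12$.

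Two loose ends. First, your ${\rm TV}(\cD_1,\cD_2)=\eta$ is too tight if each adversary must map its own distribution onto the other's: the number of required flips is roughly ${\rm Binomial}(N,\eta)$, which exceeds the budget $\eta N$ with probability near $\tfrac12$, degrading the failure bound to $\approx\tfrac14$ and hence covering only $\delta<\tfrac14$. You must either have both adversaries target the midpoint $(\cD_1+\cD_2)/2$ (TV distance $\eta/2$ from each, so Chernoff gives coupling success probability $1-e^{-\Omega(\eta N)}$), or shrink the internal perturbation parameter strictly below $\eta$ — this is precisely the role of the paper's $\alpha<\eta$ in its Lemma~6.12. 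Second, your ``corner case'' $\eta\ge\min\{1/2,1-2\eps\}$ needs no new $x_C$-based construction: since $\cA(\eta')\subseteq\cA(\eta)$ for $\eta'\le\eta$, an adversary that perturbs only an $\eta'$-fraction is admissible at rate $\eta$, so you may simply run your construction with any $\eta'\in(0,\min\{\eta,1/2,1-2\eps\})$ (nonempty because $\eps<\tfrac12$). This is also what the paper implicitly does by choosing $\alpha\le\eta/2$ independent of how large $\eta$ is. With these two repairs your argument is complete.
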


  \noindent Suppose that $\tau= 0.8$, say to encode the 80\%
  Then, \cref{thm:no_stable_classifier} shows that for any $\eta>0$, any $\cF$ satisfying the condition in \cref{thm:no_stable_classifier} is not ($\eps,\nu$)-learnable for any $\eps<\nfrac12$ and $\nu<\tau-\nfrac12=\nfrac3{10}$.
  Intuitively, the condition on $\cF$ avoids ``simple'' hypothesis classes.
  It is similar to the conditions considered by works on PAC learning with adversarial perturbations~\cite{bshouty2002pac,KearnsL93}, and holds for common hypothesis classes such as decision-trees and SVMs (\cref{rem:our_assumption_on_CF} in \cref{sec:ham_captures}).
  Thus, even if $\eta$ is vanishingly small, without additional assumptions, any $\cF$ satisfying mild assumptions is not ($\eps,\nu$)-learnable
  for any $\eps<\nfrac12$ and $\nu<\nfrac3{10}$, justifying \cref{asmp:1}.
  The proof of \cref{thm:no_stable_classifier} appears in \cref{sec:proofof:thm:no_stable_classifier}.
  \begin{theorem}[\textbf{Fairness guarantee of \cref{thm:main_result} is optimal up to a constant factor}] \label{thm:imposs_under_assumption_sr}
    For any perturbation rates $\eta\in (0,1]$,
    confidence parameter $\delta\in [0,\nfrac12)$, and
    (known) constant $\lambda\in (0,\nfrac14]$,
    if the fairness metric is statistical rate and $\tau=1$,
    \mbox{then given the promise that \cref{asmp:1} holds with constant $\lambda$,
    for any} 
    \begin{align*}
      \eps<\frac{1}4 - \frac{2\eta}{5}\quad \text{and}\quad v<\frac{\eta}{10\lambda}\cdot (1-4\lambda)-O\inparen{{\frac{\eta^2}{\lambda^2}}}
    \end{align*}
    it is impossible to ($\eps,\nu$)-learn any hypothesis class $\cF\subseteq \zo^{\cX\times [p]}$ that shatters a set of 10 points of the form $\sinbrace{x_A,x_B,x_C,x_D,x_E}\times[2]\subseteq \cX\times [p]$ for some distinct $x_A,x_B,x_C,x_D,x_E\in \cX.$
  \end{theorem}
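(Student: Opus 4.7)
I would use a two-distribution indistinguishability argument analogous to \cref{thm:no_stable_classifier}. The plan is to construct two distributions $\cD_1,\cD_2$ supported on the $10$ shattered points, both satisfying \cref{asmp:1} with constant $\lambda$, such that: (i) there exist adversaries $A_1,A_2\in\cA(\eta)$ for which $A_1$ applied to iid draws from $\cD_1$ produces a distribution over $\hS$ that matches the one produced by $A_2$ applied to iid draws from $\cD_2$; and (ii) for all $\eps,\nu$ in the theorem's range, the sets of classifiers in $\cF$ satisfying the $(\eps,\nu)$-guarantee for $\cD_1$ and for $\cD_2$ are disjoint. Because $\delta<\nfrac12$, (i) forces any learner's output to have identical distributions in both cases, so (ii) implies it must fail on one of the two with probability at least $1-2\delta>0$, contradicting $(\eps,\nu)$-learnability. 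A concrete way to couple $A_1$ and $A_2$ is to design $\cD_1,\cD_2$ to agree on the $(X,Y)$-marginal and to differ in the protected coordinate only by a mass-$2\eta$ shift, so that each adversary can reach a common intermediate distribution $\cD_0$ within its $\eta$-budget.

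\textbf{The hard construction.} I would place mass on the ten shattered points $\inbrace{x_A,x_B,x_C,x_D,x_E}\times[2]$, fixing the label $Y$ of each point so that the fair optimal classifier has low error. Four of the five feature values form a ``backbone'' that in each group $Z=\ell$ contributes mass exactly $\lambda$ to the positive-prediction region of $f^\star$ (saturating \cref{asmp:1}) and mass roughly $\nfrac12-\lambda$ to the negative-prediction region. The fifth feature $x_E$ is the ``indicator'': in $\cD_1$ it carries extra mass $\eta$ in group $Z=1$ and lesser mass in $Z=2$, while in $\cD_2$ the two groups are swapped. This reshuffling is exactly realizable by an $\eta$-\Ham{} adversary that relabels the indicator's protected attribute, giving (i). Because $\tau=1$ forces $q_1(f)=q_2(f)$, the unique optimal fair classifier on $\cD_1$ labels the indicator positive while the one on $\cD_2$ labels it negative; a direct computation of the error change and of $\min_\ell q_\ell(f)/\max_\ell q_\ell(f)$ under an indicator flip shows that any $f$ that is within $\eps<\nfrac14-\nfrac{2\eta}{5}$ of the optimal error on one distribution has fairness ratio on the other distribution smaller than $1-\tfrac{\eta}{10\lambda}(1-4\lambda)+O\inparen{\tfrac{\eta^2}{\lambda^2}}$, establishing (ii). All labelings used are in $\cF$ by shattering.

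\textbf{Main obstacle.} The delicate step is calibrating the backbone masses so that the constants $\nfrac{1}{10}$ and $1-4\lambda$ emerge exactly. Saturating \cref{asmp:1} with $\lambda\le\nfrac14$ pins the positively-classified mass per group near $\lambda$ and the group marginals near $\nfrac12$, and the $1-4\lambda$ factor arises because the slack the adversary has to move the indicator without breaking \cref{asmp:1} in either distribution shrinks linearly as $\lambda\to\nfrac14$ (the backbone uses up most of the half-group's budget). The $O(\nfrac{\eta^2}{\lambda^2})$ correction, rather than a cleaner first-order bound, requires a second-order expansion of $\min_\ell q_\ell(f)/\max_\ell q_\ell(f)$ around the balanced configuration, since the leading shift of the numerator is partially cancelled by the shift of the denominator. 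The hardest sub-step is ruling out ``cleverer'' classifiers that differ from $f^\star_1,f^\star_2$ on backbone points in addition to the indicator: this requires enumerating the relevant labelings of the ten points and checking that none gives a better accuracy--fairness trade-off. Using $5$ feature values per group (versus the $3$ in \cref{thm:no_stable_classifier}) is exactly what makes this enumeration go through, since we need four backbone labels per group to precisely control both $q_1$ and $q_2$ under \cref{asmp:1}, plus the indicator for the adversary to exploit.
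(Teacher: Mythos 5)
Your overall template --- construct distributions supported on the shattered points that an $\eta$-\Ham{} adversary can map to a common perturbed distribution, show no single classifier meets the $(\eps,\nu)$-guarantee on all of them, and exhibit a good classifier for each --- is exactly the paper's strategy (the paper even reuses \cref{prop:to_lem_1} for the hiding step). The gaps are in the construction, and that is where this theorem actually lives. First, your calibration is off: you place both group marginals near $\nfrac12$ and attribute the $(1-4\lambda)$ factor to an adversary-budget constraint that tightens as $\lambda\to\nfrac14$. In the paper's construction (\cref{fig:2}) the two groups have masses $1-2\lambda$ and $2\lambda$, and $(1-4\lambda)$ is the first-order \emph{mismatch} between how the same mass-$c$ reallocation affects the two conditional rates, roughly $\frac{c}{2\lambda}-\frac{c}{1-2\lambda}=\frac{c(1-4\lambda)}{2\lambda(1-2\lambda)}$: one shift is a large relative change for the minority group and a small one for the majority group. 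When the groups are equal-sized these effects cancel to first order --- which is precisely why the bound must degenerate as $\lambda\to\nfrac14$ --- so a construction with both marginals at $\nfrac12$ for every $\lambda$ cannot yield a first-order $\Theta(\nfrac{\eta}{\lambda})$ violation, only the $O(\nfrac{\eta^2}{\lambda^2})$ remainder.

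Second, two distributions with a single indicator point are unlikely to suffice, and you have deferred exactly the step where this matters. The paper uses \emph{three} distributions that cyclically permute which of $x_A,x_B,x_C$ carries the distinguishing mass; in the enumeration over the $2^{10}$ restrictions of $\cF$, the distribution that defeats a given classifier is chosen \emph{adaptively} according to which light point the classifier deviates on, and three options are needed so that every deviation pattern is caught. Relatedly, the error threshold $\eps<\nfrac14-\nfrac{2\eta}5$ does not come from ``the error change under an indicator flip'' (that would be only $O(\eta)$); it is $\nfrac12-\lambda-2c$ at $\lambda=\nfrac14$, $c=\nfrac{2\eta}5$, i.e., the cost of mislabeling a heavy backbone point of mass $\approx\nfrac12-\lambda$ in the majority group. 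The actual dichotomy is: either the classifier mislabels a heavy point, incurring error at least $\nfrac12-\lambda-\nfrac{c}2$ on \emph{every} distribution, or its heavy labels are forced and its labels on the light points must then fail the fairness bound on at least one of the three distributions. Your sketch asserts the conclusion of this dichotomy but does not establish either horn.
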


  \noindent Suppose that $\lambda<\nfrac18$ and $\eta<\nfrac12$, then \cref{thm:imposs_under_assumption_sr} shows that for any $\eta>0$,
  any learner $\cL$ that has a fairness guarantee $\nu<\nfrac{\eta}{(20\lambda)} - O\sinparen{{\nfrac{\eta^2}{\lambda^2}}}$,
  must have a poor error bound, of at least $\eps\geq \nfrac14 - \nfrac{2\eta}5\geq \nfrac1{20}$, to ($\eps,\nu$)-learn any $\cF$ that satisfies a mild assumption.
  When $\nfrac{\eta}{\lambda}$ is small, this shows that any learner with a fairness guarantee $\nu=o(\nfrac{\eta}{\lambda})$ must have an error guarantee at least $\nfrac14 - \nfrac{2\eta}5\gg 2\eta$.
  Thus, \cref{thm:imposs_under_assumption_sr} shows that one cannot improve the fairness guarantee in \cref{thm:main_result} by more than a constant amount without deteriorating the error guarantee from $2\eta$ to $\nfrac14 - \nfrac{2\eta}5$.
  Like \cref{thm:no_stable_classifier}, the condition on $\cF$ in \cref{thm:imposs_under_assumption_sr} avoids ``simple'' hypothesis classes and holds for common hypothesis (\cref{rem:our_assumption_on_CF} in \cref{sec:ham_captures}).
  The proof of \cref{thm:imposs_under_assumption_sr} appears in \cref{sec:proofof:thm:imposs_under_assumption_sr}.

  \begin{theorem}[\textbf{Accuracy guarantee of \cref{thm:main_result} is optimal up to a factor of 2}] \label{thm:imposs_under_assumption_sr_2}
    For any perturbation rates $\eta\in (0,\nfrac12]$,
    confidence parameter $\delta\in [0,\nfrac12)$, and
    (known) constant $\lambda\in (0,\nfrac14]$,
    if the fairness metric is statistical rate and $\tau=1$,
    \mbox{then given the promise that \cref{asmp:1} holds with constant $\lambda$,
    for any}
    $$\eps<\eta \quad\text{and}\quad v<\tau$$
    it is impossible to ($\eps,\nu$)-learn any hypothesis class $\cF\subseteq \zo^{\cX\times [p]}$ that shatters a set of 6 points of the form $\sinbrace{x_A,x_B,x_C}\times[2]\subseteq \cX\times [p]$ for some distinct $x_A,x_B,x_C\in \cX.$
  \end{theorem}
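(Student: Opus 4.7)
The plan is to use a two-distribution (Le~Cam--style) impossibility argument: I will exhibit distributions $\cD_1, \cD_2$ on $\sinbrace{x_A, x_B} \times \sinbrace{0, 1} \times [2]$ together with $\eta$-\Ham{} adversaries $A_1, A_2$ that produce the same observed distribution, and then derive a contradiction from the pointwise identity ${\rm Err}_{\cD_1}(f) + {\rm Err}_{\cD_2}(f) = 1$ valid for every $f \in \cF$. Since shattering $\sinbrace{x_A, x_B, x_C} \times [2]$ implies shattering $\sinbrace{x_A, x_B} \times [2]$, $\cF$ realizes every possible labeling of the four cells used in the construction.

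I define $\cD_1$ by giving mass $\tfrac18 + \tfrac\eta4$ to each atom in $\sinbrace{(x_A,1,1),(x_A,0,2),(x_B,0,1),(x_B,1,2)}$ and mass $\tfrac18 - \tfrac\eta4$ to each atom in $\sinbrace{(x_A,0,1),(x_A,1,2),(x_B,1,1),(x_B,0,2)}$, and $\cD_2$ by swapping the two sets. Masses are nonnegative exactly for $\eta \le \tfrac12$, and both distributions share the same $(X,Y)$- and $(X,Z)$-marginals (each $(X,Z)$-cell has total mass $\tfrac14$), so any classifier has identical statistical rate under both. Writing $(a,b,c,d) := (f(x_A,1), f(x_A,2), f(x_B,1), f(x_B,2))$, a direct expansion gives ${\rm Err}_{\cD_1}(f) = \tfrac12 + \tfrac\eta2(b+c-a-d)$; the fairness constraint at $\tau = 1$ forces $q_1(f) = q_2(f)$, equivalently $a+c=b+d$; minimizing error over fair classifiers uniquely produces $f^\star_1 = (1,0,0,1)$ with error $\tfrac12 - \eta$, and symmetrically $f^\star_2 = (0,1,1,0)$ on $\cD_2$. \cref{asmp:1} holds since $\Pr_{\cD_i}[f^\star_i = 1, Z = \ell] = \tfrac14 \ge \lambda$ for every $\lambda \in (0, \tfrac14]$.

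For the indistinguishability step, let $P$ assign mass $\tfrac18$ to each of the eight atoms. Relative to $P$, $\cD_1$ has four surplus atoms (each by $\tfrac\eta4$) and four deficit atoms, paired within the same $(X, Y)$-cell and differing only in $Z$. Hence $A_1$ can flip $Z$ on $\tfrac{\eta N}{4}$ samples drawn from each of the four surplus atoms, moving them to the paired deficit atom; this uses $\eta N$ flips in total, within the $\eta$-\Ham{} budget, and yields observations distributed as $P^N$. The analogous $A_2$ works for $\cD_2$, so $\cL(\hS)$ has the same distribution whether the true distribution is $\cD_1$ or $\cD_2$.

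The crux is the identity ${\rm Err}_{\cD_1}(f) + {\rm Err}_{\cD_2}(f) = 1$, which holds because at each of the four $(X, Z)$-cells the $\cD_1$- and $\cD_2$-label-masses are swapped halves of the cell's total $\tfrac14$, so the two per-cell errors sum to $\tfrac14$ for either $f(X, Z) \in \sinbrace{0, 1}$; summing across the four cells yields $1$. If $\cL$'s output satisfied ${\rm Err}_{\cD_i}(f) \le \tfrac12 - \eta + \eps$ for both $i$, summing would force $1 \le 1 - 2\eta + 2\eps$ and hence $\eps \ge \eta$, contradicting $\eps < \eta$. Thus $\Pr[\text{accuracy bound fails on } \cD_1] + \Pr[\text{accuracy bound fails on } \cD_2] \ge 1$, so at least one of the two $(\cD_i, A_i)$ pairs forces $\cL$ to fail with probability $\ge \tfrac12 > \delta$, precluding $(\eps, \nu)$-learning for any $\nu < \tau$. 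The main subtlety I anticipate is realizing $P^N$ exactly with an almost-surely-$\eta N$-bounded adversarial budget rather than merely in expectation; this is handled by letting $A_i$ be randomized (as the model permits) and absorbing an $o(1)$ sampling fluctuation into $\delta$.
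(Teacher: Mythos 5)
Your proof is correct, and it takes a genuinely different route from the paper's. The paper's proof (\cref{sec:additional_imposs_result}) uses a \emph{realizable} pair $\cP,\cQ$ supported on $\sinbrace{x_A,x_B,x_C}\times[2]$: the label is a deterministic function of $(X,Z)$, the disagreement mass sits on four rare $(X,Z)$-cells of mass $\nfrac{\eta}{2}$ each, the optimal fair classifiers have error $0$, and the key inequality is ${\rm Err}_\cP(f)+{\rm Err}_\cQ(f)\geq 2\eta$. You instead use only $\sinbrace{x_A,x_B}\times[2]$, give every $(X,Z)$-cell mass $\nfrac14$ with label probabilities tilted by $\pm\nfrac{\eta}{4}$ in opposite directions under $\cD_1$ and $\cD_2$, so the optimal fair error is $\nfrac12-\eta$ and the exact identity ${\rm Err}_{\cD_1}(f)+{\rm Err}_{\cD_2}(f)=1$ holds; the pigeonhole step is then identical (both yield $\eps\geq\eta$). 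Your version buys a smaller shattered set (four points rather than six) and a cleaner exact identity; the paper's buys realizability and a slightly more transparent adversary. I verified your computations: the shared $(X,Z)$- and $(X,Y)$-marginals, the error formula $\nfrac12+\frac{\eta}{2}(b+c-a-d)$, the unique fair minimizers $(1,0,0,1)$ and $(0,1,1,0)$ with error $\nfrac12-\eta$, the validity of \cref{asmp:1} with $\lambda\leq\nfrac14$, and the surplus-to-deficit pairing within $(X,Y)$-cells that lets a $Z$-only adversary map both distributions to the uniform $P$. The one point needing care is the subtlety you flag yourself: your adversary's expected flip count is exactly $\eta N$, so the budget is respected only with probability about $\nfrac12$, which makes the final ``failure probability exceeds $\delta$'' step delicate (the paper's own median-of-a-binomial device in \cref{lem:1_1_c} has the same issue). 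The clean fix, which you gesture at and which the paper itself uses in \cref{prop:to_lem_1} for \cref{thm:no_stable_classifier}, is to build the distributions with tilt $\eta-\zeta$ for a small $\zeta\in(0,\eta-\eps)$: the contradiction becomes $\eps\geq\eta-\zeta$, which still fails, while Chernoff makes the budget event hold with probability $1-e^{-\Omega(\zeta^2 N)}$, so the learner's failure probability is $\nfrac12-o(1)>\delta$. With that adjustment your argument is complete and arguably tighter than the paper's.
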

  \noindent Suppose $\lambda<\nfrac14$ and $\eta<\nfrac12$, then \cref{thm:imposs_under_assumption_sr_2} proves that for any $\eps<\eta$, no algorithm can ($\eps,\nu$)-learn any hypothesis class $\cF$ satisfying mild assumptions.
  Thus, the accuracy guarantee in \cref{thm:main_result} is optimal up to a factor of 2.
  Like our other impossibility results, the condition on $\cF$ avoids ``simple'' hypothesis classes and holds for 
  common hypothesis classes (\cref{rem:our_assumption_on_CF} in \cref{sec:ham_captures}).
  \mbox{The proof of \cref{thm:imposs_under_assumption_sr_2} appears in \cref{sec:additional_imposs_result}.}
  \ifexpand\subsection{Proof Overview of \cref{thm:main_result}}\else {\bf Proof overview of \cref{thm:main_result}.} \fi
  We explain the key ideas behind \errtolerant{} and how they connect with the proof of \cref{thm:main_result}.
  Our goal is to construct error-tolerant constraints using perturbed samples $\hS$ such that the classifier $f_{\rm ET}$, that has the smallest error on $\hS$ subject to satisfying these constraints, has accuracy $2\eta$-close to that of $f^\star$ and that additively violates the fairness constraints by at most $O(\nfrac\eta\lambda).$

  \ifexpand
  \paragraph{Step 1: Lower bound on the accuracy of $f_{\rm ET}$.}
  \else
  \noindent {\em Step 1: Lower bound on the accuracy of $f_{\rm ET}$.}
  \fi
  This step relies on \cref{lem:overview1}.
  \begin{lemma}\label{lem:overview1}
    For any bounded function $g\colon \zo^2\times[p]\to [0,1]$, $\delta,\Delta\in(0,1)$,
    and adversaries $A\in \cA(\eta)$,
    given $N =\poly(\nfrac{1}{\Delta},{\rm VC}(\cF),\log{1/\delta})$
    true samples $S\sim \cD$
    and corresponding perturbed samples $A(S) \coloneqq \{(x_i,y_i,\hz_i)\}_{i\in [N]}$, with probability at least $1-\delta$, it holds that
    $$\forall\ f\in \cF,\quad  \abs{\frac1N  \sum\nolimits_{i\in[N]} g(f(x_i,\hz_i), y_i, \hz_i) - \Ex\nolimits_{(X,Y,Z)\sim\cD}\insquare{g(f(X,Z), Y, Z)}} \leq \Delta+\eta.$$
  \end{lemma}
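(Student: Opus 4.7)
The plan is to split the deviation into two pieces: (a) the sampling error between the empirical average on the \emph{clean} samples $S$ and the population expectation under $\cD$, which we will control by uniform convergence; and (b) the perturbation error between the empirical average on $\hS \coloneqq A(S)$ and that on $S$, which we will control by counting affected terms. A triangle inequality then gives the claimed $\Delta+\eta$ bound.

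For step (a), define the function class
$$\cH \coloneqq \inbrace{(x,y,z)\mapsto g(f(x,z),y,z)\ :\ f\in\cF},$$
so each $h\in\cH$ takes values in $[0,1]$. Because $g$ is determined by its $4p$ values on the finite set $\zo^2\times[p]$, we can write
$$g(f(x,z),y,z) = \sum_{a,b\in\zo}\sum_{c\in[p]} g(a,b,c)\cdot \mathbf{1}[f(x,z)=a]\cdot \mathbf{1}[y=b,z=c].$$
The second indicator does not depend on $f$, while the first is the indicator of a set in the class $\{(x,z)\mapsto \mathbf{1}[f(x,z)=1]:f\in\cF\}$ whose VC dimension equals ${\rm VC}(\cF)=d$. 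Standard uniform convergence for classes of bounded functions (e.g.\ Dudley/Vapnik--Chervonenkis, as in Shalev-Shwartz--Ben-David, Thm.~6.8 and its agnostic form) therefore yields that for $N=\poly(d,\nfrac1\Delta,\log\nfrac1\delta)$, with probability at least $1-\delta$,
$$\forall\ f\in\cF,\quad \abs{\tfrac1N\sum_{i\in[N]} g(f(x_i,z_i),y_i,z_i)\ -\ \Ex_{(X,Y,Z)\sim\cD}\sinsquare{g(f(X,Z),Y,Z)}}\leq \Delta.$$

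For step (b), note that by definition of $\cA(\eta)$ the adversary alters $\hz_i\neq z_i$ on at most $\eta N$ indices and leaves $x_i,y_i$ untouched. Since $g$ is bounded in $[0,1]$, each altered index contributes at most $1$ to the absolute difference of the two empirical averages, giving
$$\forall\ f\in\cF,\quad \abs{\tfrac1N\sum_{i\in[N]} g(f(x_i,\hz_i),y_i,\hz_i)\ -\ \tfrac1N\sum_{i\in[N]} g(f(x_i,z_i),y_i,z_i)}\leq \tfrac{\eta N}{N}=\eta.$$
This bound is deterministic given the adversary's choices, so no additional union bound is needed.

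Combining the two displays by the triangle inequality yields the lemma. The only non-mechanical step is verifying in step (a) that the composed class $\cH$ has uniform-convergence rates controlled by $d={\rm VC}(\cF)$; the decomposition above is exactly what makes this work, since the $\binom{4p}{}$-fold finite decomposition factors the $y,z$-dependence out into $f$-independent weights $g(a,b,c)$ and a single $\cF$-indexed indicator whose VC dimension is $d$. Everything else is a direct computation.
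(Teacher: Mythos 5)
Your proposal is correct and follows essentially the same route as the paper: a triangle inequality splitting the deviation into a uniform-convergence term on the clean samples (bounded by $\Delta$ via standard generalization bounds for bounded losses over a class of VC dimension $d$, cf.\ \cref{lem:conc}) and a perturbation term bounded deterministically by $\eta$ since at most $\eta N$ summands change and each is bounded by $1$. Your extra decomposition of $g$ into $f$-indexed indicators is a reasonable way to justify the uniform-convergence step that the paper simply cites, though note the union bound over the $4p$ terms quietly introduces a $\log p$ dependence in the sample complexity, consistent with the $\log(p/\delta)$ appearing in the paper's formal statements.
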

  \noindent The proof of \cref{lem:overview1} follows from generalization bounds for bounded functions (e.g., see~\cite{shalev2014understanding}) and because the  $\eta$-\Ham{} model perturbs at most $\eta\cdot N$ samples.
  Let $g$ be the 0-1 loss (i.e., $g(\wt{y},y,z)\coloneqq \mathds{I}\insquare{\wt{y}\neq y}$),
  then for all $f\in \cF$, \cref{lem:overview1} shows that the error of $f$ on samples drawn from $\cD$ and samples in $\hS$ are close:
  $$\abs{{\rm Err}_\cD(f)-{\rm Err}(f,\hS)}\leq\Delta+\eta.$$
  Thus, intuitively, minimizing ${\rm Err}(f,\hS)$ could be a good strategy to minimize ${\rm Err}_\cD(f)$.
  Then, if  $f^\star$ is feasible for \errtolerant{}, we can bound the error of $f_{\rm ET }$:
  Since $f_{\rm ET}$ is optimal for \errtolerant{}, its error on $\hS$ is at most the error of $f^\star$ on $\hS$.
  Using this and applying \cref{lem:overview1} we get that
  \begin{align*}
    {\rm Err}_\cD(f_{\rm ET})
    \ {\leq}\    {\rm Err}(f_{\rm ET}, \hS)+\eta+\Delta
    \ \leq\  {\rm Err}(f^\star, \hS)+\eta+\Delta
    \ {\leq}\   {\rm Err}_\cD(f^\star)+2(\eta+\Delta).\yesnum\label{eq:2eta_proof}
  \end{align*}
  \ifexpand
  \noindent {\bf Step 2: Lower bound on the fairness of $f_{\rm ET}$.}
  \else
  \noindent {\em Step 2: Lower bound on the fairness of $f_{\rm ET}$.}
  \fi
  One could try to bound the fairness of $f_{\rm ET}$ using the same approach as Step 1, i.e., show that for all $f\in \cF$:
  $$\abs{\Omega_\cD(f)-\Omega(f,\hS)}\leq O\inparen{\frac{\eta}{\lambda}}.$$
  Then ensuring that $f$ has a high fairness on $\hS$ implies that it also has high fairness on $S$ (up to an $O(\nfrac{\eta}{\lambda})$ factor).
  However, such a bound does not hold for any $\cF$ satisfying mild assumptions (see \cref{example:ham_adv_can_reduce_fairness}).
  The first idea is to prove a similar (in fact, stronger multiplicative) bound on a specifically chosen {\em subset} of $\cF$ (consisting of ``stable'' classifiers).
  Toward this, we define:
  \begin{definition}\label{def:stable_class}
    A classifier $f\in \cF$ is said to be $s$-stable for fairness metric $\Omega$, if for all adversaries $A\in \cA(\eta)$ and confidence parameters $\delta\in (0,1)$, given $\polylog(\nfrac1\delta)$ samples $S\sim\cD$, with probability at least $1-\delta$,
    it holds that $\frac{\Omega_\cD(f)}{\Omega(f,\wh{S})}\in \insquare{s,\frac{1}{s}}$, where $\hS\coloneqq A(S).$
  \end{definition}
  \noindent If an $s$-stable classifier $f$ has fairness $\tau$ on $\hS$, then it has a fairness at least $\tau\cdot s$ on $\cD$ with high probability.
  Thus, if we have a condition such that any feasible $f\in \cF$ satisfying this condition is $s$-stable, then any classifier satisfying this condition and the fairness constraint, $\Omega(\cdot, \hS)\geq \nfrac{\tau}{s}$, must have a fairness at least $\tau$ on $\cD$ with high probability.
  The key idea is coming up such constraints.
  \begin{lemma}\label{lem:suff_cond_for_stable}
    Any classifier $f\in \cF$ that satisfies
    $$\min_{\ell\in [p]}\Pr\nolimits_{\cD}\insquare{\cE(f), \cE^\prime(f), \hZ=\ell}\geq \lambda+\eta+\Delta,$$
    is $\sinparen{\frac{1- (\eta+\Delta)/\lambda}  {  1 + (\eta+\Delta)/\lambda}}^2$-stable for fairness metric $\Omega$ (defined by events $\cE$ and $\cE^\prime$).
  \end{lemma}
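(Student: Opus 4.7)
The plan is to unpack $\Omega(f,\hS)/\Omega_\cD(f)$ into per-group ratios $\hat{q}_\ell(f)/q_\ell(f)$, prove a tight multiplicative bound on each of these, and then propagate the bounds through the $\min/\max$ that defines $\Omega$. Fix $\ell\in[p]$ and write $a_\ell \coloneqq \Pr_\cD[\cE(f),\cE'(f),Z=\ell]$ and $b_\ell \coloneqq \Pr_\cD[\cE'(f),Z=\ell]$, so that $q_\ell(f)=a_\ell/b_\ell$. Let $\hat{a}_\ell$ and $\hat{b}_\ell$ denote the analogous empirical quantities on $\hS$ (with $\hZ$ in place of $Z$), so that $\hat{q}_\ell(f)=\hat{a}_\ell/\hat{b}_\ell$.

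The first (and only) probabilistic step is to instantiate \cref{lem:overview1} with the two bounded indicator functions $g_1(\tilde y,y,z)\coloneqq \mathds{I}[\cE(f)\cap\cE'(f)\cap\{z=\ell\}]$ and $g_2(\tilde y,y,z)\coloneqq \mathds{I}[\cE'(f)\cap\{z=\ell\}]$. A union bound over the $2p$ resulting choices of $g$ yields, with probability at least $1-\delta$, the uniform additive bounds $|\hat{a}_\ell - a_\ell|\leq \eta+\Delta$ and $|\hat{b}_\ell-b_\ell|\leq \eta+\Delta$, holding simultaneously for every $\ell\in[p]$ and every $f\in\cF$. After this point, the argument is deterministic.

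The second step — and the only place the hypothesis of the lemma enters — converts these additive bounds into multiplicative ones. The hypothesis gives $a_\ell \geq \lambda$, and since $\{\cE(f),\cE'(f),Z=\ell\}\subseteq \{\cE'(f),Z=\ell\}$ we also get $b_\ell\geq a_\ell\geq \lambda$. Dividing the additive bounds through by $\lambda$, $\hat{a}_\ell/a_\ell,\ \hat{b}_\ell/b_\ell \in [1-(\eta+\Delta)/\lambda,\ 1+(\eta+\Delta)/\lambda]$, and therefore
\[
\frac{\hat{q}_\ell(f)}{q_\ell(f)}=\frac{\hat{a}_\ell/a_\ell}{\hat{b}_\ell/b_\ell}\in\left[r,\ r^{-1}\right],\qquad r\coloneqq \frac{1-(\eta+\Delta)/\lambda}{1+(\eta+\Delta)/\lambda}.
\]

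The final step is to propagate this per-group multiplicative bound through $\Omega$. If $\hat{q}_\ell\in[r\,q_\ell,\ r^{-1}q_\ell]$ for every $\ell$, then $\min_\ell \hat{q}_\ell\geq r\min_\ell q_\ell$ and $\max_\ell \hat{q}_\ell\leq r^{-1}\max_\ell q_\ell$ (and symmetrically in the other direction), so $\Omega(f,\hS)/\Omega_\cD(f)\in[r^2,\ r^{-2}]$, which is exactly the claimed $s$-stability with $s=r^2$. The main obstacle is really the additive-to-multiplicative passage in Step 2: it both forces the final $s$ to be the \emph{square} of $r$ (rather than linear in $\eta/\lambda$) and is the reason \cref{asmp:1} is essential — without a floor $\lambda$ on $a_\ell$, an additive shift of $\eta$ can move the ratio $\hat{q}_\ell/q_\ell$ arbitrarily, consistent with the impossibility results \cref{thm:no_stable_classifier,thm:imposs_under_assumption_sr}. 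Propagation through $\min/\max$ (Step 3) is purely algebraic, as is verifying that the same argument handles linear-fractional $\Omega$ since $\cE'(f)$ enters only through the fixed event whose indicator is $g_2$.
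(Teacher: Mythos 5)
Your proposal is correct and follows essentially the same route as the paper: the same concentration lemma gives the uniform additive error $\eta+\Delta$ on both $\Pr[\cE,\cE',Z=\ell]$ and $\Pr[\cE',Z=\ell]$, the hypothesis (combined with that additive bound) yields the floor $a_\ell, b_\ell \geq \lambda$ that converts additive to multiplicative error, and the squared factor falls out of the ratio. The only difference is cosmetic — you bound each $\hat{q}_\ell/q_\ell$ separately and push through the $\min/\max$, while the paper bounds the pairwise ratios $q_\ell/q_k$ directly — and the two computations are identical.
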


  \ifexpand
  \paragraph{Step 3: Requirements for the error-tolerant program.}
  \else
  \noindent {\em Step 3: Requirements for the error-tolerant program.}
  \fi
  Building on Steps 1 and 2, we prove: %
  \begin{lemma}\label{lem:opt_solution_satisfies_guarantees}
    If the following conditions hold then,
    $${\small {\rm Err}_\cD(f_{\rm ET}) - {\rm Err}_\cD(f^\star) \leq 2\eta} \quad \text{and}\quad {\small \Omega_{\cD}(f_{\rm ET}) \geq \tau - O\inparen{\frac{\eta}{\lambda}}},$$
    \begin{enumerate}[itemsep=\itemsepINTERNAL,leftmargin=\leftmarginINTERNAL]
      \item[(C1)] $f^\star$ is feasible for \errtolerant{},
      \item[(C2)] all $f\in \cF$ feasible for \errtolerant{} are $s$-stable for $s=1-O(\nfrac{\eta}{\lambda})$, and
      \item[(C3)] satisfy $\Omega(f,\hS)\geq \tau\cdot (1-O(\nfrac{\eta}{\lambda}))$.
    \end{enumerate}
  \end{lemma}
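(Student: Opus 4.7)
The plan is to prove the two bounds separately, combining the three hypotheses with \cref{lem:overview1} and \cref{def:stable_class}. Both arguments are conditional on the high-probability events from \cref{lem:overview1} (applied to the $0$-$1$ loss) and from the stability guarantees in (C2); we will take a union bound at the end so that all the relevant events hold simultaneously with probability $\geq 1-\delta$.

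For the accuracy bound, I will essentially reproduce the chain of inequalities displayed in \eqref{eq:2eta_proof}. By (C1), $f^\star$ is feasible for \errtolerant{}, and since $f_{\rm ET}$ is optimal, ${\rm Err}(f_{\rm ET},\hS)\leq {\rm Err}(f^\star,\hS)$. Applying \cref{lem:overview1} with $g$ the $0$-$1$ loss once to $f_{\rm ET}$ and once to $f^\star$ yields
\[
{\rm Err}_\cD(f_{\rm ET})\leq {\rm Err}(f_{\rm ET},\hS)+\eta+\Delta \leq {\rm Err}(f^\star,\hS)+\eta+\Delta\leq {\rm Err}_\cD(f^\star)+2(\eta+\Delta).
\]
Since $\Delta$ is a parameter under our control and can be chosen arbitrarily small (as in the statement of \cref{thm:main_result}), this gives the desired $2\eta$-type bound, up to the absorbed $O(\Delta)$ slack.

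For the fairness bound, I will use (C2) and (C3) together. The classifier $f_{\rm ET}$ is feasible for \errtolerant{}, so by (C2) it is $s$-stable with $s=1-O(\eta/\lambda)$, meaning (by \cref{def:stable_class}) that $\Omega_\cD(f_{\rm ET})\geq s\cdot \Omega(f_{\rm ET},\hS)$ on the high-probability event. By (C3), $\Omega(f_{\rm ET},\hS)\geq \tau(1-O(\eta/\lambda))$. Chaining these gives
\[
\Omega_\cD(f_{\rm ET})\;\geq\; s\cdot \Omega(f_{\rm ET},\hS)\;\geq\; \bigl(1-O(\eta/\lambda)\bigr)\cdot \tau\bigl(1-O(\eta/\lambda)\bigr)\;\geq\; \tau-O(\eta/\lambda),
\]
using $\tau\leq 1$ to absorb the cross term into the $O(\eta/\lambda)$ slack.

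The only subtle point, which I expect to be the main bookkeeping obstacle rather than a genuine difficulty, is managing the probabilistic events: \cref{lem:overview1} is a uniform-convergence statement over all $f\in\cF$, while \cref{def:stable_class} is a per-classifier guarantee applied to $f_{\rm ET}$. I will invoke the two guarantees with confidence $\delta/2$ each and union bound; because $f_{\rm ET}$ is a deterministic function of $\hS$ (the optimum of \errtolerant{}), applying stability to it is legitimate once the sample size is large enough for (C2) to hold. With these events in force, the two displays above hold deterministically, yielding the claimed bounds.
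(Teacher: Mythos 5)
Your proof is correct and takes essentially the same route as the paper: the accuracy bound reproduces the chain in \cref{eq:2eta_proof} (formalized in \cref{lem:conditionalAccuracyGuarantee}), and the fairness bound chains the $s$-stability of feasible classifiers with the empirical fairness constraint exactly as in \cref{lem:approx_feasible}. The only wording to tighten is your justification for applying stability to the data-dependent $f_{\rm ET}$ — being a deterministic function of $\hS$ does not by itself license a per-classifier high-probability bound; what actually makes it legitimate is that condition (C2), as realized in the paper via the uniform event $\evJ$ of \cref{lem:bound_on_fc}, holds simultaneously for all feasible classifiers.
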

  \noindent Thus, it suffices to find error-tolerant constraints that satisfy conditions (C1) to (C3).
  Condition (C3) can be satisfied by adding the constraint $\Omega(\cdot,\hS)\geq \tau^\prime$, for $\tau^\prime=\tau\cdot (1-O(\nfrac{\eta}{\lambda})).$
  From \cref{lem:suff_cond_for_stable}, condition (C2) follows by using the constraint in $\min_{\ell\in [p]}\Pr\nolimits_{\cD}\insquare{\cE(f), \cE^\prime(f), \hZ=\ell}\geq \lambda^\prime$, for $\lambda^\prime \geq \Theta(\lambda).$
  It remains to pick $\tau^\prime$ and $\lambda^\prime$ such that condition (C1) also holds.
  The tension in setting $\tau^\prime$ and $\lambda^\prime$ is that if they are too large then condition (C1) does not hold, and if they are too small, then conditions (C2) and (C3) do not hold.
  In the proof we show that
  \begin{align*}
    \tau^\prime\coloneqq \tau\cdot {\inparen{\frac{1- (\eta+\Delta)/\lambda}  {  1 + (\eta+\Delta)/\lambda}}^2}\quad \text{and}\quad \lambda^\prime\coloneqq \lambda-\eta-\Delta
  \end{align*}
  suffice to satisfy conditions (C1) to (C3) (this is where we use \cref{asmp:1}).

  Overall the main technical idea is to identify the notion of $s$-stable classifiers and sufficient conditions for a classifier to be $s$-stable; combining these conditions with the fairness constraints on $\hS$, ensures that $f_{\rm ET}$ has high fairness on $S$, and carefully tuning the thresholds so that $f^\star$ is likely to be feasible for \errtolerant{} ensures that $f_{\rm ET}$ has an accuracy close to $f^\star$.

  \ifexpand
  \subsection{Proof Overviews of \cref{thm:no_stable_classifier,thm:imposs_under_assumption_sr,thm:imposs_under_assumption_sr_2}}
  \else
  {\bf Proof overviews of \cref{thm:no_stable_classifier,thm:imposs_under_assumption_sr,thm:imposs_under_assumption_sr_2}.}
  \fi
  Our proofs are inspired by \cite[Theorem  1]{KearnsL93} and \cite[Theorem 1]{bshouty2002pac} which consider PAC learning with adversarial corruptions.
  In \cref{thm:no_stable_classifier,thm:imposs_under_assumption_sr,thm:imposs_under_assumption_sr_2}, for some $\eps,\nu\in [0,1]$, the goal is to show that given samples perturbed by an $\eta$-\Ham{} adversary,  under some additional assumptions, no learner $\cL$ can output a classifier that has accuracy  $\eps$-close to the accuracy of $f^\star$ sand that additively violates the fairness constraints by at most $\nu$.
  Say a classifier $f\in\cF$ is ``good'' if it satisfies these required guarantees.
  The approach is to construct two or more distributions $\cD_1,\cD_2,\dots,\cD_m$ that satisfy the following conditions:
  \ifexpand
  \begin{enumerate}[itemsep=\itemsepINTERNAL,leftmargin=\leftmarginINTERNAL]
    \item[(C1)]
    For any $\ell,k$, given a iid draw $S$ from $\cD_\ell$, an $\eta$-\Ham{} adversary can add perturbations such that with high probability $\hS$ is distributed according to iid samples from $\cD_k.$
  \end{enumerate}
  \else
  (C1) For any $\ell,k$, given a iid draw $S$ from $\cD_\ell$, an $\eta$-\Ham{} adversary can add perturbations such that with high probability $\hS$ is distributed according to iid samples from $\cD_k.$
  \fi
  Thus $\cL$, who only sees $\hS$, with high probability, cannot identify the original distribution of $S$ and is forced to output a classifier that is good for all $\cD_1,\dots,\cD_m$.
  The next condition ensures that this is not possible.
  \ifexpand
  \begin{enumerate}[itemsep=\itemsepINTERNAL,leftmargin=\leftmarginINTERNAL]
    \item[(C2)]
    No classifier $f\in \cF$ is good for all $\cD_1,\dots,\cD_m$, and for each $\cD_i$ ($i\in [m]$), there is at least one good classifier $f_i\in \cF$. (The latter-half ensures that the fairness and accuracy requirements are not vacuously satisfied.)
  \end{enumerate}
  \else
  (C2) No classifier $f\in \cF$ is good for all $\cD_1,\dots,\cD_m$, and for each $\cD_i$ ($i\in [m]$), there is at least one good classifier $f_i\in \cF$. (The latter-half ensures that the fairness and accuracy requirements are not vacuously satisfied.)
  \fi
  Thus, for every $\cL$ there is a distribution in $\cD_1,\dots,\cD_m$ for which $\cL$ outputs a bad classifier.
  (Note that even if the learner is randomized, it must fail with probability at least $\nfrac1m$.)
  Finally, the assumptions on $\cF$ ensure that condition (C2) is satisfiable.
  For instance, if $\cF$ has less than $m$ hypothesis, then condition (C2) cannot be satisfied.

  The key idea in the proofs is to come up with distributions satisfying the above conditions.
  \cite{KearnsL93,bshouty2002pac} follow the same outline in the context of PAC learning, however, as we also consider fairness constraints, our constructions end up being very different from their constructions.
  Our constructions are specific to the statistical rate fairness metric.
  However, one can still apply the general approach outlined above to other fairness metrics by constructing a suitable set of distributions.
  Full details appear in \cref{sec:proofof:thm:no_stable_classifier,sec:proofof:thm:imposs_under_assumption_sr,sec:additional_imposs_result}.

  \newcommand{\chkv}{{\bf CHKV}}
  \newcommand{\kl}{{\bf KL}}
  \newcommand{\lmzv}{{\bf LMZV}}
  \newcommand{\errtol}{{\bf Err-Tol}}
  \newcommand{\AKM}{{\bf AKM}}
  \newcommand{\uncons}{{\bf Uncons}}

  \section{Empirical Results}\label{sec:empirics}
  We implement our approach using the logistic loss function with linear classifiers and evaluate its performance on real world and synthetic data.
  \ifexpand
  \paragraph{Metrics and baselines.}
  \else
  {\bf Metrics and baselines.}
  \fi
  The selection of an appropriate fairness metric is context-dependent and beyond the scope of this work~\cite{srivastava2019mathematical};
  for illustrative purposes we (arbitrarily) consider the statistical rate (SR) and compare
  an implementation of our framework (Program~\ref{prog:errtolerant_extnd}), \errtol{}{\bf ,} with state-of-the-art fair classification frameworks for statistical rate under stochastic perturbations: \lmzv{} \cite{LamyZ19} and \chkv{} \cite{celis2020fairclassification}.
  \lmzv{} and \chkv{} take parameters $\delta_L,\tau\in [0,1]$ as input; these parameters control the desired fairness, where decreasing $\delta_L$ or increasing $\tau$ increases the desired fairness.
  We also compare against \kl{} \cite{konstantinov2021fairness}, which controls for true-positive rate (TPR) in the presence of a Malicious adversary, and \AKM{} \cite{awasthi2020equalized} that is the post-processing method of \cite{hardt2016equality} and controls for equalized-odds fairness constraints.
  We also compare against the optimal unconstrained classifier, \uncons{}\textbf{;} this is the same as \cite{bshouty2002pac}'s {algorithm for PAC-learning in the Nasty Sample Noise Model without fairness constraints. }
  We provide additional comparisons using our framework with false-positive rate as the fairness metric with additional baselines and using the Adult data~\cite{adult} in \cref{sec:more_empirical_results}.

  \ifexpand
  \paragraph{Implementation details.}
  \else
  {\bf Implementation details.}
  \fi
  We use a randomly generated 70-30 train ($S$) test ($T$) split of the data, and generate the perturbed data $\hS$ from $S$ for a (known) perturbation rate $\eta$.
  We train each algorithm on $\hS$, and report the accuracy (acc) and statistical rate (SR) of the output classifiers on the (unperturbed) test data $T$.
  \errtol{} is given the perturbation rate $\eta$ and uses the SLSQP solver in SciPy~\cite{scipy} to solve Program~\ref{prog:errtolerant_extnd}.
  To advantage the baselines in our comparison, we provide them with even more information as needed by their approaches:
  \lmzv{} and \chkv{} are given group-specific perturbation rates: for each $\ell\in[p]$, $\eta_\ell \coloneqq \Pr_D[\hZ\neq Z\mid Z=\ell]$, and \kl{} is given $\eta$ and
  for each $\ell\in [p]$, the probability $\Pr_D[Z=\ell,Y=1]$; where $D$ is the empirical distribution of $S$.
  \errtol{} implements Program~\ref{prog:errtolerant_extnd} which requires estimates of $\lambda_\ell$ and $\gamma_\ell$ for all $\ell \in [p]$.
  As a heuristic, we set $\gamma_\ell=\lambda_\ell\coloneqq \Pr\nolimits_{\hD}[Z=\ell]$, where $\smash{\hD}$ is the empirical distribution of $\hS$.
  We find that these estimates suffice, and expect that a more refined approach would only improve the performance of \errtol{}\textbf{.}

  \ifexpand
  \paragraph{Adversaries.}
  \else
  {\bf Adversaries.}
  \fi
  We consider two $\eta$-Hamming adversaries (which we call $A_{\rm TN}$ and $A_{\rm FN}$); %
  each one computes the ``optimal fair classifier'' $f^\star$, which has the highest accuracy (on $S$) subject to having statistical rate at least $\tau$ on $S$.
  $A_{\rm TN}$ considers the set of all true negatives of $f^\star$ that have protected attribute $Z=1$, selects the $\eta \cdot |S|$ samples that are furthest from the decision boundary of $f^\star$, and perturbs their protected attribute to $\hZ=2$.
  $A_{\rm FN}$ is similar, except that it considers the set of false negatives of $f^\star$.
  Both adversaries try to increase the performance of $f^\star$ on $Z=1$ in $\hS$ by removing the samples that $f^\star$ predicts as negative; thus, increasing $f^\star$'s statistical rate.
  The adversary's hope is that choosing samples far from the decision boundary would (falsely) give the appearance of a high statistical rate on $\hS$.
  This would make a fair classification framework output unfair classifiers with higher accuracy.
  Note that these are not intended to be ``worst-case'' adversaries; as \errtol{} comes with provable guarantees, we expect it to perform well against other adversaries while other approaches may have even poorer performance.

  \setlength\fboxsep{0pt}
  \setlength{\tabcolsep}{2pt}
  \begin{table*}[t]
    \centering
    \caption{
    \ifconf\small\fi
    {\em Simulation on synthetic data:}
    We run \chkv{} and \errtol{} with $\tau=0.8$ on synthetic data and report their average accuracy (acc) and statistical rate (SR) with standard deviation in parentheses.
    The result shows that prior approaches can fail to satisfy their guarantees under the $\eta$-Hamming model.
    \label{table:result_on_synthetic_data}
    }
    \vspace{2mm}
    \newcommand{\PreserveBackslash}[1]{\let\temp=\\#1\let\\=\temp}
    \newcolumntype{C}[1]{>{\PreserveBackslash\centering}p{#1}}
    \label{tab:sr_results}
    \footnotesize
    \begin{tabular}{p{2.8cm}p{1.7cm}p{1.7cm}p{1.7cm}p{1.7cm}p{1.7cm}p{1.7cm}}
      \toprule
      \midsepremove{}
      & acc ($\eta$=0\%) & SR ($\eta$=0\%) & acc ($\eta$=3\%) & SR ($\eta$=3\%)  & acc ($\eta$=5\%) & SR ($\eta$=5\%)\\
      \midrule
      {\textbf{Unconstrained}}&  1.00 (.001) & 	 .799 (.001) & 1.00 (.000) & 	 .799 (.002) &  1.00 (.001) & 	 .800 (.001) \\
      {\chkv{}\hspace{0.1mm} ($\tau{=}.8$)}\hspace{-3mm} &  1.00 (.001) &  .800 (.002) & .859 (.143) & .787 (.015) & .799 (.139) & .795 (.049)\\
      {\errtol{}\hspace{0.1mm} ($\tau{=}.8$)} & {.985 (.065)} & .800 (.001)  & 1.00 (.001) & .799 (.002)  & .999 (.002) & .799 (.004) \\
      \bottomrule
    \end{tabular}
    \midsepdefault{}
  \end{table*}

  \ifexpand
  \subsection{Simulation on Synthetic Data}\label{sec:empirics:synthetic}
  \else
  {\bf Simulation on synthetic data.}
  \fi
  We first show empirically that perturbations by the $\eta$-Hamming adversary can be prohibitively disruptive for methods that attempt to correct for stochastic noise.
  We consider synthetic data with 1,000 samples from two equally-sized protected groups; each sample has a binary protected attribute, two continuous features $x_1,x_2\in \R$, and a binary label.
  Conditioned on the protected attribute, ($x_1,x_2$) are independent draws from a mixture of 2D Gaussians (see \cref{fig:synthetic_dataset}).
  This distribution and the labels are such that a) one group has a higher likelihood of a positive label than the other, and b) \uncons{} {has a near-perfect accuracy ($>99\%$) and a statistical rate of $0.8$ on $S$.
  Similar to \uncons{}\textbf{,} we consider a fairness constraint of $\tau = 0.8$.
  Thus, in the absence of noise, this is an ``easy case:'' where \uncons{} satisfies the fairness constraints.
  We generate $\hS$ using $A_{\rm TN}$, %
  and compare against \chkv{}\textbf{,} which was developed for correcting stochastic perturbations.\footnote{
  We also attempted to compare against \akm{}, \kl{}, and \lmzv{}.
  But they did not converge to $f^\star$ even on the unperturbed synthetic data, and hence, we did not include these results as it would be an unfair comparison.
  }
  \ifexpand
  \paragraph{Results.}
  \else
  {\em Results.}
  \fi
  The fairness and statistical rate averaged over 50 iterations are reported in \cref{tab:sr_results} as a function of the perturbation $\eta$.
  At $\eta=0$, both \chkv{} and \errtol{}  nearly-satisfy the fairness constraint (${\rm SR}\geq\negsp{} 0.79$) and have a near-perfect accuracy (${\rm acc}\geq\negsp{} 0.98$).
  However, as $\eta$ increases, while \chkv{} retains the same statistical rate ($\sim 0.8$), it loses a significant amount of accuracy ($\sim 20\%$).
  In contrast, \errtol{} has high accuracy and fairness    (${\rm acc}\geq\negsp{} 0.99$ and ${\rm SR}\geq\negsp{} 0.79$) for all $\eta$ considered.
  Hence, this shows that stochastic approaches may fail to satisfy their guarantees under the $\eta$-\Ham{} model.

  \ifexpand
  \subsection{Simulations on Real-World Data}\label{sec:empirics:real_world}
  \else
  {\bf Simulations on real-world data.}
  \fi
  In this simulation, we show that our framework can outperform each baseline with respect to the accuracy-fairness trade-off under  perturbations from the adversaries we consider,
  and does not under-perform compared to baselines under perturbations from either adversary.
  The COMPAS data {in \cite{ibm_aif360}} contains 6,172 samples with 10 binary features and a label that is 1 if the individual did not recidivate and 0 otherwise; the statistical rate of \uncons{} on COMPAS is $0.78$.
  We take gender (coded as binary) as the protected attribute, and set the fairness constraint on the statistical rate to be $\tau=0.9$ for \errtol{} and all baselines.
  We consider both adversaries $A_{\rm TN}$ and $A_{\rm FN}$, and a perturbation rate of $\eta=3.5\%$, as $3.5\%$ is roughly the smallest value for $\eta$ necessary to ensure that the optimal fair classifier $f^\star$ for \mbox{$\tau=0.9$ (on $S$) has a statistical rate less than $0.78$ on $\hS$.}

  \renewcommand{\folder}{./figures/adversarial-noise-SR-final-100-iter}
  \begin{figure}[t!]
    \vspace{-1mm}
    \centering
    \includegraphics[width=0.8\linewidth, trim={0cm 0cm 0cm 0cm},clip]{\folder/../legend-medium.pdf}
    \par\vspace{-2.5mm}
    {
    \begin{tikzpicture}
      \node (image) at (0,-0.17) {\includegraphics[width=0.43\linewidth, trim={1.4cm 0cm 1.5cm 1cm},clip]{\folder/00m-broder.pdf}};
      \node[rotate=90, fill=white] at (-3.7,0) {\white{\large aaaaaaaaaaaaa}};
      \node[rotate=90] at (-3.9, 0) {Accuracy};
      \draw[draw=white, fill=white] (-2, -2.2) rectangle ++(4,0.35);
      \draw[draw=white, fill=white] (-2, 1.5) rectangle ++(4,0.2);
      \node[rotate=0, fill=white] at (0.05, -2.2)  {Statistical rate};
      \node[rotate=0] at (0.05, -2.8)  {(a) $A_{\rm TN}$};
      \node[rotate=0] at (2.35,-2.2) {\textit{\small(more fair)}};
      \node[rotate=0] at (-2.25,-2.2) {\textit{\small(less fair)}};
    \end{tikzpicture}
    \begin{tikzpicture}
      \node (image) at (0,-0.17) {\includegraphics[width=0.43\linewidth, trim={1.4cm 0cm 1.5cm 1cm},clip]{\folder/01m-broder.pdf}};
      \node[rotate=90, fill=white] at (-3.7,0) {\white{\large aaaaaaaaaaaaa}};
      \node[rotate=90] at (-3.9, 0) {Accuracy};
      \draw[draw=white, fill=white] (-2, -2.2) rectangle ++(4,0.35);
      \draw[draw=white, fill=white] (-2, 1.5) rectangle ++(4,0.2);
      \node[rotate=0, fill=white] at (0.05, -2.2)  {Statistical rate};
      \node[rotate=0] at (0.05, -2.8)  {(a) $A_{\rm FN}$};
      \node[rotate=0] at (2.35,-2.2) {\textit{\small(more fair)}};
      \node[rotate=0] at (-2.25,-2.2) {\textit{\small(less fair)}};
      \end{tikzpicture} %
      }
      \vspace{-3mm}
      \caption{
      \ifconf\small\fi
      {\em {Simulations on COMPAS data:}}
      Perturbed data is generated using adversary $A_{\rm TN}$ (a) and $A_{\rm FN}$ (b) as described in \cref{sec:empirics} with $\eta=3.5\%$.
      All algorithms are run on the perturbed data varying the fairness parameters ($\tau\in [0.7,1]$ and $\delta_L\in [0,0.1]$).
      The $y$-axis depicts accuracy and the $x$-axis depicts statistical rate (SR); both values are computed over the unperturbed test set.
      We observe that for both adversaries     our approach \errtol{}\textbf{,} attains a better fairness than the unconstrained classifier with a natural trade-off in accuracy.
      Further, \errtol{} achieves a better fairness-accuracy trade-off than each baseline on at least one of (a) or (b).  Error bars represent the standard error of the mean.
      }
      \vspace{-2mm}
      \label{fig:adversarial_noise_main_body}
    \end{figure}

    \ifexpand
    \paragraph{Results.}
    \else
    {\em Results.}
    \fi
    The accuracy and statistical rate (SR) of \errtol{} and baselines for $\tau\in [0.7,1]$ and $\delta_L\in [0,0.1]$  and averaged over 100 iterations are reported in \cref{fig:adversarial_noise_main_body}.
    For both adversaries, \errtol{} attains a better statistical rate than the unconstrained classifier (\uncons{}) for a small trade-off in accuracy.
    For adversary $A_{\rm TN}$ (\cref{fig:adversarial_noise_main_body}(a)), \uncons{} has statistical rate ($0.80$) and accuracy ($0.67$).
    In contrast, \errtol{} achieves high statistical rate ($0.92$) with a trade-off in accuracy {($0.60$)}.
    In comparison, \akm{} has a higher accuracy ($0.65$) but a lower statistical rate ($0.87$), and
    other baselines have an even lower statistical rate ($\leq 0.84$) with accuracy comparable to \akm{}\textbf{.}
    For adversary $A_{\rm FN}$ (\cref{fig:adversarial_noise_main_body}(b)), \uncons{} has statistical rate {($0.80$)} and accuracy ($0.67$),  while \errtol{} has a significantly higher SR {($0.91$)} and accuracy {($0.61$)}.
    This significantly outperforms \akm{} which has statistical rate {($0.83$)} and accuracy {($0.58$)}.
    \lmzv{} achieves the highest statistical rate ($0.97$) with a natural reduction in accuracy to {($0.57$)}.
    In this case, \errtol{} has similar accuracy to statistical rate trade-off as \lmzv{}{\bf ,} but achieves a lower maximum statistical rate ($0.91$).
    Meanwhile, \errtol{} has a significantly higher statistical rate trade-off than \chkv{} at the same accuracy.
    We further evaluate our framework under stochastic perturbations in \cref{sec:more_empirical_results} (specifically, against the perturbation model of~\cite{celis2020fairclassification}) and observe similar statistical rate and accuracy \mbox{trade-offs as approaches~\cite{celis2020fairclassification,LamyZ19} tailored for stochastic perturbations.}

    \begin{remark}[\textbf{Range of fairness parameters in the simulation}]
      Among baselines, \akm{}\textbf{,} \kl{}\textbf{,} and \uncons{} do not take the desired-fairness value as input, so they appear as points in \cref{fig:adversarial_noise_main_body}.
      For all other methods (\chkv{}\textbf{,} \errtol{}\textbf{,} and \lmzv{}), we vary the fairness parameters starting from the tightest constraints (i.e., $\tau=1$ and $\delta_L=0$) and relax the constraints until all algorithms’ achieved statistical rate matches the achieved statistical rate of the unconstrained classifier (this happens around $\tau=0.7$ and $\delta_L=0.1$).
      We do not relax the fairness parameters further because the resulting problem is equivalent to the unconstrained classification problem. (This is because the unconstrained classifier, which has the highest accuracy, satisfies the fairness constraints for $\tau\leq 0.7$ and $\delta_L\geq 0.1$).
    \end{remark}

    \section{Proofs}

    \subsection{{Proof of \texorpdfstring{\cref{thm:main_result}}{Theorem 4.3} }}\label{sec:proofof:thm:main_result}

    Recall that we assume \cref{asmp:1} holds with constant $\lambda>0$ and the VC dimension of $\cF$ is finite, say $d\in\N$.
    Our goal is to prove that for all perturbation rates $\eta\in (0,\nfrac\lambda{2})$,
    fairness thresholds $\tau\in (0,1]$,
    bounds on error $\eps> 2\eta$ and bounds on constraint violation $\nu >  \nfrac{8\eta\tau}{(\lambda-2\eta)}$,
    and confidence parameters $\delta\in (0,1)$,
    given sufficiently many perturbed samples from an $\eta$-\Ham{} adversary,
    with probability at least $1-\delta$, it holds that
    \begin{align*}
      {\rm Err}_\cD(f_{\rm ET}) - {\rm Err}_\cD(f^\star) &\leq \eps
      \quad\text{and}\quad
      \Omega_{\cD}(f_{\rm ET}) \geq \tau -\nu,
    \end{align*}
    where $f_{\rm ET}$ is an optimal solution of \errtolerant{}.
    In the proof, we set the following parameters
    \begin{align*}
      \Delta_0 \coloneqq \min\inbrace{2\eta-\eps, \frac{8\eta\tau}{\lambda-2\eta} - \nu}
      \quad\text{and}\quad
      \delta_0 \coloneqq \frac{\delta}{2p+1}. \yesnum\label{eq:define_params}
    \end{align*}
    Precisely, we set $$\Delta\coloneqq \Delta_0\cdot \frac{(\lambda-2\eta)^2}{32\cdot 3}$$ in \errtolerant{} and require at least $N$ samples (perturbed by an $\eta$-\Ham{} adversary), where $N$ satisfies
    \begin{align*}
      N=\Theta\inparen{\frac{1}{\Delta^2\cdot(\lambda-2\eta)^4}\cdot\inparen{d\log\inparen{\frac{d}{\Delta^2\cdot(\lambda-2\eta)^4}}+\log\inparen{\frac{p}{\delta_0}}}}.\yesnum
    \end{align*}
    Note that these values satisfy the requirements $$\Delta=O\inparen{\min\inbrace{\eps-2\eta,\nu-\frac{8\eta\tau}{\lambda-2\eta},\lambda-2\eta}}\quad \text{and}\quad N=\poly\inparen{d, \frac{1}{\Delta},  \log\inparen{\frac{p}\delta}}.$$
    \begin{remark}
      All probabilities and expectations in all proofs are with respect to the draw of $(X,Y,Z)$.
      Given a distribution $\cP$, we use $\Pr_\cP[\cdot]$ to denote $\Pr\nolimits_{(X,Y,Z)\sim\cP}[\cdot]$ and $\Ex_\cP[\cdot]$ to denote $\Ex\nolimits_{(X,Y,Z)\sim\cP}[\cdot]$.
      If $\cP$ denotes the distribution of perturbed samples, then we use $\Pr_\cP[\cdot]$ to denote $\Pr\nolimits_{(X,Y,\hZ)\sim\cP}[\cdot]$ and $\Ex_\cP[\cdot]$ to denote $\Ex\nolimits_{(X,Y,\hZ)\sim\cP}[\cdot]$; The difference between the two will be clear from context.
    \end{remark}

    \subsubsection{Preliminaries: Generalization Bound}
    We use \cref{lem:conc} in the proof of \cref{thm:main_result}.
    See \cite[Section 28.1]{shalev2014understanding} for a proof.
    \begin{lemma}[\bf Concentration of mean of bounded functions]\label{lem:conc}
      For any bounded function $g\colon \zo\times\zo\times[p]\to [0,1]$ and constants $\Delta,\delta_0 \in (0,1)$,
      given $N \geq \Theta\big(\Delta^{-2}\cdot\big({\rm VC}(\cF)\cdot\log\inparen{\nfrac{{\rm VC}(\cF)}{\Delta}}+\log\inparen{\nfrac{1}{\delta_0}}\big)\big)$ samples $S$ iid from $\cD$,
      with probability at least $1-\delta_0$, it holds that
      \begin{align*}
        \forall\ f\in\cF, \quad \abs{\Ex\nolimits_{D}\insquare{g(f(X,Z), Y, Z)} - \Ex\nolimits_{\cD}\insquare{g(f(X,Z), Y, Z)}} \leq \Delta,
      \end{align*}
      where $D$ is the empirical distribution of $S$.
    \end{lemma}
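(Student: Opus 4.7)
The plan is to reduce this to a standard uniform convergence result for a real-valued function class whose combinatorial complexity is controlled by ${\rm VC}(\cF)$. Define the induced class $\cG \coloneqq \{g_f \mid f \in \cF\}$, where $g_f(x,y,z) \coloneqq g(f(x,z), y, z)$ takes values in $[0,1]$. The key observation is that on each fixed $(x,y,z)$ the value $g_f(x,y,z)$ equals one of only two numbers, $g(0,y,z)$ or $g(1,y,z)$, determined entirely by the bit $f(x,z)$. Consequently, the ``labeling'' that $\cG$ induces on any sample $\{(x_i,y_i,z_i)\}_{i\in[N]}$ is in bijection with the labeling that $\cF$ induces on the projected sample $\{(x_i,z_i)\}_{i\in[N]}$, so the effective combinatorial complexity of $\cG$ collapses to that of $\cF$.

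Given this reduction, the second step is to invoke the standard bounded-loss uniform convergence theorem as laid out in Section~28.1 of~\cite{shalev2014understanding}. By the Sauer–Shelah lemma applied to $\cF$, the number of distinct vectors $(g_f(x_i,y_i,z_i))_{i\in[N]}$ realizable over $f \in \cF$ is at most $O(N^{{\rm VC}(\cF)})$. Combining this growth-function bound with a symmetrization argument and Hoeffding's inequality on each fixed loss vector (followed by a union bound) yields that, whenever
\[
N \;\geq\; \Theta\!\left(\Delta^{-2}\cdot\left({\rm VC}(\cF)\cdot\log({\rm VC}(\cF)/\Delta)+\log(1/\delta_0)\right)\right),
\]
with probability at least $1-\delta_0$ every $f\in\cF$ satisfies $\lvert \Ex_{D}[g(f(X,Z),Y,Z)] - \Ex_{\cD}[g(f(X,Z),Y,Z)]\rvert \leq \Delta$, which is exactly the claim.

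The main obstacle, and the only subtle point, is to argue cleanly that the generalization behavior of $\cG$ is governed by ${\rm VC}(\cF)$ rather than by a strictly larger combinatorial quantity on the full domain $\cX \times \{0,1\} \times [p]$. The reduction above handles precisely this: since $g_f$ depends on $(x,z)$ only through the bit $f(x,z)$ and on $(y,z)$ as a fixed function independent of $f$, the pseudo-dimension of $\cG$ (or equivalently its growth function at scale $\Delta$) is controlled by the shatter function of $\cF$ on the projected sample. Once this combinatorial fact is in place, the remainder is a direct application of the textbook uniform convergence machinery, and no fairness-specific argument is required.
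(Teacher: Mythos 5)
Your proposal is correct and matches the paper's treatment: the paper does not prove \cref{lem:conc} itself but simply cites Section~28.1 of \cite{shalev2014understanding}, and your argument—reducing the loss class $\{g_f\}$ to $\cF$ via the observation that $g_f$ depends on $f$ only through the bit $f(x,z)$, then applying Sauer--Shelah, symmetrization, Hoeffding, and a union bound—is exactly the standard proof being invoked there. No gaps.
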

    \subsubsection{{Step 1: Lower Bound on the Accuracy of \texorpdfstring{$f_{\rm ET}$}{f\_ET}}}
    This step relies on \cref{coro:whp_bounding_power}, which is the formal version of \cref{lem:overview1}.
    We use \cref{coro:whp_bounding_power} in the proof of \cref{lem:conditionalAccuracyGuarantee} to lower bound the accuracy of $f_{\rm ET}$.
    \begin{lemma}[\bf Bound on difference in means of bounded functions on $\cD$ and on $\hS$]\label{coro:whp_bounding_power}
      For any bounded function $g\colon \zo\times\zo\times[p]\to [0,1]$ and constants $\Delta,\delta_0 \in (0,1)$,
      given $N \geq \Theta\big(\Delta^{-2}\cdot\big({\rm VC}(\cF)\cdot\log\inparen{\nfrac{{\rm VC}(\cF)}{\Delta}}+\log\inparen{\nfrac{1}{\delta_0}}\big)\big)$ samples $S$ iid from $\cD$,
      and corresponding perturbed samples $A(S) \coloneqq \{(x_i,y_i,\hz_i)\}_{i\in [N]}$, with probability at least $1-\delta$, it holds that
      \begin{align*}
        \forall\ f\in\cF, \quad \abs{\Ex\nolimits_{\hD}\insquare{g(f(X,\hZ), Y, \hZ)} - \Ex\nolimits_{\cD}\insquare{g(f(X,Z), Y, Z)}} \leq \Delta+\eta,
      \end{align*}
      where $\hD$ is the empirical distribution of $\hS$.
    \end{lemma}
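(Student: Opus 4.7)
The plan is to deduce this lemma by combining the standard uniform generalization bound (Lemma~\ref{lem:conc}) applied to the unperturbed sample $S$ with a direct accounting of the maximum deviation introduced by the $\eta$-\Ham{} adversary.

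First, I would apply Lemma~\ref{lem:conc} to the bounded function $g$ and the iid draw $S\sim\cD$. Since $N$ meets the required sample-size bound, with probability at least $1-\delta_0$ over the draw of $S$, the following holds simultaneously for every $f\in\cF$:
\begin{align*}
\abs{\Ex\nolimits_{D}\insquare{g(f(X,Z),Y,Z)} - \Ex\nolimits_{\cD}\insquare{g(f(X,Z),Y,Z)}} \leq \Delta,
\end{align*}
where $D$ denotes the empirical distribution on $S$.

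Next, I would bound the contribution of the adversary's perturbation by passing from the (true) empirical mean on $S$ to the empirical mean on the perturbed sample $\hS = A(S)$. By definition of the $\eta$-\Ham{} model, $\hS$ differs from $S$ in at most $\eta N$ coordinates, and those differences affect only the protected attribute. Since $g$ takes values in $[0,1]$, each perturbed sample can alter the empirical average by at most $1/N$, so for every $f\in\cF$,
\begin{align*}
\abs{\Ex\nolimits_{\hD}\insquare{g(f(X,\hZ),Y,\hZ)} - \Ex\nolimits_{D}\insquare{g(f(X,Z),Y,Z)}} \leq \frac{\eta N}{N} = \eta.
\end{align*}
Combining the two inequalities via the triangle inequality yields the claimed $\Delta+\eta$ bound uniformly over $\cF$, still with probability at least $1-\delta_0$.

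There is no real obstacle here: the only subtle point is recognizing that the uniform-convergence step must be done on the iid sample $S$ before the adversary acts (so that the VC-based bound of Lemma~\ref{lem:conc} applies), while the adversarial perturbation is handled purely deterministically through the boundedness of $g$ and the $\eta$-budget on the Hamming distance between $S$ and $\hS$. Since the perturbation only modifies the protected-attribute coordinate, the same bound applies uniformly to all $f\in\cF$, and no additional union-bound beyond the one already inside Lemma~\ref{lem:conc} is needed.
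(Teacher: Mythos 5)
Your proposal is correct and follows essentially the same route as the paper's proof: a triangle-inequality decomposition into the iid generalization error (handled by \cref{lem:conc}, giving the $\Delta$ term) and the deterministic adversarial displacement (at most $\eta N$ altered samples, each shifting the empirical mean by at most $\nfrac1N$ since $g\in[0,1]$, giving the $\eta$ term). No gaps.
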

    \begin{proof}
      Let $S\coloneqq \inbrace{(x_i,z_i,y_i)}_{i\in [N]}$ and $\hS\coloneqq \inbrace{(x_i,\hz_i,y_i)}_{i\in [N]}$.
      Using the triangle inequality for absolute value
      \ifconf
      \begin{align*}
        &\abs{\Ex\nolimits_{\hD}\insquare{g(f(X,\hZ), Y,\hZ)} - \Ex\nolimits_{\cD}\insquare{g(f(X,Z), Y, Z)}}\\
        &\qquad\qquad\qquad  \leq \abs{\Ex\nolimits_{D}\insquare{g(f(X,Z), Y,Z)} - \Ex\nolimits_{\cD}\insquare{g(f(X,Z), Y, Z)}}\\
        &\qquad\qquad\qquad + \hspace{0.45mm} \abs{\Ex\nolimits_{\hD}\insquare{g(f(X,\hZ), Y,\hZ)} - \Ex\nolimits_{D}\insquare{g(f(X,Z), Y,Z)}}.\yesnum
        \label{eq:eq_1_in_whp_bounding_power}
      \end{align*}
      \else
      \begin{align*}
        \abs{\Ex\nolimits_{\hD}\insquare{g(f(X,\hZ), Y,\hZ)} - \Ex\nolimits_{\cD}\insquare{g(f(X,Z), Y, Z)}}
        & \leq \abs{\Ex\nolimits_{D}\insquare{g(f(X,Z), Y,Z)} - \Ex\nolimits_{\cD}\insquare{g(f(X,Z), Y, Z)}}\\
        &+ \hspace{0.45mm} \abs{\Ex\nolimits_{\hD}\insquare{g(f(X,\hZ), Y,\hZ)} - \Ex\nolimits_{D}\insquare{g(f(X,Z), Y,Z)}}.\yesnum
        \label{eq:eq_1_in_whp_bounding_power}
      \end{align*}
      \fi
      We can upper bound the first term in the right-hand side using \cref{lem:conc}.
      In particular, we have that with probability at least $1-\delta$, for all $f\in \cF$, it holds that
      \begin{align*}
        \abs{\Ex\nolimits_{D}\insquare{g(f(X,Z), Y,Z)} - \Ex\nolimits_{\cD}\insquare{g(f(X,Z), Y, Z)}}\leq \Delta.\yesnum\label{eq:eq_2_in_whp_bounding_power}
      \end{align*}
      Further, we can upper bound the second term in the right-hand side of Equation~\eqref{eq:eq_1_in_whp_bounding_power} for all $f\in \cF$ as follows
      \begin{align*}
        &\abs{\Ex\nolimits_{\hD}\insquare{g(f(X,\hZ), Y,\hZ)} - \Ex\nolimits_{\hD}\insquare{g(f(X,\hZ), Y,\hZ)}}\\
        &\quad =\quad \frac1N\cdot \abs{\sum\nolimits_{i\in[N]} g(f(x_i, \hz_i), y_i,\hz_i) -g(f(x_i, z_i), y_i, z_i)}\\
        &\quad =\quad  \frac1N\cdot \abs{\sum\nolimits_{i\in[N]\colon z_i\neq \hz_i} g(f(x_i, \hz_i), y_i,\hz_i) - g(f(x_i, z_i), y_i, z_i)}
        \tag{For all $i\in [N]$, where $z_i=\hz_i$, $g(f(x_i, \hz_i), y_i,\hz_i) = g(f(x_i, z_i), y_i, z_i)$}
      \end{align*}
      \begin{align*}
        &\quad \leq\quad  \frac1N\cdot \abs{\sum\nolimits_{i\in[N]\colon z_i\neq \hz_i} 1}\tag{Using that $g$ is bounded by $0$ and 1}\\
        &\quad \leq\quad  \eta.\yesnum\label{eq:eq_3_in_whp_bounding_power}
      \end{align*}
      Since with probability at least $1-\delta$, both Equation~\eqref{eq:eq_2_in_whp_bounding_power} and \eqref{eq:eq_3_in_whp_bounding_power} hold for all $f\in \cF$, substituting them in Equation~\eqref{eq:eq_1_in_whp_bounding_power} gives us the required bound.
    \end{proof}
    \begin{lemma}\label{lem:conditionalAccuracyGuarantee}
      If $f^\star$ is feasible for \errtolerant{}, then it holds that:
      \begin{align*}
        {{\rm Err}_\cD(f_{\rm ET})-{\rm Err}_\cD(f^\star)}\leq 2\eta+\Delta.
      \end{align*}
    \end{lemma}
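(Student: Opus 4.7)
The plan is to follow exactly the three-line chain sketched in \cref{eq:2eta_proof}. The key inputs are (i) the optimality of $f_{\rm ET}$ on the perturbed empirical distribution $\hD$, which is free once we assume $f^\star$ is feasible, and (ii) a uniform bound on $|{\rm Err}_{\hD}(f) - {\rm Err}_\cD(f)|$ over $f \in \cF$, which is exactly what \cref{coro:whp_bounding_power} provides when specialized to the $0$-$1$ loss.

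First I would invoke \cref{coro:whp_bounding_power} with $g(\widetilde y, y, z) \coloneqq \mathds{I}[\widetilde y \neq y]$ (note that $g$ happens not to depend on $z$, but this is harmless, and it is bounded in $[0,1]$ as required). I would apply it with slack parameter $\Delta/2$ and confidence $\delta_0$; the sample-size requirement $N = \poly(d, 1/\Delta, \log(p/\delta))$ fixed in \cref{eq:define_params} is already large enough. The conclusion is that, on an event of probability at least $1-\delta_0$,
\begin{align*}
\forall\, f \in \cF, \qquad \bigl|{\rm Err}_{\hD}(f) - {\rm Err}_\cD(f)\bigr| \;\leq\; \tfrac{\Delta}{2} + \eta.
\end{align*}

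Second, on that same event I would chain three inequalities. Since $f_{\rm ET}$ minimizes ${\rm Err}_{\hD}(\cdot)$ over the feasible set of \errtolerant{} and $f^\star$ is assumed feasible, ${\rm Err}_{\hD}(f_{\rm ET}) \leq {\rm Err}_{\hD}(f^\star)$. Applying the uniform bound once to $f_{\rm ET}$ and once to $f^\star$ gives
\begin{align*}
{\rm Err}_\cD(f_{\rm ET})
\;\leq\; {\rm Err}_{\hD}(f_{\rm ET}) + \eta + \tfrac{\Delta}{2}
\;\leq\; {\rm Err}_{\hD}(f^\star)    + \eta + \tfrac{\Delta}{2}
\;\leq\; {\rm Err}_\cD(f^\star)    + 2\eta + \Delta,
\end{align*}
which, after subtracting ${\rm Err}_\cD(f^\star)$ from both sides, is the desired bound.

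There is essentially no obstacle here: the lemma is a conditional, ``given feasibility of $f^\star$'' statement, so all the delicate work (showing that $f^\star$ actually lies in the feasible set of \errtolerant{} under \cref{asmp:1}, and handling the fairness side) is deferred to later lemmas. The only mild care needed is to allocate the slack: choosing the generalization parameter in \cref{coro:whp_bounding_power} to be $\Delta/2$ rather than $\Delta$ is what turns the $2(\eta+\Delta)$ appearing in the proof-overview display into the tighter $2\eta + \Delta$ stated here.
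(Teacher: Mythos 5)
Your proof is correct and follows essentially the same chain of three inequalities as the paper's proof of this lemma (optimality of $f_{\rm ET}$ on $\hD$ sandwiched between two applications of the uniform generalization bound from \cref{coro:whp_bounding_power} with the $0$-$1$ loss). Your allocation of slack $\Delta/2$ is in fact slightly more careful than the paper, whose displayed chain yields $2(\eta+\Delta)$ rather than the stated $2\eta+\Delta$.
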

    \begin{proof}
      Let $g$ be the 0-1 loss (i.e., $g(\wt{y},y,z)\coloneqq \mathds{I}\insquare{\wt{y}\neq y}$),
      then for all $f\in \cF$, \cref{coro:whp_bounding_power} shows that the error of $f$ on samples drawn from $\cD$ and samples in $\hS$ are close:
      $$\abs{{\rm Err}_\cD(f)-{\rm Err}(f,\hS)}\leq\Delta+\eta.$$
      Since $f_{\rm ET}$ is optimal for \errtolerant{}, its error on $\hS$ is at most the error of $f^\star$ on $\hS$.
      Using this and applying \cref{coro:whp_bounding_power} we get that
      \begin{align*}
        {\rm Err}_\cD(f_{\rm ET})
        \qquad &\Stackrel{\rm\cref{coro:whp_bounding_power}}{\leq}\qquad    {\rm Err}(f_{\rm ET}, \hS)+\eta+\Delta\\
        &\leq\qquad  {\rm Err}(f^\star, \hS)+\eta+\Delta\\
        &\Stackrel{\rm\cref{coro:whp_bounding_power}}{\leq}\qquad   {\rm Err}_\cD(f^\star)+2(\eta+\Delta).
      \end{align*}
    \end{proof}
    %

    \subsubsection{{Step 2: Lower Bound on the Fairness of \texorpdfstring{$f_{\rm ET}$}{f\_ET}  }}\label{sec:main_result:step_2}
    In this step, we show that any $f\in \cF$ feasible for \errtolerant{} satisfies $\Omega_\cD(f)\geq \tau-\nu$ (\cref{lem:approx_feasible}).
    This relies on the notion of $s$-stability (\cref{def:stable_class:formal}) and the fact that any $f\in \cF$ feasible for \errtolerant{} is $s$-stable (for $s$ a function of $\eta$ and $\lambda$); \cref{lem:suff_cond_for_stable_app}.

    \begin{definition}[\bf $s$-stability]\label{def:stable_class:formal}
      Given a constant $s\in (0,1)$ and perturbation rate $\eta\in [0,1)$,
      a classifier $f\in \cF$ is said to be $s$-stable for fairness metric $\Omega$ with perturbation rate $\eta$,
      if for all adversaries $A\in \cA(\eta)$ and confidence parameters $\delta_0\in (0,1)$
      given $N =\polylog(\nfrac{1}{\delta_0})$
      samples $S$ iid from $\cD$
      and corresponding perturbed samples $\hS\coloneqq A(S)$,
      with probability at least $1-\delta_0$ (over draw of $S\sim\cD$),
      it holds that $$\frac{\Omega_\cD(f)}{\white{\widehat{S}}\Omega(f,\hS)\white{\widehat{S}}}\in \insquare{s,\frac{1}{s}}.$$
    \end{definition}
    \noindent If an $s$-stable classifier $f$ has fairness $\tau$ on $\hS$, then it has a fairness at least $\tau\cdot s$ on $\cD$ with high probability.
    Thus, if we have some constraint $C$ such that any feasible $f\in \cF$ satisfying constraint $C$ is $s$-stable, then any classifier satisfying constraint $C$ and the fairness constraint, $\Omega(\cdot, \hS)\geq \nfrac{\tau}{s}$, must have a fairness at least $\tau$ on $\cD$ with high probability.
    The key idea is coming up such a constraints.
    First, in \cref{lem:bound_on_fc}, we give such constraints which use the unperturbed protected attributes $Z$, later in \cref{lem:suff_cond_for_stable_app} we give constraints which only use the perturbed protected attributes $\hZ$.
    \begin{lemma}[\bf Sufficient condition for a stable classifier]\label{lem:bound_on_fc}
      For each $\alpha \in (0,1)$, any $f\in\cF$ satisfying
      \begin{align*}
        \min_{\ell\in [p]}\Pr\nolimits_{\cD}\insquare{\cE(f), \cE^\prime(f), Z=\ell}\geq \alpha,\yesnum\label{eq:assumption_on_f}
      \end{align*}
      is $\sinparen{\frac{1- (\eta+\Delta)/\alpha}  {  1 + (\eta+\Delta)/\alpha}}^2$-stable with respect to the fairness metric $\Omega$ defined by events $\cE$ and $\cE^\prime$.
    \end{lemma}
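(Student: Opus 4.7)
The plan is to control, separately for each group $\ell$, how much the per-group performance $q_\ell(f)=\Pr_\cD[\cE(f)\mid \cE'(f), Z=\ell]$ can move when we replace $\cD$ by the empirical distribution $\hD$ of the perturbed sample, and then take min/max over $\ell$ to get the claimed bound on $\Omega_\cD(f)/\Omega(f,\hS)$. Set
\[
a_\ell \;\defeq\; \Pr\nolimits_\cD[\cE(f),\cE'(f),Z=\ell], \quad b_\ell \;\defeq\; \Pr\nolimits_\cD[\cE'(f),Z=\ell],
\]
and let $\hat a_\ell,\hat b_\ell$ denote the analogous quantities computed on $\hD$ with $\hZ$ in place of $Z$, so that $q_\ell(f)=a_\ell/b_\ell$ and $q_\ell^{\hD}(f)=\hat a_\ell/\hat b_\ell$.

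First I would apply \cref{coro:whp_bounding_power} to the bounded functions $g_{\cE,\cE',\ell}(y',y,z)\coloneqq \mathds 1[(y',y,z)\in \cE\cap\cE']\cdot\mathds 1[z=\ell]$ and $g'_{\cE',\ell}(y',y,z)\coloneqq \mathds 1[(y',y,z)\in \cE']\cdot\mathds 1[z=\ell]$, for each of the $p$ groups. A union bound over the $2p$ events (with confidence $\delta_0/(2p)$ in each) yields, with probability $\geq 1-\delta_0$, simultaneously for all $\ell\in[p]$,
\[
|\hat a_\ell - a_\ell| \leq \eta+\Delta \quad\text{and}\quad |\hat b_\ell - b_\ell|\leq \eta+\Delta.
\]
Using the hypothesis $a_\ell\geq \alpha$ and noting that $b_\ell\geq a_\ell\geq \alpha$ (since $\cE\cap\cE'\subseteq \cE'$), I would then rewrite the multiplicative distortion of each group's performance as
\[
\frac{\hat a_\ell}{\hat b_\ell} \;\leq\; \frac{a_\ell+(\eta+\Delta)}{b_\ell-(\eta+\Delta)} \;=\; \frac{a_\ell}{b_\ell}\cdot\frac{1+(\eta+\Delta)/a_\ell}{1-(\eta+\Delta)/b_\ell} \;\leq\; q_\ell(f)\cdot\frac{1+(\eta+\Delta)/\alpha}{1-(\eta+\Delta)/\alpha},
\]
and symmetrically $\hat a_\ell/\hat b_\ell \geq q_\ell(f)\cdot\tfrac{1-(\eta+\Delta)/\alpha}{1+(\eta+\Delta)/\alpha}$.

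Let $r\coloneqq (\eta+\Delta)/\alpha$ and $\rho\coloneqq (1-r)/(1+r)$. The bounds above say $\rho\cdot q_\ell(f)\leq q_\ell^{\hD}(f)\leq \rho^{-1}\cdot q_\ell(f)$ for every $\ell$, so taking the minimum and maximum over $\ell$ gives
\[
\rho\cdot \min_\ell q_\ell(f)\leq \min_\ell q_\ell^{\hD}(f), \qquad \max_\ell q_\ell^{\hD}(f)\leq \rho^{-1}\cdot \max_\ell q_\ell(f),
\]
and the same inequalities with the roles swapped. Dividing yields $\rho^2 \leq \Omega(f,\hS)/\Omega_\cD(f)\leq \rho^{-2}$, i.e., the ratio lies in $[s,s^{-1}]$ for $s=\rho^2$, which is exactly the claimed stability constant.

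The only delicate step is the initial concentration: the events $\cE(f),\cE'(f)$ may themselves depend on $f$, so \cref{coro:whp_bounding_power} has to be applied to the fixed bounded functions $g_{\cE,\cE',\ell}$ over the whole hypothesis class $\cF$ (whose VC dimension is finite), not to a single sample mean; once this uniform version is in hand, everything else reduces to the elementary ratio manipulation above. I expect this uniform-generalization step, together with bookkeeping of the $2p$ union-bound failures, to be the main (but routine) obstacle.
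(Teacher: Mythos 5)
Your proposal is correct and follows essentially the same route as the paper: apply the uniform concentration bound (\cref{coro:whp_bounding_power}) to the indicator functions of $\cE(f)\cap\cE'(f)\cap\{Z=\ell\}$ and $\cE'(f)\cap\{Z=\ell\}$, union-bound over the $2p$ events, and then use $a_\ell,b_\ell\geq\alpha$ to convert the additive $\eta+\Delta$ errors into the multiplicative factor $\bigl(\tfrac{1-(\eta+\Delta)/\alpha}{1+(\eta+\Delta)/\alpha}\bigr)^{2}$. The only cosmetic difference is that you bound each group's performance distortion by $\rho^{\pm1}$ and then take min/max, whereas the paper bounds the pairwise ratio $q_\ell/q_k$ directly; both yield the identical constant $\rho^2$.
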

    \begin{proof}[Proof of \cref{lem:bound_on_fc}]
      Let $\hD$ be the empirical distribution of $\hS$.
      Let $\evJ$ be the event that for all $\ell\in[p]$ and all $f\in\cF$
      \begin{align*}
        \abs{\Pr\nolimits_{\hD}\insquare{ \cE(f), \cE^\prime(f), \hZ=\ell}  - \Pr\nolimits_{\cD}\insquare{\cE(f), \cE^\prime(f), Z_i=\ell }  } &< \eta+\Delta,\yesnum\label{eq:evJ_1}\\
        \abs{\Pr\nolimits_{\hD}\insquare{ \cE^\prime(f), \hZ=\ell}  - \Pr\nolimits_{\cD}\insquare{\cE^\prime(f), Z_i=\ell}  } &< \eta+\Delta.\yesnum\label{eq:evJ_2}
      \end{align*}
      Using \cref{coro:whp_bounding_power} with
      $g(\wt{y},y,z)\coloneqq \mathds{I}\insquare{ \cE(\wt{y}), \cE^\prime(\wt{y}),z=\ell}$, we get that with probability at least $1-\delta_0$,  Equation~\eqref{eq:evJ_1} holds for $f\in\cF$ and a particular $\ell\in[p]$.
      Similarly, using \cref{coro:whp_bounding_power} with $g(\wt{y},y,z)\coloneqq \mathds{I}\insquare{\cE(\wt{y}),z=\ell}$,
      we get that with probability at least $1-\delta_0$,
      Equation~\eqref{eq:evJ_2} holds for $f\in\cF$ and a particular $\ell\in[p]$.
      Applying the union bound over all $\ell\in [p]$, we get that
      \begin{align*}
        \Pr[\evJ] \geq 1 - 2p\delta_0.\yesnum\label{eq:prob_j}
      \end{align*}
      Suppose event $\evJ$ holds.
      Then, for any $\ell,k\in [p]$, we have
      \begin{align*}
        &\frac{ \Pr\nolimits_{\hD}\sinsquare{ \cE(f), \cE^\prime(f), \wh{Z}=\ell }}{\Pr\nolimits_{\hD}\sinsquare{ \cE^\prime(f), \wh{Z}=\ell }}
        \cdot \frac{\Pr\nolimits_{\hD}\sinsquare{ \cE^\prime(f), \wh{Z} = k }}{ \Pr\nolimits_{\hD}\sinsquare{ \cE(f), \cE^\prime(f), \wh{Z} = k }}\\
        &\quad \Stackrel{\eqref{eq:evJ_1},\eqref{eq:evJ_2}}{\geq}\qquad
        \frac{ \Pr\nolimits_{\cD}\sinsquare{\cE(f), \cE^\prime(f), Z=\ell } -\eta-\Delta}{\Pr\nolimits_{\cD}\sinsquare{\cE^\prime(f), Z=\ell }+\eta+\Delta}\cdot
        \frac{\Pr\nolimits_{\cD}\sinsquare{\cE^\prime(f), Z = k }-\eta-\Delta}{ \Pr\nolimits_{\cD}\sinsquare{\cE(f), \cE^\prime(f), Z = k } +\eta+\Delta}.\yesnum\label{eq:lowerbound}
      \end{align*}
      Using Equation~\eqref{eq:assumption_on_f}, we can lower bound the right-hand side of
      Equation~\eqref{eq:lowerbound} as follows
      \begin{align*}
        &\frac{ \Pr\nolimits_{\cD}\sinsquare{\cE(f), \cE^\prime(f), Z=\ell } -\eta-\Delta}{\Pr\nolimits_{\cD}\sinsquare{\cE^\prime(f), Z=\ell }+\eta+\Delta}\cdot
        \frac{\Pr\nolimits_{\cD}\sinsquare{\cE^\prime(f), Z = k }-\eta-\Delta}{ \Pr\nolimits_{\cD}\sinsquare{\cE(f), \cE^\prime(f), Z = k } +\eta+\Delta}\\
        &\qquad \Stackrel{\eqref{eq:assumption_on_f}}{\geq}\quad \frac{ \Pr\nolimits_{\cD}\sinsquare{\cE(f), \cE^\prime(f), Z=\ell } }{\Pr\nolimits_{\cD}\sinsquare{\cE^\prime(f), Z=\ell }}\cdot
        \frac{\Pr\nolimits_{\cD}\sinsquare{\cE^\prime(f), Z = k }}{ \Pr\nolimits_{\cD}\sinsquare{\cE(f), \cE^\prime(f), Z = k } }\cdot
        \inparen{\frac{1-\frac{1}\alpha\cdot(\eta+\Delta)}{1+\frac{1}\alpha\cdot(\eta+\Delta)}}^2.\yesnum\label{eq:lowerbound_2}
      \end{align*}
      Therefore, combining Equations \eqref{eq:lowerbound} and \eqref{eq:lowerbound_2}, we have that conditioned on event $\evJ$
      \begin{align*}
        \Omega(f,\hS)\quad & \Stackrel{}{=}\quad \min_{\ell,k\in[p]} \frac{ \Pr\nolimits_{\hD}\sinsquare{ \cE(f), \cE^\prime(f), \wh{Z}=\ell }}{\Pr\nolimits_{\hD}\sinsquare{ \cE^\prime(f), \wh{Z}=\ell }}
        \cdot \frac{\Pr\nolimits_{\hD}\sinsquare{ \cE^\prime(f), \wh{Z} = k }}{ \Pr\nolimits_{\hD}\sinsquare{ \cE(f), \cE^\prime(f), \wh{Z} = k }} \tag{Using the definition of the fairness metric}\\
        &\Stackrel{\eqref{eq:lowerbound},\eqref{eq:lowerbound_2}}{\geq}\quad  \inparen{\frac{1-\frac{1}\alpha\cdot(\eta+\Delta)}{1+\frac{1}\alpha\cdot(\eta+\Delta)}}^2 \cdot
        \min_{\ell,k\in[p]} \frac{ \Pr\nolimits_{\cD}\sinsquare{\cE(f), \cE^\prime(f), Z=\ell } }{\Pr\nolimits_{\cD}\sinsquare{\cE^\prime(f), Z=\ell }}\cdot
        \frac{\Pr\nolimits_{\cD}\sinsquare{\cE^\prime(f), Z = k }}{ \Pr\nolimits_{\cD}\sinsquare{\cE(f), \cE^\prime(f), Z = k } }\\
        &\Stackrel{}{=}\quad \Omega_{\cD}(f)\cdot \inparen{\frac{1-\frac{1}\alpha\cdot(\eta+\Delta)}{1+\frac{1}\alpha\cdot(\eta+\Delta)}}^2.\tag{Using the definition of the fairness metric}
      \end{align*}
      This completes the proof of the upper bound in \cref{def:stable_class:formal}.
      It remains to prove the lower bound in \cref{def:stable_class:formal}
      The proof of the lower bound in \cref{def:stable_class:formal} is analogous to the proof of the upper bound.
      For any $\ell,k\in [p]$, we have
      \begin{align*}
        &\frac{ \Pr\nolimits_{\hD}\sinsquare{ \cE(f), \cE^\prime(f), \wh{Z}=\ell }}{\Pr\nolimits_{\hD}\sinsquare{ \cE^\prime(f), \wh{Z}=\ell }}
        \cdot \frac{\Pr\nolimits_{\hD}\sinsquare{ \cE^\prime(f), \wh{Z} = k }}{ \Pr\nolimits_{\hD}\sinsquare{ \cE(f), \cE^\prime(f), \wh{Z} = k }}\\
        &\quad \Stackrel{\eqref{eq:evJ_1},\eqref{eq:evJ_2}}{\leq}\qquad
        \frac{ \Pr\nolimits_{\cD}\sinsquare{\cE(f), \cE^\prime(f), Z=\ell } +\eta+\Delta}{\Pr\nolimits_{\cD}\sinsquare{\cE^\prime(f), Z=\ell }-\eta-\Delta}\cdot
        \frac{\Pr\nolimits_{\cD}\sinsquare{\cE^\prime(f), Z = k }+\eta+\Delta}{ \Pr\nolimits_{\cD}\sinsquare{\cE(f), \cE^\prime(f), Z = k } -\eta-\Delta}.\yesnum\label{eq:upperbound}
      \end{align*}
      Using Equation~\eqref{eq:assumption_on_f}, we can upper bound the right-hand side of
      Equation~\eqref{eq:upperbound} as follows
      \begin{align*}
        &\frac{ \Pr\nolimits_{\cD}\sinsquare{\cE(f), \cE^\prime(f), Z=\ell } -\eta-\Delta}{\Pr\nolimits_{\cD}\sinsquare{\cE^\prime(f), Z=\ell }+\eta+\Delta}\cdot
        \frac{\Pr\nolimits_{\cD}\sinsquare{\cE^\prime(f), Z = k }-\eta-\Delta}{ \Pr\nolimits_{\cD}\sinsquare{\cE(f), \cE^\prime(f), Z = k } +\eta+\Delta}\\
        &\qquad \Stackrel{\eqref{eq:assumption_on_f}}{\leq}\quad \frac{ \Pr\nolimits_{\cD}\sinsquare{\cE(f), \cE^\prime(f), Z=\ell } }{\Pr\nolimits_{\cD}\sinsquare{\cE^\prime(f), Z=\ell }}\cdot
        \frac{\Pr\nolimits_{\cD}\sinsquare{\cE^\prime(f), Z = k }}{ \Pr\nolimits_{\cD}\sinsquare{\cE(f), \cE^\prime(f), Z = k } }\cdot
        \inparen{\frac{1+\frac{1}\alpha\cdot(\eta+\Delta)}{1-\frac{1}\alpha\cdot(\eta+\Delta)}}^2.\yesnum\label{eq:upperbound_2}
      \end{align*}
      Therefore, combining Equations \eqref{eq:lowerbound} and \eqref{eq:lowerbound_2}, we have that conditioned on event $\evJ$
      \begin{align*}
        \Omega_\cD(f)\quad & \Stackrel{}{=}\quad \min_{\ell,k\in[p]} \frac{ \Pr\nolimits_{\cD}\sinsquare{\cE(f), \cE^\prime(f), Z=\ell } }{\Pr\nolimits_{\cD}\sinsquare{\cE^\prime(f), Z=\ell }}\cdot
        \frac{\Pr\nolimits_{\cD}\sinsquare{\cE^\prime(f), Z = k }}{ \Pr\nolimits_{\cD}\sinsquare{\cE(f), \cE^\prime(f), Z = k } } \tag{Using the definition of the fairness metric}\\
        &\Stackrel{\eqref{eq:lowerbound},\eqref{eq:lowerbound_2}}{\geq}\quad  \inparen{\frac{1-\frac{1}\alpha\cdot(\eta+\Delta)}{1+\frac{1}\alpha\cdot(\eta+\Delta)}}^2 \cdot
        \min_{\ell,k\in[p]} \frac{ \Pr\nolimits_{\hD}\sinsquare{ \cE(f), \cE^\prime(f), \wh{Z}=\ell }}{\Pr\nolimits_{\hD}\sinsquare{ \cE^\prime(f), \wh{Z}=\ell }}
        \frac{\Pr\nolimits_{\hD}\sinsquare{ \cE^\prime(f), \wh{Z} = k }}{ \Pr\nolimits_{\hD}\sinsquare{ \cE(f), \cE^\prime(f), \wh{Z} = k }}
      \end{align*}
      \begin{align*}
        &\Stackrel{}{=}\quad \Omega_\cD(f,\hS)\cdot \inparen{\frac{1-\frac{1}\alpha\cdot(\eta+\Delta)}{1+\frac{1}\alpha\cdot(\eta+\Delta)}}^2. \tag{Using the definition of the fairness metric}
      \end{align*}
    \end{proof}
    \begin{corollary}\label{lem:suff_cond_for_stable_app}
      For all $\alpha\in (0,1)$, any classifier $f\in\cF$ satisfying
      \begin{align*}
        \min_{\ell\in [p]}\Pr\nolimits_{\cD}\insquare{\cE(f), \cE^\prime(f), \hZ=\ell}\geq \alpha,\yesnum\label{eq:boundboudn}
      \end{align*}
      is $\sinparen{\frac{1- (\eta+\Delta)/(\alpha-\eta-\Delta)}  {  1 + (\eta+\Delta)/(\alpha-\eta-\Delta)}}^2$-stable with respect to the fairness metric $\Omega$ defined by events $\cE$ and $\cE^\prime$.
    \end{corollary}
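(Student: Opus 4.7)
\textbf{Proof plan for \cref{lem:suff_cond_for_stable_app}.} The plan is to reduce the corollary to \cref{lem:bound_on_fc} by using the generalization bound \cref{coro:whp_bounding_power} to translate the hypothesis (stated in terms of the perturbed protected attribute $\hZ$ on $\hD$) into a lower bound on $\Pr_\cD[\cE(f),\cE'(f),Z=\ell]$ (in terms of the true protected attribute $Z$ on $\cD$), which is exactly the form of hypothesis required by \cref{lem:bound_on_fc}.

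\textbf{Step 1: Transfer the lower bound from $\hD$ to $\cD$.} The assumption is $\min_{\ell\in[p]}\Pr[\cE(f),\cE'(f),\hZ=\ell]\geq \alpha$ (reading this probability as the empirical mass on the perturbed sample, i.e., under $\hD$). For each fixed $\ell\in[p]$, I would apply \cref{coro:whp_bounding_power} with the bounded indicator $g(\wt y,y,z)\coloneqq \mathds{1}\insquare{\cE(\wt y),\cE'(\wt y),z=\ell}$, which yields
\[
\abs{\Pr\nolimits_{\hD}\insquare{\cE(f),\cE'(f),\hZ=\ell}-\Pr\nolimits_{\cD}\insquare{\cE(f),\cE'(f),Z=\ell}}\leq \eta+\Delta
\]
for every $f\in\cF$ and $\ell\in[p]$ simultaneously, with probability at least $1-\delta_0$ after a union bound over the $p$ groups (and adjusting $\delta_0$ accordingly). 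Combining this with the hypothesis gives
\[
\min_{\ell\in[p]}\Pr\nolimits_{\cD}\insquare{\cE(f),\cE'(f),Z=\ell}\geq \alpha-\eta-\Delta.
\]

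\textbf{Step 2: Invoke \cref{lem:bound_on_fc}.} Setting $\alpha'\coloneqq \alpha-\eta-\Delta$ (which is positive since $\alpha>\eta+\Delta$ is implicit in the statement for the conclusion to be meaningful), the inequality from Step~1 is exactly the hypothesis of \cref{lem:bound_on_fc} with parameter $\alpha'$. Applying that lemma then certifies that $f$ is $s$-stable for
\[
s=\inparen{\frac{1-(\eta+\Delta)/\alpha'}{1+(\eta+\Delta)/\alpha'}}^{2}=\inparen{\frac{1-(\eta+\Delta)/(\alpha-\eta-\Delta)}{1+(\eta+\Delta)/(\alpha-\eta-\Delta)}}^{2},
\]
which is precisely the bound claimed.

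\textbf{Main obstacle.} The work is essentially bookkeeping: the only subtle point is making sure the generalization bound is applied with the right indicator $g$ so that the perturbation of $Z$ to $\hZ$ is accounted for in the single $\eta+\Delta$ slack (this is exactly what the triangle-inequality decomposition inside the proof of \cref{coro:whp_bounding_power} buys us, because $g$ only depends on $(\wt y,y,z)$ through the conjunction with $z=\ell$ and the prediction $\wt y=f(X,\hZ)$ versus $f(X,Z)$ is treated identically on samples where $Z=\hZ$). Once this is set up, the rest is a mechanical substitution into \cref{lem:bound_on_fc}, so no new ideas beyond what is used in the proof of that lemma are needed.
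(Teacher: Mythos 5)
Your proposal is correct and follows essentially the same route as the paper: the paper's proof also conditions on the concentration event (its event $\evJ$, which is exactly the simultaneous application of \cref{coro:whp_bounding_power} to the indicator $g(\wt y,y,z)=\mathds{I}[\cE(\wt y),\cE'(\wt y),z=\ell]$ union-bounded over $\ell\in[p]$), uses it to deduce $\Pr_\cD[\cE(f),\cE'(f),Z=\ell]\geq\alpha-\eta-\Delta$, and then reruns \cref{lem:bound_on_fc} with $\alpha'=\alpha-\eta-\Delta$. Your reading of the hypothesis as an empirical quantity on $\hS$ and your remark that $\alpha>\eta+\Delta$ is implicitly needed are both consistent with how the corollary is invoked in \cref{lem:approx_feasible}.
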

    \begin{proof}
      Suppose event $\evJ$ defined in \cref{lem:bound_on_fc} occurs.
      Then by the first condition of $\evJ$, \cref{eq:evJ_1}, we have that
      $$\abs{\Pr\nolimits_{\cD}\insquare{\cE(f), \cE^\prime(f), Z=\ell} - \Pr\nolimits_{\hD}\insquare{\cE(f), \cE^\prime(f), \hZ=\ell}}\leq \eta+\Delta.$$
      Combining this with \cref{eq:boundboudn}, we get that $$\Pr\nolimits_{\hD}\insquare{\cE(f), \cE^\prime(f), Z=\ell}\geq \alpha-\eta-\Delta.$$
      Then repeating the proof of \cref{lem:bound_on_fc} we get that, conditioned on $\evJ$, the following holds
      \begin{align*}
        \inparen{\frac{  1 + (\eta+\Delta)/(\alpha-\eta-\Delta)}{1- (\eta+\Delta)/(\alpha-\eta-\Delta)}}^2
        \leq \frac{\Omega_{\cD}(f)}{\Omega(f, \wh{S})}
        \leq \inparen{\frac{1- (\eta+\Delta)/(\alpha-\eta-\Delta)}  {  1 + (\eta+\Delta)/(\alpha-\eta-\Delta)}}^2
      \end{align*}
      Since $\evJ$ occurs with probability at least $1-2\delta_0>1-\delta$, we get that $f$ is $\sinparen{\frac{1- (\eta+\Delta)/(\alpha-\eta-\Delta)}  {  1 + (\eta+\Delta)/(\alpha-\eta-\Delta)}}^2$-stable.
    \end{proof}
    \noindent Setting $\alpha$ as $\lambda+\eta+\Delta$, we recover \cref{lem:suff_cond_for_stable} from \cref{lem:suff_cond_for_stable_app}.
    Now, we are ready to prove the main result in this step: \cref{lem:approx_feasible}.
    \begin{lemma}[\bf Any feasible solution of \errtolerant{} is approximately fair]\label{lem:approx_feasible}
      For all $\Delta,\delta_0 \in (0,1)$,
      given $N \geq \Theta\big(\Delta^{-2}\cdot\big({\rm VC}(\cF)\cdot\log\inparen{\nfrac{{\rm VC}(\cF)}{\Delta}}+\log\inparen{\nfrac{1}{\delta_0}}\big)$
      iid samples $S$ from $\cD$,
      and corresponding perturbed samples $A(S) \coloneqq \{(x_i,y_i,\hz_i)\}_{i\in [N]}$, with probability at least $1-2p\delta_0$,
      any $f\in \cF$ feasible for \errtolerant{} satisfies
      \begin{align*}
        \Omega_\cD(f) \geq \tau- \frac{8\eta\tau}{\lambda - 2\eta}- \Delta_0.
      \end{align*}
    \end{lemma}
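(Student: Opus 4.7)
The plan is to combine the two constraints of \errtolerant{} with the generalization bound \cref{coro:whp_bounding_power} and the stability lemma \cref{lem:bound_on_fc}, and then absorb the lower-order error terms into $\Delta_0$ via the choice of $\Delta$. Concretely, I would proceed in four steps.

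\textbf{Step 1 (transfer the lower bound from $\hD$ to $\cD$).} Fix the ``good event'' $\evJ$ from the proof of \cref{lem:bound_on_fc}, which by \cref{coro:whp_bounding_power} (applied with $g(\wt y,y,z)\coloneqq \mathds{I}[\cE(\wt y),\cE^\prime(\wt y),z{=}\ell]$ and with $g(\wt y,y,z)\coloneqq \mathds{I}[\cE^\prime(\wt y),z{=}\ell]$ for each $\ell\in[p]$) occurs with probability at least $1-2p\delta_0$ provided $N$ is large enough. On $\evJ$, for every $f\in \cF$ feasible for \errtolerant{} and every $\ell\in[p]$,
\begin{align*}
\Pr\nolimits_\cD[\cE(f),\cE^\prime(f), Z=\ell]
\;\geq\; \Pr\nolimits_{\hD}[\cE(f),\cE^\prime(f), \hZ=\ell] - (\eta+\Delta)
\;\geq\; \lambda - 2(\eta+\Delta),
\end{align*}
where the second inequality uses \eqref{eq:lower_bound}. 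Call this quantity $\alpha\coloneqq \lambda-2(\eta+\Delta)$.

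\textbf{Step 2 (stability of any feasible $f$).} Since $f$ satisfies the hypothesis of \cref{lem:bound_on_fc} with this $\alpha$, the argument there (which in fact is carried out on the same event $\evJ$) yields
\begin{align*}
\Omega_\cD(f)\;\geq\; \Omega(f,\hS)\cdot \left(\frac{1-(\eta+\Delta)/\alpha}{1+(\eta+\Delta)/\alpha}\right)^{\!2}.
\end{align*}
Combining this with the fairness constraint \eqref{eq:fairness_constraint_in_err_tol} of \errtolerant{}, I obtain
\begin{align*}
\Omega_\cD(f)\;\geq\; \tau \cdot \left(\frac{1-(\eta+\Delta)/\lambda}{1+(\eta+\Delta)/\lambda}\right)^{\!2}\!\cdot\left(\frac{1-(\eta+\Delta)/\alpha}{1+(\eta+\Delta)/\alpha}\right)^{\!2}.
\end{align*}

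\textbf{Step 3 (algebraic simplification).} The remaining task is purely algebraic: show that the right-hand side is at least $\tau - \tfrac{8\eta\tau}{\lambda-2\eta} - \Delta_0$. I would use the elementary bound $\left(\tfrac{1-x}{1+x}\right)^2\geq 1-4x$ for $x\in[0,1)$ on each factor with $x=(\eta+\Delta)/\lambda$ and $x=(\eta+\Delta)/\alpha$, yielding a lower bound of the form
\begin{align*}
\tau\cdot\Bigl(1 - \tfrac{4(\eta+\Delta)}{\lambda} - \tfrac{4(\eta+\Delta)}{\lambda-2\eta-2\Delta} + \text{product term}\Bigr).
\end{align*}
Since $\lambda\geq\lambda-2\eta-2\Delta$, the sum of the two linear terms is at most $\tfrac{8(\eta+\Delta)}{\lambda-2\eta-2\Delta}$; separating the $\eta$-part from the $\Delta$-part gives a main term of $\tfrac{8\eta\tau}{\lambda-2\eta}$ plus a residual of the form $C\cdot \Delta/(\lambda-2\eta)^2$ for a small absolute constant $C$.

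\textbf{Step 4 (absorb the residual into $\Delta_0$).} By the global choice $\Delta = \Delta_0\cdot\tfrac{(\lambda-2\eta)^2}{96}$ made at the top of the section, the residual (and the discarded positive product term) is bounded by $\Delta_0$, completing the proof on $\evJ$. The union bound over the $2p$ applications of \cref{coro:whp_bounding_power} in Step~1 gives the claimed failure probability $2p\delta_0$.

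\textbf{Main obstacle.} The conceptual steps are immediate from the preceding lemmas; the only delicate point is the algebraic simplification in Step~3, where one must be careful to bound both $(\eta+\Delta)/\lambda$ and $(\eta+\Delta)/\alpha$ cleanly so that the ``$\eta$-part'' yields exactly $\tfrac{8\eta\tau}{\lambda-2\eta}$ while the ``$\Delta$-part'' can be absorbed into $\Delta_0$ with the chosen scaling of $\Delta$. This is the step where the particular constants in the right-hand sides of \eqref{eq:fairness_constraint_in_err_tol} and \eqref{eq:lower_bound} (and the specific form of the factor $(1-x)/(1+x)$) are used.
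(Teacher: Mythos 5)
Your proposal is correct and follows essentially the same route as the paper's proof: condition on the event $\evJ$ from \cref{coro:whp_bounding_power} (probability $\geq 1-2p\delta_0$), transfer constraint \eqref{eq:lower_bound} to the true distribution losing an extra $\eta+\Delta$ so that \cref{lem:bound_on_fc} applies with $\alpha=\lambda-2\eta-2\Delta$, multiply the resulting stability factor by the threshold in \eqref{eq:fairness_constraint_in_err_tol}, and simplify via $\bigl(\frac{1-x}{1+x}\bigr)^2\geq 1-4x$ before absorbing the $\Delta$-dependent residual into $\Delta_0$ using $\Delta=\Delta_0(\lambda-2\eta)^2/96$. The paper packages your Steps 1--2 as \cref{lem:suff_cond_for_stable_app}, but the argument is the same.
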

    \begin{proof}
      Consider any classifier $f\in \cF$ that is feasible for \errtolerant{}.
      $f$ must satisfy
      \begin{align*}
        \Omega(f,\hS) &\geq \tau \cdot \inparen{\frac{1- (\eta+\Delta)/\lambda}  {  1 + (\eta+\Delta)/\lambda}}^2,\yesnum\label{eq:bound_ovf_1}\\
        \forall\ \ell\in[p],\quad \Pr\nolimits_{\hD}\insquare{ \cE(f), \cE^\prime(f), \hZ=\ell} &\geq \lambda - \eta - \Delta.
        \yesnum\label{eq:bound_ovf_2}
      \end{align*}
      Since $f$ satisfies \cref{eq:bound_ovf_2}, using \cref{lem:suff_cond_for_stable}, we get that $f$ is $\sinparen{\frac{1- (\eta+\Delta)/(\lambda-2\eta-2\Delta)}  {  1 + (\eta+\Delta)/(\lambda-2\eta-2\Delta)}}^2$-stable.
      Thus, with high probability, it holds that
      \begin{align*}
        \Omega_\cD(f)
        \quad &\geq\quad \Omega(f, \hS)\cdot
        \inparen{\frac{1- (\eta+\Delta)/(\lambda-2\eta-2\Delta)}  {  1 + (\eta+\Delta)/(\lambda-2\eta-2\Delta)}}^2\yesnum\label{eq:used_to_prove_guarantee_for_extension}\\
        &\Stackrel{\eqref{eq:bound_ovf_1}}{\geq}\quad \tau
        \cdot \inparen{\frac{1- (\eta+\Delta)/\lambda}  {  1 + (\eta+\Delta)/\lambda}}^2
        \cdot \inparen{\frac{1- (\eta+\Delta)/(\lambda-2\eta-2\Delta)}  {  1 + (\eta+\Delta)/(\lambda-2\eta-2\Delta)}}^2.
      \end{align*}
      We can lower bound the right-hand side of the above equation using \cref{fact:algebra}.
      \begin{fact}\label{fact:algebra}
        For all $\alpha\in (0,1]$ and $\eta,\Delta\in[0,1]$, $\inparen{\frac{1-\frac{1}\alpha\cdot(\eta+\Delta)}{1+\frac{1}\alpha\cdot(\eta+\Delta)}}^2\geq \inparen{1-\frac{4}\alpha \cdot(\eta+\Delta)}.$
      \end{fact}
      \noindent Using \cref{fact:algebra} twice we get that
      \begin{align*}
        \Omega_\cD(f)\ \
        &\Stackrel{}{\geq}\ \ \tau
        \cdot \inparen{{1- \frac{4(\eta+\Delta)}{\lambda}} }
        \cdot \inparen{{1- \frac{4(\eta+\Delta)}{\lambda-2\eta-2\Delta}}}\\
        &\Stackrel{}{>}\ \ \tau\cdot \inparen{1-\frac{4(\eta+\Delta)}{\lambda}  -\frac{4(\eta+\Delta)}{\lambda - 2\eta - 2\Delta}} \tag{Using that $\lambda> 2\eta+2\Delta$ and $\lambda,\eta,\Delta>0$}\\
        &\Stackrel{}{\geq}\ \ \tau\cdot \inparen{1 - \frac{8(\eta+\Delta)}{\lambda - 2\eta - 2\Delta}}\tag{Using that $\lambda> 2\eta+2\Delta$ and $\lambda,\eta,\Delta>0$}\\
        &\Stackrel{}{=}\ \ \tau\cdot \inparen{1 - \frac{8(\eta+\Delta)}{\lambda - 2\eta}\cdot  \frac{1}{1-\frac{2}{(\lambda - 2\eta)}\cdot\Delta}}\\
        &\Stackrel{}{\geq}\ \ \tau\cdot \inparen{1 - \frac{8(\eta+\Delta)}{\lambda - 2\eta}\cdot \inparen{1+\frac{4}{(\lambda - 2\eta)}\cdot\Delta}}
        \tag{Using that $\frac{2}{(\lambda - 2\eta)}\cdot\Delta\in [0,\frac12]$}\\
        &\Stackrel{}{=}\ \ \tau\cdot \inparen{1 - \frac{8\eta}{\lambda - 2\eta} - \frac{8\Delta}{\lambda - 2\eta} - \frac{32\eta\Delta}{(\lambda - 2\eta)^2}-\frac{32\Delta^2}{(\lambda - 2\eta)^2} }\\
        &\Stackrel{\eqref{eq:define_params}}{=}\ \ \tau\cdot \inparen{1 - \frac{8\eta}{\lambda - 2\eta} - \frac{\Delta_0}3 - \frac{\Delta_0}3 - \frac{\Delta_0}3}.
      \end{align*}
      Finally, from the discussion in the proof of \cref{lem:bound_on_fc}, it follows that for our choice of $N$, the above equation holds with probability at least $1-2p\delta_0$.
    \end{proof}

    \subsubsection{{Step 3: \texorpdfstring{$f^\star$}{f*} Is Feasible for \texorpdfstring{\errtolerant{}}{Program (ErrTolerant)} with High Probability}}
    In this step, we conclude the proof of \cref{thm:main_result}.
    It remains show that $f^\star$ is feasible for \errtolerant{}:
    The fairness guarantee follows from \cref{lem:approx_feasible}, and if $f^\star$ is feasible for \errtolerant{}, then the accuracy guarantee follows from \cref{lem:conditionalAccuracyGuarantee}.
    \begin{lemma}[\bf Structure of the optimal fair classifier]\label{coro:optf_feasible}
      If \cref{asmp:1} holds, then
      For all $\Delta,\delta_0 \in (0,1)$,
      given $N \geq \Theta\big(\Delta^{-2}\cdot\big({\rm VC}(\cF)\cdot\log\inparen{\nfrac{{\rm VC}(\cF)}{\Delta}}+\log\inparen{\nfrac{1}{\delta_0}}\big)$
      iid samples $S$ from $\cD$,
      and corresponding perturbed samples $A(S) \coloneqq \{(x_i,y_i,\hz_i)\}_{i\in [N]}$, with probability at least $1-\delta_0$,
      $f^\star$ is feasible for \errtolerant{}.
    \end{lemma}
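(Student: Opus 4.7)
The plan is to verify that $f^\star$ satisfies both constraints of \errtolerant{} with high probability, namely Equations~\eqref{eq:fairness_constraint_in_err_tol} and \eqref{eq:lower_bound}, by leveraging \cref{asmp:1} together with the concentration bound \cref{coro:whp_bounding_power} and the stability guarantee \cref{lem:bound_on_fc}.

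First, I would check \cref{eq:lower_bound} for $f^\star$. By \cref{asmp:1}, for every $\ell\in[p]$ we have $\Pr_{\cD}[\cE(f^\star),\cE^\prime(f^\star),Z=\ell]\geq \lambda$. Applying \cref{coro:whp_bounding_power} with the bounded function $g(\wt{y},y,z)\coloneqq \mathds{I}[\cE(\wt{y}),\cE^\prime(\wt{y}),z=\ell]$ (for each fixed $\ell$) shows that with probability at least $1-\delta_0/(2p)$,
\[
\bigl|\Pr\nolimits_{\hD}[\cE(f^\star),\cE^\prime(f^\star),\hZ=\ell]-\Pr\nolimits_{\cD}[\cE(f^\star),\cE^\prime(f^\star),Z=\ell]\bigr|\leq \eta+\Delta,
\]
so that $\Pr_{\hD}[\cE(f^\star),\cE^\prime(f^\star),\hZ=\ell]\geq \lambda-\eta-\Delta$. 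A union bound over $\ell\in[p]$ establishes \cref{eq:lower_bound} with total failure probability at most $\delta_0/2$.

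Next, I would verify \cref{eq:fairness_constraint_in_err_tol}. Here I would invoke \cref{lem:bound_on_fc} with $\alpha=\lambda$; since \cref{asmp:1} gives $\min_{\ell\in[p]}\Pr_{\cD}[\cE(f^\star),\cE^\prime(f^\star),Z=\ell]\geq\lambda$, the lemma tells us that $f^\star$ is $s$-stable for $s\coloneqq\bigl(\frac{1-(\eta+\Delta)/\lambda}{1+(\eta+\Delta)/\lambda}\bigr)^2$. By \cref{def:stable_class:formal}, with probability at least $1-\delta_0/2$ (using that \cref{lem:bound_on_fc}'s event $\evJ$ holds with probability $\geq 1-2p\delta_0$ for the same sample size, which we can absorb by slightly adjusting $\delta_0$), we have $\Omega(f^\star,\hS)\geq s\cdot \Omega_{\cD}(f^\star)$. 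Since $f^\star$ is feasible for \prog{prog:target_fair}, $\Omega_{\cD}(f^\star)\geq \tau$, and hence $\Omega(f^\star,\hS)\geq \tau\cdot s$, which is exactly \cref{eq:fairness_constraint_in_err_tol}.

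Finally, a union bound over the two events yields that both constraints hold simultaneously with probability at least $1-\delta_0$, completing the proof. The argument is essentially bookkeeping on top of \cref{coro:whp_bounding_power,lem:bound_on_fc}, so I do not anticipate any technical obstacle; the only care needed is to check that the thresholds on the right-hand sides of \cref{eq:fairness_constraint_in_err_tol,eq:lower_bound} are tuned precisely so that the slack provided by $\eta+\Delta$ in the concentration bound and the stability exponent $s$ match what \cref{asmp:1} guarantees, which is by design.
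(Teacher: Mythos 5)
Your proposal is correct and follows essentially the same route as the paper: the lower-bound constraint \eqref{eq:lower_bound} is verified by combining \cref{asmp:1} with the concentration bound of \cref{coro:whp_bounding_power} (i.e., the event $\evJ$), and the fairness constraint \eqref{eq:fairness_constraint_in_err_tol} is verified by applying \cref{lem:bound_on_fc} with $\alpha=\lambda$ together with $\Omega_\cD(f^\star)\geq\tau$. The only difference is bookkeeping of the failure probabilities, which the paper also handles loosely at this stage and tightens only in the final union bound of the proof of \cref{thm:main_result}.
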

    \begin{proof}
      Let $\evG$ be the event that for all $\ell\in[p]$ and $f\in\cF$
      \begin{align*}
        \abs{{\rm Err}_{\hD}(f)  - {\rm Err}_{\cD}(f)} &< \eta+\Delta.\yesnum\label{eq:evG_3}
      \end{align*}
      Using \cref{coro:whp_bounding_power} with $g(\wt{y},y,z)\coloneqq \mathds{I}\insquare{\wt{y}\neq y}$,
      shows that with probability at least $1-\delta_0$ Inequality~\eqref{eq:evG_3} holds for all $f\in\cF$.
      Thus,
      \begin{align*}
        \Pr[\evG] \geq 1 - \delta_0.\yesnum\label{eq:prob_g}
      \end{align*}
      Suppose events $\evG$ and $\evJ$ hold.
      To show that $f^\star$ is feasible for \errtolerant{}, we have to show that
      \begin{align*}
        \Omega(f^\star,\hS) &\geq \tau \cdot \inparen{1-\frac{4\eta}{\lambda} - \Delta_0},\yesnum\label{eq:bound_2}\\
        \forall\ \ell\in[p],\quad \Pr\nolimits_{\hD}\sinsquare{ \cE(f^\star), \cE^\prime(f^\star), \hZ=\ell} &\geq \lambda - \eta - \Delta_0.\yesnum\label{eq:bound_3}
      \end{align*}
      Since event $\evJ$ and \cref{asmp:1} hold, we can apply \cref{lem:bound_on_fc} to get that $\Omega(f^\star,\hS)\geq \Omega_\cD(f^\star) \cdot \sinparen{1-\frac{4\eta}{\lambda} - \Delta_0}.$
      Then, Equation~\eqref{eq:bound_2} holds since $\Omega_\cD(f^\star)\geq \tau$.
      Finally, Equation~\eqref{eq:bound_3} follows by using \cref{asmp:1} and Equation~\eqref{eq:evJ_1} in the definition of event $\evJ$.
    \end{proof}

    \begin{proof}[Proof of \cref{thm:main_result}]
      Let $\evJ$ be the event that the fairness guarantee in \cref{lem:approx_feasible} holds and $\evG$ be the event that $f^\star$ is feasible for \errtolerant{}.
      Using the union bound $\evJ$ and $\evG$, we get that
      \begin{align*}
        \Pr\nolimits_\cD[\evJ \land \evG] &\geq 1-(2p+1)\cdot \delta_0\tag{Using \cref{lem:approx_feasible} and \cref{coro:optf_feasible}}\\
        &\Stackrel{\eqref{eq:define_params}}{\geq} 1-\delta.
      \end{align*}
      Suppose events $\evJ$ and $\evG$ occur.
      By \cref{lem:approx_feasible}, we know that
      $$\Omega_\cD(f) \geq \tau- \frac{8\eta\tau}{\lambda - 2\eta}- \Delta.$$
      Since $$\Delta\leq \Delta_0\leq \frac{8\eta}{\lambda-2\eta}-\frac\nu{\tau},$$
      it follows that
      $$\Omega_\cD(f) \geq \tau- \nu.$$
      \noindent By \cref{coro:optf_feasible}, we know that $f^\star$ is feasible for \errtolerant{}.
      From \cref{lem:conditionalAccuracyGuarantee} it follows that $${{\rm Err}_\cD(f_{\rm ET})-{\rm Err}_\cD(f^\star)}\leq 2\eta+\Delta.$$
      Since $\Delta\leq \eps-2\eta,$
      it follows that
      \begin{align*}
        {{\rm Err}_\cD(f_{\rm ET})-{\rm Err}_\cD(f^\star)}\leq \eps.
      \end{align*}
    \end{proof}

    \subsubsection{{Generalization of \texorpdfstring{\cref{thm:main_result}}{Theorem 4.3}  to the Nasty Sample Noise Model}}\label{sec:main_result:for_stronger_model}
    In this section, we generalize \cref{thm:main_result} to the Nasty sample noise model.
    Precisely, we show that if \cref{asmp:1} holds with constant $\lambda>0$ and $\cF$ has VC dimension $d\in\N$,
    then for all perturbation rates $\eta\in (0,\nfrac\lambda{2})$,
    fairness thresholds $\tau\in (0,1]$,
    bounds on error $\eps> 2\eta$ and constraint violation $\nu >  \nfrac{8\eta\tau}{(\lambda-2\eta)}$,
    and confidence parameters $\delta\in (0,1)$,
    given sufficiently many perturbed samples from the $\eta$-\Ham{} model,
    with probability at least $1-\delta$, it holds that
    \begin{align*}
      {\rm Err}_\cD(f_{\rm ET}) - {\rm Err}_\cD(f^\star) \leq \eps
      \text{ and }
      \Omega_{\cD}(f_{\rm ET}) &\geq \tau -\nu,
    \end{align*}
    where $f_{\rm ET}$ is an optimal solution of \errtolerant{}.

    The proof of the above generalization is almost identical to the proof of \cref{thm:main_result}.
    Instead of repeating the entire proof, we highlight the changes required in the proof of \cref{thm:main_result}.
    The generalization requires two changes: (1) Proving an analogue of \cref{coro:whp_bounding_power} for the Nasty Sample Noise model, (2) generalizing the definition of $s$-stability (\cref{def:stable_class:formal}) to the Nasty Sample Noise model.
    These changes are sufficient because all other arguments use the guarantees of \cref{coro:whp_bounding_power} or $s$-stability, without using any properties of the $\eta$-\Ham{} model.
    Updating \cref{def:stable_class:formal} only requires changing the perturbation model from $\eta$-\Ham{} to $\eta$-Nasty Sample Noise; we omit the formal statement.
    Next, we prove \cref{coro:whp_bounding_power:nsn} which is the required analogue of \cref{coro:whp_bounding_power}.
    \begin{lemma}[\bf Bound on difference in means of bounded functions on $\cD$ and on $\hS$]\label{coro:whp_bounding_power:nsn}
      For any bounded function $\ell\colon \zo\times\zo\times[p]\to [0,1]$, constants $\Delta,\delta_0 \in (0,1)$, and adversaries $A$ admissible under the $\eta$-Nasty Sample Noise model,
      given $N \geq \Theta\big(\Delta^{-2}\cdot\big({\rm VC}(\cF)\cdot\log\inparen{\nfrac{{\rm VC}(\cF)}{\Delta}}+\log\inparen{\nfrac{1}{\delta_0}}\big)\big)$
      samples $S$ iid from $\cD$,
      and corresponding \mbox{perturbed samples $A(S) \coloneqq \{(\hx_i,\hy_i,\hz_i)\}_{i\in [N]}$, with probability at least $1-\delta_0$, it holds that}
      \begin{align*}
        \forall\ f\in\cF, \quad \abs{\Ex_{(\hX,\hY,\hZ)\sim \hD} g(f(\hX,\hZ), \hY, \hZ) - \Ex\nolimits_{(X,Y,Z)\sim\cD}\insquare{g(f(X,Z), Y, Z)}} \leq \Delta+\eta,
      \end{align*}
      where $\hD$ is the empirical distribution of $A(S)$.
    \end{lemma}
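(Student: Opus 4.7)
The proof will follow the same two-term decomposition strategy as the proof of \cref{coro:whp_bounding_power}, with only a cosmetic modification to account for the adversary's additional power. The plan is to write
\begin{align*}
&\abs{\Ex\nolimits_{\hD} g(f(\hX,\hZ), \hY, \hZ) - \Ex\nolimits_{\cD}\insquare{g(f(X,Z), Y, Z)}}\\
&\qquad \leq \abs{\Ex\nolimits_{D} g(f(X,Z), Y, Z) - \Ex\nolimits_{\cD}\insquare{g(f(X,Z), Y, Z)}} + \abs{\Ex\nolimits_{\hD} g(f(\hX,\hZ), \hY, \hZ) - \Ex\nolimits_{D} g(f(X,Z), Y, Z)},
\end{align*}
where $D$ is the empirical distribution of the original (unperturbed) iid sample $S$.

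First I would handle the first term on the right-hand side: this is the standard generalization gap for $f$ evaluated on iid samples from $\cD$, so a direct application of \cref{lem:conc} (with $g$ as the bounded function and with the given choice of $N$) bounds it by $\Delta$ uniformly over $f\in\cF$, with probability at least $1-\delta_0$. This step is identical to \cref{coro:whp_bounding_power} and does not use any property of the perturbation model.

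Second, I would bound the discrepancy between the two empirical averages. Writing the difference out termwise,
\[
\abs{\Ex\nolimits_{\hD} g(f(\hX,\hZ), \hY, \hZ) - \Ex\nolimits_{D} g(f(X,Z), Y, Z)} = \frac{1}{N} \abs{\sum_{i\in[N]} \bigl[g(f(\hx_i,\hz_i),\hy_i,\hz_i) - g(f(x_i,z_i),y_i,z_i)\bigr]},
\]
only indices $i$ with $(\hx_i,\hy_i,\hz_i) \neq (x_i,y_i,z_i)$ contribute. Under the $\eta$-Nasty Sample Noise model, the adversary perturbs at most $\eta N$ samples, and since $g$ takes values in $[0,1]$, each such index contributes at most $1$ in absolute value, giving a bound of $\eta$. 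This is the only place where the perturbation model enters, and the argument is insensitive to whether the perturbations affect $Z$ alone (Hamming) or $(X,Y,Z)$ jointly (Nasty). Combining the two bounds via the triangle inequality yields the claimed $\Delta+\eta$ uniformly over $\cF$ with probability at least $1-\delta_0$.

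There is no real obstacle here: the only conceptual step is recognizing that the argument in \cref{coro:whp_bounding_power} never used the fact that $x_i$ and $y_i$ are unperturbed, only that at most $\eta N$ samples differ between $S$ and $\hS$ and that $g$ is bounded. Consequently the proof is essentially a word-for-word adaptation, with $(x_i,y_i,\hz_i)$ replaced by $(\hx_i,\hy_i,\hz_i)$ in the termwise comparison.
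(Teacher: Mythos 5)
Your proposal is correct and matches the paper's own proof essentially step for step: the same triangle-inequality decomposition through the unperturbed empirical distribution $D$, the same application of \cref{lem:conc} to bound the generalization term by $\Delta$, and the same termwise argument that at most $\eta N$ indices differ and each contributes at most $1$, yielding the $\eta$ term. Your closing observation — that the Hamming argument never used the fact that $x_i$ and $y_i$ were unperturbed — is exactly the point the paper makes when noting the only change is replacing the condition $z_i\neq\hz_i$ with $(x_i,y_i,z_i)\neq(\hx_i,\hy_i,\hz_i)$.
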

    \begin{proof}
      Let $S\coloneqq \inbrace{(x_i,z_i,y_i)}_{i\in [N]}$.
      Using the triangle inequality for absolute value, we have
      \begin{align*}
        &\abs{\Ex\nolimits_{(\hX,\hY,\hZ)\sim \hD} \sinsquare{g(f(\hX,\hZ), \hY, \hZ)} - \Ex\nolimits_{(X,Y,Z)\sim\cD}\sinsquare{g(f(X,Z), Y, Z)}}\\
        &\qquad\qquad\qquad  \leq \abs{\Ex\nolimits_{(X,Y,Z)\sim D} \sinsquare{g(f(X,Z), Y, Z)} - \Ex\nolimits_{(X,Y,Z)\sim\cD}\sinsquare{g(f(X,Z), Y, Z)}}\\
        &\qquad\qquad\qquad + \hspace{0.45mm} \abs{\Ex\nolimits_{(\hX,\hY,\hZ)\sim \hD} \sinsquare{g(f(\hX,\hZ), \hY, \hZ)} - \Ex\nolimits_{(X,Y,Z)\sim D} \sinsquare{g(f(X,Z), Y, Z)}}.\yesnum
        \label{eq:eq_1_in_whp_bounding_power:nsn}
      \end{align*}
      We can upper bound the first term in the right-hand side using \cref{lem:conc}; this is identical to the argument under the $\eta$-\Ham{} model.
      In particular, we have that with probability at least $1-\delta$, for all $f\in \cF$, it holds that
      \begin{align*}
        \abs{\Ex\nolimits_{(X,Y,Z)\sim D} \sinsquare{g(f(X,Z), Y, Z)} - \Ex\nolimits_{(X,Y,Z)\sim\cD}\insquare{g(f(X,Z), Y, Z)}}\leq \Delta.\yesnum\label{eq:eq_2_in_whp_bounding_power:nsn}
      \end{align*}
      (The proof of the upper bound on the second term in the right-hand side of Equation~\eqref{eq:eq_1_in_whp_bounding_power:nsn} is slightly different from the proof under the $\eta$-\Ham{} model.)
      For all $f\in \cF$, it holds that
      \begin{align*}
        &\abs{\Ex\nolimits_{(\hX,\hY,\hZ)\sim \hD} \sinsquare{g(f(\hX,\hZ), \hY, \hZ)} - \Ex\nolimits_{(X,Y,Z)\sim D} \sinsquare{g(f(X,Z), Y, Z)}}\\
        &\quad = \quad \frac1N\abs{\sum\nolimits_{i\in[N]} g(f(\hx_i, \hz_i), \hy_i,\hz_i) -g(f(x_i, z_i), y_i, z_i)}\\
        &\quad =\quad  \frac1N\abs{\sum\nolimits_{i\in[N]\colon (x_i,y_i,z_i)\neq (\hx_i,\hy_i,\hz_i)} g(f(\hx_i, \hz_i), \hy_i,\hz_i) - g(f(x_i, z_i), y_i, z_i)}
        \tag{For all $i\in [N]$, where $(x_i,y_i,z_i)=(\hx_i,\hy_i,\hz_i)$, $g(f(\hx_i, \hz_i), \hy_i,\hz_i) = g(f(x_i, z_i), y_i, z_i).$}\\
        &\quad \leq\quad  \frac1N\abs{\sum\nolimits_{i\in[N]\colon (x_i,y_i,z_i)\neq (\hx_i,\hy_i,\hz_i)} 1}\tag{Using that $g$ is bounded by $0$ and 1}\\
        &\quad \leq\quad  \eta.\yesnum\label{eq:eq_3_in_whp_bounding_power:nsn}
      \end{align*}
      Since with probability at least $1-\delta$, both Equations~\eqref{eq:eq_2_in_whp_bounding_power:nsn} and \eqref{eq:eq_3_in_whp_bounding_power:nsn} hold for all $f\in \cF$, substituting them in Equation~\eqref{eq:eq_1_in_whp_bounding_power:nsn} gives us the required bound.
    \end{proof}
    \noindent One can substitute \cref{coro:whp_bounding_power} by \cref{coro:whp_bounding_power:nsn} and repeat the proof of \cref{thm:main_result} for the $\eta$-Nasty Sample Noise model.
    \subsection{{Proof of \texorpdfstring{\cref{thm:no_stable_classifier}}{Theorem 4.4}}}\label{sec:proofof:thm:no_stable_classifier}
    \cref{thm:no_stable_classifier} assumes that $\cF$ shatters the set $\inbrace{x_A,x_B,x_C}\times [2]\subseteq \cX\times [p]$, the fairness threshold $\tau\in (\nfrac12,1)$, and statistical rate is the fairness metric; recall that the statistical rate of a classifier $f\in \cF$ is
    $$\Omega_\cD(f)\coloneqq \frac{ \min_{\ell\in[p]} \Pr_\cD[f=1\mid Z=\ell]}{\max_{\ell\in[p]} \Pr_\cD[f=1\mid Z=\ell]}.$$
    {Then, given parameters $\tau\in (\nfrac12,1)$ and $\delta\in [0,\nfrac12)$, our goal is to show that for any
    \begin{align*}
      \text{$\eps\in \left[0,\frac12\right)$ and $\nu\in \left[0,\tau-\frac12\right)$,}
    \end{align*}
    $\cF$ is not ($\eps,\nu$)-learnable with perturbation rate $\eta$ and confidence $\delta$.
    We prove a more general result:
    {Given parameters $\tau\in [0,1)$, $c\in (0,\min\sinbrace{\tau,
    \nfrac12})$, and $\delta\in [0,\nfrac12)$, we show that for any
    \begin{align*}
      \text{$\eps\in [0,c)$ and $\nu\in [0,\tau-c)$,}
    \end{align*}
    $\cF$ is not ($\eps,\nu$)-learnable with perturbation rate $\eta$ and confidence $\delta$.}
    {When $\tau>\nfrac12$, the original result follows by taking the limit as $c$ approaches $\nfrac12$.}

    \subsubsection{Proof of \texorpdfstring{\cref{thm:no_stable_classifier}}{Theorem 4.4}}
    \begin{proof}[Proof of \cref{thm:no_stable_classifier} (assuming \cref{lem:1,lem:2,lem:3})]
      Let $\cL$ be any learner.
      Set
      \begin{align*}
        \alpha &\coloneqq \min\inbrace{\frac{\eta}2, 1-\tau, \tau-c-\nu, c-\eps},\yesnum\label{eq:definition_alpha}
      \end{align*}
      and confidence parameter $\delta\in (0,\nfrac12)$ (in \cref{def:learning_model}).
      We construct three distributions $\cD_1,\cD_2,$ and $\cD_3$, parameterized by $\alpha$,
      that satisfy \cref{lem:1}. (See \cref{fig:1} for details of the distributions).
      \begin{lemma}[\bf Adversary can hide the true distribution]\label{lem:1}
        For all $\delta_{0}\in (0,1)$, $\alpha \in (0,\eta)$, and $\ell,k\in [3]$, given
        $$N\geq 3\cdot \ln \inparen{\frac1{\delta_{0}}} \cdot (\eta-\alpha)^{-2}$$
        iid samples, $S$, from $\cD_\ell$
        there is an adversary $A\in\cA(\eta)$ such that
        with probability $1-\delta_{0}$ (over the draw of $S$) the perturbed samples $\hS\coloneqq A(S)$ are distributed as iid draws from $\cD_k$.
      \end{lemma}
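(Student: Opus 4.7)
The plan is a coupling-plus-concentration argument. Based on the distributions to be constructed, the three $\cD_\ell$ should share the same marginal on $\cX \times \{0,1\}$ and differ only in the conditional law of $Z$ given $(X,Y)$, with $d_{\rm TV}(\cD_\ell, \cD_k) \leq \eta - \alpha$ for every pair $\ell, k \in [3]$. This is the only way an $\eta$-Hamming adversary can transform $\cD_\ell$-samples into $\cD_k$-samples, since only $Z$ may be perturbed, so such a bound is natural to engineer into the construction of $\cD_1, \cD_2, \cD_3$.

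Given $S = \{(x_i, y_i, z_i)\}_{i \in [N]} \sim \cD_\ell^N$, the adversary $A$ proceeds sample-by-sample using independent fresh randomness: for each $i$, it draws $\hz_i$ according to the optimal coupling between $\cD_\ell(\cdot \mid x_i, y_i)$ and $\cD_k(\cdot \mid x_i, y_i)$. By the defining property of an optimal coupling, $(x_i, y_i, \hz_i)$ is marginally $\cD_k$-distributed, and $\Pr[\hz_i \neq z_i \mid x_i, y_i]$ equals the TV distance between the two conditionals. Since the per-sample couplings use independent randomness, the collection $\{(x_i, y_i, \hz_i)\}_{i \in [N]}$ is an iid draw from $\cD_k^N$, and the indicators $\mathds{1}[\hz_i \neq z_i]$ are iid Bernoullis with mean at most $\eta - \alpha$.

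Let $T \coloneqq |\{i : \hz_i \neq z_i\}|$ denote the total number of protected-attribute perturbations. By Hoeffding's inequality (or a multiplicative Chernoff bound to recover the stated constants), $\Pr[T > \eta N] \leq \exp(-\Theta(N(\eta - \alpha)^2))$, which is at most $\delta_0$ whenever $N \geq 3 \ln(1/\delta_0) / (\eta - \alpha)^2$. To comply with the definition of $\cA(\eta)$, which requires unconditional validity, define $A$ to execute the coupling when $T \leq \eta N$ and to leave $S$ unmodified otherwise; then $A \in \cA(\eta)$ deterministically, the first branch occurs with probability at least $1 - \delta_0$, and on that event $\hS$ is distributed as iid $\cD_k^N$.

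The main obstacle is verifying that the distributions produced by the construction actually admit the required TV-distance bound $d_{\rm TV}(\cD_\ell, \cD_k) \leq \eta - \alpha$ while simultaneously enabling the downstream impossibility argument (i.e.\ that no single classifier in $\cF$ can be ``good'' for all three $\cD_\ell$, which will be the role of \cref{lem:2,lem:3}). Once the TV bound is in place the coupling-plus-concentration step is routine; some care is only needed to couple the conditional laws sample-by-sample so that the outputs are iid, and to arrange the adversary's abort rule so that, restricted to the success event, the output remains distributed as iid $\cD_k^N$.
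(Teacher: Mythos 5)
Your approach is essentially the paper's: the adversary in \cref{prop:to_lem_1} perturbs each sample independently according to the probabilities $p(r,s)=\min\{\Pr_{\cQ}[(X,Z)=(r,s)]/\Pr_{\cP}[(X,Z)=(r,s)],1\}$, which is exactly the optimal coupling of the conditional laws of $Z$ you describe (made possible because the three distributions share the marginal on $\cX$ and $Y$ is a deterministic function of $X$); it then bounds the number of flips by a Chernoff bound and aborts (returns $S$ unmodified) if the budget $\eta N$ is exceeded. The one substantive slip is your parameterization: the paper engineers ${\rm TV}(\cD_\ell,\cD_k)\le\alpha$ (see \cref{fig:1}), not $\le\eta-\alpha$. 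With per-sample flip probability exactly the TV distance, the mean number of flips is $\alpha N$ and the slack to the threshold $\eta N$ is $(\eta-\alpha)N$, which is what produces the stated sample bound $N\ge 3\ln(1/\delta_0)\cdot(\eta-\alpha)^{-2}$. Under your assumption that the mean is $\eta-\alpha$, the slack would be only $\alpha N$ and Hoeffding would give $\exp(-\Theta(N\alpha^2))$, not $\exp(-\Theta(N(\eta-\alpha)^2))$ as you wrote, so your concentration step is internally inconsistent and would not recover the lemma's sample complexity; swapping the roles of $\alpha$ and $\eta-\alpha$ fixes everything. Your closing caveat about conditioning on the success event is real but is handled by the paper in the same informal way (the output agrees with an unconditionally iid $\cD_k$ sample except on an event of probability at most $\delta_0$), so it is not a gap relative to the paper.
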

      \noindent Suppose $\cL$ is given $N$ samples, where
      \begin{align*}
        N &\geq 3\cdot \ln{2} \cdot (\eta-\alpha)^{-2}.\yesnum\label{eq:definition_N_theorem_4_4}
      \end{align*}
      Consider three cases, where in the $k$-th case ($k\in [3]$) the samples in $S$ are iid from $\cD_k$. %
      By \cref{lem:1}, in each case there is an $A\in \cA(\eta)$, such that with probability at least $\nfrac12$, $\hS\coloneqq A(S)$ have the distribution $\cD_1$.

      Thus, given $\hS$, with probability at least $\nfrac12$, $\cL$ cannot identify the distribution from which $S$ was drawn. %
      As a result, with probability at least $\nfrac12$, $\cL$ outputs the same classifier, say $f_{\rm Com}\in \cF$, in all three cases.
      We show that no $f_{\rm Com}\in \cF$  satisfies the accuracy and fairness guarantee in all three cases.
      \begin{lemma}[\bf No good classifier for all cases]\label{lem:2}
        There is no classifier $f\in \cF$ such that
        for all $k\in [3]$,
        \begin{align*}
          {\rm Err}_{\cD_k}(f) < c\cdot(1-\alpha)\quad \text{and} \quad\Omega_{\cD_k}(f) \geq  c+\alpha.
        \end{align*}
      \end{lemma}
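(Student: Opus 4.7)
The plan is to exploit the assumption that $\cF$ shatters $\{x_A,x_B,x_C\}\times[2]$: restricted to these six points, every function $\{x_A,x_B,x_C\}\times[2]\to\{0,1\}$ is realized by some $f\in\cF$. The three distributions $\cD_1,\cD_2,\cD_3$ are (by construction, cf.\ Figure~1) supported on these six points, so both ${\rm Err}_{\cD_k}(f)$ and $\Omega_{\cD_k}(f)$ depend only on the $2^{6}$ possible restrictions of $f$ to this set. Hence it suffices to prove the lemma as a combinatorial statement about the $64$ restrictions.

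The argument proceeds by a small case analysis driven by the statistical-rate constraint. For each $k\in[3]$, let $r_{k,\ell}(f) := \Pr_{\cD_k}[f=1\mid Z=\ell]$ for $\ell\in[2]$. The fairness bound $\Omega_{\cD_k}(f)\geq c+\alpha$ forces $r_{k,1}(f)$ and $r_{k,2}(f)$ to lie within a ratio of $c+\alpha$ of each other, which in particular rules out the extremal choices where one of the two groups sees only positives while the other sees only negatives. Because each $\cD_k$ is designed so that the Bayes-optimal classifier $f_k^\star\in\cF$ attains error $0$ (or very close) and fairness $\geq c+\alpha$ on $\cD_k$, the fairness requirements force quite different sign patterns on the six points across the three distributions. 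Concretely, for each pair $(k,k')$ one extracts a sign pattern on two designated points in $\{x_A,x_B,x_C\}\times[2]$ that makes $f$ ``look like'' the Bayes classifier for $\cD_k$ on $\cD_k$'s support, and these sign patterns are mutually inconsistent across the three distributions.

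The main step is then to combine the three fairness constraints to derive a lower bound on $\max_k {\rm Err}_{\cD_k}(f)$. The key calculation will show that any $f$ that is feasible for two of the distributions must incur error at least $c$ (up to a negligible $\alpha$-correction) on the third: the bits of $f$ that must be set to satisfy fairness in $\cD_1$ and $\cD_2$ together predict the wrong label on an event of probability at least $c$ in $\cD_3$. Using $\alpha < c/2$ and $\alpha < \tau - c - \nu$ (from \eqref{eq:definition_alpha}), this yields ${\rm Err}_{\cD_3}(f)\geq c > c(1-\alpha)$, contradicting feasibility.

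The hard part will be pinning down the precise distributional construction so that a single sign-flip on any of the six points trades off accuracy on one $\cD_k$ against fairness on another, with matching quantitative constants. I expect this to reduce to verifying three linear inequalities of the form ``if $r_{k,1}(f)/r_{k,2}(f)\geq c+\alpha$ for $k=1,2$, then ${\rm Err}_{\cD_3}(f)\geq c$'', one for each choice of which distribution is violated. Once these are established, Lemma~\ref{lem:2} follows by contradiction, completing the proof of \cref{thm:no_stable_classifier} once Lemma~\ref{lem:3} (which presumably asserts the existence of good classifiers for each $\cD_k$ individually, needed to rule out vacuous satisfaction) is invoked.
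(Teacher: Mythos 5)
Your reduction to the $2^6$ restrictions of $\cF$ on $\{x_A,x_B,x_C\}\times[2]$ is exactly right and matches the paper. But the key mechanism you propose is backwards, and as stated it is false for the paper's construction. You claim that any $f$ feasible (accurate and fair) for two of the distributions ``must incur error at least $c$'' on the third. Counterexample: $f=\mathds{I}[X=x_A]$ has error $0$ on \emph{all three} distributions, since the label is $Y=\mathds{I}[X=x_A]$ under each $\cD_k$ and the three distributions share (up to TV distance $\alpha$) the same marginal on $\cX$. This $f$ is accurate and fair on $\cD_1$ and $\cD_2$, yet on $\cD_3$ it fails not by accuracy but by fairness: $\cD_3$ places zero mass on $(x_A,2)$, so $\Pr_{\cD_3}[f=1\mid Z=2]=0$ and the statistical rate is $0$. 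So ``three linear inequalities of the form: fair on two $\Rightarrow$ error $\geq c$ on the third'' cannot be established; the constraint that breaks on the bad distribution is the fairness constraint, not the error constraint.

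The paper's argument runs in the opposite direction. The points $(x_A,1)$ and $(x_B,1)$ carry mass $\geq c(1-\alpha)$ and $(1-c)(1-\alpha)$ under \emph{every} $\cD_k$, so the error requirement ${\rm Err}_{\cD_k}(f)<c(1-\alpha)$ (for even one $k$) forces $f(x_A,1)=1$ and $f(x_B,1)=0$; this pins $\Pr_{\cD_k}[f=1\mid Z=1]$ into roughly $[c,\,c+\alpha)$ for all $k$. Then, whatever values $f$ takes on the three group-$2$ points $(x_A,2),(x_B,2),(x_C,2)$, one of the three distributions assigns zero mass to precisely the point where $f$ deviates from the majority value (this is the whole purpose of the construction: $\cD_1,\cD_2,\cD_3$ zero out $(x_C,2),(x_B,2),(x_A,2)$ respectively). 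Under that $\cD_k$, $f$ is effectively constant on group $2$, so $\Pr_{\cD_k}[f=1\mid Z=2]$ is $1$ or $0$, and the statistical rate is either $<(c+\alpha)/1$ or $=0$. If you want to salvage your plan, replace your ``key calculation'' with this one: accuracy fixes group $1$, and the three distributions jointly cover all behaviors on group $2$ so that fairness must fail somewhere.
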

      \begin{lemma}[\bf A good classifier for each case]\label{lem:3}
        For each $k\in[3]$, there is an $h_k\in \cF$ such that
        \begin{align*}
          {\rm Err}_{\cD_k}(h_k) < \frac{c\alpha}2\quad \text{and} \quad\Omega_{\cD_k}(h_k) > 1-\alpha.
        \end{align*}
      \end{lemma}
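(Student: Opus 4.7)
The plan is to exploit the shattering hypothesis directly: since $\cF$ shatters $\{x_A,x_B,x_C\}\times[2]$, for every labeling $b\colon \{x_A,x_B,x_C\}\times\{1,2\}\to\{0,1\}$ there exists a classifier in $\cF$ that agrees with $b$ on those six points. For each $k\in[3]$ I will pick a specific ``good'' labeling $b_k$ tailored to $\cD_k$ and let $h_k\in\cF$ be any shattering witness for $b_k$. The entire content of the lemma is then a pair of probability computations for the fixed, finitely-supported distribution $\cD_k$ described in \cref{fig:1}.

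First I would read off the support of each $\cD_k$ from \cref{fig:1}; by construction these distributions are supported on (a subset of) the six shattered points, with almost all their mass distributed symmetrically across the two protected groups so that \cref{lem:1} can hold (this symmetry is what lets the adversary turn iid samples from $\cD_\ell$ into samples from $\cD_k$ by perturbing an $\eta$-fraction of them). For each $k$, I would set $b_k(x,z)$ to be the value of $Y$ that $\cD_k$ assigns (deterministically or with overwhelming probability) to the atom $(x,z)$. Call this the Bayes labeling for $\cD_k$.

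Next I would bound the error. Under $\cD_k$, the Bayes labeling disagrees with $Y$ only on the ``adversarial'' atoms introduced into the construction to produce the non-symmetry between $\cD_1,\cD_2,\cD_3$. By construction of the three distributions (each of which differs from the others on a mass of order $\eta$, and where the calibration was chosen so $\alpha<\eta/2$, see \eqref{eq:definition_alpha}), the total mass of the disagreement atoms is at most $\tfrac{c\alpha}{2}$; hence ${\rm Err}_{\cD_k}(h_k)<\tfrac{c\alpha}{2}$. For the statistical rate, because the Bayes labeling and the conditional marginal of $Z$ under $\cD_k$ are both near-symmetric across the two protected groups, I would compute $\Pr_{\cD_k}[h_k=1\mid Z=1]$ and $\Pr_{\cD_k}[h_k=1\mid Z=2]$ and show that both lie in an interval of multiplicative width at most $(1-\alpha)^{-1}$; this yields $\Omega_{\cD_k}(h_k)>1-\alpha$.

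The main obstacle, and indeed essentially the only step that requires care, is the bookkeeping of the six atom masses in each $\cD_k$, since those masses must be simultaneously compatible with \cref{lem:1} (an $\eta$-fraction of swaps can move between distributions), \cref{lem:2} (no single classifier is good for all three), and \cref{lem:3} (the Bayes-labeling classifier is nearly perfect within its own $\cD_k$). Once the distributions in \cref{fig:1} are written out, the two required inequalities reduce to direct arithmetic on a constant-sized table, and the shattering hypothesis supplies the needed $h_k\in\cF$ without further work.
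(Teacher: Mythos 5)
There is a genuine gap: your strategy of taking $h_k$ to be the Bayes (zero-error) labeling of the six atoms fails for $\cD_3$. In the paper's construction (\cref{fig:1}), the label is $Y=\mathds{I}[X=x_A]$ for all three distributions, so the Bayes classifier is $\mathds{I}[X=x_A]$ in every case; under $\cD_3$, however, the atom $(x_A,2)$ has mass $0$ while group $Z=2$ has positive mass concentrated on $(x_B,2)$ and $(x_C,2)$, so $\Pr_{\cD_3}[h=1\mid Z=2]=0$ and the statistical rate of the Bayes classifier is $0$, not $>1-\alpha$. Relabeling the zero-mass atom does not help, since it cannot change $\Pr[h=1\mid Z=2]$. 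The distributions are also not ``near-symmetric across the two protected groups'' as you assume: group $1$ carries mass $1-\nfrac{\alpha}{2}$ and group $2$ only $\nfrac{\alpha}{2}$, and the conditional laws of $X$ given $Z$ differ sharply between the three $\cD_k$ precisely so that \cref{lem:2} can hold.

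The missing idea is that for $\cD_3$ one must \emph{trade error for fairness}: the paper's witness is $h_3(x,z)=\mathds{I}[(x,z)\in\{(x_A,1),(x_C,2)\}]$, which deliberately mislabels the positive-mass atom $(x_C,2)$ (true label $0$) as positive so that $\Pr[h_3=1\mid Z=1]=\Pr[h_3=1\mid Z=2]=c$, giving statistical rate exactly $1$ at the cost of error $\nfrac{c\alpha}{2}$ --- which is exactly why the error bound in the lemma is $\nfrac{c\alpha}{2}$ rather than $0$. Your proposal also conflates this $\nfrac{c\alpha}{2}$ with the total-variation gap of order $\alpha$ between the $\cD_k$'s used in \cref{lem:1}; they are different quantities. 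For $\cD_1$ and $\cD_2$ the Bayes classifier $\mathds{I}[X=x_A]$ does work (both groups then have positive rate within a factor $\nfrac{(1-\alpha)}{(1-\alpha/2)}$ of each other), so your argument is salvageable there, but the case $k=3$ requires the non-Bayes construction above.
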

      \noindent Note that for each $k\in [3]$, $f_k^\star$ satisfies the fairness constraint because $\tau<1-\alpha$ (\cref{eq:definition_alpha}).
      Thus, the optimal fair classifier $f_k^\star$ for $\cD_k$ subject to having a statistical rate $\tau$ must satisfy
      \begin{align*}
        {\rm Err}_{\cD_k}\sinparen{f^\star_k} &< \frac{c\alpha}2.\yesnum\label{eq:boundONOPTClassifier}
      \end{align*}
      (Otherwise, we have a contradiction as $h_k$ satisfies the fairness constraints and has a smaller error than $f^\star_k$.)
      If $\cL$ is a ($\eps,\nu$)-learner, then in the $k$-th case, with probability at least $1-\delta>\nfrac12$,
      $\cL$ must output a classifier $f_k$ which satisfies
      \begin{align*}
        {\rm Err}_{\cD_k}(f_k) - {\rm Err}_{\cD_k}(f_k^\star) &\leq \eps \text{ and }\tau - \Omega_{\cD_k}(f_k) \leq\nu.
      \end{align*}
      But in all cases with probability at least $\nfrac12$, $\cL$ outputs $f_{\rm Com}$.
      Because $\nfrac12>\delta$, $f_{\rm Com}$ must satisfy:
      \begin{align*}
        \text{For all $k\in [3]$,}\quad
        {\rm Err}_{\cD_k}(f_{\rm Com}) - {\rm Err}_{\cD_k}(f_k^\star) &\leq \eps \text{ and }\tau - \Omega_{\cD_k}(f_{\rm Com}) \leq\nu.\yesnum\label{eq:lem:learnability}
      \end{align*}
      But from \cref{lem:2} we know that for each $k\in [3]$ either
      \begin{align*}
        {\rm Err}_{\cD_k}\sinparen{f_{\rm Com}} &\geq c\cdot(1-\alpha) \quad\text{or}\quad \Omega_{\cD_k}(f_{\rm Com}) < c+\alpha.\yesnum\label{eq:lem:1}
      \end{align*}
      \noindent{\bf (Case A) ${\rm Err}_{\cD_k}\sinparen{f_{\rm Com}} \geq c\cdot(1-\alpha)$:} In this case, from \cref{eq:lem:learnability}, we have
      \begin{align*}
        \eps &\geq c\cdot(1-\alpha)- {\rm Err}_{\cD_k}(f_k^\star)\\
        &\Stackrel{\eqref{eq:boundONOPTClassifier}}{>} c\cdot(1-\alpha)- \frac{c\alpha}2\\
        &\Stackrel{}{>} c-\alpha \tagnum{Using that $c<\nfrac12$ and  $\alpha>0$}\customlabel{eq:contradiction_1}{\theequation}
      \end{align*}
      But because $\alpha\leq c-\eps$, Equation~\eqref{eq:contradiction_1} cannot hold.

      \noindent{\bf (Case B) $\Omega_{\cD_k}(f_{\rm Com}) < c+\alpha$:} In this case, from \cref{eq:lem:learnability}, we have
      \begin{align*}
        \nu &> \tau - c - \alpha.\yesnum\label{eq:contradiction_2}%
      \end{align*}
      But because $\alpha\leq \tau - c -\nu$, Equation~\eqref{eq:contradiction_2} does not hold.

      Therefore, we have a contradiction.
      Hence, $\cL$ is not an ($\eps,\nu$)-learner for $\cF$.
      Since the choice of $\cL$ was arbitrary, we have shown that there is no learner which ($\eps,\nu$)-learns $\cF$.
      It remains to prove \cref{lem:1,lem:2,lem:3}.
    \end{proof}

    \subsubsection{Proof of \texorpdfstring{\cref{lem:1}}{Lemma 6.12}}\label{sec:proofof:extension_of_high_prop_result}
    Set $\cD_1$, $\cD_2$, and $\cD_3$ to be unique distributions with marginal distribution specified in \cref{fig:1}, such that, for any draw $(X,Y,Z)\sim \cD_k$ ($k\in [3]$) $Y$ takes the value $\mathds{I}\insquare{X=x_A}$, i.e.,
    \begin{align*}
      Y  = \begin{cases}
      1 & \text{if } X=x_A,\\
      0 & \text{otherwise}.
    \end{cases}\yesnum\label{eq:deterministic_y_lemma_C_1}
  \end{align*}

  \noindent In particular, the construction of $\cD_1$, $\cD_2$, and $\cD_3$ ensures that the total variation distance between any pair of distributions is less than $\alpha$.
  \begin{table}[t]
    \centering
    %
    %
    \subfigure[$\cD_1$: $\Pr\nolimits_{\cD_1}\insquare{(X,Z)=(r,s)}$ for $(r,s)\in\inbrace{x_A,x_B,x_C}\times \insquare{2}$.\white{$\bigg|$}]{
    \begin{tabular}{c>{\centering\arraybackslash}p{3cm}>{\centering\arraybackslash}p{3cm}>{\centering\arraybackslash}p{3cm}}
      \toprule
      \midsepremove{}
      & $x_A$ & $x_B$ & $x_C$\\
      \midrule{}
      $1$ & $c(1-\alpha)$ & $(1-c) (1-\alpha)$ & $\sfrac\alpha2$\\
      $2$ & $\sfrac{c\alpha}{2}$ & $\sfrac{\alpha(1-c)}{2}$ & $0$\\
      \bottomrule
    \end{tabular}
    }
    %
    \subfigure[$\cD_2$: $\Pr\nolimits_{\cD_2}\insquare{(X,Z)=(r,s)}$ for $(r,s)\in\inbrace{x_A,x_B,x_C}\times \insquare{2}$.\white{$\bigg|$}]{
    \begin{tabular}{c>{\centering\arraybackslash}p{3cm}>{\centering\arraybackslash}p{3cm}>{\centering\arraybackslash}p{3cm}}
      \toprule
      \midsepremove{}
      & $x_A$ & $x_B$ & $x_C$\\
      \midrule{}
      $1$ & $c(1-\alpha)$ & $(1-c)(1-\sfrac\alpha2)$ & $\sfrac{c\alpha}{2}$\\
      $2$ & $\sfrac{c\alpha}{2}$ & $0$ & $\sfrac{\alpha(1-c)}{2}$\\
      \bottomrule
    \end{tabular}
    }
    \subfigure[$\cD_3$: $\Pr\nolimits_{\cD_3}\insquare{(X,Z)=(r,s)}$ for $(r,s)\in\inbrace{x_A,x_B,x_C}\times \insquare{2}$.\white{$\bigg|$}]{
    \begin{tabular}{c>{\centering\arraybackslash}p{3cm}>{\centering\arraybackslash}p{3cm}>{\centering\arraybackslash}p{3cm}}
      \toprule
      \midsepremove{}
      & $x_A$ & $x_B$ & $x_C$\\
      \midrule
      $1$ & $c(1-\sfrac\alpha2)$ & $(1-c)(1-\alpha)$ & $\sfrac{\alpha(1-c)}{2}$\\
      $2$ & 0 & $\sfrac{\alpha(1-c)}{2}$ & $\sfrac{c\alpha}{2}$\\
      \bottomrule
    \end{tabular}
    }
    %
    \caption{
    \ifconf\small\fi
    Marginal distributions of $\cD_1$, $\cD_2$, $\cD_3$ over $\cX\times [2]$.
    Recall that for a sample $(X,Y,Z)\sim \cD_k$ ($k\in [3]$), $Y$ takes the value $\mathds{I}[X=x_A]$.
    }
    \label{fig:1}
  \end{table}
  In this section, we prove the following generalization of \cref{lem:1}.
  \newcommand{\cN}{\mathcal{N}}
  \newcommand{\cQ}{\mathcal{Q}}
  \begin{proposition}[\bf Adversary can hide the true distribution]\label{prop:to_lem_1}
    For all $\delta,\eta \in (0,1)$, $\alpha \in (0,\eta)$, and
    two distributions $\cP$ and $\cQ$ over $\cX\times \zo\times [p]$ that satisfy the following conditions:
    \begin{enumerate}[itemsep=\itemsepINTERNAL,leftmargin=\leftmarginINTERNAL]
      \item[(C1)] ${\rm TV}(\cP,\cQ)=\alpha$,
      \item[(C2)] $\cP$ and $\cQ$ have the same marginal on $\cX$, i.e., for all $T\subseteq \cX$, $$\Pr\nolimits_{(X,Y,Z)\sim \cP}[X\in T]=\Pr\nolimits_{(X,Y,Z)\sim \cQ}[X\in T],$$
      \item[(C3)] for a random sample $(X,Y,Z)$ drawn from $\cP$, the label $Y$ is independent of $Z$ conditioned on $X$, i.e., $Y\bot Z \mid X,$ similarly for a random sample $(X,Y,Z)$ drawn from $\cQ$, the label $Y$ is independent of $Z$ conditioned on $X$.
    \end{enumerate}
    Then, there is an adversary $A\in \cA(\eta)$, that given $N$
    iid samples $S$ from $\cP$, where
    $$N\geq 3\cdot \ln \inparen{\frac1\delta} \cdot (\eta-\alpha)^{-2},$$
    outputs a perturbed samples $\hS\coloneqq A(S)$ such that
    with probability $1-\delta$ (over the draw of $S$) samples in $\hS$ are distributed as iid draws from $\cQ$.
  \end{proposition}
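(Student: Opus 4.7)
The plan is to build a randomized adversary that, independently for each sample $(x_i, y_i, z_i)$, resamples $\hz_i$ from a maximal coupling of the conditional laws of $Z$ given $(X, Y)$ under $\cP$ and $\cQ$. First I would observe that conditions (C2) and (C3), combined with the (implicitly required) agreement of the conditional distribution of $Y$ given $X$ under $\cP$ and $\cQ$ -- which is automatic in the construction of \cref{fig:1} since there $Y$ is a deterministic function of $X$, and which must be assumed in general since no adversary modifying only $Z$ can transform a $\cP$-sample into a $\cQ$-sample otherwise -- force $\cP$ and $\cQ$ to share the joint marginal on $(X, Y)$. The standard decomposition of TV distance along a common marginal then yields
\[
\alpha \;=\; \mathrm{TV}(\cP, \cQ) \;=\; \Ex_{(X, Y) \sim \cP_{XY}}\!\bigl[\mathrm{TV}\bigl(\cP(Z \mid X, Y),\, \cQ(Z \mid X, Y)\bigr)\bigr].
\]

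Next I would specify the adversary. For each $i$, let $\pi_{x_i, y_i}$ be the maximal coupling of $\cP(Z \mid x_i, y_i)$ and $\cQ(Z \mid x_i, y_i)$, and draw $\hz_i$ from the conditional of the second coordinate of $\pi_{x_i, y_i}$ given that the first coordinate equals the observed $z_i$. By the maximal-coupling property, $\Pr[\hz_i \neq z_i \mid x_i, y_i]$ equals the conditional TV distance above, and marginally $(x_i, y_i, \hz_i) \sim \cQ$; independence across $i$ then yields that the perturbed tuples $\{(x_i, y_i, \hz_i)\}_{i \in [N]}$ are iid from $\cQ$.

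The third step controls the number of perturbations $M \coloneqq \sum_i I_i$, where $I_i \coloneqq \mathds{1}[\hz_i \neq z_i]$. By construction the $I_i$ are iid Bernoulli with mean $\alpha$, so a multiplicative Chernoff bound gives $\Pr[M > \eta N] \leq \exp(-N (\eta - \alpha)^2 / 3)$, which is at most $\delta$ once $N \geq 3 \ln(1/\delta) / (\eta - \alpha)^2$. To obtain a genuine $A \in \cA(\eta)$ I would define $A$ to output $\hS \coloneqq \{(x_i, y_i, \hz_i)\}_{i\in[N]}$ when $M \leq \eta N$ and to default (e.g., leave $S$ unchanged) otherwise.

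The main obstacle is reconciling the hard cap of $\eta N$ perturbations imposed by $\cA(\eta)$ with the distributional conclusion $\hS \sim \cQ^N$. The cleanest resolution is a coupling: exhibit a joint law of $(S, \hS, \hS')$ with $S \sim \cP^N$, $\hS' \sim \cQ^N$, $\hS = A(S)$, and $\Pr[\hS = \hS'] \geq 1 - \delta$. The per-sample construction above supplies exactly this -- on the Chernoff event $\{M \leq \eta N\}$ the adversary's output coincides with the ideal iid $\cQ^N$-sample, and this event has probability at least $1 - \delta$.
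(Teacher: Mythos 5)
Your proposal is correct and follows essentially the same route as the paper: the paper's adversary keeps $z_i$ with probability $p(r,s)=\min\{\Pr_\cQ[(X,Z)=(r,s)]/\Pr_\cP[(X,Z)=(r,s)],1\}$ and otherwise flips it, which for a binary protected attribute is exactly the maximal coupling you describe, and the paper likewise shows the per-sample flip probability equals ${\rm TV}(\cP,\cQ)=\alpha$, applies a Chernoff bound, and defaults to the unperturbed sample when the count exceeds $\eta N$. Your observation that the argument implicitly needs $\cP(Y\mid X)=\cQ(Y\mid X)$ (automatic in the application since $Y$ is a deterministic function of $X$) is also a correct reading of what the paper's proof uses.
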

  \noindent Note that \cref{lem:1} follows by substituting $\cP$ and $\cQ$ by $\cD_\ell$ and $\cD_k$ respectively.
  \begin{proof}[Proof of \cref{prop:to_lem_1}]
    Let $A\in \cA(\eta)$ use the following algorithm:\\

    \begin{mdframed}
      \begin{enumerate}[itemsep=1pt]
        \item {\bf For} $r,s\in \cX$ {\bf do:} {\bf Set} $p(r,s)\coloneqq \min\inbrace{ \frac{\Pr\nolimits_{(X,Y,Z)\sim \cQ}\sinsquare{(X,Z)=(r,s)}}{\Pr\nolimits_{(X,Y,Z)\sim \cP}\sinsquare{(X,Z)=(r,s)}}, 1 }$
        \white{.............................} \white{.} \hfill  \commentalg{Since $A$ knows the distributions $\cP$ and $\cQ$, it can compute $p(r,s)$}
        \item {\bf For} $i\in [N]$ {\bf do:}
        \begin{enumerate}[itemsep=0pt]
          \item {\bf Sample} a point $t_i$ uniformly at random from $[0,1]$
          \item {\bf If} $t_i\leq p(X_i,Z_i)$ {\bf then:} {\bf Set} $\wt{Z}_i\coloneqq Z_i$,
          \item {\bf Otherwise:} {\bf Set} $\wt{Z}_i\coloneqq {3-Z_i}$ \hfill \commentalg{If $Z_i=1$ set $\wt{Z}_i=2$, if $Z_i=2$ set $\wt{Z}_i=1$}
        \end{enumerate}
        \item {\bf If} $\sum_{i\in [N]} \mathds{I}\sinsquare{\wt{Z}_i\neq Z_i}<\delta\cdot N$ {\bf then:} {\bf return} $\sinbrace{(X_i,\wt{Z}_i,Y_i)}_{i\in [N]}$,
        \item {\bf Otherwise:} {\bf return} $\sinbrace{(X_i,Z_i,Y_i)}_{i\in [N]}$
      \end{enumerate}
    \end{mdframed}

    \noindent (Since $A$ knows the distributions $\cP$ and $\cQ$, it can compute $p(r,s)$.)

    \begin{lemma}\label{lem:1_1}
      If $$N\geq 3\cdot \ln \inparen{\frac1\delta} \cdot (\eta-\alpha)^{-2},$$ then with probability at least $1-\delta$,
      $$\sum\nolimits_{i\in [N]} \mathds{I}\sinsquare{\wt{Z}_i\neq Z_i}<\eta\cdot N.$$
    \end{lemma}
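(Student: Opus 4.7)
I would prove \cref{lem:1_1} by showing that $\sum_{i\in[N]}\mathds{I}[\wt{Z}_i\neq Z_i]$ is a sum of $N$ independent Bernoulli indicators whose mean is at most $\alpha N$, and then invoking a multiplicative Chernoff bound.

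First, I would identify the per-sample flip distribution. From the algorithm, $\wt{Z}_i\neq Z_i$ iff $t_i>p(X_i,Z_i)$. Since $t_i$ is uniform on $[0,1]$ and drawn independently of $(X_i,Y_i,Z_i)$, the indicator $\mathds{I}[\wt{Z}_i\neq Z_i]$ is Bernoulli with parameter $1-p(X_i,Z_i)$ conditional on $(X_i,Z_i)$. Because the $N$ samples are iid from $\cP$ and the $t_i$'s are mutually independent, the $N$ indicators are mutually independent unconditionally.

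Second, I would bound the expected number of flips. Unrolling the definition $p(r,s)=\min\{\Pr_\cQ[(X,Z)=(r,s)]/\Pr_\cP[(X,Z)=(r,s)],1\}$ gives
\[
\Ex\nolimits_{\cP}[1-p(X,Z)] \;=\; 1-\sum_{r,s}\min\bigl\{\Pr\nolimits_\cP[(X,Z)=(r,s)],\,\Pr\nolimits_\cQ[(X,Z)=(r,s)]\bigr\} \;=\; {\rm TV}(\cP_{X,Z},\cQ_{X,Z}),
\]
where $\cP_{X,Z}$ and $\cQ_{X,Z}$ denote the $(X,Z)$-marginals of $\cP$ and $\cQ$. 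By the data-processing inequality for total variation (marginalization does not increase $\rm TV$), this is at most ${\rm TV}(\cP,\cQ)=\alpha$ by condition (C1). Hence the sum has mean $\mu\leq\alpha N$.

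Finally, I would conclude via a multiplicative Chernoff bound. Write the target event $\{\sum_i\mathds{I}[\wt{Z}_i\neq Z_i]\geq\eta N\}$ as $\{\sum_i\mathds{I}[\cdot]\geq(1+\gamma)\mu\}$ where $1+\gamma\geq\eta/\alpha$. The standard tail bound gives probability at most $\exp(-\mu\gamma^2/3)$ when $\gamma\leq 1$ and $\exp(-\mu\gamma/3)$ when $\gamma>1$. A short case split, using $\alpha\leq 1$ in the first regime and $\eta-\alpha\leq 1$ in the second, shows that $N\geq 3\ln(1/\delta)(\eta-\alpha)^{-2}$ forces the tail bound below $\delta$ in both regimes, yielding the claim.

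The main obstacle is essentially none; this is a routine concentration argument. The only non-mechanical step is recognizing that the average per-sample flip probability telescopes exactly to ${\rm TV}(\cP_{X,Z},\cQ_{X,Z})$, which is precisely what the rejection step of the algorithm is calibrated to achieve. The Chernoff case split is pure bookkeeping to justify the single clean sample-complexity bound.
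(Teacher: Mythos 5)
Your proposal is correct and follows essentially the same route as the paper's proof: identify the flip indicators as independent Bernoullis, show the expected flip probability equals the total variation distance $\alpha$ (the paper computes $\Pr[C_i=1]=\sum_{r,s}\max\{\Pr_\cP[(X,Z)=(r,s)]-\Pr_\cQ[(X,Z)=(r,s)],0\}={\rm TV}(\cP,\cQ)$, exactly your telescoping step), and conclude with a Chernoff bound. If anything, your explicit two-regime case split for the multiplicative Chernoff bound is more careful than the paper, which simply asserts that the Chernoff bound yields the stated sample complexity.
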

    \begin{proof}
      For all $i\in [N]$, let $C_i\in \zo$ be a random variable indicating if $\wt{Z}_i\neq Z_i$.
      Since for each $i\in [N]$ the sample $(X_i,Y_i,Z_i)$ and the point $t_i$ is drawn independently of others, it follows that the random variables $C_i$ are independent of each other.
      Suppose we can show that $\Pr[C_i]\leq \alpha$.
      Then, by linearity of expectation, it follows that $$\Ex\insquare{\sum\nolimits_{i\in [N]}C_i}\leq \alpha \cdot N.$$
      Thus, using the Chernoff bound, we get that $$\Pr\insquare{\sum\nolimits_{i\in [N]}C_i\leq \eta\cdot N} \geq 1-\delta.$$
      This completes the proof of \cref{lem:1_1}, up to proving $\Pr[C_i=1]\leq \alpha$. %
      Towards this, observe that
      \ifconf
      \begin{align*}
        &\hspace{-2mm}\Pr[C_i=1]\\
        &\ \ =\  \sum_{r\in \cX, s\in [p]}\Pr\insquare{(X_i,Z_i)=(r,s)}\cdot \Pr\insquare{C_i=1\mid (X_i,Z_i)=(r,s)}\\
        &\ \ =\  \sum_{r\in \cX, s\in [p]}\Pr\insquare{(X_i,Z_i)=(r,s)}\cdot \Pr\insquare{Z_i\neq \smash{\wt{Z}_i}\mid (X_i,Z_i)=(r,s)}\tag{Definition of $C_i$}\\
        &\ \ \Stackrel{}{=}\ \sum_{r\in \cX, s\in [p]}\Pr\insquare{(X_i,Z_i)=(r,s)}\cdot \inparen{1-p(r,s)}\tag{Definition of $p(r,s)$}\\
        &\ \ \Stackrel{}{=}\ \sum_{r\in \cX, s\in [p]}\Pr\insquare{(X_i,Z_i)=(r,s)}\cdot \max\inbrace{ 1 - \frac{\Pr\nolimits_{\cQ}\sinsquare{(X,Z)=(r,s)}}{\Pr\nolimits_{\cP}\sinsquare{(X,Z)=(r,s)}} ,0}\tag{Definition of $p(r,s)$}\\
        &\ \ \Stackrel{}{=}\ \sum_{r\in \cX, s\in [p]}\Pr\nolimits_{\cP}\insquare{(X,Z)=(r,s)}\cdot \max\inbrace{ 1 - \frac{\Pr\nolimits_{\cQ}\sinsquare{(X,Z)=(r,s)}}{\Pr\nolimits_{\cP}\sinsquare{(X,Z)=(r,s)}}, 0}
        \tag{Using that for each $i\in [N]$, $(X_i,Y_i,Z_i)\sim\cP$}\\
        &\ \ \Stackrel{}{=}\ \sum_{r\in \cX, s\in [p]} \max\inbrace{ \Pr\nolimits_{\cP}\insquare{(X,Z)=(r,s)} - \Pr\nolimits_{\cQ}\sinsquare{(X,Z)=(r,s)},0}\\
        &\ \ \Stackrel{}{=}\ {\rm TV}(\cP, \cQ)\\
        &\ \ \Stackrel{}{=}\ \ \alpha. \tag{Using that ${\rm TV}(\cP,\cQ)=\alpha$}
      \end{align*}
      \else
      \begin{align*}
        \Pr[C_i=1]
        &\ \ =\  \sum_{r\in \cX, s\in [p]}\Pr\insquare{(X_i,Z_i)=(r,s)}\cdot \Pr\insquare{C_i=1\mid (X_i,Z_i)=(r,s)} \\
        &\ \ =\  \sum_{r\in \cX, s\in [p]}\Pr\insquare{(X_i,Z_i)=(r,s)}\cdot \Pr\insquare{Z_i\neq \smash{\wt{Z}_i}\mid (X_i,Z_i)=(r,s)}\tag{Definition of $C_i$}\\
        &\ \ \Stackrel{}{=}\ \sum_{r\in \cX, s\in [p]}\Pr\insquare{(X_i,Z_i)=(r,s)}\cdot \inparen{1-p(r,s)}\tag{Definition of $p(r,s)$}\\
        &\ \ \Stackrel{}{=}\ \sum_{r\in \cX, s\in [p]}\Pr\insquare{(X_i,Z_i)=(r,s)}\cdot \max\inbrace{ 1 - \frac{\Pr\nolimits_{\cQ}\sinsquare{(X,Z)=(r,s)}}{\Pr\nolimits_{\cP}\sinsquare{(X,Z)=(r,s)}} ,0}\tag{Definition of $p(r,s)$}\\
        &\ \ \Stackrel{}{=}\ \sum_{r\in \cX, s\in [p]}\Pr\nolimits_{\cP}\insquare{(X,Z)=(r,s)}\cdot \max\inbrace{ 1 - \frac{\Pr\nolimits_{\cQ}\sinsquare{(X,Z)=(r,s)}}{\Pr\nolimits_{\cP}\sinsquare{(X,Z)=(r,s)}}, 0}
        \tag{Using that  for each $i\in [N]$, $(X_i,Y_i,Z_i)\sim\cP$}\\
        &\ \ \Stackrel{}{=}\ \sum_{r\in \cX, s\in [p]} \max\inbrace{ \Pr\nolimits_{\cP}\insquare{(X,Z)=(r,s)} - \Pr\nolimits_{\cQ}\sinsquare{(X,Z)=(r,s)},0}\\
        &\ \ \Stackrel{}{=}\ {\rm TV}(\cP, \cQ)\\
        &\ \ \Stackrel{}{=}\ \ \alpha. \tag{Using that ${\rm TV}(\cP,\cQ)=\alpha$}
      \end{align*}
      \fi
    \end{proof}

    \begin{lemma}\label{lem:1_2}
      Each sample in $\wt{S}\coloneqq \sinbrace{(X_i,\wt{Z}_i,Y_i)}_{i\in [N]}$ is independent of each other and is distributed according to $\cQ$.
    \end{lemma}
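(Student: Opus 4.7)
The plan is to verify two claims about the tuples $\wt{T}_i \coloneqq (X_i, \wt{Z}_i, Y_i)$ produced by the adversary: (i) they are mutually independent across $i \in [N]$, and (ii) each $\wt{T}_i$ has marginal law $\cQ$.

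Independence is essentially immediate from the construction. Each $\wt{T}_i$ is a deterministic function of the input sample $(X_i, Y_i, Z_i)$ and the independent uniform draw $t_i$, and both families are iid across $i$, so the $\wt{T}_i$'s are mutually independent.

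For part (ii), I would first compute the marginal of $(X_i, \wt{Z}_i)$. Decomposing on the value of $Z_i$ and on the threshold event $\{t_i \le p(X_i, Z_i)\}$, the event $\wt{Z}_i = s$ occurs either when $(X_i, Z_i) = (r, s)$ and we ``keep'' (probability $p(r,s)$), or when $(X_i, Z_i) = (r, 3-s)$ and we ``flip'' (probability $1 - p(r, 3-s)$), giving
\[
\Pr[(X_i, \wt{Z}_i) = (r, s)] = \Pr\nolimits_{\cP}[(X,Z)=(r,s)]\,p(r,s) + \Pr\nolimits_{\cP}[(X,Z)=(r,3-s)]\,(1-p(r,3-s)).
\]
Unfolding the definition of $p$ rewrites the two summands as $\min\{\Pr_\cP[(X,Z)=(r,s)], \Pr_\cQ[(X,Z)=(r,s)]\}$ and $\max\{0, \Pr_\cP[(X,Z)=(r,3-s)] - \Pr_\cQ[(X,Z)=(r,3-s)]\}$, respectively. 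By the $\cX$-marginal assumption (C2), the $\cP$--$\cQ$ differences at $(r,s)$ and $(r,3-s)$ are negatives of each other; a short case split on which of them is non-negative then collapses the sum to $\Pr_\cQ[(X,Z) = (r,s)]$.

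Finally, I would invoke the conditional independence (C3) to extend this to the full triple. Since $\wt{Z}_i$ is built from $(X_i, Z_i, t_i)$ alone, it is conditionally independent of $Y_i$ given $X_i$ under $\cP$, and the analogous property holds under $\cQ$. Combined with the fact that $Y \mid X$ has the same conditional law under $\cP$ and $\cQ$ (which holds in the application to $\cD_1, \cD_2, \cD_3$ because $Y = \mathds{I}[X = x_A]$ deterministically under each), the $(X, \wt{Z})$-computation extends to $(X, \wt{Z}, Y)$, giving $\wt{T}_i \sim \cQ$. The sole delicate point is the sign case split; everything else is bookkeeping.
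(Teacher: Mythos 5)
Your proposal is correct and follows essentially the same route as the paper's proof: independence from the per-sample iid construction, the keep/flip decomposition of $\Pr[(X_i,\wt{Z}_i)=(r,s)]$, a sign case split using the matching $\cX$-marginals (your min/max packaging is just a tidier form of the paper's Case A/Case B on whether $p(r,1)=1$ or $p(r,2)=1$), and then lifting to the full triple via the conditional law of $Y$ given $X$. Your explicit remark that the lift requires $Y\mid X$ to have the same law under $\cP$ and $\cQ$ — which the proposition's stated hypotheses do not literally guarantee but which holds in the application since $Y=\mathds{I}[X=x_A]$ deterministically — is a point the paper's own proof uses only implicitly, so no gap on your end.
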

    \begin{proof}
      Since for each $i\in [N]$ the sample $(X_i,Y_i,Z_i)$ and the point $t_i$ is drawn independently of others, it follows that the samples $(X_i,\wt{Z}_i,Y_i)$ are independent of each other.

      To see that $(X_i,\wt{Z}_i,Y_i)\sim \cQ$, fix any $i\in [N]$, $r\in \cX$, and $s\in [2]$.
      It holds that
      \begin{align*}
        \Pr[X_i=r,\wt{Z}_i=s,Y_i=1] &\ \ \Stackrel{\eqref{eq:deterministic_y_lemma_C_1}}{=}\ \ \Pr[Y_i=1 \mid X_i = r]\cdot \Pr[X_i=r,\wt{Z}_i=s],\yesnum\label{eq:1}\\
        \Pr[X_i=r,\wt{Z}_i=s,Y_i=0] &\ \ \Stackrel{\eqref{eq:deterministic_y_lemma_C_1}}{=}\ \ \Pr[Y_i=0 \mid X_i = r]\cdot \Pr[X_i=r,\wt{Z}_i=s],\yesnum\label{eq:2}
      \end{align*}
      here we used the fact that $Y_i$ is independent of $Z_i$ (see \cref{eq:deterministic_y_lemma_C_1}).
      Suppose that
      \begin{align*}
        \Pr[X_i=r,\wt{Z}_i=s] = \Pr\nolimits_{\cQ}[(X,Z)=(r,s)]\yesnum\label{eq:left_to_prove}.
      \end{align*}
      Then, from Equation~\eqref{eq:deterministic_y_lemma_C_1} we have that
      \begin{align*}
        \Pr\nolimits_{\cQ}[X=r,\wt{Z}=s,Y=1] &\ \ \Stackrel{\eqref{eq:deterministic_y_lemma_C_1}}{=}\ \ \Pr[Y_i=1 \mid X_i = r]\cdot \Pr\nolimits_{\cQ}[(X,Z)=(r,s)],\yesnum\label{eq:3}\\
        \Pr\nolimits_{\cQ}[X=r,\wt{Z}=s,Y=0] &\ \ \Stackrel{\eqref{eq:deterministic_y_lemma_C_1}}{=}\ \ \Pr[Y_i=0 \mid X_i = r]\cdot \Pr\nolimits_{\cQ}[(X,Z)=(r,s)].
        \yesnum\label{eq:4}
      \end{align*}
      Further, combining \cref{eq:1,eq:2} and \cref{eq:3,eq:4}, we get
      for all $y\in \zo$
      \begin{align*}
        \Pr[X_i=r,\wt{Z}_i=s,Y_i=y] = \Pr\nolimits_{\cQ}[X=r,\wt{Z}=s,Y=y].
      \end{align*}
      It remains to prove Equation~\eqref{eq:left_to_prove}.
      Before proving it, we recall the following invariant from the statement of this proposition:
      For all $r\in \cX$, it holds that
      \begin{align*}
        \Pr\nolimits_{\cP}[X=r]=\Pr\nolimits_{\cQ}[X=r].\yesnum\label{eq:invariant}
      \end{align*}
      \noindent Consider $\Pr[X_i=r,\wt{Z}_i=1]$ for some $r\in \cX$.
      From the algorithm used by the adversary, we have
      \begin{align*}
        \Pr[X_i=r,\wt{Z}_i=1] &\ \ \Stackrel{}{=} \ \ \inparen{1-p(a,2)}\cdot \Pr[(X_i,Z_i)=(r,2)] + p(a,1)\cdot \Pr[(X_i,Z)=(r,1)]\\
        &\ \ = \ \ \inparen{1-p(a,2)} \cdot \Pr\nolimits_{\cP}[(X,Z)=(r,2)] + p(a,1) \cdot \Pr\nolimits_{\cP}[(X,Z)=(r,1)].
        \tagnum{For all $i\in [N]$, $(X_i,Y_i,Z_i)\sim\cP$}\customlabel{eq:intermediate}{\theequation}
      \end{align*}
      We consider two cases.\\

      \noindent {\bf (Case A) $\Pr\nolimits_{\cQ}[(X,Z)=(a,1)] \geq \Pr\nolimits_{\cP}[(X,Z)=(a,1)]$:}
      In this case, we have $p(a,1)=1.$
      \begin{align*}
        \Pr[X_i=r,\wt{Z}_i=1] &\
        \Stackrel{\eqref{eq:intermediate}}{=}\  \inparen{1-p(a,2)} \cdot \Pr\nolimits_{\cP}[(X,Z)=(r,2)] + p(a,1) \cdot \Pr\nolimits_{\cP}[(X,Z)=(r,1)]\\
        & \ \Stackrel{}{=}\  \Pr\nolimits_{\cP}[(X,Z)=(r,2)]
        + \Pr\nolimits_{\cP}[(X,Z)=(r,1)]
        \cdot
        \inparen{
        \ifconf\textstyle\else\fi
        1 - \frac{\Pr\nolimits_{\cQ}\sinsquare{(X,Z)=(r,1)}}{\Pr\nolimits_{\cP}\sinsquare{(X,Z)=(r,1)}}  }
        \tag{Definition of $p(r,s)$}\\
        & \  = \  \Pr\nolimits_{\cP}[X=r] - \Pr\nolimits_{\cQ}[(X,Z)=(r,2)]\\
        & \ \Stackrel{\eqref{eq:invariant}}{=}\  \Pr\nolimits_{\cQ}[X=r] - \Pr\nolimits_{\cQ}[(X,Z)=(r,2)]\\
        & \  = \   \Pr\nolimits_{\cQ}[(X,Z)=(r,1)]
      \end{align*}

      \noindent {\bf (Case B) $\Pr\nolimits_{\cQ}[(X,Z)=(a,1)] < \Pr\nolimits_{\cP}[(X,Z)=(a,1)]$:}
      In this case, we have $p(a,1)<1.$
      \begin{align*}
        \quad \Pr[X_i=r,\wt{Z}_i=1]
        &  \  \Stackrel{\eqref{eq:intermediate}}{=}\   \inparen{1-p(a,2)} \cdot \Pr\nolimits_{\cP}[(X,Z)=(r,2)] + p(a,1) \cdot \Pr\nolimits_{\cP}[(X,Z)=(r,1)]\\
        & \  \Stackrel{}{=}\    \Pr\nolimits_{\cP}[(X,Z)=(r,1)]
        \cdot{ \frac{\Pr\nolimits_{\cQ}\sinsquare{(X,Z)=(r,1)}}
        {\Pr\nolimits_{\cP}\sinsquare{(X,Z)=(r,1)}} }\tag{Definition of $p(r,s)$}\\
        & \   = \   \Pr\nolimits_{\cQ}\sinsquare{(X,Z)=(r,1)}.
      \end{align*}
      In both, cases, we have
      $\Pr[X_i=r,\wt{Z}_i=1] = \Pr\nolimits_{\cQ}\sinsquare{(X,Z)=(r,1)}$.
      By swapping the protected labels, we can show that
      $\Pr[X_i=r,\wt{Z}_i=2] = \Pr\nolimits_{\cQ}\sinsquare{(X,Z)=(r,2)}$.
      This proves Equation~\eqref{eq:left_to_prove}.
    \end{proof}
    \noindent From \cref{lem:1_1}, with probability at least $1-\delta$, $\hS\coloneqq \sinbrace{(X_i,\wt{Z}_i,Y_i)}_{i\in [N]}$.
    By \cref{lem:1_2}, the samples $\sinbrace{(X_i,\wt{Z}_i,Y_i)}_{i\in [N]}$ are iid from $\cQ$.
    Thus, \cref{prop:to_lem_1} follows.
  \end{proof}

  \subsubsection{Proof of \texorpdfstring{\cref{lem:2}}{Lemma 6.13}}
  \begin{proof}[Proof of \cref{lem:2}]
    Our goal is to show that for every $f\in\cF$, there exists a choice $k\in [3]$, such that, $f$ has error at least $c(1-\alpha)$ or statistical rate at most $c+\alpha$ with respect to $\cD_k$. %
    Since $\cD_1$, $\cD_2$, and $\cD_3$ are supported on subsets of $\inbrace{x_A,x_B,x_C}\times [2]$, it suffices to consider the restriction of $\cF$ on this domain.
    There at most $2^6$ classifiers in this restriction.
    We partition them into three cases.\\

    \noindent {\bf (Case A) $f(x_B,1)=1$:}
    For any $k\in[3]$, we have
    \begin{align*}
      {\rm Err}_{\cD_k}\sinparen{f} &\quad \ =\quad \    \sum_{r\in \cX, s\in [p]} \Pr\nolimits_{\cD_k}\insquare{f(X,Z)\neq Y\mid (X,Z)=(r,s)}\cdot\Pr\insquare{(X,Z)=(r,s)}\\
      &\quad \  \geq\quad \    \Pr\nolimits_{\cD_k}\insquare{f(X,Z)\neq Y\mid X=x_B,Z=1} \cdot\Pr\insquare{X=x_B,Z=1}\\
      &\quad \  \Stackrel{\eqref{eq:deterministic_y_lemma_C_1}}{\geq}\quad \    \Pr\insquare{X=x_B,Z=1}\tag{Using that, in this case, $f(x_B,1)=1$}\\
      &\quad \  \Stackrel{\rm\cref{fig:1}}{\geq}\quad \    (1-c)\cdot(1-\alpha)\\
      &\quad \  \Stackrel{}{>}\quad \    c(1-\alpha).\tag{Using that $c<\nfrac12$}
    \end{align*}
    Thus, in Case A, $f$ has an error larger than $c\cdot (1-\alpha)$ on each of $\cD_1,\cD_2,$ and $\cD_3$.\\

    \noindent {\bf (Case B) $f(x_A,1)=0$ and $f(x_B,1)=0$:}
    For any $k\in[3]$, we have
    \begin{align*}
      {\rm Err}_{\cD_k}\sinparen{f}
      &\quad\ =\quad\  \sum_{r\in \cX, s\in [p]} \Pr\nolimits_{\cD_k}\insquare{f(X,Z)\neq Y\mid (X,Z)=(r,s)}\cdot\Pr\insquare{(X,Z)=(r,s)}\\
      &\quad\ \geq\quad\  \Pr\nolimits_{\cD_k}\insquare{f(X,Z)\neq Y\mid X=x_A,Z=1} \cdot\Pr\insquare{X=x_A,Z=1}\\
      &\quad\ \Stackrel{\eqref{eq:deterministic_y_lemma_C_1}}{\geq}\quad\  \Pr\insquare{X=x_A,Z=1}\tag{Using that, in this case, $f(x_B,1)=1$}\\
      &\quad\ \Stackrel{\rm\cref{fig:1}}{\geq}\quad\  c(1-\alpha).
    \end{align*}
    Thus, in Case B, $f$ has an error larger than $c\cdot (1-\alpha)$ on each of $\cD_1,\cD_2,$ and $\cD_3$.\\

    \noindent {\bf (Case C) $f(x_A,1)=1$ and $f(x_B,1)=0$:}

    \noindent {\bf (Case C.1) $\sum_{r\in \cX} f(r,2)\geq 2$:}
    In this case, $f$ takes a value of 1 on at least two points in the tuple
    $$L= ((x_A,2), (x_B,2), (x_C,2)).$$
    If $f$ takes a value of 0 on a point in $L$, then fix $j\in [3]$ such that
    $h(L_j)=0$.
    Let $k\coloneqq 4-j$.
    Consider the distribution $\cD_k$.
    Notice that by our construction $\Pr\nolimits_{\cD_k}[L_i]\stackrel{}{=} 0$ (see \rm\cref{fig:1}).
    Since $L_i$ has measure $0$ for $\cD_k$, the value of $f$ at this point does not affect its accuracy or statistical rate on $\cD_k$.
    Thus, we can assume that $f(L_i)=1$.
    Or in other words, we can assume that
    \begin{align*}
      \text{For all $r\in \cX$,}\quad f(r,2)=1.\yesnum\label{eq:all_one}
    \end{align*}
    We compute the performance of $f$ on both protected groups $\ell\in[p]$.
    For $Z=2$, we have
    \begin{align*}
      \Ex\nolimits_{\cD_k}\insquare{f(X,Z)\mid Z=2}
      \quad\  &=\quad\  \sum_{r\in \cX} f(r,2)\cdot\Pr\nolimits_{\cD_k}\insquare{X=r\mid Z=2}\\
      &\Stackrel{\eqref{eq:all_one}}{=}\quad\  \sum_{r\in \cX} 1\cdot\Pr\nolimits_{\cD_k}\insquare{X=r\mid Z=2}\\
      &\Stackrel{\rm\cref{fig:1}}{=}\quad\   1.\yesnum\label{eq:stat_rate_den_1}
    \end{align*}
    For $Z=1$, we have
    \begin{align*}
      \hspace{-25mm}\Ex\nolimits_{\cD_k}\insquare{f(X,Z)\mid Z=1} \quad\  &=\quad\  \sum_{r\in \cX} f(r,1)\cdot\Pr\nolimits_{\cD_k}\insquare{X=r\mid Z=1}\\
      &=\quad\  1\cdot \Pr\nolimits_{\cD_k}\insquare{X=x_A\mid Z=1} + f(x_C, 1)\cdot \Pr\nolimits_{\cD_k}\insquare{X=x_C\mid Z=1}\tag{In this case, $f(x_A,1)=1$ and $f(x_B,1)=0$}
    \end{align*}
    \begin{align*}
      \hspace{+35mm}&\Stackrel{}{\leq}\quad\   1\cdot \Pr\nolimits_{\cD_k}\insquare{X=x_A\mid Z=1} + \Pr\nolimits_{\cD_k}\insquare{X=x_C\mid Z=1}
      \tag{Using that $f(x_C,1)\leq 1$}\\
      &\Stackrel{\rm\cref{fig:1}}{\leq}\quad\
      \max\inbrace{
      \ifconf\textstyle\else\fi
      \frac{c\cdot (1-\alpha)+\sfrac\alpha2}{1-\sfrac\alpha2}, \frac{c\cdot (1-\alpha)+c\sfrac\alpha2}{1-\sfrac\alpha2}, \frac{c\cdot(1-\sfrac\alpha2)+\sfrac{\alpha\cdot(1-c)}2}{1-\sfrac\alpha2}}\\
      &\Stackrel{}{=}\quad\   \frac{c\cdot (1-\alpha)+\sfrac\alpha2}{1-\sfrac\alpha2}\tag{Using $c,\alpha>0$}\\
      &\Stackrel{}{<}\quad\   c+\alpha.\tagnum{Using $c,\alpha>0$ and $\alpha\leq 1$}\customlabel{eq:stat_rate_num_1}{\theequation}
    \end{align*}
    Since $c<\nfrac12$ and $\alpha\leq \min\sinbrace{\eta,\nfrac12}$, we have
    \begin{align*}
      c+\alpha < 1.\yesnum\label{eq:bound_on_sum}
    \end{align*}
    Now, we can compute the statistical rate of $f$ using Equations~\eqref{eq:stat_rate_den_1}, \eqref{eq:stat_rate_num_1}, and \eqref{eq:bound_on_sum}.
    \begin{align*}
      \Omega_{\cD_k}(f) = \frac{\min_{\ell\in [p]} \Ex\nolimits_{\cD_k}\insquare{f(X,Z)\mid Z=\ell}  }{\max_{\ell\in [p]} \Ex\nolimits_{\cD_k}\insquare{f(X,Z)\mid Z=\ell}   }%
      \qquad\Stackrel{\eqref{eq:stat_rate_den_1}, \eqref{eq:stat_rate_num_1}, \eqref{eq:bound_on_sum}}{<}\qquad \frac{c+\alpha}{1}.
    \end{align*}
    Thus, in Case C.1, $f$ has a statistical rate smaller than $c+\alpha$ on distribution $\cD_k$.

    \noindent {\bf (Case C.2) $\sum_{r\in \cX} f(r,2)\leq 1$:}
    Thus, $f$ takes a value of 0 on at least two points in  the list
    $$L= ((x_A,2), (x_B,2), (x_C,2)).$$
    If $f$ takes a value of 1 on one of the points in $L$, then fix $j\in [3]$ such that
    $f(L_j)=0$.
    Let $k\coloneqq 4-k$
    Consider the distribution $\cD_k$.
    Notice that by our construction $\Pr\nolimits_{\cD_k}[L_j]\stackrel{}{=} 0$ (see \cref{fig:1}).
    Since $L_i$ has measure $0$ on $\cD_k$, the value of $f$ at this point does not affect its accuracy or statistical rate on $\cD_k$.
    Thus, we can assume that $f(L_j)=0$.
    Or in other words, we can assume that
    \begin{align*}
      \text{For all $r\in \cX$,}\quad f(r,2)=0.\yesnum\label{eq:all_one_2}
    \end{align*}
    We would like to compute the statistical rate of $f$.
    Toward this, we first compute the performance of $f$ on both protected groups.
    For $Z=2$, we have
    \begin{align*}
      \Ex\nolimits_{\cD_i}\insquare{f(X,Z)\mid Z=2}
      \ &=\ \sum_{r\in \cX} f(r,2)\cdot\Pr\insquare{X=r\mid Z=2}\\
      &\Stackrel{\eqref{eq:all_one_2}}{=}\ 0.\yesnum\label{eq:stat_rate_den_2}
    \end{align*}
    For $Z=1$, we have
    \begin{align*}
      \Ex\nolimits_{\cD_k}\insquare{f(X,Z)\mid Z=1}\quad \
      &=\quad \ \sum_{r\in \cX} f(r,1)\cdot\Pr\nolimits_{\cD_k}\insquare{X=r\mid Z=1}\\
      &=\quad \ 1\cdot \Pr\nolimits_{\cD_k}\insquare{X=x_A\mid Z=1} + f(x_C, 1)\cdot \Pr\nolimits_{\cD_k}\insquare{X=x_C\mid Z=1}\tag{Using that, in this case, $f(x_A,1)=1$ and $f(x_B,1)=0$}\\
      &\Stackrel{}{\geq}\quad \ \Pr\nolimits_{\cD_k}\insquare{X=x_A\mid Z=1} \tag{Using $f(x_C, 1)\cdot \Pr\insquare{X=x_C\mid Z=1}\geq 0$}\\
      &\Stackrel{\rm\cref{fig:1}}{\geq}\quad \  \max\inbrace{\frac{c\cdot (1-\alpha)}{1-\sfrac\alpha2}, \frac{c(1-\alpha)}{1-\sfrac\alpha2}, \frac{c\cdot(1-\sfrac\alpha2)}{1-\sfrac\alpha2}}\\
      &\Stackrel{}{=}\quad \  \frac{c\cdot(1-\alpha)}{1-\sfrac\alpha2}\tag{Using that $\alpha,c>0$}\\
      &\Stackrel{}{>}\quad \  c.\tagnum{Using that $\alpha,c>0$}\customlabel{eq:stat_rate_num_2}{\theequation}
    \end{align*}
    Now, we can compute the statistical rate of $f$ using Equations~\eqref{eq:stat_rate_num_2} and \eqref{eq:stat_rate_den_2}.
    \begin{align*}
      \Omega_{\cD_i}(f) = \frac{\min_{\ell\in [p]} \Ex\nolimits_{\cD_k}\insquare{f(X,Z)\mid Z=\ell}  }{\max_{\ell\in [p]} \Ex\nolimits_{\cD_k}\insquare{f(X,Z)\mid Z=\ell}   }%
      \ \qquad\Stackrel{\eqref{eq:stat_rate_num_2}, \eqref{eq:stat_rate_den_2}, (c>0)}{\leq}\qquad \ 0
    \end{align*}
    Thus, in Case C.2, $f$ has a statistical rate 0 on the distribution $\cD_k$.

    Across all cases, we proved that all $2^6$ classifiers in the restriction of $\cF$ to $\inbrace{x_A,x_B,x_C}\times [2]$, either have a error larger than $c\cdot (1-\alpha)$ or statistical rate smaller than $c+\alpha$ on one of $\cD_1$, $\cD_2$, or $\cD_3$.
  \end{proof}

  \subsubsection{Proof of \texorpdfstring{\cref{lem:3}}{Lemma 6.14}}
  \begin{proof}[Proof of \cref{lem:3}]
    For each distribution $\cD_1$, $\cD_2$, and $\cD_3$, we will give an classifier $f\in \cF$ which satisfies the condition in \cref{lem:3}.\\

    \noindent {\bf (Case A) $\cD_1$: }
    Define $f$ as $f(x,z)\coloneqq \mathds{I}[x=x_A].$
    Comparing this to Equation~\eqref{eq:deterministic_y_lemma_C_1}, we get that
    \begin{align*}
      {\rm Err}_{\cD_1}(f)\stackrel{}{=} 0.
    \end{align*}
    For $Z=2$, we have
    \begin{align*}
      \Ex\nolimits_{\cD_1}\insquare{f(X,Z)\mid Z=2}\quad \
      &=\quad\ \sum_{r\in \cX} f(r,2)\cdot\Pr\nolimits_{\cD_1}\insquare{X=r\mid Z=2}\\
      &\Stackrel{}{=}\quad\ \Pr\nolimits_{\cD_1}\insquare{X=x_A\mid Z=2}\\
      &\Stackrel{\rm\cref{fig:1}}{=}\quad\  c.\yesnum\label{eq:stat_rate_den_3}
    \end{align*}
    Similarly, for $Z=1$, we have
    \begin{align*}
      \Ex\nolimits_{\cD_1}\insquare{f(X,Z)\mid Z=1}\quad \
      &=\quad\  \sum_{r\in \cX} f(r,1)\cdot\Pr\nolimits_{\cD_1}\insquare{X=r\mid Z=1}\\
      &\Stackrel{}{=}\quad\  \Pr\nolimits_{\cD_1}\insquare{X=x_A\mid Z=1}\\
      &\Stackrel{\rm\cref{fig:1}}{=}\quad\   \frac{c(1-\alpha)}{(1-\nfrac\alpha2)}.\yesnum\label{eq:stat_rate_num_3}
    \end{align*}
    Thus, we have
    \begin{align*}
      \Omega_{\cD_1}(h) &\quad= \quad\frac{\min_{\ell\in [p]} \Ex\nolimits_{\cD_1}\insquare{f(X,Z)\mid Z=\ell}  }{\max_{\ell\in [p]} \Ex\nolimits_{\cD_1}\insquare{f(X,Z)\mid Z=\ell}   }\\
      &\quad\Stackrel{\eqref{eq:stat_rate_den_3}, \eqref{eq:stat_rate_num_3}}{=}\quad \frac{c(1-\alpha)}{(1-\sfrac\alpha2)}\cdot \frac{1}{c}\tag{Using that $c,\alpha>0$}\\
      &\quad\Stackrel{}{=}\quad 1-\frac{\alpha}{(2-\alpha)}\\
      &\quad\Stackrel{}{\geq}\quad 1-\alpha.\tag{Using that $\alpha\leq 1$}
    \end{align*}

    \noindent {\bf (Case B) $\cD_2$: }
    Define $f$ as $f(x,z)\coloneqq \mathds{I}[x=x_A].$
    Comparing this to Equation~\eqref{eq:deterministic_y_lemma_C_1}, we get that
    \begin{align*}
      {\rm Err}_{\cD_1}(f)\stackrel{}{=} 0.
    \end{align*}
    For $Z=2$, we have
    \begin{align*}
      \Ex\nolimits_{\cD_2}\insquare{f(X,Z)\mid Z=2}\quad\
      &=\quad\ \sum_{r\in \cX} f(r,2)\cdot\Pr\nolimits_{\cD_2}\insquare{X=r\mid Z=2}\\
      &\Stackrel{}{=}\quad\ \Pr\nolimits_{\cD_2}\insquare{X=x_A\mid Z=2}\\
      &\Stackrel{\rm\cref{fig:1}}{=}\quad\  c.\yesnum\label{eq:stat_rate_den_4}
    \end{align*}
    Similarly, for $Z=1$, we have
    \begin{align*}
      \Ex\nolimits_{\cD_2}\insquare{f(X,Z)\mid Z=1}\quad\
      &=\quad\ \sum_{r\in \cX} f(r,1)\cdot\Pr\nolimits_{\cD_2}\insquare{X=r\mid Z=1}\\
      &\Stackrel{}{=}\quad\ \Pr\nolimits_{\cD_2}\insquare{X=x_A\mid Z=1}\\
      &\Stackrel{\rm\cref{fig:1}}{=}\quad\ \frac{c\cdot (1-\alpha)}{(1-\nfrac\alpha2)}.\yesnum\label{eq:stat_rate_num_4}
    \end{align*}
    Thus, we have
    \begin{align*}
      \Omega_{\cD_1}(h) &\quad =\quad \frac{\min_{\ell\in [p]} \Ex\nolimits_{\cD_2}\insquare{f(X,Z)\mid Z=\ell}  }{\max_{\ell\in [p]} \Ex\nolimits_{\cD_2}\insquare{f(X,Z)\mid Z=\ell}   }
      \\
      &\quad\Stackrel{\eqref{eq:stat_rate_den_4}, \eqref{eq:stat_rate_num_4}}{=}\quad
      \frac{c\cdot (1-\alpha)}{(1-\sfrac\alpha2)}\cdot \frac{1}{c}\tag{Using $c,\alpha>0$}\\
      &\quad\Stackrel{}{\geq}\quad 1-\alpha.\tag{Using that $\alpha\leq 1$}
    \end{align*}

    \noindent {\bf (Case C) $\cD_3$: }
    Define $f\in \cF$ to be the following classifier
    \begin{align*}
      f(x,z)  \coloneqq \begin{cases}
      1 & \text{if } (x,z)\in \inbrace{(x_A,1),(x_C,2)},\\
      0 & \text{otherwise.}
    \end{cases}\yesnum\label{eq:def_h_2}
  \end{align*}
  Such an $f\in \cF$ exists because $\cF$ shatters the set $\inbrace{x_A,x_B,x_C}\times [2]$.
  We have that
  \begin{align*}
    {\rm Err}_{\cD_3}(f)\quad\
    &=\quad\ \sum_{r\in \cX,s\in [p]} \Pr\nolimits_{\cD_3}[f(r,s)\neq Y\mid (X,Z)=(r,s)]\cdot \Pr\nolimits_{\cD_3}[(X,Z)=(r,s)]\\
    &\Stackrel{\eqref{eq:deterministic_y_lemma_C_1}, \eqref{eq:def_h_2}}{=}\quad\ \Pr\nolimits_{\cD_3}[(X,Z)=(x_A,2)] + \Pr\nolimits_{\cD_3}[(X,Z)=(x_C,2)]\\
    &\Stackrel{\rm\cref{fig:1}}{=}\quad\ \frac{c\alpha}2.
  \end{align*}
  Further, for $Z=2$, we have
  \begin{align*}
    \Ex\nolimits_{\cD_3}\insquare{f(X,Z)\mid Z=2}\quad\
    &=\quad\ \sum_{r\in \cX} f(r,2)\cdot\Pr\insquare{X=r\mid Z=2}\\
    &\Stackrel{\eqref{eq:def_h_2}}{=}\quad\ \Pr\nolimits_{\cD_3}\insquare{X=x_C\mid Z=2}\\
    &\Stackrel{\rm\cref{fig:1}}{=}\quad\  c.\yesnum\label{eq:stat_rate_den_5}
  \end{align*}
  Similarly, for $Z=1$, we have
  \begin{align*}
    \Ex\nolimits_{\cD_3}\insquare{f(X,Z)\mid Z=1}\quad\
    &=\quad\ \sum_{r\in \cX} f(r,1)\cdot\Pr\insquare{X=r\mid Z=1}\\
    &\Stackrel{\eqref{eq:def_h_2}}{=}\quad\ \Pr\nolimits_{\cD_3}\insquare{X=x_A\mid Z=1}\\
    &\Stackrel{\rm\cref{fig:1}}{=}\quad\  \frac{c\cdot(1-\sfrac\alpha2)}{(1-\sfrac\alpha2)}\\
    &=\ \ c.\yesnum\label{eq:stat_rate_num_5}
  \end{align*}
  Thus, we can compute $\Omega_{\cD_3}(f)$ as follows
  \begin{align*}
    \Omega_{\cD_3}(f) = \frac{\min_{\ell\in [p]} \Ex\nolimits_{\cD_3}\insquare{f(X,Z)\mid Z=\ell}  }{\max_{\ell\in [p]} \Ex\nolimits_{\cD_3}\insquare{f(X,Z)\mid Z=\ell}   }%
    &\quad\Stackrel{\eqref{eq:stat_rate_den_5}, \eqref{eq:stat_rate_num_5}}{=}\quad \frac{c}{c} = 1.
  \end{align*}

  \noindent Thus, for each $\cD\in \inbrace{\cD_1,\cD_2,\cD_3}$, we give a classifier $f\in \cF$ such that that has error at most $\nfrac{(c\alpha)}2$ and a statistical rate at least $1-\alpha$.
\end{proof}

\subsection{{Proof of \texorpdfstring{\cref{thm:imposs_under_assumption_sr}}{Theorem 4.5}}}\label{sec:proofof:thm:imposs_under_assumption_sr}
In \cref{thm:imposs_under_assumption_sr}, the hypothesis class $\cF$ that shatters the set $\inbrace{x_A,x_B,x_C,x_D,x_E}\times [2]\subseteq \cX\times [p]$,
\cref{asmp:1} holds with a constant $\lambda\in (0,\nfrac14]$,
the fairness threshold $\tau=1$, and statistical rate is the fairness metric; recall that the statistical rate of a classifier $f\in \cF$ is
$$\Omega_\cD(f)\coloneqq \frac{ \min_{\ell\in[p]} \Pr_\cD[f=1\mid Z=\ell]}{\max_{\ell\in[p]} \Pr_\cD[f=1\mid Z=\ell]}.$$
{Then, given parameters $\eta\in (0,1]$ and $\delta\in [0,\nfrac12)$, our goal is to show that for any
\begin{align*}
  \text{$\eps<\frac{1}4 - \frac{2\eta}{5}$ and  $v<\frac{\eta}{10\lambda}\cdot (1-4\lambda)-O\inparen{{\frac{\eta^2}{\lambda^2}}}$,}\yesnum\label{eq:required_result}
\end{align*}
$\cF$ is not ($\eps,\nu$)-learnable with perturbation rate $\eta$ and confidence $\delta$.
We prove a more general result:
Given parameters $\eta\in (0,1]$, $\delta\in [0,\nfrac12)$, for any $c\in (0,\min\inbrace{\eta,\nfrac{2\lambda}{9}}]$, we show that for any
\begin{align*}
  \text{$0<\eps<\frac{1}2- \lambda-2c$ and $0<v<\frac{c(1-4\lambda)}{2\lambda}-\frac{3c^2}{4\lambda^2},$}\yesnum\label{eq:values_of_eps_nu}
\end{align*}
$\cF$ is not ($\eps,\nu$)-learnable with perturbation rate $\eta$ and confidence $\delta$.
Setting $c=\nfrac{2\eta}5$ recovers \cref{thm:imposs_under_assumption_sr}.

\begin{remark}
  The proof of \cref{thm:imposs_under_assumption_sr} has a similar structure to the proof of \cref{thm:no_stable_classifier}, but the specific distributions constructed are different from those in the proof of \cref{thm:no_stable_classifier}.
  The proof of \cref{thm:imposs_under_assumption_sr}  also borrows \cref{prop:to_lem_1} from the proof of \cref{thm:no_stable_classifier};
  \cref{prop:to_lem_1} is proved in \cref{sec:proofof:extension_of_high_prop_result}.
\end{remark}

\subsubsection{{Proof of \texorpdfstring{\cref{thm:imposs_under_assumption_sr}}{Theorem 4.5}}}
\begin{proof}[Proof of \cref{thm:imposs_under_assumption_sr}]
  Let $\cL$ be any learner.
  Fix any constant
  \begin{align*}
    c\in \left(0,\min\inbrace{\eta,\frac{2\lambda}{9}}\right],\yesnum\label{eq:definition_alpha_2}
  \end{align*}
  and confidence parameter $\delta\in (0,\nfrac12)$ (in \cref{def:learning_model}).
  We construct three distributions $\cD_1$, $\cD_2$, and $\cD_3$ (parameterized by $c$) that satisfy the requirements of \cref{prop:to_lem_1}:
  \begin{enumerate}[itemsep=\itemsepINTERNAL,leftmargin=\leftmarginINTERNAL]
    \item The total variation distance between any two distributions is bounded by $\eta$,
    \item the distributions have the same marginal on $\cX$, and
    \item the label $Y$ is independent of the protected attribute $Z$ conditioned on features $X$.
  \end{enumerate}
  Suppose $\cL$ is given $N$ samples, where
  \begin{align*}
    N &\geq 3\cdot\ln{2} \cdot (\eta-c)^{-2}.\yesnum\label{eq:definition_N_theorem_4_5}
  \end{align*}
  Consider three cases, depending on whether the samples in $S$ are iid from $\cD_1$, $\cD_2$, or $\cD_3$.

  By \cref{prop:to_lem_1}, in each case, there is an adversary $A\in \cA(\eta)$, that can ensure that, with probability at least $\nfrac12$, the perturbed samples $\hS\coloneqq A(S)$ have the same distribution as iid samples from $\cD_1$.
  Thus, given $\hS$, $\cL$ cannot differentiate between the three cases with probability at least $\nfrac12$.
  As a result, with probability at least $\nfrac12$, $\cL$ outputs the same classifier, say $f_{\rm Com}\in \cF$, in each case.
  We show that no $f_{\rm Com}\in \cF$  satisfies the accuracy and fairness guarantee in all three cases.
  \begin{lemma}[\bf No good classifier for all cases]\label{lem:2:ptb}
    There is no $f\in \cF$, such that,
    for all $k\in [3]$,
    \begin{align*}
      \Pr\nolimits_{\cD_k}\insquare{h(X,Z)\neq Y} < \frac12-\lambda-\frac{c}2
      \quad \text{and}\quad
      \Omega_{\cD_k}(f)<1-\frac{c(1-4\lambda)}{2\lambda}+\frac{3c^2}{4\lambda^2}.
    \end{align*}
  \end{lemma}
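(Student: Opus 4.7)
The plan is to follow the template of Lemma 6.13 (the analogue in the proof of \cref{thm:no_stable_classifier}) but with three distributions $\cD_1,\cD_2,\cD_3$ finely tuned to the parameter $\lambda$ so that \cref{asmp:1} holds with constant $\lambda$ for the optimal fair classifier on each $\cD_k$. The distributions will be supported on $\inbrace{x_A,x_B,x_C,x_D,x_E}\times [2]$ and must satisfy the hypotheses of \cref{prop:to_lem_1}: pairwise total variation distance at most $\eta$ (so that, for any two indices, an $\eta$-\Ham{} adversary can transport one into the other with high probability), identical marginals on $\cX$, and $Y\bot Z\mid X$. Two more design goals are imposed: on each $\cD_k$ there exists a classifier of near-zero error with statistical rate $1$ whose positive-prediction cells each have mass $\geq \lambda$ (this justifies applying the lemma within the promise of \cref{asmp:1}), and the ``movable'' mass of size $c$ is split symmetrically across the three distributions so that any classifier trying to be simultaneously good on all three must pay on at least one.

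Since $\cF$ shatters the $10$-point set, its restriction to this set realizes every function in $\zo^{10}$, so the argument reduces to a finite case analysis over these $2^{10}$ restrictions. I would partition cases based on how $f$ behaves on the ``$Z=1$ side'' (which carries the bulk of each distribution's mass) and the ``$Z=2$ side'' (which carries the $\lambda$-sized cells). Using the same collapsing trick as in the Case-C subcases of the proof of \cref{lem:2} --- namely, that if $(r,s)$ has measure $0$ under $\cD_k$ then the value of $f(r,s)$ can be freely set --- I would reduce each case to a small number of canonical classifiers. For each canonical classifier, I would identify one index $k\in[3]$ on which either (i) $f$ misclassifies a mass $\geq \frac12-\lambda-\frac{c}{2}$ (because $f$ disagrees with the label $Y=\mathds{I}[X=x_A]$ on a ``large'' cell of $\cD_k$ whose mass is forced to be $\geq \frac12-\lambda-c/2$), or (ii) $f$'s positive-prediction ratio across the two protected groups falls below $1-\nfrac{c(1-4\lambda)}{(2\lambda)}+\nfrac{3c^2}{(4\lambda^2)}$. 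The latter bound comes from evaluating ratios of the form $\frac{\lambda - c/2}{\lambda + c/2}$ for the smaller group and $\frac{(\frac12-\lambda)-O(c)}{(\frac12-\lambda)+O(c)}$ for the larger group and applying a second-order Taylor expansion, which produces the leading term $\nfrac{c(1-4\lambda)}{(2\lambda)}$ and a $\nfrac{3c^2}{(4\lambda^2)}$ correction.

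The main obstacle is constructing $\cD_1,\cD_2,\cD_3$ so that three competing requirements hold simultaneously: their pairwise TV distances stay below $\eta$, \cref{asmp:1} is tight at $\lambda$ for all three (so the impossibility is not trivially obtained by making the problem unconstrained), and every classifier is forced into either the accuracy-loss branch or the fairness-loss branch of exactly the claimed magnitudes. I expect the construction to place probability $\lambda\pm \nfrac{c}{2}$ on strategic $(X,Z)$-cells of the minority group in a cyclic pattern across the three distributions --- mirroring the cyclic construction of $\cD_1,\cD_2,\cD_3$ in \cref{fig:1}, but with two extra feature points $x_D,x_E$ that let the minority group carry mass $\lambda$ while keeping the majority-group mass concentrated on a $(\frac12-\lambda)$-sized cell. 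Once the distributions are in hand, the case analysis is essentially mechanical, mirroring Cases A, B, C.1, and C.2 of \cref{lem:2}, and the final arithmetic reduces to verifying the Taylor bound on the relevant ratios; using this lemma inside the learner-confusion argument of \cref{prop:to_lem_1} then yields \cref{thm:imposs_under_assumption_sr} with the stated $\eps,\nu$ in \cref{eq:values_of_eps_nu}, which specializes to the paper's bounds on setting $c=\nfrac{2\eta}{5}$.
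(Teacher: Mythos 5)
Your plan has the right architecture and it is essentially the paper's: three cyclically-shifted distributions supported on the shattered $10$-point set, identical $\cX$-marginals with $Y$ a deterministic function of $X$ (so \cref{prop:to_lem_1} applies), reduction to a finite case analysis over the $2^{10}$ restrictions using the measure-zero-cell trick, and a dichotomy between an error branch of magnitude $\frac12-\lambda-\frac c2$ and an SR branch whose constant comes from a second-order expansion of a ratio of positive rates. However, the entire content of this lemma is the explicit construction plus the verification that \emph{every} restriction falls into one of the two branches with exactly the claimed constants, and you have left the construction unspecified — you only state desiderata and ``expect'' such distributions to exist. Worse, the one concrete commitment you make, the label rule $Y=\mathds{I}[X=x_A]$, is carried over from the proof of \cref{thm:no_stable_classifier} and does not support the anchor structure this lemma needs. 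The paper instead uses $Y=\mathds{I}[X\neq x_E]$ together with \emph{two pairs} of opposite-label anchor cells: $(x_D,1),(x_E,1)$ each of mass $\frac12-\lambda-\frac c2$ (so any classifier with error below the threshold must set $f(x_D,1)=1$, $f(x_E,1)=0$, pinning group 1's positive rate to $\frac12(1\pm O(c))$), and $(x_D,2),(x_E,2)$ each of mass $\lambda-c$ (so that if $f(x_D,2)=f(x_E,2)$ the statistical rate already collapses, and otherwise group 2's positive rate is $\frac{\lambda\pm O(c)}{2\lambda}$). Your sketch mentions only ``a $(\frac12-\lambda)$-sized cell'' in the majority group, which would pin that group's rate near $0$ or $1$ rather than $\frac12$ and would not mesh with a minority-group rate of $\frac12\pm\frac{c}{4\lambda}$.

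Relatedly, your account of where the constant $\frac{c(1-4\lambda)}{2\lambda}$ comes from is not quite right: it is not obtained from a single-group ratio like $\frac{\lambda-c/2}{\lambda+c/2}$, but from comparing the two groups' rates $\frac12\bigl(1+\frac{c}{1-2\lambda}\bigr)$ and $\frac12\bigl(1+\frac{c}{2\lambda}\bigr)$, whose first-order difference is $\frac{c}{2\lambda}-\frac{c}{1-2\lambda}=\frac{c(1-4\lambda)}{2\lambda(1-2\lambda)}$; the factor $(1-4\lambda)$ arises precisely from this partial cancellation across groups, and the sign analysis uses $\lambda\le\frac14$ to decide which group attains the max. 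Without the explicit tables of \cref{fig:2} and the four-case analysis (error on a group-1 anchor; $f(x_D,2)=f(x_E,2)$; at least two, resp.\ at most one, positive predictions among $(x_A,2),(x_B,2),(x_C,2)$), the lemma is not yet proved.
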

  \begin{lemma}[\bf A good classifier for each case]\label{lem:3:ptb}
    For each $k\in[3]$, there is an $f\in \cF$ such that
    \begin{align*}
      \Pr\nolimits_{\cD_k}\insquare{h(X,Z)\neq Y} \leq \frac{3\alpha}2
      \quad \text{and}\quad
      \Omega_\cD(h) =1.
    \end{align*}
  \end{lemma}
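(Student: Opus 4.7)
The plan is to mirror the strategy of \cref{lem:3}: for each $k\in[3]$ I would exhibit an explicit classifier $h_k\in \cF$ and verify by direct computation that it achieves statistical rate exactly $1$ on $\cD_k$ and error at most $\nfrac{3c}{2}$. Since $\cF$ is assumed to shatter $\inbrace{x_A,x_B,x_C,x_D,x_E}\times[2]$, every Boolean function on these $10$ points is realizable, so the only real task is to exhibit the right $0/1$ assignment for each $k$.

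First I would pin down the structure of the distributions $\cD_1,\cD_2,\cD_3$ constructed in the proof of \cref{thm:imposs_under_assumption_sr}. By design they share the same $\cX$-marginal, satisfy $Y\perp Z\mid X$, are pairwise within total-variation distance $c\leq\eta$, and each places mass $\lambda$ on a ``bulk positive'' region for each protected group (this is what forces \cref{asmp:1} with parameter $\lambda$ to hold for $f^\star$), mass roughly $\nfrac{1}{2}-\lambda$ on a ``bulk negative'' region for each group, and residual perturbation mass of order $c$ localized on $x_D,x_E$ to distinguish the three distributions. For each $k$, I define $h_k$ by agreeing with the Bayes optimal on the bulk region of $\cD_k$ and choosing its values on the perturbation and zero-mass points of $\cD_k$ so as to exactly equalize the group-conditional positive rates.

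With this $h_k$ in hand, both guarantees reduce to routine arithmetic. The fairness equality $\Omega_{\cD_k}(h_k)=1$ follows by substituting the mass values from the (table of) $\cD_k$ into $\Pr_{\cD_k}[h_k=1\mid Z=1]$ and $\Pr_{\cD_k}[h_k=1\mid Z=2]$ and checking that the chosen rebalancing on $x_D,x_E$ makes the two expressions coincide. The error bound follows because $h_k$ can disagree with $Y$ only on the perturbation points together with the single rebalancing point, whose total $\cD_k$-mass is at most $\nfrac{3c}{2}$ by construction of the three distributions.

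The main obstacle is making the balancing work on all three distributions at once. This is exactly why the hypothesis class must shatter $5$ features (rather than $3$ as in \cref{thm:no_stable_classifier}): the extra features $x_D,x_E$ provide the degrees of freedom needed to rebalance $h_k$ to achieve $\Omega_{\cD_k}(h_k)=1$ while keeping the disagreement with $Y$ bounded by $\nfrac{3c}{2}$. I expect the slightly unusual error budget $\nfrac{3c}{2}$ (rather than $\nfrac{c}{2}$) to arise naturally from paying for one rebalancing adjustment on top of the $c$-sized perturbation region; verifying that this budget is actually tight for each of the three $\cD_k$'s is the one place where the construction has to be checked carefully case by case.
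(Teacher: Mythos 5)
Your plan is correct and matches the paper's proof: for each $k$ the paper exhibits an explicit classifier (e.g.\ for $\cD_1$, predict $1$ on $\{x_B,x_D\}\times[2]$ and on $(x_C,2)$, $0$ elsewhere) whose group-conditional positive rate is exactly $\nfrac12$ in both groups, giving $\Omega_{\cD_k}=1$, with disagreement mass exactly $\nfrac{3c}{2}$ coming from $(x_A,2)$ and $(x_C,1)$. The only cosmetic difference is that in the paper's construction the $O(c)$ perturbation mass distinguishing the three distributions sits on $x_A,x_B,x_C$ while $x_D,x_E$ carry the bulk, rather than the other way around as you guessed, but the rebalancing argument is exactly as you describe.
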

  \noindent Note that for each $k\in [3]$, $h_k$ satisfies the fairness constraint. %
  Thus, an optimal solution $f_k^\star\in \cF$ of Program~\eqref{prog:target_fair} for $\cD_k$ and $\tau=1$ must satisfy
  \begin{align*}
    {\rm Err}_{\cD_k}(f^\star_k) &< \frac{3c}{2}.\yesnum\label{eq:boundONOPTClassifier_2}
  \end{align*}
  (Otherwise, we have a contradiction as $h_k$ satisfies the fairness constraints and has a smaller error than $f^\star_k$.)
  If $\cL$ is a ($\eps,\nu$)-learner, then ibn the $k$-th case, with probability at least $1-\delta>\nfrac12$,
  $\cL$ must output a classifier $f_k$ which satisfies
  \begin{align*}
    {\rm Err}_{\cD_k}(f_k) - {\rm Err}_{\cD_k}(f_k^\star) &\leq \eps \text{ and }\tau - \Omega_{\cD_k}(f_k) \leq\nu.
  \end{align*}
  However, in all cases with probability at least $\nfrac12$, $\cL$ outputs $f_{\rm Com}$.
  Because $\nfrac12>\delta$, $f_{\rm Com}$ must satisfy:
  \begin{align*}
    \text{For all $k\in [3]$,}\quad
    {\rm Err}_{\cD_k}(f_{\rm Com}) - {\rm Err}_{\cD_k}(f_k^\star) &\leq \eps \text{ and }\tau - \Omega_{\cD_k}(f_{\rm Com}) \leq\nu.\yesnum\label{eq:lem:learnability_2}
  \end{align*}
  But from \cref{lem:2:ptb} we know that for each $k\in [3]$ either
  \begin{align*}
    {\rm Err}_{\cD_k}\sinparen{f_{\rm Com}} &\geq \frac12-\lambda-\frac{c}2 \quad\text{or}\quad \Omega_{\cD_k}(f_{\rm Com}) \leq 1-\frac{c(1-4\lambda)}{2\lambda}+\frac{3c^2}{4\lambda^2}.\yesnum\label{eq:lem:1:ptb}
  \end{align*}
  \noindent{\bf (Case A) ${\rm Err}_{\cD_k}\sinparen{f_{\rm Com}} \geq \nfrac12-\lambda-\nfrac{c}2$:} In this case, from \cref{eq:lem:learnability_2}, we have
  \begin{align*}
    \eps &\geq \frac12-\lambda-\frac{c}2- {\rm Err}_{\cD_k}(f_k^\star)\\
    &\Stackrel{\eqref{eq:boundONOPTClassifier_2}}{>} \frac12-\lambda-\frac{c}2- \frac{3c}2\\
    &\Stackrel{}{=} \frac12-\lambda-2c \tagnum{Using that $c<\nfrac12$ and  $\alpha>0$}\customlabel{eq:contradiction_1:ptb}{\theequation}
  \end{align*}
  But because $\eps\leq \nfrac12-\lambda-2c$ (see \cref{eq:values_of_eps_nu}), Equation~\eqref{eq:contradiction_1:ptb} cannot hold.

  \noindent{\bf (Case B) $\Omega_{\cD_k}(f_{\rm Com}) \leq 1-\nfrac{c(1-4\lambda)}{(2\lambda)}+\nfrac{3c^2}{(4\lambda^2)}$:} In this case, from \cref{eq:lem:learnability_2}, we have
  \begin{align*}
    \nu &\geq \frac{c(1-4\lambda)}{2\lambda} -\frac{3c^2}{4\lambda^2}.\yesnum\label{eq:contradiction_2:ptb}%
  \end{align*}
  But because $\nu< \nfrac{c(1-4\lambda)}{(2\lambda)} -\nfrac{3c^2}{(4\lambda^2)}$ (see \cref{eq:values_of_eps_nu}), Equation~\eqref{eq:contradiction_2:ptb} does not hold.
  Therefore, we have a contradiction.
  Hence, $\cL$ is not an ($\eps,\nu$)-learner for $\cF$.
  Since the choice of $\cL$ was arbitrary, we have shown that there is no learner which ($\eps,\nu$)-learns $\cF$.
  It remains to prove the \cref{lem:2:ptb,lem:3:ptb}.
\end{proof}

\subsubsection{Proof of \texorpdfstring{\cref{lem:2:ptb}}{Lemma 6.19}}
Fix any $$c\in \left(0,\min\inbrace{\eta,\frac{2\lambda}{9}}\right].$$
Set $\cD_1$, $\cD_2$, and $\cD_3$ to be unique distributions with marginal distribution specified in \cref{fig:2}, such that, for any draw $(X,Y,Z)\sim \cD_k$ ($k\in [3]$) $Y$ takes the value $\mathds{I}\insquare{X\neq x_E}$, i.e.,
\begin{align*}
  Y  = \begin{cases}
  1 & \text{if } X\neq x_E,\\
  0 & \text{otherwise}.
\end{cases}\yesnum\label{eq:deterministic_y_lemma_c_7}
\end{align*}
In particular, the construction of $\cD_1$, $\cD_2$, and $\cD_3$ ensures that the total variation distance between any pair of distributions is less than $c$.
\begin{table}[t]
  \centering
  %
  \subfigure[$\cD_1$: $\Pr\nolimits_{\cD_1}\insquare{(X,Z)=(r,s)}$ for $(r,s)\in\inbrace{x_A,x_B,x_C,x_D,x_E}\times \insquare{2}$.\white{$\bigg|$}]{
  \begin{tabular}{c>{\centering\arraybackslash}p{1.64cm}>{\centering\arraybackslash}p{1.64cm}>{\centering\arraybackslash}p{1.64cm}>{\centering\arraybackslash}p{1.64cm}>{\centering\arraybackslash}p{1.64cm}}
    \toprule
    \midsepremove{}
    & $x_A$ & $x_B$ & $x_C$ & $x_D$ & $x_E$\\
    \midrule{}
    $1$ & $0$ & $\sfrac{c}2$ & $\sfrac{c}2$ & $\nfrac12-\lambda-\nfrac{c}2$ & $\nfrac12-\lambda-\nfrac{c}2$\\
    $2$ & $c$ & $\sfrac{c}2$ & $\sfrac{c}2$ & $\lambda-c$ & $\lambda-c$\\
    \bottomrule{}
    %
  \end{tabular}
  }
  %
  \subfigure[$\cD_2$: $\Pr\nolimits_{\cD_2}\insquare{(X,Z)=(r,s)}$ for $(r,s)\in\inbrace{x_A,x_B,x_C,x_D,x_E}\times \insquare{2}$.\white{$\bigg|$}]{
  \begin{tabular}{c>{\centering\arraybackslash}p{1.64cm}>{\centering\arraybackslash}p{1.64cm}>{\centering\arraybackslash}p{1.64cm}>{\centering\arraybackslash}p{1.64cm}>{\centering\arraybackslash}p{1.64cm}}
    \toprule
    \midsepremove{}
    & $x_A$ & $x_B$ & $x_C$ & $x_D$ & $x_E$\\
    \midrule
    $1$ & $\sfrac{c}2$ & $0$ & $\sfrac{c}2$ & $\nfrac12-\lambda-\nfrac{c}2$ & $\nfrac12-\lambda-\nfrac{c}2$\\
    $2$ & $\sfrac{c}2$ & $c$ & $\sfrac{c}2$ & $\lambda-c$ & $\lambda-c$\\
    \bottomrule{}
  \end{tabular}
  }
  \subfigure[$\cD_3$: $\Pr\nolimits_{\cD_3}\insquare{(X,Z)=(r,s)}$ for $(r,s)\in\inbrace{x_A,x_B,x_C,x_D,x_E}\times \insquare{2}$.\white{$\bigg|$}]{
  \begin{tabular}{c>{\centering\arraybackslash}p{1.64cm}>{\centering\arraybackslash}p{1.64cm}>{\centering\arraybackslash}p{1.64cm}>{\centering\arraybackslash}p{1.64cm}>{\centering\arraybackslash}p{1.64cm}}
    \toprule
    \midsepremove{}
    & $x_A$ & $x_B$ & $x_C$ & $x_D$ & $x_E$\\
    \midrule
    $1$ & $\sfrac{c}2$ & $\sfrac{c}2$ & $0$ & $\nfrac12-\lambda-\nfrac{c}2$ & $\nfrac12-\lambda-\nfrac{c}2$\\
    $2$ & $\sfrac{c}2$ & $\sfrac{c}2$ & $c$ & $\lambda-c$ & $\lambda-c$\\
    \bottomrule
  \end{tabular}
  }
  \caption{
  \ifconf\small\fi
  Marginal distributions of $\cD_1$, $\cD_2$, $\cD_3$ over $\cX\times [2]$.
  Recall that for a sample $(X,Y,Z)\sim \cD_k$ ($k\in [3]$), $Y$ takes the value $\mathds{I}[X \neq x_E]$.
  }
  \label{fig:2}
\end{table}
\begin{proof}[Proof of \cref{lem:2:ptb}]
  Our goal is to show that for every $f\in\cF$, there is a $k\in [3]$, such that, $f$ has error at least $\nfrac12-\lambda-\nfrac{c}2$ or statistical rate at most $$1+\frac{c(1-4\lambda)}{2\lambda}+\frac{3c^2}{4\lambda^2}$$ with respect to $\cD_k$. %
  Since $\cD_1$, $\cD_2$, and $\cD_3$ are supported on subsets of $\inbrace{x_A,x_B,x_C,x_D,x_E}\times [2]$, it suffices to consider the restriction of $\cF$ to $\inbrace{x_A,x_B,x_C,x_D,x_E}\times [2]$.
  There at most $2^{10}$ classifiers in this restriction.
  We partition them into four cases.\\

  \noindent {\bf (Case A) $f(x_D,1)=0$ or $f(x_E,1)=1$:}

  \noindent {\bf (Case A.1) $f(x_D,1)=0$:}
  For any $k\in[3]$ it holds that

  \begin{align*}
    \hspace{-25mm}{\rm Err}_{\cD_k}(f)
    \quad\  =\quad\   \sum_{r\in \cX, s\in [p]} \Pr\nolimits_{\cD_k}\insquare{f(X,Z)\neq Y\mid (X,Z)=(r,s)}\cdot\Pr\insquare{(X,Z)=(r,s)}
  \end{align*}
  \begin{align*}
    \hspace{15mm}&\geq\quad\   \Pr\nolimits_{\cD_k}\insquare{f(X,Z)\neq Y\mid X=x_D,Z=1} \cdot\Pr\insquare{X=x_D,Z=1}\tag{$\forall\ x\in \cX, f(x,2)\geq 0$}\\
    &\Stackrel{\eqref{eq:deterministic_y_lemma_c_7}}{\geq}\quad\   \Pr\insquare{X=x_D,Z=1}\tag{Using that, in this case, $f(x_D,1)=0$}\\
    &\Stackrel{\rm\cref{fig:2}}{=}\quad\   \frac12-\lambda-\frac{c}2.
  \end{align*}
  Thus, in Case A.1, $f$ has error at least $\nfrac12-\lambda-\nfrac{c}2$ on each of $\cD_1,\cD_2,$ and $\cD_3$.

  \noindent {\bf (Case A.2) $f(x_E,1)=1$:}
  For any $k\in[3]$ it holds that
  \begin{align*}
    {\rm Err}_{\cD_k}(f)
    \quad\ &=\quad\  \sum_{r\in \cX, s\in [p]} \Pr\nolimits_{\cD_k}\insquare{f(X,Z)\neq Y\mid (X,Z)=(r,s)}\cdot\Pr\insquare{(X,Z)=(r,s)}\\
    &\geq\quad\  \Pr\nolimits_{\cD_k}\insquare{f(X,Z)\neq Y\mid X=x_E,Z=1} \cdot\Pr\insquare{X=x_E,Z=1}\tag{$\forall\ x\in \cX, f(x,2)\geq 0$}\\
    &\Stackrel{\eqref{eq:deterministic_y_lemma_c_7}}{\geq}\quad\  \Pr\insquare{X=x_E,Z=1}\tag{Using that, in this case, $f(x_E,1)=1$}\\
    &\Stackrel{\rm\cref{fig:2}}{=}\quad\  \frac12-\lambda-\frac{c}2.
  \end{align*}
  Thus, in Case A.2, $f$ has error at least $\nfrac12-\lambda-\nfrac{c}2$ on each of $\cD_1,\cD_2,$ and $\cD_3$.\\

  \noindent In the rest of the cases, assume that $f(x_D,1)=1$ and $f(x_E,1)=0$.\\

  \noindent {\bf (Case B) $f(x_D,2)=f(x_E,2)$:}

  \noindent {\bf (Case B.1) $f(x_D,2)=f(x_E,2)=0$:}
  For any $k\in[3]$ and $Z=2$ it holds that
  \begin{align*}
    \Ex\nolimits_{\cD_k}\insquare{f(X,Z)\mid Z=2}
    \quad\ &=\quad\ \sum_{r\in \cX} f(r,2)\cdot\Pr\nolimits_{\cD_k}\insquare{X=r\mid Z=2}\\
    &\stackrel{}{=}\quad\ \hspace{-4mm}\sum_{r\in \cX\backslash \inbrace{x_D,x_E}} f(r,2)\cdot\Pr\nolimits_{\cD_k}\insquare{X=r\mid Z=2}\tag{Using that, in this case, $f(x_D,2)=f(x_E,2)=0$}\\
    &\stackrel{}{\leq}\quad\ \sum_{r\in \cX\backslash \inbrace{x_D,x_E}} \Pr\nolimits_{\cD_k}\insquare{X=r\mid Z=2} \tag{Using that $\forall\ r\in \cX$, $f(r,2)\leq 1$}\\
    &\Stackrel{\rm\cref{fig:2}}{\leq}\quad\  \frac{2c}{2\lambda}.\yesnum\label{eq:stat_rate_den_1:ptb}
  \end{align*}
  For $Z=1$, we have
  \begin{align*}
    \Ex\nolimits_{\cD_k}\insquare{f(X,Z)\mid Z=1}
    \ \ &=\ \ \sum_{r\in \cX} f(r,1)\cdot\Pr\nolimits_{\cD_k}\insquare{X=r\mid Z=1}\\
    &\geq \ \ 1\cdot \Pr\nolimits_{\cD_k}\insquare{X=x_D\mid Z=1}\tag{From Case A, $f(x_D,1)=1$}\\
    &\Stackrel{}{=}\ \  \frac{1-2\lambda-c}{2(1-2\lambda)}\\
    &\Stackrel{}{\geq}\ \  \frac{\nfrac56-2\lambda}{2} \tag{Using that $c\leq \nfrac16$ and $\lambda>0$}\\
    &\Stackrel{}{\geq}\ \  \nfrac{1}{6}. \tagnum{Using that $\lambda\leq \nfrac14$}\customlabel{eq:stat_rate_den_2:ptb}{\theequation}
  \end{align*}
  \noindent Since $c<\lambda/2$, %
  the statistical rate of $f$ is as follows
  \begin{align*}
    \Omega_{\cD_k}(f) = \frac{\min_{\ell\in [p]} \Ex\nolimits_{\cD_k}\insquare{f(X,Z)\mid Z=\ell}  }{\max_{\ell\in [p]} \Ex\nolimits_{\cD_k}\insquare{f(X,Z)\mid Z=\ell}   }%
    \quad\Stackrel{\eqref{eq:stat_rate_den_1:ptb}, \eqref{eq:stat_rate_den_2:ptb}}{\leq}\quad \frac{6c}{\lambda}.
  \end{align*}
  Thus, in Case B.1, $f$ has a statistical rate at most $\nfrac{(6c)}{\lambda}$ on each of $\cD_1,\cD_2,$ and $\cD_3$.

  \noindent {\bf (Case B.2) $f(x_D,2)=f(x_E,2)=1$:}
  For any $k\in[3]$ and $Z=2$ it holds that
  \begin{align*}
    \Ex\nolimits_{\cD_k}\insquare{f(X,Z)\mid Z=2}
    \quad\  &=\quad\  \sum_{r\in \cX} f(r,2)\cdot\Pr\nolimits_{\cD_k}\insquare{X=r\mid Z=2}\\
    &\stackrel{}{\geq }\quad\  \sum_{r\in \inbrace{x_D,x_E}} f(r,2)\cdot\Pr\nolimits_{\cD_k}\insquare{X=r\mid Z=2}\tag{Using that $f(x_D,2)=f(x_E,2)=1$ and for all $r\in \cX$, $f(r,2)\geq 0$}\\
    &\Stackrel{\rm\cref{fig:2}}{=}\quad\  1-\frac{c}{\lambda}.\yesnum\label{eq:stat_rate_den_case_c2}
  \end{align*}
  For $Z=1$, we have
  \begin{align*}
    \Ex\nolimits_{\cD_k}\insquare{f(X,Z)\mid Z=1}
    \quad\ &=\quad\ \sum_{r\in \cX} f(r,1)\cdot\Pr\nolimits_{\cD_k}\insquare{X=r\mid Z=1}\\
    \quad\ &=\quad\ \sum_{r\in \cX\backslash \sinbrace{x_E}} f(r,1)\cdot\Pr\nolimits_{\cD_k}\insquare{X=r\mid Z=1}\tag{From Case B, $f(x_E,1)=0$}\\
    \quad\ &\leq \quad\ \sum_{r\in \cX\backslash \sinbrace{x_E}} 1\cdot\Pr\nolimits_{\cD_k}\insquare{X=r\mid Z=1}\tag{Using that $\forall\ r\in \cX$, $f(r,2)\leq 1$}\\
    &\Stackrel{\rm\cref{fig:2}}{=}\quad\  \frac12 - \frac{c}{2(1-2\lambda)}.\yesnum\label{eq:stat_rate_num_case_c2}
  \end{align*}
  Using this, we can compute an upper bound on the statistical rate of $f$ as follows
  \begin{align*}
    \Omega_{\cD_k}(f) \quad &=\quad  \frac{\min_{\ell\in [p]} \Ex\nolimits_{\cD_k}\insquare{f(X,Z)\mid Z=\ell}  }{\max_{\ell\in [p]} \Ex\nolimits_{\cD_k}\insquare{f(X,Z)\mid Z=\ell}   }\\
    &\Stackrel{\eqref{eq:stat_rate_num_case_c2}, \eqref{eq:stat_rate_den_case_c2}}{\leq}\quad \frac{\frac12 - \frac{c}{2(1-2\lambda)}} { 1-\frac{c}{\lambda} }\\
    &=\quad \frac12 \cdot \frac{1 - \frac{c}{(1-2\lambda)}} { 1-\frac{c}{\lambda} }\cdot \frac{1+\frac{3c}{\lambda}}{1+\frac{3c}{\lambda}}\\
    &=\quad \frac12 \cdot \frac{1 - \frac{c}{(1-2\lambda)}} { 1+\frac{2c}{\lambda}-\frac{3c^2}{\lambda^2} }\cdot \inparen{1+\frac{3c}{\lambda}}\\
    &\leq\quad \frac12 +\frac{3c}{2\lambda}.\tag{Using that for all $0<c,\lambda\leq\frac14$, if $\frac{c}{\lambda}\leq \frac13$, then $\frac{c}{1-2\lambda}\geq  \frac{3c^2}{\lambda^2} -\frac{2c}{\lambda}$}
  \end{align*}
  Thus, in Case B.2, $f$ has a statistical rate at most $(\nfrac12) +\nfrac{(3c)}{2\lambda}$ on each of $\cD_1,\cD_2,$ and $\cD_3$.

  \noindent In the rest of the cases, we assume that $f(x_D,2)\neq f(x_E,2).$\\

  \noindent {\bf (Case C) $\sum_{r\in \inbrace{x_A,x_B,x_C}} f(r,2)\geq 2$:}
  In Case C, $f$ outputs 0 on at most one point in the tuple
  $$L= ((x_A,2), (x_B,2), (x_C,2)).$$
  If $f$ takes a value of 0 on a point, say $L_j$, in $L$, then fix $k\in [3]$, such that $\cD_k(L_j)=\nfrac{c}2$. (Such a distribution always exists by construction).
  Otherwise, fix any $k\in [3]$.
  For $Z=2$, we have
  \begin{align*}
    \Ex\nolimits_{\cD_k}\insquare{f(X,Z)\mid Z=2}
    &= \sum_{r\in \cX} f(r,2)\cdot\Pr\nolimits_{\cD_k}\insquare{X=r\mid Z=2}\\
    &\stackrel{}{\geq } \frac{\lambda+c}{2\lambda}. \tagnum{By our choice of $k$}
    \customlabel{eq:stat_rate_den_case_c2_2}{\theequation}
  \end{align*}
  For $Z=1$, we have
  \ifconf
  \begin{align*}
    &\Ex\nolimits_{\cD_k}\insquare{f(X,Z)\mid Z=1}\\
    &\quad=\quad\ \sum_{r\in \cX} f(r,1)\cdot\Pr\nolimits_{\cD_k}\insquare{X=r\mid Z=1}\\
    &\quad=\quad\ \Pr\nolimits_{\cD_k}\insquare{X=x_D\mid Z=1}+ \sum_{r\in \inbrace{x_A,x_B,x_C}} f(r,1)\cdot\Pr\nolimits_{\cD_k}\insquare{X=r\mid Z=1}\tag{From previous cases, $f(x_D,1)=1$ and $f(x_E,1)=0$}\\
    &\quad\Stackrel{\rm\cref{fig:2}}{\leq}\quad \
    \frac{(1-2\lambda+c)}{2(1-2\lambda)}.\yesnum\label{eq:stat_rate_num_case_c2_2}
  \end{align*}
  \else
  \begin{align*}
    \Ex\nolimits_{\cD_k}\insquare{f(X,Z)\mid Z=1}
    &\quad=\quad\ \sum_{r\in \cX} f(r,1)\cdot\Pr\nolimits_{\cD_k}\insquare{X=r\mid Z=1}\\
    &\quad=\quad\ \Pr\nolimits_{\cD_k}\insquare{X=x_D\mid Z=1}+ \sum_{r\in \inbrace{x_A,x_B,x_C}} f(r,1)\cdot\Pr\nolimits_{\cD_k}\insquare{X=r\mid Z=1}\tag{From previous cases, $f(x_D,1)=1$ and $f(x_E,1)=0$}
  \end{align*}
  \begin{align*}
    &\hspace{-44mm}\quad\Stackrel{\rm\cref{fig:2}}{\leq}\quad \
    \frac{(1-2\lambda+c)}{2(1-2\lambda)}.\yesnum\label{eq:stat_rate_num_case_c2_2}
  \end{align*}
  \fi
  We can compute the statistical rate of $f$ using Equations~\eqref{eq:stat_rate_den_case_c2_2} and \eqref{eq:stat_rate_num_case_c2_2}
  \begin{align*}
    \Omega_{\cD_k}(f) \quad&=\quad \frac{\min_{\ell\in [p]} \Ex\nolimits_{\cD_k}\insquare{f(X,Z)\mid Z=\ell}  }{\max_{\ell\in [p]} \Ex\nolimits_{\cD_k}\insquare{f(X,Z)\mid Z=\ell}   }\\
    &\Stackrel{\eqref{eq:stat_rate_num_case_c2_2}, \eqref{eq:stat_rate_den_case_c2_2}}{\leq}
    \quad \frac{1+\frac{c}{1-2\lambda}}{ 1+\frac{c}{2\lambda} }\\
    \quad &\leq\quad  \inparen{1+\frac{c}{1-2\lambda}}\cdot \inparen{ 1-\frac{c}{2\lambda} + \frac{c^2}{4\lambda^2}}
    \tag{Using that for all $x\geq 0$, $\frac1{1+x}\leq 1-x+x^2$}\\
    \quad &=\quad  1+\frac{c}{1-2\lambda}-\frac{c}{2\lambda}
    + \frac{c^2}{4\lambda^2}\cdot \inparen{1+\frac{c}{1-2\lambda}} - \frac{c^2}{2\lambda(1-2\lambda)}\\
    \quad &\leq \quad  1-\frac{c(1-4\lambda)}{2\lambda(1-2\lambda)} + \frac{3c^2}{4\lambda^2}
    \tag{Using that $0<\lambda\leq \frac14$ and $c>0$}\\
    \quad &\leq \quad  1-\frac{c}{2\lambda}\cdot (1-4\lambda) + \frac{3c^2}{4\lambda^2} \tag{Using that $\lambda\geq 0$}.
  \end{align*}
  Thus, in Case C, $f$ has a statistical rate at most $1-\frac{c}{2\lambda}\cdot (1-4\lambda) + \frac{3c^2}{4\lambda^2}$ on the chosen distribution $\cD_k$.\\

  \noindent {\bf (Case D) $\sum_{r\in \inbrace{x_A,x_B,x_C}} f(r,2)\leq 1$:}
  In Case D, $f$ outputs of 1 on at most one point in the tuple
  $$L= ((x_A,2), (x_B,2), (x_C,2)).$$
  If $f$ takes a value of 1 on a point, say $L_j$, in $L$, then fix $k\in [3]$, such that $\cD_k(L_j)=\nfrac{c}2$. (Such a distribution always exists by construction).
  Otherwise, fix any $k\in [3]$.
  For $Z=2$, we have
  \begin{align*}
    \Ex\nolimits_{\cD_k}\insquare{f(X,Z)\mid Z=2}
    &= \sum_{r\in \cX} f(r,2)\cdot\Pr\nolimits_{\cD_k}\insquare{X=r\mid Z=2}
    \stackrel{}{\leq } \frac{\lambda-c}{2\lambda}. \tagnum{By our choice of $k$}
    \customlabel{eq:stat_rate_den_case_c2_3}{\theequation}
  \end{align*}
  For $Z=1$, we have
  \ifconf
  \begin{align*}
    &\Ex\nolimits_{\cD_k}\insquare{f(X,Z)\mid Z=1}\\
    &\quad=\quad\  \sum_{r\in \cX} f(r,1)\cdot\Pr\nolimits_{\cD_k}\insquare{X=r\mid Z=1}\\
    &\quad=\quad\  \Pr\nolimits_{\cD_k}\insquare{X=x_D\mid Z=1}+ \sum_{r\in \inbrace{x_A,x_B,x_C}} f(r,1)\cdot\Pr\nolimits_{\cD_k}\insquare{X=r\mid Z=1}\tag{From previous cases, $f(x_D,1)=1$ and $f(x_E,1)=0$}\\
    &\quad\Stackrel{\rm\cref{fig:2}}{\geq}\quad\
    \frac{(1-2\lambda-c)}{2(1-2\lambda)}.\yesnum\label{eq:stat_rate_num_case_c2_3}
  \end{align*}
  \else
  \begin{align*}
    \Ex\nolimits_{\cD_k}\insquare{f(X,Z)\mid Z=1}
    &\quad=\quad\  \sum_{r\in \cX} f(r,1)\cdot\Pr\nolimits_{\cD_k}\insquare{X=r\mid Z=1}\\
    &\quad=\quad\  \Pr\nolimits_{\cD_k}\insquare{X=x_D\mid Z=1}+ \sum_{r\in \inbrace{x_A,x_B,x_C}} f(r,1)\cdot\Pr\nolimits_{\cD_k}\insquare{X=r\mid Z=1}\tag{From previous cases, $f(x_D,1)=1$ and $f(x_E,1)=0$}\\
    &\quad\Stackrel{\rm\cref{fig:2}}{\geq}\quad\
    \frac{(1-2\lambda-c)}{2(1-2\lambda)}.\yesnum\label{eq:stat_rate_num_case_c2_3}
  \end{align*}
  \fi
  We can compute the statistical rate of $f$ using Equations~\eqref{eq:stat_rate_den_case_c2_3} and  \eqref{eq:stat_rate_num_case_c2_3}
  \begin{align*}
    \Omega_{\cD_k}(f) \quad&=\quad \frac{\min_{\ell\in [p]}
    \Ex\nolimits_{\cD_k}\insquare{f(X,Z)\mid Z=\ell}  }{\max_{\ell\in [p]} \Ex\nolimits_{\cD_k}\insquare{f(X,Z)\mid Z=\ell}   }\\
    &\Stackrel{\eqref{eq:stat_rate_num_case_c2_3}, \eqref{eq:stat_rate_den_case_c2_3}}{\leq}\quad
    \frac{ 1-\frac{c}{2\lambda} }{1-\frac{c}{1-2\lambda}}\\
    \quad &\leq\quad  \inparen{1-\frac{c}{2\lambda}}\cdot \inparen{ 1+\frac{c}{1-2\lambda}
    + \frac{2c^2}{(1-2\lambda)^2}}\tag{Using that for all $x\leq \frac12$, $\frac1{1-x}\leq 1+x+2x^2$}\\
    \quad &=\quad  1+\frac{c}{1-2\lambda}-\frac{c}{2\lambda}
    + \frac{2c^2}{(1-2\lambda)^2}\cdot \inparen{1-\frac{c}{2\lambda}} - \frac{c^3}{2\lambda(1-2\lambda)^2}\\
    \quad &\leq \quad  1-\frac{c(1-4\lambda)}{2\lambda(1-2\lambda)} + 8c^2
    \tag{Using that $0<\lambda\leq \frac14$ and $c>0$}\\
    \quad &\leq \quad 1-\frac{c}{2\lambda}\cdot (1-4\lambda) + \frac{3c^2}{4\lambda^2}.
    \tag{Using that $0\leq \lambda\leq \frac14$}
  \end{align*}
  {Thus, in this case, $f$ has a statistical rate at most $ 1-\frac{c}{2\lambda}\cdot (1-4\lambda) + \frac{3c^2}{4\lambda^2}$ on the chosen distribution $\cD_k$.}
\end{proof}

\subsubsection{Proof of \texorpdfstring{\cref{lem:3:ptb}}{Lemma 6.20}}
\begin{proof}[Proof of \cref{lem:3:ptb}]
  For each distribution $\cD_1$, $\cD_2$, and $\cD_3$, we will give classifiers $f_1,f_2,f_3\in \cF$, such that, for each $k\in [3]$,
  $f_k$ has error at most $\nfrac{3\alpha}2$ and a statistical rate 1 with respect to $\cD_k$.
  The idea is to choose a classifier $f_k$ such that, conditioned on a value of $Z$, they label exactly $\nfrac12$-fraction of the samples as positive on $\cD_k$.
  Thus, they have a statistical rate of 1.
  Then, by the construction it follows that these classifiers also have a small error.
  We give the classifier $f_1$ for distribution $\cD_1$.
  Classifiers $f_2$ and $f_3$ follow by symmetry.
  Define $f_1\in \cF$ to be the following classifier
  \begin{align*}
    f_1(x,z)\coloneqq \begin{cases}
    1 & \text{if } x\in \inbrace{x_B,x_D},\\
    1 & \text{if } x=x_C \text{ and } z=2,\\
    0 & \text{otherwise.}
  \end{cases}
\end{align*}
Using Equation~\eqref{eq:deterministic_y_lemma_c_7} and \cref{fig:2}, we get that
\begin{align*}
  {\rm Err}_{\cD_1}(f_1)\stackrel{}{=} \frac{3c}{2}.
\end{align*}
For $Z=2$, we have
\begin{align*}
  \Ex\nolimits_{\cD_1}\insquare{f_1(X,Z)\mid Z=2}\quad\
  &=\quad\  \sum_{r\in \cX} f_1(r,2)\cdot\Pr\nolimits_{\cD_1}\insquare{X=r\mid Z=2}\\
  &\Stackrel{}{=}\quad\  \Pr\nolimits_{\cD_1}\insquare{X\in \inbrace{x_B, x_C, x_D}\mid Z=2}\\
  &\Stackrel{\rm\cref{fig:2}}{=}\quad\   \frac12.\yesnum\label{eq:stat_rate_den_3_ptb}
\end{align*}
For $Z=1$, we have
\begin{align*}
  \Ex\nolimits_{\cD_1}\insquare{f(X,Z)\mid Z=1}\quad\
  &=\quad\  \sum_{r\in \cX} f_1(r,1)\cdot\Pr\nolimits_{\cD_1}\insquare{X=r\mid Z=1}\\
  &\Stackrel{}{=}\quad\  \Pr\nolimits_{\cD_1}\insquare{X\in \inbrace{x_B, x_D}\mid Z=1}\\
  &\Stackrel{\rm\cref{fig:2}}{=}\quad\   \frac12.\yesnum\label{eq:stat_rate_num_3_ptb}
\end{align*}
Thus, we have
\begin{align*}
  \Omega_{\cD_1}(h) = \frac{\min_{\ell\in [p]} \Ex\nolimits_{\cD_1}\insquare{f(X,Z)\mid Z=\ell}  }{\max_{\ell\in [p]} \Ex\nolimits_{\cD_1}\insquare{f(X,Z)\mid Z=\ell}   }%
  &\quad\Stackrel{\eqref{eq:stat_rate_den_3_ptb}, \eqref{eq:stat_rate_num_3_ptb}}{=}\quad  1.
\end{align*}
\end{proof}

%

%
\subsection{{Proof of \texorpdfstring{\cref{thm:imposs_under_assumption_sr_2}}{Theorem 6.21}}}\label{sec:additional_imposs_result}
\cref{thm:imposs_under_assumption_sr_2} assumes that: (1) $\cF$ shatters a set $\inbrace{x_A,x_B,x_C}\times [2]\subseteq \cX\times [p]$, (2) \cref{asmp:1} holds with a constant $\lambda\in (0,\nfrac14]$, (3) $\tau=1$, and (4) statistical rate is the fairness metric.
Recall that statistical rate of $f\in \cF$ with respect to distribution $\cD$ is
$$\Omega_\cD(f)\coloneqq \frac{\min_{\ell\in[p]}\Pr_\cD[f=1\mid Z=\ell]}{\max_{\ell\in[p]} \Pr_\cD[f=1\mid Z=\ell]}.$$
Then, given parameters $\eta\in (0,\nfrac12]$, $\lambda\in (0,\nfrac14]$, and $\delta\in [0,\nfrac12)$
our goal is to prove that for any
\begin{align*}
  \text{$0<\eps<\eta$ and $\nu\in [0,1],$}
\end{align*}
$\cF$ is not ($\eps,\nu$)-learnable with perturbation rate $\eta$ and confidence $\delta$.

Our proof is inspired by the approach of \cite[Theorem  1]{KearnsL93} and \cite[Theorem 1]{bshouty2002pac}.
It has a similar structure to the proof of \cref{thm:no_stable_classifier}; {but uses a different construction in which label $Y$ depends on $Z$ conditioned on $X$.}
(For a reader who has read the proof of \cref{thm:no_stable_classifier}, this means that condition (C3) in \cref{prop:to_lem_1} does not hold; and we have to give a different algorithm for the adversary to ``hide'' the true distribution.)

\subsubsection{{Proof of \texorpdfstring{\cref{thm:imposs_under_assumption_sr_2}}{Theorem 6.21}}}
\begin{proof}[Proof of \cref{thm:imposs_under_assumption_sr_2}]
  Let $\cL$ be any learner.
  We construct two distributions $\cP$ and $\cQ$ (parameterized by $\eta$) such that \cref{lem:1:ptc} holds (see \cref{fig:thmd1,fig:thmd1:1} for a description of $\cP$ and $\cQ$).
  \begin{lemma}[\bf Adversary can hide the true distribution]\label{lem:1:ptc}
    For both $\cD\in\inbrace{\cP,\cQ}$, given
    $N\in \N$ iid samples $S$ from $\cD$,
    if $\eta\cdot N$ is integral, then there is a distribution $\cD_{\rm Mix}$ over $\cX\times \zo\times [p]$ and an adversary $A\in\cA(\eta)$
    such that
    with probability at least $\nfrac12$ (over the draw of $S$)
    samples in $\hS\coloneqq A(S)$ are distributed as iid draws from $\cD_{\rm Mix}$.
  \end{lemma}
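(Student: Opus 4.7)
The plan is to set $\cD_{\rm Mix} \coloneqq \nfrac12\cP + \nfrac12\cQ$, which is well-defined and will serve our purposes because, as I would first verify from \cref{fig:thmd1,fig:thmd1:1}, $\cP$ and $\cQ$ share a common $(X,Y)$-marginal (necessary since a Hamming adversary can modify only the protected attribute). Under this choice, $\cD_{\rm Mix}$ has the same $(X,Y)$-marginal, and the conditional distribution of $Z$ given $(X,Y) = (x,y)$ is $D_{\cdot|x,y} \coloneqq \nfrac12(P_{\cdot|x,y} + Q_{\cdot|x,y})$, where $P_{\cdot|x,y}$ and $Q_{\cdot|x,y}$ denote the corresponding conditionals under $\cP$ and $\cQ$.

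Next, I would design the adversary $A$ starting from $\cP$ (the case starting from $\cQ$ is symmetric). Since $\cP$ and $\cD_{\rm Mix}$ share their $(X,Y)$-marginals, the adversary need only transform the conditional $P_{\cdot|x,y}$ into $D_{\cdot|x,y}$ for each $(x,y)$. The algorithm processes each sample $(X_i, Y_i, Z_i)$ independently: conditional on $(X_i, Y_i, Z_i)$, draw $\wt{Z}_i$ from an optimal coupling between $P_{\cdot|X_i,Y_i}$ and $D_{\cdot|X_i,Y_i}$, so that $\Pr[\wt{Z}_i \neq Z_i \mid X_i, Y_i] = \mathrm{TV}(P_{\cdot|X_i,Y_i}, D_{\cdot|X_i,Y_i}) = \nfrac12 \cdot \mathrm{TV}(P_{\cdot|X_i,Y_i}, Q_{\cdot|X_i,Y_i})$. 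By per-sample independence of the adversary's fresh randomness, the tuples $(X_i, Y_i, \wt{Z}_i)$ are iid from $\cD_{\rm Mix}$, following the same argument as in the proof of \cref{lem:1_2}; the key deviation from \cref{prop:to_lem_1} is that the flipping rule must depend on the joint $(X_i, Y_i)$ rather than just $X_i$, since under $\cP$ and $\cQ$ the label $Y$ depends on $Z$ conditioned on $X$.

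The final step is to bound the number of flips so that the adversary is truly in $\cA(\eta)$. The indicators $C_i \coloneqq \mathds{I}[\wt{Z}_i \neq Z_i]$ are independent Bernoullis with mean $\nfrac12 \cdot \Ex_{(X,Y)}[\mathrm{TV}(P_{\cdot|X,Y}, Q_{\cdot|X,Y})]$ (where the expectation is under the shared $(X,Y)$-marginal). I would verify by direct calculation against \cref{fig:thmd1,fig:thmd1:1} that this expectation is at most $\eta/2$, so that $\Ex\insquare{\sum_i C_i} \leq \eta N/2$, and Markov's inequality gives $\Pr\insquare{\sum_i C_i > \eta N} \leq \nfrac12$. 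Defining $A$ to return $\wt{S} \coloneqq \{(X_i, Y_i, \wt{Z}_i)\}_{i\in[N]}$ when $\sum_i C_i \leq \eta N$ and the unperturbed $S$ otherwise, we obtain $A \in \cA(\eta)$; and with probability at least $\nfrac12$ the output equals $\wt{S}$, which is unconditionally iid from $\cD_{\rm Mix}$.

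The main obstacle I anticipate is the numerical verification that $\Ex_{(X,Y)}[\mathrm{TV}(P_{\cdot|X,Y}, Q_{\cdot|X,Y})] \leq \eta$, a concrete calculation depending on the specific probability tables in \cref{fig:thmd1,fig:thmd1:1} (not reproduced in the excerpt). A minor conceptual point worth highlighting is that we never need to condition on the aggregate event $\{\sum_i C_i \leq \eta N\}$ to assert iid-ness of $\wt{S}$; rather $\wt{S}$ is unconditionally iid from $\cD_{\rm Mix}$ by per-sample independence of the adversary's randomization, and Markov suffices to guarantee the required $\nfrac12$-probability coincidence with $\hat{S}$.
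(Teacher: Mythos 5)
Your overall architecture is right and matches the paper's: take $\cD_{\rm Mix}=\nfrac12\cP+\nfrac12\cQ$, randomize each sample's protected attribute independently conditioned on $(X_i,Y_i)$ (for these tables this reduces to ``flip with probability $\nfrac12$ whenever $X_i\neq x_B$, keep otherwise,'' which is exactly the paper's adversary), note that $\wt{S}$ is unconditionally iid from $\cD_{\rm Mix}$, and fall back to the unperturbed $S$ if the flip budget is exceeded. However, there is a genuine gap in the step that bounds the number of flips. From \cref{fig:thmd1}, $\Pr[X=x_A]=\Pr[X=x_C]=\eta$, so $\Pr[X\neq x_B]=2\eta$ and $\Ex_{(X,Y)}\insquare{\mathrm{TV}(P_{\cdot|X,Y},Q_{\cdot|X,Y})}=2\eta$, not $\leq\eta$ as you claim. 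Hence $\Pr[C_i=1]=\eta$ exactly and $\Ex\sinsquare{\sum_i C_i}=\eta N$, so Markov's inequality gives only the trivial bound $\Pr\sinsquare{\sum_i C_i>\eta N}\leq 1$. Moreover this cannot be repaired by choosing a cheaper randomization or a different target: the optimal coupling already minimizes the per-sample flip probability at $\mathrm{TV}(\cP_{\cdot|X,Y},D_{\cdot|X,Y})$, and since $\mathrm{TV}(\cP,\cD')+\mathrm{TV}(\cQ,\cD')\geq\mathrm{TV}(\cP,\cQ)=2\eta$ for any common target $\cD'$, at least one of the two starting distributions forces expected flip count $\geq\eta N$.

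The correct argument — and the reason the lemma assumes $\eta\cdot N$ is integral, a hypothesis your proposal never uses — is that $\sum_i C_i$ is exactly $\mathrm{Binomial}(N,\eta)$, and a binomial with integral mean has its median equal to its mean, so $\Pr\sinsquare{\sum_i C_i\leq\eta N}\geq\nfrac12$ follows from this median fact rather than from Markov. With that substitution the rest of your proof goes through; your distributional identification of $\wt{S}$ with $\cD_{\rm Mix}$ and the remark that no conditioning on the budget event is needed are both correct and mirror the paper's \cref{lem:1_2_c}.
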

  \noindent Suppose $\cL$ is given $N\in \N$ samples $S$; where $N\cdot \eta$ is integral.
  Consider two cases:
  In the first case, $S$ is iid from $\cP$ and in the second case, $S$ is iid from $\cQ$.
  By \cref{lem:1:ptc}, in each case
  there is an $A\in \cA(\eta)$ such that, with probability at least $\nfrac12$,
  the perturbed samples $\hS\coloneqq A(S)$ are have the distribution $\cD_{\rm Mix}$.

  Thus, given $\hS$, with probability at least $\nfrac12$, the learner $\cL$ cannot identify the distribution from which $S$ was drawn.
  As a result, with probability at least $\nfrac12$, $\cL$ must to output a common classifier, say $f_{\rm Com}\in \cF$, which
  satisfies the accuracy and fairness guarantee
  for both $\cP$ and $\cQ$.
  We show that no $f_{\rm Com}\in \cF$  satisfies the accuracy and fairness guarantee for both $\cP$ and $\cQ$.
  \begin{lemma}[\bf No good classifier for both $\cP$ and $\cQ$]\label{lem:2:ptc}
    There is no $f\in \cF$, such that
    $${\rm Err}_{\cP}[f] < \eta\quad\text{ and }\quad{\rm Err}_{\cQ}[f] < \eta.$$
  \end{lemma}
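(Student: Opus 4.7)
The plan is to exploit the fact that $\cP$ and $\cQ$ (from Figures~\ref{fig:thmd1} and \ref{fig:thmd1:1}) share the same marginal on $\cX\times[p]$ but disagree on the conditional label distribution $Y\mid (X,Z)$ on a set of mass at least $2\eta$. Since any $f\in\cF$ produces the same deterministic prediction $f(x,z)$ regardless of which of $\cP$ or $\cQ$ generated the sample, every point in this ``disagreement'' region contributes to the error on exactly one of the two distributions, so the two errors cannot simultaneously be below $\eta$.

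First I would extract, from the explicit tables in Figures~\ref{fig:thmd1} and \ref{fig:thmd1:1}, the disagreement region
\[
R \coloneqq \inbrace{(x,z)\in\cX\times[p] : \Pr\nolimits_{\cP}\insquare{Y=1\mid X=x,Z=z} \neq \Pr\nolimits_{\cQ}\insquare{Y=1\mid X=x,Z=z}},
\]
and verify three structural properties of the construction used in the proof of \cref{lem:1:ptc}: (i) $\cP$ and $\cQ$ have identical marginals on $\cX\times[p]$; (ii) at every point $(x,z)\in R$ one of $\cP,\cQ$ puts all conditional mass on $Y=1$ while the other puts all conditional mass on $Y=0$; and (iii) the common marginal assigns $R$ a mass of exactly $2\eta$. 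Property (iii) is also what makes the mixture distribution $\cD_{\rm Mix}$ in \cref{lem:1:ptc} reachable within an $\eta$-Hamming budget from samples of either $\cP$ or $\cQ$, so it is already implicit in the construction.

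Next I would prove, for every $f\in\cF$, the pointwise identity
\[
\Pr\nolimits_{\cP}\insquare{f(X,Z)\neq Y, (X,Z)=(x,z)} + \Pr\nolimits_{\cQ}\insquare{f(X,Z)\neq Y, (X,Z)=(x,z)} \geq \Pr\nolimits_{\cP}\insquare{(X,Z)=(x,z)}
\]
for every $(x,z)\in R$, which follows because $f(x,z)$ is a fixed bit and disagrees with exactly one of the two (opposite) labels prescribed by $\cP$ and $\cQ$ at $(x,z)$. Summing this inequality over $R$ and using property (iii) yields
\[
{\rm Err}_{\cP}(f) + {\rm Err}_{\cQ}(f) \geq \Pr\nolimits_{\cP}[(X,Z)\in R] = 2\eta,
\]
so by the pigeonhole principle $\max\inbrace{{\rm Err}_{\cP}(f),{\rm Err}_{\cQ}(f)} \geq \eta$, contradicting the assumption that both errors are strictly less than $\eta$.

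The main obstacle will be matching this clean pigeonhole argument with the exact numerical layout of $\cP$ and $\cQ$ in Figures~\ref{fig:thmd1} and \ref{fig:thmd1:1}: the figures may place a small amount of auxiliary mass outside $R$ in order to make the marginal on $\cX\times[p]$ agree and simultaneously force \cref{asmp:1} to hold with the given $\lambda$, and one has to check that this auxiliary mass does not shrink the contribution of $R$ below $2\eta$. If the pointwise identity does not hold with equality on all of $R$ (e.g., because some point in $R$ carries a label whose conditional probability is bounded away from $0,1$ but not exactly $0$ or $1$), the argument should still go through with a modest constant loss, but as a safety net I would fall back on an explicit enumeration of the $2^6$ classifiers in the restriction of $\cF$ to the shattered set $\inbrace{x_A,x_B,x_C}\times[2]$, mirroring the case analysis used in the proofs of \cref{lem:2} and \cref{lem:2:ptb}.
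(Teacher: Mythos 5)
Your proposal is correct and follows essentially the same route as the paper: the disagreement region $R$ is exactly $\{x_A,x_C\}\times[2]$, each point carrying mass $\nfrac{\eta}{2}$ under the common marginal with deterministic opposite labels under $\cP$ and $\cQ$, so the paper likewise sums the pointwise error contributions over these four points to get ${\rm Err}_\cP(f)+{\rm Err}_\cQ(f)\geq 2\eta$ and concludes by pigeonhole. Your worry about auxiliary mass outside $R$ spoiling the bound does not materialize — the construction is exactly as clean as your main argument assumes — so no fallback enumeration is needed.
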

  \begin{lemma}[\bf A good classifier for each $\cP$ and $\cQ$]\label{lem:3:ptc}
    For each $\cD\in\inbrace{\cP,\cQ}$, there is $f_{\cD}^\star\in \cF$ such that
    $${\rm Err}_{\cD}[f_{\cD}^\star]=0\quad\text{ and }\quad\Omega_\cD(f_\cD) =1.$$
  \end{lemma}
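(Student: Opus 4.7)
\textbf{Proof proposal for \cref{lem:3:ptc}.} The plan is to exhibit, for each $\cD \in \inbrace{\cP, \cQ}$, an explicit classifier $f^\star_\cD \in \cF$ that achieves both zero error and statistical rate $1$ on $\cD$. Since $\cF$ shatters the six-point set $\inbrace{x_A, x_B, x_C} \times [2]$, any function from this set to $\zo$ is realized by some member of $\cF$; thus it suffices to specify the values of $f^\star_\cD$ on these six points (and then extend arbitrarily outside, which does not matter since $\cP$ and $\cQ$ are supported within this set by construction in \cref{fig:thmd1,fig:thmd1:1}).

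First I would set $f^\star_\cD(x,z) \coloneqq y(x,z)$ at every point $(x,z) \in \inbrace{x_A,x_B,x_C} \times [2]$ at which $\cD$ places positive mass, where $y(x,z)$ is the (deterministic) label that $\cD$ assigns conditional on $(X,Z)=(x,z)$; on the remaining (measure-zero) points, I would assign values solely to satisfy the statistical-rate constraint (see below). This immediately yields
\begin{align*}
  {\rm Err}_\cD(f^\star_\cD) \;=\; \sum_{(x,z)} \Pr\nolimits_\cD[X=x,Z=z]\cdot \Pr\nolimits_\cD[f^\star_\cD(x,z)\neq Y\mid X=x,Z=z] \;=\; 0,
\end{align*}
because on every point of positive mass the prediction matches the label.

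Next I would verify that $\Omega_\cD(f^\star_\cD) = 1$ by computing $\Pr\nolimits_\cD[f^\star_\cD = 1 \mid Z=\ell]$ for $\ell \in [2]$ and showing these two conditional probabilities coincide. This is where the measure-zero ``free'' points $(x,z)$ (those with $\Pr_\cD[X=x,Z=z]=0$) come into play: their contribution to error is zero no matter what value $f^\star_\cD$ takes, so I can assign their values to balance the positive-prediction rates across the two protected groups. The constructions in \cref{fig:thmd1,fig:thmd1:1} are (by design) such that at least one zero-mass slot is available in each group to enable this balancing, paralleling the role played by the ``free'' points in the proof of \cref{lem:3} for \cref{thm:no_stable_classifier}.

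The main obstacle is purely bookkeeping: checking, case by case over $\cD \in \inbrace{\cP, \cQ}$, that the assignment described above actually equates $\Pr_\cD[f^\star_\cD = 1 \mid Z=1]$ and $\Pr_\cD[f^\star_\cD = 1 \mid Z=2]$. If $\cP$ and $\cQ$ are constructed symmetrically enough (as suggested by the analogous proof of \cref{lem:3}), this reduces to reading off row sums from the tables in \cref{fig:thmd1,fig:thmd1:1}; no real argumentation is required beyond the observation that shattering gives the needed flexibility and that zero-mass points cost nothing in accuracy but can be used to align the conditional positive rates.
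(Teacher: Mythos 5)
Your core construction coincides with the paper's: take $f^\star_\cD$ to be exactly the deterministic label function of $\cD$ on $\inbrace{x_A,x_B,x_C}\times[2]$ (read off \cref{fig:thmd1:2} for $\cP$, \cref{fig:thmd1:3} for $\cQ$), which lies in $\cF$ by shattering and has zero error. So the approach is the same.

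However, your mechanism for securing $\Omega_\cD(f^\star_\cD)=1$ is based on a false premise about this particular construction. Unlike the distributions used for \cref{thm:no_stable_classifier} (where \cref{lem:3} really does exploit zero-mass cells), here \emph{every} one of the six points carries positive mass: $\eta/2$ at $(x_A,\cdot)$ and $(x_C,\cdot)$, and $1/2-\eta$ at $(x_B,\cdot)$. There are no ``free'' slots in either group, so the zero-error classifier is completely forced and you have no room to balance the conditional positive rates after the fact. The lemma survives only because the label patterns are balanced by design: in each group $\ell\in[2]$, for both $\cP$ and $\cQ$, exactly two of the three cells are labeled $1$, with total mass $(1/2-\eta)+\eta/2=1/2-\eta/2$ out of the group mass $1/2$, so $\Pr_\cD[f^\star_\cD=1\mid Z=\ell]=1-\eta$ for both $\ell$ and hence $\Omega_\cD(f^\star_\cD)=1$. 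That direct computation is the step your proposal defers to ``bookkeeping'' while attributing its success to a balancing freedom that does not exist here; had the construction not been symmetric in this way, your strategy would have no way to repair it. The fix is simply to replace the free-slot argument with the one-line verification above.
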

  \noindent Note that $f_\cP^\star$ (respectively $f_\cQ^\star$) has perfect accuracy and satisfies the fairness constraints with respect to  $\cP$ (respectively $\cQ$).
  If $\cL$ is a ($\eps,\nu$)-learner, then the first case, with probability at least $1-\delta>\nfrac12$,
  $\cL$ must output a classifier $f_1$ which satisfies
  \begin{align*}
    {\rm Err}_\cP(f_1) - {\rm Err}_\cP(f_\cP^\star) &\leq \eps \quad\text{ and }\quad\tau - \Omega_{\cP}(f_1) \leq\nu.
  \end{align*}
  And in the second case, with probability at least $1-\delta$,
  $\cL$ must output a classifier $f_2$ which satisfies
  \begin{align*}
    {\rm Err}_\cQ(f_2) - {\rm Err}_\cQ(f_\cQ^\star) &\leq \eps \quad\text{ and }\quad\tau - \Omega_{\cQ}(f_2) \leq\nu.
  \end{align*}
  However, in both cases, with probability at least $\nfrac12$, $\cL$ outputs $f_{\rm Com}$.
  Because $1-\delta>\nfrac12$, $f_{\rm Com}$ must satisfy
  \begin{align*}
    {\rm Err}_\cP(f_{\rm Com}) - {\rm Err}_\cP(f_\cP^\star) &\leq \eps \quad\text{ and }\quad {\rm Err}_\cQ(f_{\rm Com}) - {\rm Err}_\cQ(f_\cQ^\star)\leq \eps\yesnum\label{eq:contradiction_thm_imposs_2}\\
    \tau - \Omega_{\cP}(f_{\rm Com}) &\leq \nu \quad\text{ and }\quad \tau - \Omega_{\cQ}(f_{\rm Com})\leq \nu\yesnum
  \end{align*}
  But since $\eps<\eta$, Equation~\eqref{eq:contradiction_thm_imposs_2} contradicts \cref{lem:1:ptc}.
  Thus, $\cL$ is not an ($\eps,\nu$)-learner for $\cF$.
  Since the choice of $\cL$ was arbitrary, we have shown that there is no learner which ($\eps,\nu$)-learns $\cF$.

  It remains to prove the \cref{lem:1:ptc}, \cref{lem:2:ptc}, and \cref{lem:3:ptc}.
\end{proof}
\subsubsection{{Proof of \texorpdfstring{\cref{lem:1:ptc}}{Lemma 6.22}}}
Set $\cP$ and $\cQ$ to be the unique distributions satisfying the following properties:

\begin{figure}[t]
  \centering
  \begin{tabular}{c>{\centering\arraybackslash}p{1.75cm}>{\centering\arraybackslash}p{1.75cm}>{\centering\arraybackslash}p{1.75cm}}
    \toprule
    \midsepremove{}
    & $x_A$ & $x_B$ & $x_C$\\
    \midrule
    $1$ & $\nfrac{\eta}2$ & $\nfrac{1}2-\eta$ & $\nfrac{\eta}2$\\
    $2$ & $\nfrac{\eta}2$ & $\nfrac{1}2-\eta$ & $\nfrac{\eta}2$\\
    \bottomrule
  \end{tabular}
  \caption{
  \ifconf\small\fi
  Marginal distributions of $\cP$ and $\cQ$ over $\cX\times [2]$.
  For $(r,s)\in\inbrace{x_A,x_B,x_C}\times \insquare{2}$, the corresponding cell denotes $\Pr\nolimits_{\cP}\insquare{(X,Z)=(r,s)}=\Pr\nolimits_{\cQ}\insquare{(X,Z)=(r,s)}.$
  }
  \vspace{6mm}
  \label{fig:thmd1}
\end{figure}
\begin{figure}[t]
  \centering
  \subfigure[{\em Distribution  $\cP$: }For $(r,s)\in\inbrace{x_A,x_B,x_C}\times \insquare{2}$, the corresponding cell denotes the value of $Y$ conditioned on $X$ and $Z$, when sample $(X,Y,Z)\sim \cP$.\label{fig:thmd1:2}]
  {
  \hspace{20mm}\begin{tabular}{c>{\centering\arraybackslash}p{1.75cm}>{\centering\arraybackslash}p{1.75cm}>{\centering\arraybackslash}p{1.75cm}}
  \toprule
  \midsepremove{}
  & $x_A$ & $x_B$ & $x_C$\\
  \midrule
  $1$ & $1$ & $1$ & $0$\\
  $2$ & $0$ & $1$ & $1$\\
  \bottomrule
  \vspace{0.25mm}
\end{tabular}
\vspace{0.25mm}
\hspace{20mm}
}
\hspace{5mm}
\subfigure[{\em Distribution  $\cQ$: }For $(r,s)\in\inbrace{x_A,x_B,x_C}\times \insquare{2}$, the corresponding cell denotes the value of $Y$ conditioned on $X$ and $Z$, when  sample $(X,Y,Z)\sim \cQ$.\label{fig:thmd1:3}]
{
\hspace{20mm}\begin{tabular}{c>{\centering\arraybackslash}p{1.75cm}>{\centering\arraybackslash}p{1.75cm}>{\centering\arraybackslash}p{1.75cm}}
\toprule
\midsepremove{}
& $x_A$ & $x_B$ & $x_C$\\
\midrule
$1$ & $0$ & $1$ & $1$\\
$2$ & $1$ & $1$ & $0$\\
\bottomrule
\vspace{0.25mm}
\end{tabular}
\vspace{0.25mm}
\hspace{20mm}
}
\caption{
\ifconf\small\fi
\cref{fig:thmd1:2} denotes the conditional distribution of $Y$ when $(X,Y,Z)\sim \cP$.
\cref{fig:thmd1:3} denotes the conditional distribution of $Y$ when $(X,Y,Z)\sim \cQ$.
\label{fig:thmd1:1}
}
\end{figure}

\begin{enumerate}[itemsep=\itemsepINTERNAL,leftmargin=\leftmarginINTERNAL]
  \item[(P1)] $\cP$ and $\cQ$ have the same marginal distribution on $\cX\times [p]$; their marginal distribution on $\cX\times [p]$ is given in \cref{fig:thmd1}.
  \item[(P2)] \conftrue for a sample $(X,Y,Z)\sim \cP$, \cref{fig:thmd1:2} denotes the distribution of $Y$ conditioned on $X$ and $Z$, and for a sample $(X,Y,Z)\sim \cQ$, {\cref{fig:thmd1:3} denotes the distribution of $Y$ conditioned on $X$ and $Z$.}
\end{enumerate}
From \cref{fig:thmd1,fig:thmd1:1}, it follows that, conditional on $X=x_B$ the samples $(X,Y,Z)\sim\cP$ and $(X,Y,Z)\sim\cQ$ follow the same distribution.
Then, the goal of the adversary $A$ (in \cref{lem:1:ptc}) is to specifically perturb the samples with $X\neq x_B$ so that the perturbed sample $(X,Y,\hZ)$ follows the same distribution irrespective of whether the original sample $(X,Y,Z)$ is drawn from $\cP$ or $\cQ$.

\begin{proof}[Proof of \cref{lem:1:ptc}]
  Let $S\coloneqq \sinbrace{(X_i,Z_i,Y_i)}_{i\in [N]}$.
  Let $A\in \cA(\eta)$ execute the following algorithm:
  \begin{mdframed}
    \begin{enumerate}[itemsep=\itemsepINTERNAL]
      \item {\bf For} $i\in [N]$ {\bf do:}
      \begin{enumerate}[itemsep=\itemsepINTERNAL,leftmargin=\leftmarginINTERNAL]
        \item {\bf Sample} $t_i$ uniformly at random from $[0,1]$
        \item {\bf If} $X_i=x_B$ {\bf or} $t_i\leq \nfrac12$ {\bf then:} {\bf Set} $\wt{Z}_i\coloneqq Z_i$,
        \item {\bf Otherwise:} {\bf Set} $\wt{Z}_i\coloneqq {3-Z_i}$ \hfill\commentalg{If $Z_i=1$ set $\wt{Z}_i=2$, if $Z_i=2$ set $\wt{Z}_i=1$}
      \end{enumerate}
      \item {\bf If} $\sum_{i\in [N]} \mathds{I}\sinsquare{\wt{Z}_i\neq Z_i}\leq \eta\cdot N$ {\bf then:} {\bf return} $\sinbrace{(X_i,\wt{Z}_i,Y_i)}_{i\in [N]}$,
      \item {\bf Otherwise:} {\bf return} $\sinbrace{(X_i,Z_i,Y_i)}_{i\in [N]}$
    \end{enumerate}
  \end{mdframed}
  {Intuitively, $A$ perturbs the samples with $X\neq x_B$, such that whether $S$ is drawn from $\cP$ or $\cQ$, the following invariant holds:
  For any $s\in [2]$,}
  \begin{align*}
    \Pr[Y=0\mid X\neq x_B, \wt{Z}=s]=\Pr[Y=1\mid X\neq x_B, \wt{Z}=s]=\nfrac12.
  \end{align*}
  We will show that combining this with the facts that $\cP$ and $\cQ$ have the same marginal distribution on $\cX\times [p]$, and that conditioned on $X=x_B$ samples from $\cP$ and $\cQ$ have the same distribution suffices to prove \cref{lem:1:ptc}.

  The check in Step 2 ensures that $A$ never perturbs more than  $\eta\cdot N$ samples; thus, $A$ is admissible in the $\eta$-Hamming model.
  \cref{lem:1_1_c} shows that the check in Step 2 is true with probability at least $\nfrac12$.
  \begin{lemma}\label{lem:1_1_c}
    If $\eta\cdot N$ is integral, then
    with probability at least $\nfrac12$,
    $$\sum\nolimits_{i\in [N]} \mathds{I}\sinsquare{\wt{Z}_i\neq Z_i}<\eta\cdot N.$$
  \end{lemma}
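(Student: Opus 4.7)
The plan is to express $T \coloneqq \sum_{i\in[N]} \mathds{I}[\wt{Z}_i \neq Z_i]$ as a binomial random variable with integer mean and appeal to the classical median characterization of the binomial.

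First I would identify each indicator's distribution. By inspection of the algorithm, the event $\{\wt{Z}_i \neq Z_i\}$ is exactly the intersection $\{X_i \neq x_B\} \cap \{t_i > \nfrac12\}$. Reading off the marginal in \cref{fig:thmd1} and summing over $s \in [2]$ gives $\Pr[X_i \in \{x_A, x_C\}] = 2\eta$, and the auxiliary uniform $t_i$ is independent of the sample $(X_i, Y_i, Z_i)$ with $\Pr[t_i > \nfrac12] = \nfrac12$. Hence $\Pr[\wt{Z}_i \neq Z_i] = 2\eta \cdot \nfrac12 = \eta$. Since the pairs $((X_i,Y_i,Z_i), t_i)$ are independent across $i$, the indicators $C_i \coloneqq \mathds{I}[\wt{Z}_i \neq Z_i]$ are iid Bernoulli$(\eta)$, so $T \sim \mathrm{Bin}(N,\eta)$.

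Next I would invoke the classical fact (Jogdeo--Samuels 1968; or more directly, that $\mathrm{Bin}(N,p)$ with $Np\in\N$ has $Np$ as a median) to conclude $\Pr[T \leq \eta N] \geq \nfrac12$. This is an immediate one-line consequence once Step 1 is in place, and it matches what the downstream use in \cref{lem:1:ptc} actually needs: the adversary's Step 3 uses the non-strict test $\sum_i \mathds{I}[\wt{Z}_i\neq Z_i] \leq \eta\cdot N$, so the median inequality is exactly what triggers the ``return the perturbed samples'' branch with probability $\geq \nfrac12$.

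The main obstacle is the strict inequality ``$T < \eta N$'' in the lemma statement, which the median argument does not literally deliver because $\Pr[T = \eta N]$ can be strictly positive. To close this gap I would exploit the continuity of $t_i$ to introduce a measure-preserving tiebreaker: enlarge the probability space with a single additional $\mathrm{Uniform}[0,1]$ variable $u$ independent of everything, and redefine the flip rule to unflip one deterministic index (say the smallest $i$ with $C_i=1$) on the event $\{T = \eta N\} \cap \{u < \nfrac12\}$. This replaces the event $\{T = \eta N\}$ by two equiprobable sub-events $\{T = \eta N - 1\}$ and $\{T = \eta N\}$, preserves the iid-from-$\cD_{\mathrm{Mix}}$ property of the returned samples after integrating out $u$ (since the choice of which index to unflip is symmetric in the exchangeable indices with $C_i = 1$), and yields $\Pr[T < \eta N] \geq \Pr[T \leq \eta N] - \tfrac12 \Pr[T = \eta N] \geq \nfrac12$. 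Since this modification only refines the adversary's internal randomness and does not affect the marginal of $\wt{S}$, all of \cref{lem:1:ptc} and the subsequent \cref{thm:imposs_under_assumption_sr_2} go through unchanged.
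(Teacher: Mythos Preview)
Your first two steps are exactly the paper's proof: compute $\Pr[C_i=1]=\eta$ from the marginal in \cref{fig:thmd1} and the independence of $t_i$, deduce $T\sim\mathrm{Bin}(N,\eta)$, and invoke the integer-mean median fact. You also correctly diagnose that the strict inequality in the lemma statement is a slip: the adversary's Step~3 tests $\sum_i \mathds{I}[\wt Z_i\neq Z_i]\le\eta N$, and the paper's own proof concludes with $\Pr[\sum_i C_i\le \eta N]\ge\tfrac12$, which is all that is ever used downstream.

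Your third step, the tiebreaker, is both unnecessary and flawed. The probabilistic claim $\Pr[T\le\eta N]-\tfrac12\Pr[T=\eta N]\ge\tfrac12$ is equivalent to $\Pr[T\le\eta N]\ge\Pr[T\ge\eta N]$, which is not a consequence of the median fact alone (it fails for $\mathrm{Bin}(4,\tfrac34)$, for instance); you would need a separate argument exploiting $\eta\le\tfrac12$. More seriously, the assertion that unflipping one index on the event $\{T=\eta N,\,u<\tfrac12\}$ ``preserves the iid-from-$\cD_{\mathrm{Mix}}$ property'' is incorrect. The choice of which index to unflip depends on the global count $T$, so the resulting samples are no longer independent; and even marginally, a sample that is sometimes unflipped no longer has the $\cD_{\mathrm{Mix}}$ law. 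This would break \cref{lem:1_2_c}. The right fix is simply to read the lemma with $\le$ in place of $<$, as the paper's proof and the adversary's code already do.
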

  \begin{proof}
    For all $i\in [N]$, let $C_i\in \zo$ be a random variable indicating if $\wt{Z}_i\neq Z_i$.
    It suffices to show that with probability at least $\nfrac12$, $$\sum\nolimits_{i\in [N]}C_i\leq \eta\cdot N.$$
    Towards this, we first compute $\Pr[C_i=1]$ as follows
    \begin{align*}
      \Pr[C_i=1] &= \Pr[Z_i\neq \wt{Z}_i]\tag{Definition of $C_i$}\\
      &= \Pr[X_i\neq x_B, t_i> \nfrac12]\tag{Using Step 1(b) and Step 1(c)}\\
      &= \Pr[X_i\neq x_B] \cdot \Pr[t_i> \nfrac12]\tag{$(X_i,Y_i,Z_i)$ and $t_i$ are independent}\\
      &= 2\eta \cdot \frac12\tag{Using $t_i$ is uniformly at random from $[0,1]$ and \cref{fig:thmd1}}.
    \end{align*}
    Since for each $i$, the sample $(X_i,Y_i,Z_i)$ and the point $t_i$ is drawn independently of each other, other samples, and other points, it follows that the random variables $\inbrace{C_i}_{i}$ are independent.
    Hence, $\sum_{i\in [N]}C_i$ follows a binomial distribution with mean $\eta\cdot N$.
    Since $\eta\cdot N$ is integral, we have that the median of the distribution of $\sum_{i\in [N]}C_i$ is $\eta\cdot N$.
    Thus, we get that $$\Pr\insquare{\sum\nolimits_{i\in [N]}C_i\leq \eta\cdot N}= \nfrac12.$$
  \end{proof}
  \noindent Thus, \cref{lem:1_1_c} shows that with probability at least $\nfrac12$, $\hS\coloneqq {\sinbrace{(X_i,\wt{Z}_i,Y_i)}_{i\in [N]}}$.
  \begin{lemma}\label{lem:1_2_c}
    There is a distribution $\cD_{\rm Mix}$, such that,
    for all $\cD\in \inbrace{\cP,\cQ}$,
    given $S$ iid from $\cD$, the
    samples $\wt{S}\coloneqq \sinbrace{(X_i,\wt{Z}_i,Y_i)}_{i\in [N]}$ (computed in the algorithm of $A$) are independent of each other and is distributed according $\cD_{\rm Mix}$.
  \end{lemma}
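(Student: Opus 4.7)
\medskip

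\noindent\textbf{Proof proposal for \cref{lem:1_2_c}.}
The plan is to verify the two conclusions separately: (i) the tuples $(X_i,\wt{Z}_i,Y_i)$ are mutually independent, and (ii) the marginal distribution of any single tuple is identical for the two starting distributions $\cP$ and $\cQ$. For (i), I would simply note that the $(X_i,Y_i,Z_i)$ are iid by assumption and the auxiliary uniform random variables $t_i$ used by $A$ are iid and independent of $S$; since $\wt{Z}_i$ is a deterministic function of $(X_i,Z_i,t_i)$ alone (no cross-sample coupling), the transformed tuples inherit independence. This step is a one-line observation.

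For (ii), I would define $\cD_{\rm Mix}$ by specifying $\Pr[(X,\wt{Z},Y)=(r,s,y)]$ for each $(r,s,y)\in\inbrace{x_A,x_B,x_C}\times[2]\times\zo$ and then verify that for $\cD\in\inbrace{\cP,\cQ}$ the induced distribution equals $\cD_{\rm Mix}$. Splitting on whether $r=x_B$ or $r\in\inbrace{x_A,x_C}$:
\begin{itemize}[itemsep=2pt,leftmargin=18pt]
\item If $r=x_B$, the adversary sets $\wt{Z}_i=Z_i$ deterministically, so $\Pr_\cD[(X_i,\wt{Z}_i,Y_i)=(x_B,s,y)]=\Pr_\cD[(X,Z,Y)=(x_B,s,y)]$. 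By inspection of \cref{fig:thmd1,fig:thmd1:1}, the conditional distribution of $Y$ given $(X,Z)=(x_B,s)$ is the point mass at $1$ for both $\cP$ and $\cQ$, and the marginal $\Pr[(X,Z)=(x_B,s)]$ is common to $\cP$ and $\cQ$ by property (P1). Hence $\cP$ and $\cQ$ agree here.
\item If $r\in\inbrace{x_A,x_C}$, the adversary flips $Z_i$ with probability $\tfrac12$, so
\begin{align*}
\Pr\nolimits_\cD[(X_i,\wt{Z}_i,Y_i)=(r,s,y)] = \tfrac{1}{2}\Pr\nolimits_\cD[(X,Z,Y)=(r,s,y)] + \tfrac{1}{2}\Pr\nolimits_\cD[(X,Z,Y)=(r,3-s,y)].
\end{align*}
\end{itemize}
The core verification is then that for $r\in\inbrace{x_A,x_C}$ the right-hand side above is the same under $\cP$ and $\cQ$. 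This is where I would use the symmetry built into the construction: comparing \cref{fig:thmd1:2} and \cref{fig:thmd1:3}, the conditional distribution of $Y$ given $(X,Z)=(r,s)$ under $\cP$ equals the conditional distribution of $Y$ given $(X,Z)=(r,3-s)$ under $\cQ$ for every $(r,s)\in\inbrace{x_A,x_C}\times[2]$. Since $\cP$ and $\cQ$ share their marginal on $(X,Z)$, the mixing by the factor $\tfrac12$ exactly symmetrizes $\cP$ and $\cQ$ into the same mixture. A direct spot-check, e.g.\ $(r,s,y)=(x_A,1,1)$, gives $\tfrac{1}{2}(\eta/2)+\tfrac{1}{2}(0)=\eta/4$ under $\cP$ and $\tfrac{1}{2}(0)+\tfrac{1}{2}(\eta/2)=\eta/4$ under $\cQ$; the remaining eleven cells are analogous.

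There is no real obstacle: the only ``work'' is a finite case analysis driven by the two small tables, and the key design choice that makes it go through is that $\cP$ and $\cQ$ differ only on $\inbrace{x_A,x_C}\times[2]$ and differ there precisely by swapping $Z\leftrightarrow 3-Z$, which is exactly what the $\tfrac12$-probability flip of $\wt{Z}$ averages out. Once (i) and (ii) are in hand, taking $\cD_{\rm Mix}$ to be the common distribution of $(X_i,\wt{Z}_i,Y_i)$ completes the proof; combined with \cref{lem:1_1_c} this yields \cref{lem:1:ptc}.
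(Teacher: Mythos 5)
Your proposal is correct and follows essentially the same route as the paper: independence is immediate from the per-sample construction, and the single-sample distribution is computed by averaging over the $\tfrac12$-probability flip of $Z$ on $\inbrace{x_A,x_C}$ and noting the unperturbed agreement at $x_B$. The only cosmetic difference is that the paper collapses the two-term mixture to $\tfrac12\Pr[X=r,Y=y]$ (which is shared by $\cP$ and $\cQ$), whereas you verify the same equality term-by-term via the $Z\leftrightarrow 3-s$ swap symmetry of the tables.
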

  \begin{proof}
    Since for each $i\in [N]$ the sample $(X_i,Y_i,Z_i)$ and the point $t_i$ is drawn independently of others, it follows that the samples $(X_i,\wt{Z}_i,Y_i)$ are independent of each other.

    Next, we show that $\wt{S}$ follows the same distribution whether $S$ is iid from $\cP$ or from $\cQ$.
    The distribution $\cD_{\rm Mix}$ will be implicit in the proof.
    Since samples in $S$ are iid and the algorithm of $A$ does not depend on $i$, it follows that all samples in $\wt{S}$ follow the same distribution.
    Thus, it suffices to consider any one sample in $\wt{S}$.
    Fix any $i\in[N]$.
    Consider the $i$-th sample  $(X_i,\wt{Z}_i,Y_i)$.
    For the rest of the proof we drop the subscript $i$ in $(X_i,\wt{Z}_i,Y_i)$, $(X_i,{Z}_i,Y_i)$, and $t_i$.

    Fix any $y\in \zo$ and $s\in [2]$.
    Then, it holds that
    \begin{align*}
      \Pr[X=x_A,Y=y,\wt{Z}=s]
      &=\quad \Pr[X=x_A,Y=y,\wt{Z}=s\mid t\leq \nfrac12]\cdot \Pr[t\leq \nfrac12]\\
      &\quad+\Pr[X=x_A,Y=y,\wt{Z}=s\mid t> \nfrac12]\cdot \Pr[t> \nfrac12]\\
      &=\quad \Pr[X=x_A,Y=y,Z=s\mid t\leq \nfrac12]\cdot \Pr[t\leq \nfrac12]\\
      &\quad+\Pr[X=x_A,Y=y,Z=3-s\mid t> \nfrac12]\cdot \Pr[t> \nfrac12]\\
      &=\quad \Pr[X=x_A,Y=y,Z=s]\cdot\frac12\\
      &\quad+\Pr[X=x_A,Y=y,Z=3-s]\cdot\frac12\\
      &=\quad \Pr[X=x_A,Y=y]\cdot\frac12. \tag{Using that $Z\in \zo$}
    \end{align*}
    Now, from \cref{fig:thmd1} it follows that $\Pr[X=x_A,Y=y]=\nfrac{\eta}2$ both when $(X,Y,Z)\sim \cP$ or when $(X,Y,Z)\sim \cQ.$
    Thus, we get that
    \begin{align*}
      \Pr[X=x_A,Y=y,\wt{Z}=s] = \frac{\eta}4.
    \end{align*}
    Replacing $x_A$ by $x_C$ in the above argument, we get that
    \begin{align*}
      \Pr[X=x_C,Y=y,\wt{Z}=s] = \frac{\eta}4.
    \end{align*}
    Finally, since $A$ does not perturb samples with $X=x_B$ (see Step 1(b)), from \cref{fig:thmd1,fig:thmd1:1} it follows that
    \begin{align*}
      \Pr[X=x_B,Y=y,\wt{Z}=s]
      &= \Pr[X=x_B,Y=y,{Z}=s]\\
      &= \mathds{I}[y=1]\cdot\inparen{\frac{1}2-\eta}.
    \end{align*}
    Since the choice of $i\in [N]$, $y\in \zo$, and $s\in [2]$ was arbitrary.
    We get that all samples in $\wt{S}$ follow the same distribution whether $S$ is iid from $\cP$ or from $\cQ$.
  \end{proof}
  \noindent \cref{lem:1:ptc} follows
  because with probability at least $\nfrac12$, $\hS$ is ${\sinbrace{(X_i,\wt{Z}_i,Y_i)}_{i\in [N]}}$ (by \cref{lem:1_1_c}) and the samples ${\sinbrace{(X_i,\wt{Z}_i,Y_i)}_{i\in [N]}}$ are independent and distributed according to $\cD_{\rm Mix}$ (by \cref{lem:1_2}).
\end{proof}

\subsubsection{{Proof of \texorpdfstring{\cref{lem:2:ptc}}{Lemma 6.23}}}
\begin{proof}[Proof of \cref{lem:2:ptc}]
  Fix any $f\in \cF$.
  \ifconf
  \begin{align*}
    &{\rm Err}_\cP(f) + {\rm Err}_\cQ(f)\\
    &\quad = \Pr\nolimits_{(X,Y,Z)\sim \cP}[f(X,Z)\neq Y]+\Pr\nolimits_{(X,Y,Z)\sim \cQ}[f(X,Z)\neq Y]\tag{Using the definition of ${\rm Err}$}\\
    &\quad = \sum_{r\in \cX, s\in [2]} \Pr\nolimits_{(X,Y,Z)\sim \cP}[X=r,Z=s]\cdot\Pr\nolimits_{(X,Y,Z)\sim \cP}[f(X,Z)\neq Y\mid X=r,Z=s]\\
    &\qquad\qquad\quad  +\
    \Pr\nolimits_{(X,Y,Z)\sim \cQ}[X=r,Z=s]\cdot\Pr\nolimits_{(X,Y,Z)\sim \cQ}[f(X,Z)\neq Y\mid X=r,Z=s]\\
    &\quad \geq \sum_{r\in \sinbrace{x_A,x_C}, s\in [2]} \frac{\eta}2\cdot\Pr\nolimits_{(X,Y,Z)\sim \cP}[f(X,Z)\neq Y\mid X=r,Z=s]\\
    &\qquad\qquad\quad  +\qquad\
    \frac{\eta}2\cdot\Pr\nolimits_{(X,Y,Z)\sim \cQ}[f(X,Z)\neq Y\mid X=r,Z=s]\\
    &\quad = \sum_{r\in \sinbrace{x_A,x_C}, s\in [2]} \frac{\eta}2\cdot\Pr\nolimits_{(X,Y,Z)\sim \cP}[f(X,Z)\neq 0\mid X=r,Z=s]\\
    &\qquad\qquad\quad +\qquad\
    \frac{\eta}2\cdot\Pr\nolimits_{(X,Y,Z)\sim \cQ}[f(X,Z)\neq 1\mid X=r,Z=s]\tag{Using Property (P2) of $\cP$ and $\cQ$; see \cref{fig:thmd1:1}}\\
    &\quad = \sum_{r\in \sinbrace{x_A,x_C}, s\in [2]} \frac{\eta}2\cdot\Pr\nolimits_{(X,Y,Z)\sim \cP}[f(X,Z)\neq 0\mid X=r,Z=s]\\
    &\qquad\qquad\quad +\qquad\
    \frac{\eta}2\cdot\Pr\nolimits_{(X,Y,Z)\sim \cP}[f(X,Z)\neq 1\mid X=r,Z=s]\tag{Using Property (P1) of $\cP$ and $\cQ$; see \cref{fig:thmd1}}\\
    &\quad = \sum_{r\in \sinbrace{x_A,x_C}, s\in [2]} \frac{\eta}2\\
    &\quad = 2\eta.
  \end{align*}
  \else
  \begin{align*}
    {\rm Err}_\cP(f) + {\rm Err}_\cQ(f)
    &= \sum_{\cD\in \inbrace{\cP,\cQ}}\Pr\nolimits_{\cD}[f(X,Z)\neq Y]\tag{Using the definition of ${\rm Err}$}\\
    &= \sum_{r\in \cX, s\in [2]}\sum_{\cD\in \inbrace{\cP,\cQ}} \Pr\nolimits_{\cD}[X=r,Z=s]\cdot\Pr\nolimits_{\cP}[f(X,Z)\neq Y\mid X=r,Z=s]\\
    & \geq \sum_{r\in \sinbrace{x_A,x_C}, s\in [2]} \frac{\eta}2\cdot\inparen{\Pr\nolimits_{\cP}[f(X,Z)\neq Y\mid X=r,Z=s]+\Pr\nolimits_{\cQ}[f(X,Z)\neq Y\mid X=r,Z=s]}\\
    & = \sum_{r\in \sinbrace{x_A,x_C}, s\in [2]} \frac{\eta}2\cdot\inparen{\Pr\nolimits_{\cP}[f(X,Z)\neq 0\mid X=r,Z=s] + \Pr\nolimits_{\cQ}[f(X,Z)\neq 1\mid X=r,Z=s]} \tag{Using Property (P2) of $\cP$ and $\cQ$; see \cref{fig:thmd1:1}}\\
    & = \sum_{r\in \sinbrace{x_A,x_C}, s\in [2]} \frac{\eta}2\cdot\inparen{\Pr\nolimits_{\cP}[f(X,Z)\neq 0\mid X=r,Z=s] + \Pr\nolimits_{\cP}[f(X,Z)\neq 1\mid X=r,Z=s]} \tag{Using Property (P1) of $\cP$ and $\cQ$; see \cref{fig:thmd1}}\\
    & = \sum_{r\in \sinbrace{x_A,x_C}, s\in [2]} \frac{\eta}2\\
    & = 2\eta.
  \end{align*}
  \fi
  Since ${\rm Err}_\cP(f),{\rm Err}_\cQ(f)\geq 0$,
  by the Pigeonhole principle either ${\rm Err}_\cP(f)\geq \eta$ or ${\rm Err}_\cQ(f)\geq \eta$.
\end{proof}

\subsubsection{{Proof of \texorpdfstring{\cref{lem:3:ptc}}{Lemma 6.24}}}
\begin{proof}[Proof of \cref{lem:3:ptc}]
  Consider a sample $(X,Y,Z)\sim \cP$
  Notice that, for some $r\in \cX$ and $s\in [2]$, conditioned on $X=r$ and $Z=s$, the label $Y$ is uniquely identified by \cref{fig:thmd1:2}.
  Let $f_{\cP}^\star$ be the classifier than given $(r,s)$ predicts the value in the corresponding cell of \cref{fig:thmd1:2}.
  Clearly,  ${\rm Err}_\cP(f_{\cP}^\star)=0.$
  Further, by our assumption that $\cF$ shatters the set $\inbrace{x_A,x_B,x_C}\times [2]\subseteq \cX\times [p]$, it follows that $f_{\cP}^\star\in \cF$.

  The construction of $f_{\cQ}^\star$ follows symmetrically by using \cref{fig:thmd1:3}.
\end{proof}

\section{Extensions of Theoretical Results}\label{sec:extended_thy_results}
\subsection{{Theoretical Results with Multiple Protected Attributes and Fairness Metrics}}\label{sec:multiple_fairness_metrics}
\newcommand{\genetol}{Program~\ref{prog:errtolerant:multiple_attributes}}
In this section, we extend \errtolerant{} to the general case with $m\in \N$ fairness constraints $\Omega^{\sexp{1}}, \Omega^{\sexp{2}},\dots,\Omega^{\sexp{k}}$ with respect to $m$ (not necessarily distinct) protected attributes $Z^{\sexp{1}},Z^{\sexp{2}},\dots,Z^{\sexp{m}}$. %

\begin{definition}[\bf General Error-tolerant program]
  Given a perturbation rate $\eta\in [0,1]$, constants $\lambda,\Delta\in (0,1]$, and
  for each $r\in [m]$, given a fairness constraint $\Omega^{\sexp{r}}$, corresponding events $\cE^{\sexp{r}}$ and $(\cE^\prime)^{\sexp{r}}$ (as in \cref{def:performance_metrics}),
  and protected attribute $Z^{\sexp{r}}\in [p_r]$,
  we define the general error-tolerant program for perturbed samples $\hS$, with empirical distribution is $\smash{\hD}$, as
  \begin{align*}
    \min\nolimits_{f\in \cF}&\quad {\rm Err}_{\hD}(f),\quad\tagnum{General-ErrTolerant}\customlabel{prog:errtolerant:multiple_attributes}{(General-ErrTolerant)}\\
    \st,&\quad \text{$\forall\ r\in [m]$,}
    \qquad\qquad\qquad\ \Omega^{\sexp{r}}(f, \hS) \geq \tau \cdot \inparen{\frac{1- (\eta+\Delta)/\lambda}  {  1 + (\eta+\Delta)/\lambda}}^2,
    \yesnum\label{eq:fairness_general_case}\\
    &\quad\text{$\forall\ r\in [m]$ and $\ell\in [p_{r}],$}\quad
    \Pr\nolimits_{\hD}\insquare{\cE^{\sexp{r}}(f),(\cE^\prime)^{\sexp{r}}(f), \hZ=\ell} \geq \lambda - \eta - \Delta. \yesnum\label{eq:lowerbound_general_case}
  \end{align*}
\end{definition}
\noindent Let $f^\star\in \cF$ have the lowest error subject to satisfying all fairness constraints with respect to $\cD$:
\begin{align*}
  f^\star \coloneqq \argmin\nolimits_{f\in \cF} {\rm Err}_\cD(f) \quad \st,\quad \text{for all $r\in [m]$,}\quad \Omega_{\cD}^{\sexp{r}}(f)\geq \tau.
\end{align*}
We need the following generalization of \cref{asmp:1}:
\begin{assumption}\label{asmp:2}
  There is a known constant $\lambda>0$ such that $$\min_{r\in [m]}\min_{\ell\in [p_{r}]}\Pr\nolimits_{\cD}[\cE^{\sexp{r}}(f^\star), (\cE^\prime)^{\sexp{r}}(f^\star), Z=\ell]\geq \lambda.$$
\end{assumption}

\begin{theorem}[\textbf{Extending \cref{thm:main_result} to multiple protected attributes and fairness constraints}]\label{thm:multiple_fairness_metrics}
  Suppose \cref{asmp:2} holds with constant $\lambda>0$ and $\cF$ has VC dimension $d\in\N$.
  Then, for all perturbation rates $\eta\in (0,\nfrac\lambda{2})$,
  fairness thresholds $\tau\in (0,1]$,
  bounds on error $\eps> 2\eta$ and constraint violation $\nu >  \nfrac{8\eta\tau}{(\lambda-2\eta)}$,
  and confidence parameters $\delta\in (0,1)$
  with probability at least $1-\delta$,
  the optimal solution $f_{\rm ET}\in \cF$ of \genetol{} with parameters $\eta$, $\lambda$, and $\Delta\coloneqq O\inparen{\min\inbrace{\eps-2\eta, \nu-\nfrac{8\eta\tau}{(\lambda-2\eta)},\lambda-2\eta}}$, and
  $N=\poly(d, \nfrac{1}{\Delta},  \log(\delta^{-1}\cdot \sum_{i\in [m]}p_i))$
  perturbed samples from the $\eta$-\Ham{} model satisfies
  \begin{align*}
    {\rm Err}_\cD(f_{\rm ET}) - {\rm Err}_\cD(f^\star) &\leq \eps,\yesnum\label{eq:1:exnd_results}\\
    \text{for all $r\in [m]$,}\quad \Omega_{\cD}^{\sexp{r}}(f_{\rm ET}) &\geq \tau -\nu.\yesnum\label{eq:2:exnd_results}
  \end{align*}
\end{theorem}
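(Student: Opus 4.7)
The plan is to reduce the multi-constraint case to an $m$-fold repetition of the single-constraint proof of \cref{thm:main_result}, with union bounds replacing the per-constraint high-probability events. Conceptually, none of the three steps in the proof overview of \cref{thm:main_result} actually used the fact that there was only one fairness metric or one protected attribute: Step 1 (accuracy bound via \cref{coro:whp_bounding_power}) is oblivious to $\Omega$; Steps 2 and 3 (stability and feasibility of $f^\star$) treat each $(\Omega^{(r)}, Z^{(r)})$ pair independently. So the strategy is to carry out the same three steps, but quantified over $r\in[m]$.

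First I would set $\Delta_0\coloneqq \min\inbrace{\eps-2\eta,\ \nu-\tfrac{8\eta\tau}{\lambda-2\eta}}$ and $\delta_0\coloneqq \delta / (1+2\sum_{r\in[m]} p_r)$, and pick $N$ just large enough that \cref{coro:whp_bounding_power} applies with tolerance $\Delta\coloneqq\Theta(\Delta_0(\lambda-2\eta)^2)$ and failure probability $\delta_0$ for each of the following bounded functions: the $0$-$1$ loss, and for each $r\in[m]$ and $\ell\in[p_r]$ the indicators $\mathds{I}[\cE^{(r)}(f), (\cE^\prime)^{(r)}(f), \wh{Z}^{(r)}=\ell]$ and $\mathds{I}[(\cE^\prime)^{(r)}(f), \wh{Z}^{(r)}=\ell]$. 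A union bound over these $1+2\sum_r p_r$ events yields the needed $\log(\delta^{-1}\sum_i p_i)$ dependence in $N$.

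Conditioning on all these events, Step 1 is literally \cref{lem:conditionalAccuracyGuarantee} and gives \cref{eq:1:exnd_results} as soon as we know $f^\star$ is feasible for \genetol{}. For Step 2, fix any feasible $f$ and any $r\in[m]$. Constraint \cref{eq:lowerbound_general_case} for this $r$ combined with \cref{lem:suff_cond_for_stable_app} (applied to $\cE^{(r)},(\cE^\prime)^{(r)}, Z^{(r)}$) shows $f$ is $s$-stable for $\Omega^{(r)}$ with $s=\sinparen{\tfrac{1-(\eta+\Delta)/(\lambda-2\eta-2\Delta)}{1+(\eta+\Delta)/(\lambda-2\eta-2\Delta)}}^2$; then the $r$-th instance of \cref{eq:fairness_general_case} gives $\Omega_\cD^{(r)}(f)\geq \tau-\nu$ by the exact algebraic chain following \cref{eq:used_to_prove_guarantee_for_extension}. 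For Step 3, \cref{asmp:2} provides the per-$r$ lower bound needed to invoke \cref{lem:bound_on_fc} for $f^\star$ on the $r$-th metric, and \cref{coro:whp_bounding_power} (with the indicator above) converts the true-distribution lower bound into the empirical one \cref{eq:lowerbound_general_case}; the fairness constraint \cref{eq:fairness_general_case} for $f^\star$ then follows from $\Omega^{(r)}_\cD(f^\star)\geq \tau$ together with the stability factor. Both Step 2 and Step 3 are applied for every $r\in[m]$, and the union bound over the $\sum_r p_r$ underlying probabilistic events is exactly what the choice of $\delta_0$ and $N$ above absorbs.

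The only place where care is required is tracking the union bound in the sample complexity: each of the $\sum_r p_r$ inequalities in \cref{eq:lowerbound_general_case}, and each of the $\sum_r p_r$ pairs of group-conditional probability estimates used inside the proof of stability for the $m$ fairness metrics, must hold simultaneously, and this is the only reason $p$ is replaced by $\sum_i p_i$ in the bound on $N$. I do not anticipate a genuine obstacle beyond this bookkeeping; the proof is essentially a quantifier insertion over $r\in[m]$ in the proof of \cref{thm:main_result}, and correctness follows because Steps 2 and 3 never couple two different fairness metrics.
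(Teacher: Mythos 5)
Your proposal is correct and follows essentially the same route as the paper: apply the single-metric lemmas (stability via the per-group lower-bound constraint, feasibility of $f^\star$ via \cref{asmp:2}, and the accuracy bound via \cref{lem:conditionalAccuracyGuarantee}) separately for each $r\in[m]$, and absorb the $\sum_r p_r$ probabilistic events into a union bound, which is exactly where the $\log(\delta^{-1}\sum_i p_i)$ term in $N$ comes from. The paper's proof is the same quantifier-insertion argument, merely allocating failure probability $\delta\cdot p_r/\sum_i p_i$ to the $r$-th constraint rather than a uniform $\delta_0$ per event.
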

\noindent The proof of \cref{thm:multiple_fairness_metrics} is similar to the proof of \cref{thm:main_result}.
Instead of repeating the entire proof, we highlight the differences between the two proofs.
\begin{proof}
  Set $$N\coloneqq \Theta\inparen{\frac{1}{\Delta^2\cdot(\lambda-2\eta)^4}\cdot\inparen{d\log\inparen{\frac{d}{\Delta^2\cdot(\lambda-2\eta)^4}}+\log\inparen{\delta^{-1}\cdot \sum\nolimits_{r\in [m]}p_r}}}.$$
  \noindent Fix any fairness constraint $r\in [m]$.
  Applying \cref{lem:suff_cond_for_stable} to the $r$-th fairness constraint, we get that any $f\in \cF$ feasible for \genetol{} is $\sinparen{\frac{1- (\eta+\Delta)/(\lambda-2\eta)}  {  1 + (\eta+\Delta)/(\lambda-2\eta)}}^2$-stable with respect to the fairness constraint $\Omega^{\sexp{r}}$.
  This satisfies the first condition in \cref{lem:approx_feasible}.
  The second condition holds because any $f\in \cF$ feasible for \genetol{} satisfies the fairness constraint in Equation~\eqref{eq:fairness_general_case}.
  This allows us to use \cref{lem:approx_feasible};
  we get that with probability at least $1-\delta\cdot \frac{p_r}{\sum\nolimits_{i}p_i},$ any $f\in \cF$ feasible for \genetol{} satisfies that
  \begin{align*}
    \Omega_{\cD}^{\sexp{r}}(f_{\rm ET}) &\geq \tau -\nu.
  \end{align*}
  Using the union bound over $r\in [m]$, implies that Equation~\eqref{eq:2:exnd_results} holds with probability at least $1-\delta$.

  If we can show that $f^\star$ is feasible for \genetol{}, then \cref{lem:conditionalAccuracyGuarantee} implies Equation~\eqref{eq:1:exnd_results}.
  Using \cref{asmp:2} and \cref{lem:overview1}, it follows that $f^\star$ satisfies the lower bounds in \cref{eq:lowerbound_general_case} with probability at least $1- \frac{\delta}{\sum\nolimits_{i}p_i}.$
  This, along with \cref{lem:suff_cond_for_stable}, implies that $f^\star$ is $\inparen{\frac{1- (\eta+\Delta)/\lambda}  {  1 + (\eta+\Delta)/\lambda}}^2$-stable for all fairness metrics.
  Then, using \cref{coro:optf_feasible}, it follows that $f^\star$ satisfies the constraints for a particular $r\in [m]$ with probability at least $1-\delta\cdot \frac{p_r}{\sum\nolimits_{i}p_i}.$
  Taking the union bound over all $r\in [m]$, it follows that $f^\star$ is feasible for \genetol{} with probability at least $1-\delta$.
\end{proof}

\begin{remark}
  Like \errtolerant{}, in general, \genetol{} is also a nonconvex optimization program.
  But, for any arbitrarily small $\alpha>0$, the techniques from \cite{celis2019classification} can be used to find an $f\in \cF$ that has the optimal objective value for \genetol{} and that additively violates its fairness {constraint \eqref{eq:fairness_general_case}} by at most $\alpha$ by solving a set of $O\inparen{(\lambda\alpha)^{-m}}$ convex programs (see \cref{sec:efficient_algorithms_for_errtol} for details on an analogous argument for \errtolerant{}).
\end{remark}

\subsection{{Theoretical Results for Program~\ref{prog:errtolerant_extnd}}}\label{sec:guarantees_on_errTol_plus}
\newcommand{\extended}{Program~\ref{prog:errtolerant_extnd}}
In this section, we show that \extended{} offers a better fairness guarantee than \errtolerant{} (up to a constant) if the classifiers in $\cF$ do not use the protected attributes for prediction.
In particular, we prove \cref{thm:guarantees_for_extended}.
\begin{theorem}[\textbf{Guarantees for Program~\ref{prog:errtolerant_extnd}}]\label{thm:guarantees_for_extended}
  Suppose for each $\ell\in [p]$
  \begin{align*}
    \lambda_\ell\coloneqq\Pr\nolimits_{\cD}\insquare{\cE(f^\star), \cE^\prime(f^\star), Z=\ell}
    \quad\text{and}\quad
    \gamma_\ell \coloneqq \Pr\nolimits_\cD[\cE^\prime(f^\star), Z=\ell],
  \end{align*}
  $\cF$ has VC dimension $d\in\N$, and classifiers in $\cF$ do not use the protected attributes for prediction.
  Let $s$ be the optimal value of Program~\eqref{eq:...} and $\lambda\coloneqq \min_{\ell\in [p]} \lambda_\ell$.
  Then, for all perturbation rates $\eta\in (0,\nfrac\lambda{2})$,
  fairness thresholds $\tau\in (0,1]$,
  bounds on error $\eps> 2\eta$ and constraint violation $$\nu >  \tau\cdot \inparen{1-s+\frac{4\eta}{\lambda-2\eta}},$$
  and confidence parameters $\delta\in (0,1)$
  with probability at least $1-\delta$,
  the optimal solution $f_{\rm ET}\in \cF$ of \errtolerant{} with parameters $\eta$, $\lambda$, $$\Delta\coloneqq O\inparen{\min\inbrace{\eps-2\eta, \nu-\tau\cdot \inparen{1-s+\frac{4\eta}{\lambda-2\eta}},\lambda-2\eta}}\quad \text{and}\quad N\coloneqq \poly\inparen{d, \frac{1}{\Delta},  \log\inparen{\frac{p}\delta}} $$
  perturbed samples from an $\eta$-\Ham{} adversary satisfies
  \begin{align*}
    {\rm Err}_\cD(f_{\rm ET}) - {\rm Err}_\cD(f^\star) \leq \eps
    \quad \text{and}\quad
    \Omega_{\cD}(f_{\rm ET}) \geq \tau -\nu.\yesnum\label{eq:program_errtol+}
  \end{align*}
\end{theorem}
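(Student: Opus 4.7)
The plan is to mirror the three-step structure used in the proof of \cref{thm:main_result} in \cref{sec:proofof:thm:main_result}, replacing the coarse parameter $\lambda$ by the per-group quantities $\lambda_\ell,\gamma_\ell$ throughout, and replacing the closed-form stability ratio of \cref{lem:suff_cond_for_stable} by the tighter scaling $s$ defined through Program~\eqref{eq:...}.

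First I would carry out the accuracy argument, which goes through with essentially no change: \cref{lem:conditionalAccuracyGuarantee} only needs $f^\star$ to be feasible for the program at hand together with optimality of $f_{\rm ET}$ for its objective. So once I show (in Step~3) that $f^\star$ is feasible for \extended{} with high probability, the same calculation gives ${\rm Err}_\cD(f_{\rm ET})-{\rm Err}_\cD(f^\star)\leq 2\eta+\Delta\leq\eps$.

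The main new work is in Step~2: a sharper analogue of \cref{lem:bound_on_fc} tailored to \extended{}. The key point is that since classifiers in $\cF$ do not use the protected attribute, the events $\cE(f)$ and $\cE^\prime(f)$ depend only on $(X,Y)$. An $\eta$-\Ham{} adversary's action can therefore be parameterized by non-negative ``outflow'' quantities $\eta_1,\dots,\eta_p$ with $\sum_\ell\eta_\ell\leq\eta+\Delta$, where $\eta_\ell$ captures how many of the samples satisfying $\cE(f)\cap\cE^\prime(f)\cap\{Z=\ell\}$ have their protected attribute flipped away from $\ell$, and correspondingly the flows into group $\ell$ carry a signed contribution of $\eta_k-\eta_\ell$ to the denominator mass $\Pr_{\hD}[\cE^\prime(f),\hZ=\ell]$. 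Using \cref{coro:whp_bounding_power} together with the per-group constraint $\Pr_{\hD}[\cE(f),\cE^\prime(f),\hZ=\ell]\geq \lambda_\ell-\eta-\Delta$, I can rewrite $\Pr_{\hD}[\cE(f),\cE^\prime(f),\hZ=\ell]$ and $\Pr_{\hD}[\cE^\prime(f),\hZ=\ell]$ in terms of their $\cD$-counterparts plus signed corrections whose magnitudes are exactly the quantities appearing in the ratio in Program~\eqref{eq:...}. Substituting into the definition of $\Omega$ and minimizing over $\ell,k\in[p]$ reproduces the inner expression of Program~\eqref{eq:...}, and the outer minimum over admissible $(\eta_\ell)$ yields $\Omega_\cD(f)\geq s\cdot\Omega(f,\hS)$ up to the $\Delta$-slack controlled with \cref{fact:algebra} and absorbed into the $4\eta/(\lambda-2\eta)$ term. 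Combined with the fairness constraint $\Omega(f,\hS)\geq\tau\cdot s$ of \extended{}, this gives $\Omega_\cD(f_{\rm ET})\geq\tau-\nu$ with $\nu=\tau(1-s)+4\eta\tau/(\lambda-2\eta)+O(\Delta)$.

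Step~3 is to verify that $f^\star$ is feasible for \extended{} with probability at least $1-\delta$. The per-group constraints follow by applying \cref{coro:whp_bounding_power} with $g(\wt y,y,z)=\mathds{I}[\cE(\wt y),\cE^\prime(\wt y),z=\ell]$, using $\lambda_\ell=\Pr_\cD[\cE(f^\star),\cE^\prime(f^\star),Z=\ell]$, and union-bounding over $\ell\in[p]$. For the fairness constraint $\Omega(f^\star,\hS)\geq\tau\cdot s$, I would run the Step~2 calculation in the opposite direction: $s$ is defined so as to lower bound $\Omega(f^\star,\hS)/\Omega_\cD(f^\star)$ across every admissible $(\eta_\ell)$, and $\Omega_\cD(f^\star)\geq\tau$ by feasibility of $f^\star$ for Program~\eqref{prog:target_fair}. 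I expect the main obstacle to lie in the careful bookkeeping of Step~2, namely verifying that the per-group outflow parameterization introduced there coincides exactly with the variables in Program~\eqref{eq:...} and that $\sum_\ell\eta_\ell\leq\eta+\Delta$ is indeed the binding constraint rather than a per-coordinate bound. The assumption that classifiers do not use protected attributes is essential here: without it, flipping $Z$ could also change $\cE(f)$ and $\cE^\prime(f)$, breaking the clean linear decomposition that makes Program~\eqref{eq:...} the exact worst-case stability ratio.
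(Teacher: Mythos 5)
Your Steps 1 and 3 match the paper: accuracy follows from \cref{lem:conditionalAccuracyGuarantee} once $f^\star$ is feasible, and feasibility of $f^\star$ is certified exactly as you describe, via the per-group flow decomposition $(\eta^1_\ell,\eta^2_\ell)$ that reduces to Program~\eqref{eq:...} (this is \cref{lem:new_lemma}). But your Step 2 has a genuine gap. You want to show $\Omega_\cD(f)\geq s\cdot\Omega(f,\hS)$ for \emph{every} feasible $f$, by rewriting $\Pr_{\hD}[\cE(f),\cE^\prime(f),\hZ=\ell]$ and $\Pr_{\hD}[\cE^\prime(f),\hZ=\ell]$ as ``their $\cD$-counterparts plus signed corrections'' matching Program~\eqref{eq:...}. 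This fails for two reasons. First, the constants $\lambda_\ell$ and $\gamma_\ell$ appearing in Program~\eqref{eq:...} are the group-conditional masses of $f^\star$ specifically; for a general feasible $f$ the $\cD$-counterparts $\Pr_\cD[\cE(f),\cE^\prime(f),Z=\ell]$ and $\Pr_\cD[\cE^\prime(f),Z=\ell]$ are unknown and need not equal $\lambda_\ell,\gamma_\ell$ --- the per-group constraint of \extended{} only lower-bounds the \emph{perturbed}-data masses. So the worst-case program governing a general feasible $f$ is not Program~\eqref{eq:...}, and its optimal value need not be $\geq s$. Second, there is a direction mismatch: $s$ lower-bounds the ratio $\Omega(f^\star,\hS)/\Omega_\cD(f^\star)$ (observed fairness cannot drop too far below true fairness), which is what feasibility of $f^\star$ needs; transferring fairness from $\hS$ back to $\cD$ for the output classifier requires the reciprocal direction, and the paper explicitly notes that \cref{eq:one_sided_stability} is one-sided and does not give $s$-stability.

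The fix is the paper's route: for feasible classifiers, do not invoke Program~\eqref{eq:...} at all. Since $\lambda_\ell\geq\lambda$ for all $\ell$, any feasible $f$ satisfies $\Pr_{\hD}[\cE(f),\cE^\prime(f),\hZ=\ell]\geq\lambda-\eta-\Delta$, so \cref{lem:suff_cond_for_stable_app} gives the coarse stability factor $\approx 1-4\eta/(\lambda-2\eta)$ in the direction you need. Combining this single stability factor with the constraint $\Omega(f,\hS)\geq\tau\cdot s$ (i.e., lower-bounding $\Omega(f,\hS)$ by $\tau s$ in \cref{eq:used_to_prove_guarantee_for_extension}) yields $\Omega_\cD(f)\geq\tau s\bigl(1-\tfrac{4\eta}{\lambda-2\eta}\bigr)\geq\tau-\tau(1-s)-\tfrac{4\eta\tau}{\lambda-2\eta}$, which is the stated guarantee. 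The tighter quantity $s$ thus enters the final bound only through the constraint threshold, and the flow-decomposition argument you sketched is needed only once, one-sidedly, for $f^\star$.
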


\subsubsection{Proof of \cref{thm:guarantees_for_extended}}
The proof of \cref{thm:guarantees_for_extended} is similar to the proof of \cref{thm:main_result}.
Instead of repeating the entire proof, we highlight the differences from the proof of \cref{thm:main_result}.

\begin{proof}[Proof of \cref{thm:guarantees_for_extended}]
  The proof of \cref{thm:main_result} has three main steps:

  \begin{enumerate}[itemsep=\itemsepINTERNAL,leftmargin=\leftmarginINTERNAL]
    \item	Step 1 proves that if $f^\star$ is feasible for \errtolerant{},
    then the $f_{\rm ET}$ has accuracy $2\eta$-close to the accuracy of $f^\star$ (\cref{lem:conditionalAccuracyGuarantee}).
    \item Step 2 proves that any $f\in \cF$ feasible for \errtolerant{} satisfies the fairness guarantee in \cref{thm:main_result} with high probability (\cref{lem:approx_feasible}).
    The main substep in the proof of \cref{lem:approx_feasible} is to show that any $f\in \cF$ feasible for \errtolerant{} is
    $\sinparen{\frac{1- (\eta+\Delta)/\lambda}  {  1 + (\eta+\Delta)/\lambda}}^2$-stable.
    \item Step 3 proves that, with high probability, $f^\star$ is feasible for \errtolerant{} (\cref{coro:optf_feasible}).
    Combining this with Step 1 shows that $f_{\rm ET}$ satisfies the accuracy guarantee in \cref{thm:main_result} with high probability.
  \end{enumerate}

  \noindent The proof of Step 1 (\cref{lem:conditionalAccuracyGuarantee}) only depends on the perturbation model and the objective of \errtolerant{}.
  Thus, it also generalizes to \extended{}.
  The proof of Step 2 (\cref{lem:approx_feasible}), follows because any $f\in \cF$ feasible for \errtolerant{} satisfies the following inequality:
  \begin{align*}
    \Pr\nolimits_{\hD}\insquare{\cE(f),\cE^\prime(f), \hZ=\ell} \geq \lambda - \eta - \Delta.\yesnum\label{eq:new_eq}
  \end{align*}
  Since for all $\ell\in [p]$, $\lambda_\ell\geq \lambda$,
  any $f\in \cF$ feasible for \extended{} also satisfies \cref{eq:new_eq}.
  Thus, \cref{lem:approx_feasible} also holds for \extended{}.
  This shows that any $f\in \cF$ feasible for \extended{} satisfies
  \begin{align*}
    \Omega_{\cD}(f_{\rm ET}) \geq \tau- \frac{8\eta\tau}{\lambda - 2\eta}.
  \end{align*}
  However, \cref{thm:guarantees_for_extended} has a tighter fairness guarantee.\footnote{Because for all $\ell\in [p]$, $\lambda_\ell\geq \lambda$ and $\gamma_\ell\geq \lambda$, it can be shown that $s\geq \sinparen{\frac{1- (\eta+\Delta)/\lambda}  {  1 + (\eta+\Delta)/\lambda}}^2$. Thus, the guarantee in \cref{thm:guarantees_for_extended} is stronger than the guarantee in \cref{thm:main_result}.}
  The tighter guarantee follows by lower bounding $\Omega(f,\hS)$ by $\tau\cdot s$ in \cref{eq:used_to_prove_guarantee_for_extension} in the proof of \cref{lem:approx_feasible}.

  The main difference in the proofs of \cref{thm:main_result} and \cref{thm:guarantees_for_extended} is in Step 3.
  Using \cref{asmp:1} and \cref{coro:whp_bounding_power}, we can show that $f^\star$ satisfies: $$\Pr\nolimits_{\hD}\insquare{\cE(f),\cE^\prime(f), \hZ=\ell} \geq \lambda_\ell - \eta - \Delta.$$
  It remains to show that $f^\star$ satisfies the fairness constraint:
  $$\Omega_{}(f, \hS) \geq \tau\cdot s.$$

  \noindent In the proof of \cref{coro:optf_feasible} we show a weaker result: $f^\star$ satisfies the fairness constraint:
  $$\Omega_{}(f, \hS) \geq \tau\cdot \inparen{\frac{1- (\eta+\Delta)/\lambda}  {  1 + (\eta+\Delta)/\lambda}}^2.$$
  This follows because $f^\star$ is $\sinparen{\frac{1- (\eta+\Delta)/\lambda}  {  1 + (\eta+\Delta)/\lambda}}^2$-stable.
  Instead, here, we prove that with high probability $f^\star$ satisfies the following inequality
  \begin{align*}
    \frac{\Omega(f^\star, \hS)}{\Omega_\cD(f^\star)}\geq s.\yesnum\label{eq:one_sided_stability}
  \end{align*}
  (\cref{eq:one_sided_stability} does not imply to $s$-stability because $s$-stability also requires an upper bound of $\nfrac1s$.)
  This suffices to show that $f^\star$ is feasible for Program~\ref{prog:errtolerant_extnd} with high probability;
  by our discussion so far, it also proves \cref{thm:guarantees_for_extended}.
\end{proof}

\noindent It remains to prove \cref{eq:one_sided_stability}.
Formally, we prove \cref{lem:new_lemma}.

\begin{lemma}\label{lem:new_lemma}
  Let $s$ be the optimal value of Program~\eqref{eq:...}, then with probability at least $1-\delta_0$ %
  \begin{align*}
    \frac{\Omega(f^\star, \hS)}{\Omega_\cD(f^\star)}\geq s,\yesnum\label{eq:one_sided_stability_2}
  \end{align*}
  where $f^\star$ is an optimal solution of Program~\eqref{prog:target_fair} and $\delta_0$ is as defined in \cref{eq:define_params}.
\end{lemma}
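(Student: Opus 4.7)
The plan is to establish, for every pair $(\ell,k)\in [p]^2$, a pairwise inequality of the form
\begin{equation*}
\frac{q_\ell(f^\star,\hS)/q_k(f^\star,\hS)}{q_\ell(f^\star,\cD)/q_k(f^\star,\cD)} \;\geq\; \frac{1-\eta_\ell/\lambda_\ell}{1+(\eta_k-\eta_\ell)/\gamma_\ell}\cdot\frac{1+(\eta_\ell-\eta_k)/\gamma_k}{1+\eta_\ell/\lambda_k}
\end{equation*}
for some non-negative $\eta_1,\ldots,\eta_p$ with $\sum_\ell \eta_\ell\le \eta+\Delta$. Since the right-hand side is exactly the integrand of Program~\eqref{eq:...} and $s$ is its minimum, taking the minimum over $(\ell,k)$ on the left gives $\Omega(f^\star,\hS)\ge s\cdot \Omega_\cD(f^\star)$, as required.

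First, I will apply \cref{coro:whp_bounding_power} to each of the bounded functions $(\tilde y,y,z)\mapsto \mathds{I}[\cE(\tilde y),\cE^\prime(\tilde y,y),z=\ell]$ and $(\tilde y,y,z)\mapsto \mathds{I}[\cE^\prime(\tilde y,y),z=\ell]$ and union bound over $\ell\in [p]$. With probability at least $1-\delta_0$ (absorbing the factor of $p$ into $N$), the quantities $\wh{\mu}_\ell := \Pr_{\hD}[\cE(f^\star),\cE^\prime(f^\star),\hZ=\ell]$ and $\wh{\nu}_\ell := \Pr_{\hD}[\cE^\prime(f^\star),\hZ=\ell]$ each differ from their counterparts on the empirical distribution $D$ of the unperturbed sample by at most the generalization slack $\Delta$. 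The key structural fact used here is that, because classifiers in $\cF$ do not use the protected attribute, the events $\cE(f^\star)$ and $\cE^\prime(f^\star)$ depend only on $(X,Y)$ and are invariant under relabeling $Z\mapsto\hZ$; hence the only difference between $D$ and $\hD$ is that the group labels of at most $\eta N$ samples have been swapped by the adversary.

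Next, I will parameterize the adversary. For each $\ell$, let $\eta_\ell$ denote the fraction of samples satisfying $\cE(f^\star)\land \cE^\prime(f^\star)$ whose true group is $\ell$ but whose perturbed group $\hZ$ is not $\ell$; summing over $\ell$ gives at most $\eta+\Delta$. For any pair $(\ell,k)$, the worst-case adversary (i)~routes all $\eta_\ell$ outgoing units of $\cE\land\cE^\prime$ mass from group $\ell$ into group $k$, giving $\wh{\mu}_\ell = \lambda_\ell-\eta_\ell$ and $\wh{\mu}_k = \lambda_k+\eta_\ell$, and (ii)~simultaneously routes $\eta_k$ units of $\cE^\prime\land\neg\cE$ mass from $k$ into $\ell$, giving $\wh{\nu}_\ell = \gamma_\ell-\eta_\ell+\eta_k$ and $\wh{\nu}_k = \gamma_k+\eta_\ell-\eta_k$. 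Substituting these into $q_\ell(f^\star,\hS)/q_k(f^\star,\hS) = (\wh{\mu}_\ell\wh{\nu}_k)/(\wh{\nu}_\ell\wh{\mu}_k)$ and dividing by the analogous ratio on $\cD$, namely $(\lambda_\ell\gamma_k)/(\gamma_\ell\lambda_k)$, yields the displayed pairwise bound.

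The main obstacle will be the careful bookkeeping of mass flows across event types---distinguishing $\cE\land\cE^\prime$ transfers (which affect both $\wh{\mu}$ and $\wh{\nu}$) from $\cE^\prime\land\neg\cE$ transfers (which affect only $\wh{\nu}$)---while ensuring that the \emph{same} global variables $\eta_1,\ldots,\eta_p$ with the budget $\sum_\ell\eta_\ell\le \eta+\Delta$ suffice uniformly for every pair $(\ell,k)$. This is compatible with the structure of Program~\eqref{eq:...}, which minimizes over $(\eta_\ell)$ before minimizing over $(\ell,k)$: any admissible $(\eta_\ell)$ arising from the adversary is a feasible point of the inner minimization, so the outer pairwise minimum is dominated by ours. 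Once these bounds are in place, the claim follows by taking the minimums over $(\ell,k)$ and invoking the definition of $s$, with the stated probability $1-\delta_0$ coming from the union bound in Step~1.
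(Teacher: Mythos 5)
Your overall strategy matches the paper's: parameterize the adversary's mass movements, show the induced perturbation variables are feasible for Program~\eqref{eq:...}, and conclude via the definition of $s$. The outer logic is also sound — exhibiting one feasible point $(\eta_\ell)$ and lower-bounding each pairwise ratio by $g(\eta,\ell,k)\geq s$ does yield $\Omega(f^\star,\hS)\geq s\cdot\Omega_\cD(f^\star)$ after taking the minimum over pairs.

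However, there is a genuine gap at the central step, and you have flagged it yourself as ``the main obstacle'' without resolving it. You assert that for each pair $(\ell,k)$ the worst-case adversary routes \emph{all} outgoing $\cE\land\cE^\prime$ mass from group $\ell$ into group $k$ and a matching amount of $\cE^\prime\land\neg\cE$ mass from $k$ into $\ell$, which yields the closed forms $\wh\mu_\ell=\lambda_\ell-\eta_\ell$, $\wh\mu_k=\lambda_k+\eta_\ell$, $\wh\nu_\ell=\gamma_\ell-\eta_\ell+\eta_k$, $\wh\nu_k=\gamma_k+\eta_\ell-\eta_k$. These identities do not hold for an arbitrary $\eta$-Hamming adversary: a general adversary can spread the outflow from $\ell$ across several groups, inject mass of either event type \emph{into} $\ell$ or $k$ from third groups, and move both event types in both directions independently. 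Proving that no such adversary can make the ratio smaller than the two-group canonical form is precisely the technical content of the paper's proof: it introduces \emph{two} families of variables $\eta^1_\ell,\eta^2_\ell$ (signed changes in the $\cE\land\cE^\prime$ and $\neg\cE\land\cE^\prime$ masses), records the constraints $\sum_\ell\eta^1_\ell=\sum_\ell\eta^2_\ell=0$, $\sum_\ell|\eta^1_\ell|+|\eta^2_\ell|\leq 2\eta$, and $\eta^2_\ell\geq-(\gamma_\ell-\lambda_\ell)$, and then carries out a chain of optimization reductions (Programs~\eqref{prog:1}--\eqref{prog:6}) showing it is optimal to zero out all variables outside $\{\ell,k\}$ and to couple $\eta^1_\ell=-\eta^1_k$, $\eta^2_\ell=-\eta^2_k$, which finally collapses to the single family $(\eta_\ell)$ with budget $\sum_\ell\eta_\ell\leq\eta$ appearing in Program~\eqref{eq:...}. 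Your proposal conflates the two families from the outset (the same symbol $\eta_k$ denotes both the $\cE\land\cE^\prime$ outflow from $k$ and the $\cE^\prime\land\neg\cE$ flow from $k$ to $\ell$) and never performs this reduction, so the lemma is established only for the one specific adversary you describe, not for all adversaries in $\cA(\eta)$.

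Your Step~1 (controlling the empirical-versus-population discrepancy via \cref{coro:whp_bounding_power} and a union bound over $\ell$) is fine and is, if anything, slightly more explicit than the paper's treatment of that issue.
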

\begin{proof}
  Let the unperturbed samples be $S\coloneqq \inbrace{(x_i,y_i,z_i)}_{i\in [N]}$ and the perturbs samples be $\hS\coloneqq \inbrace{(x_i,y_i,\hz_i)}_{i\in [N]}$.
  We will prove that $$\frac{\Omega(f,\hS)}{\Omega(f,S)}\geq s.$$
  Then the result follows as for $N=\poly(d, \nfrac{1}{\Delta},  \log(\nfrac{p}\delta_0))$ iid samples from $\cD$,
  it holds that with probability at least $1-\delta_0$,
  $$\frac{\Omega(f,S)}{\Omega_\cD(f)} \geq 1-\Delta.$$

  \noindent Let $\cE$ and $\cE^\prime$ be the events defining the fairness metric $\Omega$ (\cref{def:performance_metrics}).
  For each $\ell\in [p]$, we define
  \begin{align*}
    \eta^1_\ell
    &\coloneqq
    \Pr\nolimits_{\hD}[\cE(f^\star(X)), \cE^\prime(f^\star(X)), \hZ=\ell]
    -
    \Pr\nolimits_D [\cE(f^\star(X)), \cE^\prime(f^\star(X)), Z=\ell],\\
    \eta^2_\ell
    &\coloneqq
    \Pr\nolimits_{\hD}[\lnot\cE(f^\star(X)), \cE^\prime(f^\star(X)), \hZ=\ell]
    -
    \Pr\nolimits_D [\lnot\cE(f^\star(X)), \cE^\prime(f^\star(X)), Z=\ell].
  \end{align*}
  Substituting the values of $\lambda_\ell$ and $\gamma_\ell$, we get that\footnote{Here, we implicitly use the fact that $f\in \cF$ does not use the protected attribute $Z$ for prediction.}
  \begin{align*}
    \eta^1_\ell &\coloneqq
    \Pr\nolimits_{\hD}[\cE(f^\star(X)), \cE^\prime(f^\star(X)), \hZ=\ell] - \lambda_\ell,
    \yesnum\label{eq:def_eta_1}\\
    \eta^2_\ell &\coloneqq
    \Pr\nolimits_{\hD}[\lnot\cE(f^\star(X)), \cE^\prime(f^\star(X)), \hZ=\ell] - \inparen{\gamma_\ell - \lambda_\ell}.
    \yesnum\label{eq:def_eta_2}
  \end{align*}
  Intuitively, $\eta^1_\ell$ is the number of samples with $\mathds{I}[\cE(f^\star(X)), \cE^\prime(f^\star(X))]=1$ added to protected group $Z=\ell$ and
  $\eta^2_\ell$ is the number of samples with $\mathds{I}[\lnot\cE(f^\star(X)), \cE^\prime(f^\star(X))]=1$ added to protected group $Z=\ell$.
  (Note that the event $\cE(f^\star(X))$ and $\lnot\cE(f^\star(X))$ are disjoint.)

  The values $\sinbrace{\eta^1_\ell,\eta^2_\ell}_{\ell}$ satisfy several conditions:
  Because the total number of samples added or removed from all protected groups is 0, it holds that $$\sum\nolimits_{\ell\in [p]} {\eta^1_\ell} + {\eta^2_\ell} = 0.$$
  Moreover, because $f$ does not use $Z$ for prediction, perturbing $Z$ does not change the value of $\mathds{I}[\cE(f^\star(X))$, $\cE^\prime(f^\star(X))]$, and hence, it holds that
  \begin{align*}
    \sum\nolimits_{\ell\in [p]} {\eta^1_\ell}= 0\quad\text{and}\quad\sum\nolimits_{\ell\in [p]} {\eta^2_\ell}= 0.\yesnum\label{eq:sum0bound}
  \end{align*}
  Next, because any $\eta$-\Ham{} adversary perturbs at most $\eta$-fraction of the samples, we have that
  \begin{align*}
    \sum\nolimits_{\ell\in [p]} \sabs{\eta^1_\ell} + \sabs{\eta^2_\ell} \leq 2\eta.\label{eq:2etabound}\yesnum
  \end{align*}
  Finally, as the probability in the right-hand side of \cref{eq:def_eta_2} is nonnegative, it follows that for all $\ell\in [p]$
  \begin{align*}
    \eta^2_\ell \geq - \inparen{\gamma_\ell - \lambda_\ell}.\yesnum\label{eq:bound_on_eta}
  \end{align*}
  Now we are ready to prove the result
  \begin{align*}
    \Omega(f^\star, \hS) \coloneqq \min_{\ell,k\in[p]}\frac{\Pr\nolimits_{\hD}[\cE^\prime(f^\star(X)), \wh{Z}=\ell]}{\Pr\nolimits_{\hD}[\cE(f^\star(X)), \cE^\prime(f^\star(X)), \wh{Z}=\ell]}
    \cdot  \frac{\Pr\nolimits_{\hD}[\cE(f^\star(X)), \cE^\prime(f^\star(X)), \wh{Z}=k]}{\Pr\nolimits_{\hD}[\cE^\prime(f^\star(X)), \wh{Z}=k]}.\yesnum\label{eq:tmp}
  \end{align*}
  Fix any $\ell,k\in [p]$.
  From \cref{eq:def_eta_1,eq:def_eta_2} we have that
  \begin{align*}
    {\frac{\Pr\nolimits_{\hD}[\cE^\prime(f^\star(X)), \wh{Z}=\ell]}{\Pr\nolimits_{\hD}[\cE(f^\star(X)), \cE^\prime(f^\star(X)), \wh{Z}=\ell]}
    \ifconf\else\cdot\fi
    \frac{\Pr\nolimits_{\hD}[\cE(f^\star(X)), \cE^\prime(f^\star(X)), \wh{Z}=k]}{\Pr\nolimits_{\hD}[\cE^\prime(f^\star(X)), \wh{Z}=k]}}
    =
    {
    \ifconf\textstyle\else\fi \frac{\lambda_\ell+\eta^1_\ell}{\gamma_\ell+\eta^1_\ell+\eta^2_\ell}\cdot \frac{\gamma_k+\eta^1_k+\eta^2_k}{\lambda_k+\eta^1_k}.}
  \end{align*}
  Our goal is to lower bound the left-hand side of the above equation because it implies a lower bound on $\Omega(f^\star, \hS)$ by \cref{eq:tmp}.
  Towards this, given vectors $\eta^1,\eta^2\in \R^p$, define the following objective:
  \begin{align*}
    {\rm Obj}(\eta^1,\eta^2)\coloneqq \frac{\lambda_\ell+\eta^1_\ell}{\gamma_\ell+\eta^1_\ell+\eta^2_\ell}\cdot \frac{\gamma_k+\eta^1_k+\eta^2_k}{\lambda_k+\eta^1_k}.
  \end{align*}
  We would like to lower bound ${\rm Obj}(\eta^1,\eta^2)$ subject to  \cref{eq:sum0bound,eq:2etabound,eq:bound_on_eta},
  i.e., we would like to lower bound the value of the following program
  \begin{align*}
    &{\rm Obj}(\eta^1,\eta^2),\yesnum\label{prog:1}\\
    \st,\quad & {\rm eqs.}~\eqref{eq:sum0bound},\ \eqref{eq:2etabound} \text{ and } \eqref{eq:bound_on_eta} {\rm\ hold}
  \end{align*}
  We claim that for all $i\not\in\inbrace{\ell,k}$ it is optimal to set $\eta^1_i=0$ in Program~\eqref{prog:1}.
  To see this, note that given any solution $\eta^1,\eta^2\in \R^p$ to Program~\eqref{prog:1}, with $\alpha\coloneqq \sum_{i\neq \ell,k}\eta^1_i$,
  we can construct another solution $\delta^1,\delta^2\in \R^p$ that has a better objective while satisfying $\delta^1_i=0$ for all $i\not\in\inbrace{\ell,k}$:
  For all $i\not\in \inbrace{\ell,k}$,
  set $\delta^1_i\coloneqq 0$.
  Next, if $\alpha>0$, set $\delta^1_k\coloneqq \eta^1_k+\alpha$, and otherwise, set $\delta^1_\ell\coloneqq \eta^1_\ell+\alpha$.
  Finally, let all other variables remain unchanged.
  (Under Equation~\eqref{eq:bound_on_eta}, $(\delta^1,\delta^2)$ has a smaller objective than $(\eta^1,\eta^2)$.)
  Thus, we get that\\[1mm]
  \begin{align*}
    \min_{\eta^1,\eta^2\in \R^p}&{\rm Obj}(\eta^1,\eta^2),\hspace{10mm}\white{.}
    &=\qquad
    \min_{\eta^1,\eta^2\in \R^p}&{\rm Obj}(\eta^1,\eta^2),\yesnum\label{prog:2}\\
    \st,\quad& {\rm eqs.}~\eqref{eq:sum0bound},\ \eqref{eq:2etabound} \text{ and } \eqref{eq:bound_on_eta} {\rm\ hold}
    &\quad\ \ \st,\quad& {\rm eqs.}~\eqref{eq:sum0bound},\ \eqref{eq:2etabound} \text{ and } \eqref{eq:bound_on_eta} {\rm\ hold}\\
    & & & \text{for all $i\not\in \inbrace{\ell,k}$, $\eta^1_i=0$}
  \end{align*}
  Next, dropping \cref{eq:bound_on_eta} from the constraint, which only improves the objective, we get that:
  \begin{align*}
    \min_{\eta^1,\eta^2\in \R^p} &{\rm Obj}(\eta^1,\eta^2),
    &\geq\qquad
    \min_{\eta^1,\eta^2\in \R^p} &{\rm Obj}(\eta^1,\eta^2),\yesnum\label{prog:3}\\
    \st,\quad& {\rm eqs.}~\eqref{eq:sum0bound},\ \eqref{eq:2etabound} \text{ and } \eqref{eq:bound_on_eta} {\rm\ hold}\hspace{-10mm}
    &\quad \st,\quad& {{\rm eqs.}~\eqref{eq:sum0bound} \text{ and } \eqref{eq:2etabound}} {\rm\ hold}\\
    &\text{for all $i\not\in \inbrace{\ell,k}$, $\eta^1_i=0$} & & \text{for all $i\not\in \inbrace{\ell,k}$, $\eta^1_i=0$}.
  \end{align*}
  We claim that for all $i\not\in\inbrace{\ell,k}$ it is optimal to set $\eta^2_i\coloneqq 0$ in the program in the right-hand side of \cref{prog:3}.
  This follows by a similar construction used to prove \cref{prog:2}.
  To see this, note that given any solution $\eta^1,\eta^2\in \R^p$ to the program in the right-hand side of \cref{prog:3}, with $\alpha\coloneqq \sum_{i\neq \ell,k}\eta^2_i$,
  there is another solution $\delta^1,\delta^2\in \R^p$ that has a better objective value while satisfying $\delta^2_i=0$ for all $i\not\in\inbrace{\ell,k}$:
  For all $i\not\in \inbrace{\ell,k}$, set $\delta^1_i\coloneqq 0$.
  Next, if $\alpha>0$, set $\delta^2_\ell\coloneqq \eta^2_\ell+\alpha$, otherwise set $\delta^2_k\coloneqq \eta^2_k+\alpha$.
  Finally, let all other variables remain unchanged.
  ($(\delta^1,\delta^2)$ always has a smaller objective than $(\eta^1, \eta^2)$.)
  Thus, we have
  \begin{align*}
    \min_{\eta^1,\eta^2\in \R^p} &{\rm Obj}(\eta^1,\eta^2),\hspace{10mm}\white{.}
    &=\qquad
    \min_{\eta^1,\eta^2\in \R^p} &{\rm Obj}(\eta^1,\eta^2),\yesnum\label{prog:4}\\
    \st,\quad& {\rm eqs.}~\eqref{eq:sum0bound} \text{ and } \eqref{eq:2etabound} {\rm\ hold}
    &\quad\ \ \st,\quad& {\rm eqs.}~\eqref{eq:sum0bound} \text{ and } \eqref{eq:2etabound} {\rm\ hold}\\
    & \text{for all $i\not\in \inbrace{\ell,k}$, $\eta^1_i=0$} & & \text{for all $i\not\in \inbrace{\ell,k}$, $\eta^1_i=0$, $\eta^2_i=0$}.
  \end{align*}
  Rewriting the program in the right-hand side of \cref{prog:4}, by dropping the always 0 variables, we get
  \begin{align*}
    \hspace{-5mm}\min_{\eta^1,\eta^2\in \R^p} &{\rm Obj}(\eta^1,\eta^2),\hspace{10mm}\white{.}
    &=\qquad\hspace{-5mm}
    \min_{\eta^1_\ell,\eta^1_k,\eta^2_\ell,\eta^2_k\in \R}\ \ & \frac{\lambda_\ell+\eta^1_\ell}{\gamma_\ell+\eta^1_\ell+\eta^2_\ell}\cdot \frac{\gamma_k+\eta^1_k+\eta^2_k}{\lambda_k+\eta^1_k},\yesnum\label{prog:5}\\
    \st,\quad& {\rm eqs.}~\eqref{eq:sum0bound} \text{ and } \eqref{eq:2etabound} {\rm\ hold}
    &\quad\ \ \st,\quad& \eta^1_\ell=-\eta^1_k\text{ and }\eta^2_\ell=-\eta^2_k\\
    & \text{for all $i\not\in \inbrace{\ell,k}$, $\eta^1_i=0$, $\eta^2_i=0$} & &
    \abs{\eta^1_\ell}+\abs{\eta^2_\ell}\leq \eta
  \end{align*}
  Rewriting the program in the right-hand side of \cref{prog:5}, we get
  \begin{align*}
    \min_{\eta^1_\ell,\eta^1_k,\eta^2_\ell,\eta^2_k\in \R}\ \ & \frac{\lambda_\ell+\eta^1_\ell}{\gamma_\ell+\eta^1_\ell+\eta^2_\ell}\cdot \frac{\gamma_k+\eta^1_k+\eta^2_k}{\lambda_k+\eta^1_k},
    &=\quad
    \min_{\eta_\ell,\eta_k\in \R} & \ \ \frac{\lambda_\ell-\eta_\ell}{\gamma_\ell-\eta_\ell+\eta_k}\cdot \frac{\gamma_k+\eta_\ell-\eta_k}{\lambda_k+\eta_\ell},\yesnum\label{prog:6}\\
    \st,\quad& \eta^1_\ell=-\eta^1_k\text{ and }\eta^2_\ell=-\eta^2_k,
    &\quad\ \ \st,\quad& \abs{\eta^1_\ell}+\abs{\eta^2_\ell}\leq \eta\\
    & \abs{\eta^1_\ell}+\abs{\eta^2_\ell}\leq \eta & &
  \end{align*}
  In the program in the right-hand side of \cref{prog:6}, the optimal $\eta_\ell,\eta_k$ satisfy: $\eta_\ell,\eta_k\geq 0.$
  This simplifies the constraint
  $$\sabs{\eta^1_\ell}+\sabs{\eta^2_\ell}\leq \eta\quad\text{to}\quad{\eta^1_\ell}+{\eta^2_\ell}\leq \eta.$$
  The sequence of equations, \cref{prog:2,prog:3,prog:4,prog:5,prog:6}, implies that
  \begin{align*}
    \Omega(f^\star, \hS)\ \
    &\geq\ \
    \min_{\ell,k \in [p]}\min_{\eta_\ell,\eta_k\geq 0} \frac{\lambda_\ell-\eta_\ell}{\gamma_\ell-\eta_\ell+\eta_k}\cdot \frac{\gamma_k+\eta_\ell-\eta_k}{\lambda_k+\eta_\ell},\quad \st,\quad  {\eta^1_\ell}+{\eta^2_\ell}\leq \eta \\
    &=\ \
    \min_{\ell,k \in [p]} \frac{\lambda_\ell\cdot \gamma_k}{\gamma_\ell\cdot \lambda_k}\cdot
    \min_{\eta_\ell,\eta_k\geq 0}
    \frac{1-\sfrac{\eta_\ell}{\lambda_\ell}}{1-\sfrac{(\eta_\ell-\eta_k)}{\gamma_\ell}}\cdot \frac{1+\sfrac{(\eta_\ell-\eta_k)}{\gamma_k}}{1+\sfrac{\eta_\ell}{\lambda_k}},\ \ \st,\ \  {\eta^1_\ell}+{\eta^2_\ell}\leq \eta \\
    &\geq\ \
    \inparen{\min_{\ell,k \in [p]} \frac{\lambda_\ell\cdot \gamma_k}{\gamma_\ell\cdot \lambda_k}}\cdot
    \inparen{\min_{\ell,k \in [p]}\min_{\eta_\ell,\eta_k\geq 0}
    \frac{1-\sfrac{\eta_\ell}{\lambda_\ell}}{1-\sfrac{(\eta_\ell-\eta_k)}{\gamma_\ell}}\cdot \frac{1+\sfrac{(\eta_\ell-\eta_k)}{\gamma_k}}{1+\sfrac{\eta_\ell}{\lambda_k}},\ \ \st,\ \  {\eta^1_\ell}+{\eta^2_\ell}\leq \eta}\\
    &=\ \
    \Omega(f^\star,S)\cdot
    \min_{\ell,k \in [p]}\min_{\eta_\ell,\eta_k\geq 0}
    \frac{1-\sfrac{\eta_\ell}{\lambda_\ell}}{1-\sfrac{(\eta_\ell-\eta_k)}{\gamma_\ell}}\cdot \frac{1+\sfrac{(\eta_\ell-\eta_k)}{\gamma_k}}{1+\sfrac{\eta_\ell}{\lambda_k}},\ \ \st,\ \  {\eta^1_\ell}+{\eta^2_\ell}\leq \eta \tag{By the definition of $\lambda_\ell$ and $\gamma_\ell$}\\
    &=\ \
    \Omega(f^\star,S)\cdot s.\tag{By the definition of $s$}
  \end{align*}
\end{proof}

\section{Limitations and Conclusion}\label{sec:conclusion}

This work extends fair classification to real-world settings where perturbations in the protected attributes may be correlated or affect arbitrary subsets of samples.
We consider the $\eta$-Hamming model and give a framework that outputs classifiers with provable guarantees on both fairness and accuracy;
this framework works for categorical protected attributes and the class of linear-fractional fairness metrics.
We show near-tightness of our framework's guarantee and extend it to the Nasty Sample Noise model, which can perturb both labels and features.
Empirically, classifiers produced by our framework achieve high fairness at a small cost to accuracy and outperform existing approaches.

Compared to existing frameworks for fair classification with stochastic perturbations, our framework requires less information about the perturbations.
That said, in a few applications, e.g.,  the randomized response procedure~\cite{warner}, where the perturbations are  independent across samples and identically distributed according to a {\em known} distribution, frameworks for fair classification with stochastic perturbations can perform better.
Further, like existing frameworks, our framework's efficacy will depend on an appropriate choice of parameters;
e.g., an overly conservative $\lambda$ can decrease accuracy and an optimistic $\lambda$ can decrease fairness.
A careful assessment both pre- and post-deployment would be important to avoid negative social implications in a misguided attempt to do good~\cite{liu2018delayed}.

Finally, we note that discrimination is a systematic problem and our work only addresses one part of it; this work would be effective as one piece of a broader approach to mitigate and rectify biases.

\section*{Acknowledgements}
This research was supported in part by an {NSF CAREER Award (IIS-2045951)}, a J.P. Morgan Faculty Award, and an AWS MLRA Award.

\newpage

\bibliographystyle{plain}
\bibliography{bib-v1.bib}

\appendix

\newpage

\section{Further Comparison to Related Work}

\subsection{{Other Related Work}}\label{sec:other_related_work}

\paragraph{Fair classification without perturbations.}
A large body of work studies fair classification.
Here, several works frame fair classification as a constrained optimization program and develop algorithms to solve these programs~\cite{zhang2018mitigating,ZafarVGG17,zafar17,menon2018the,goel2018non,AgarwalBD0W18,celis2019classification}.
A different approach is to alter the decision boundary of a given classifier to improve its fairness~\cite{fish2016confidence,hardt2016equality,goh2016satisfying,pleiss2017on,Woodworth2017learning,DworkIKL18} (possibly with different alterations for different protected groups).
Furthermore, some works preprocesses the training data to ``correct'' for its bias~\cite{kamiran2009classifying,luong2011k,kamiran2012data,zemel13,feldman2015certifying,krasanakis2018adaptive}.
However, these works require the protected attributes in the training samples to be known exactly, whereas in this paper we study the setting where fraction of the protected attributes are arbitrarily corrupted.

\paragraph{Missing protected attributes.}
Some works have studied fair classification in the absence of protected attributes--using auxiliary data.
For example, \cite{Gupta2018proxy} use other variables as proxies for protected attributes and \cite{CostonRWVSMC19}
augment their dataset with ``related data'' (that includes protect attributes) to control fairness. %
In the absence of auxiliary data, \cite{HashimotoSNL18} use distributionally robust optimization to minimize the maximum empirical risk across the protected groups, and
{\cite{LahotiBCLPT0C20} use a neural network to identify ``potential'' protected groups.}
However, these approaches do not offer provable guarantees on accuracy (with respect to $f^\star$).
In contrast, our approach uses perturbed protected attributes and comes with provable guarantees on fairness and accuracy (with respect to $f^\star$).

\paragraph{Stochastic perturbations in labels.}
\cite{blum2020recovering,wang2020label} study fair classification with perturbations in the labels:
\cite{blum2020recovering} consider a model where perturbations arise due to bias in the training samples.
They show that, under some models of bias, adding fairness constraints can improve the accuracy of the classifier on the unbiased data.
\cite{wang2020label} consider a model where the labels in each protected group are perturbed to a different value independently with a known (group-dependent) probability;
they give a framework for a non-binary protected attribute that provably outputs a classifier with near-optimal accuracy that nearly satisfies the fair constraint with respect to equalized odds, true-positive rate, or false-positive rate fairness constraints.
In contrast, we focus on adversarial perturbations in the features, and our framework can be extended to adversarial perturbations in both features and labels (see \cref{sec:main_result:for_stronger_model}).
Finally, our framework works for a large class of linear-fractional fairness metrics (which include true-positive rate and false-positive rate fairness constraints, and can ensure equalized odds fairness).

\subsection{Performance of Prior Frameworks under the \texorpdfstring{$\eta$}{eta}-\Ham{} Model}\label{sec:prior_work_comparison}
In this section, we present examples showing that prior frameworks for fair classification can have low accuracy and fairness compared to our framework under the $\eta$-\Ham{} model.

\subsubsection{{\texorpdfstring{\cite{bshouty2002pac}}{[11]}'s Framework Can Output Classifiers with Low Statistical Rate}}\label{sec:bshouty_poor}
In this section, for any $\delta\in (0,\nfrac14)$,
we give an example (\cref{example:bshouty_poor}) where with high probability
\cite{bshouty2002pac}'s framework outputs a classifier $f_{\rm OPT}$ that has perfect accuracy and 0 statistical rate.
On the same example, an optimal solution $f_{\rm ET}$ of \errtolerant{} has accuracy $1-\delta$  and perfect statistical rate $1$.
\begin{example}\label{example:bshouty_poor}
  Fix $\cX$ to be any set with at least two distinct points, say $x_A$ and $x_B$.
  Let $\cF$ be any hypothesis class that shatters the set $\inbrace{x_A,x_B}\times [2]\subseteq \cX\times [p]$.
  Define the distribution $\cD$ as follows
  \begin{align*}
    \Pr\nolimits_\cD[X=x,Y=y,Z=z]\coloneqq \begin{cases}
    \nfrac13-\nfrac\delta3 &\text{if } x=x_A, y=1, z=1,\\
    \nfrac13-\nfrac\delta3 &\text{if } x=x_B, y=0, z=1,\\
    \delta &\text{if } x=x_A, y=0, z=2,\\
    \nfrac13-\nfrac\delta3 &\text{if } x=x_B, y=0, z=2,\\
    0 &\text{otherwise},
  \end{cases}
\end{align*}
where $\delta$ is some constant smaller than $\nfrac14$.
Note that for a sample $(X,Y,Z)\sim \cD$, conditioned on $X=x$ and $Z=z$, $Y$ takes the value $\mathds{I}\insquare{x= x_A,z=1}$.
Thus, the classifier $$f_{\rm OPT}(x,z)\coloneqq\mathds{I}\insquare{x=x_A,z=1}$$ has 0 predictive error.
One can verify that $f_{\rm OPT}$ has a statistical rate of 0 with respect to $\cD$.
Since $\cF$ shatters $\inbrace{x_A,x_B}\times [2]$, $\cF$ contains $f_{\rm OPT}(x,z)$.
\mbox{Further, any other classifier in $\cF$ has an error at least $\delta$ with respect to $\cD$.}

\cite{bshouty2002pac}'s framework outputs the classifier with the minimum empirical risk on the given samples.
Suppose the perturbation rate is $\eta\coloneqq 0$.
Then, given a sufficient number of samples from $\cD$, with high probability, $f_{\rm OPT}\in \cF$ has the minimum empirical error, and hence, is output by \cite{bshouty2002pac}'s framework;
$f_{\rm OPT}$ satisfies
$${\rm Err}_\cD(f_{\rm OPT}) = 0\quad\text{and}\quad \Omega_\cD(f_{\rm OPT})=0,$$
where $\Omega$ is the statistical rate fairness metric.

Next, we show that on this example, \errtolerant{} outputs a classifier with a large statistical rate.
Set the fairness threshold to be any value $\tau<1$.
Fix any $\lambda\leq \delta$ (this ensures that  \cref{asmp:1} is satisfied).
Fix any $\Delta>0$.
Finally, as mentioned, $\eta\coloneqq 0$.
One can verify that $$f_{\rm ET}(x,z) \coloneqq \mathds{I}\insquare{x=x_A}$$ has error ${\rm Err}_\cD(f_{\rm ET})=\delta$ and a statistical rate of 1 with respect to $\cD$.
In contrast, any other classifier with statistical rate at least $1-2\delta$ has error at least $\nfrac13-\nfrac\delta3>\delta$ with respect to $\cD$.
(O1) Using this, one can show that, given a sufficient number of iid samples from $\cD$, with high probability, any other classifier feasible for Equation~\eqref{eq:fairness_constraint_in_err_tol} in \errtolerant{} has an error larger than the error of $f_{\rm ET}$ (on the given samples).
(O2) Further, because $\lambda\leq \delta$, one can verify that given a sufficient number of iid samples from $\cD$, with high probability, $f_{\rm ET}$ satisfies Equation~\eqref{eq:lower_bound} in \errtolerant{}; thus, with high probability, $f_{\rm ET}$ is feasible for \errtolerant{}.

Combining observations (O1) and (O2), we get that: Given a sufficient number of iid samples from $\cD$, with high probability, $f_{\rm ET}$ is the optimal solution \errtolerant{} with parameters $\tau=1-\delta$, $\lambda=\delta$, $\eta=0$, and $\Delta>0$; $f_{\rm ET}$ satisfies
$${\rm Err}_\cD(f_{\rm ET}) = \delta\quad\text{and}\quad \Omega_\cD(f_{\rm ET})=1.$$
\end{example}

\subsubsection{{\texorpdfstring{\cite{LamyZ19}}{[44]}'s and \texorpdfstring{\cite{celis2020fairclassification}}{[14]}'s Frameworks Can Output Classifiers with Low Accuracy}}\label{sec:comparison_to_CHKV_LAMY}
\newcommand{\eeqref}[1]{Equation~\eqref{#1}}
In this section, for any $\eta\in (0,\nfrac12)$,
we give an example where with high probability
\cite{LamyZ19}'s and \cite{celis2020fairclassification}'s frameworks output classifiers $f_L$ and $f_C$ (respectively) whose error is at least $\nfrac14$ higher than the error of $f^\star$ (\cref{thm:controlling_stoc_noise_is_insufficient}); where $f^\star$ is an optimal solution to Program~\eqref{prog:target_fair}.
On the same example, an optimal solution of \errtolerant{} has error within $2\eta$ of the \mbox{error of $f^\star$ and violates the fairness constraint by at most $O(\eta)$.}

\cite{LamyZ19} and \cite{celis2020fairclassification} take parameters $\delta_L,\tau\in [0,1]$ as input; these parameters control the desired fairness, where decreasing $\delta_L$ or increasing $\tau$ increases the desired fairness.
\cite{celis2020fairclassification} also takes the constant $\lambda$ from \cref{asmp:1} as input.
In addition, both \cite{LamyZ19} and \cite{celis2020fairclassification} require group specific perturbation rates as input: for each pair $\ell,k\in[p]$, they require  $P{\ell k} \coloneqq \Pr\nolimits_D[\hZ=k \mid Z=\ell]$.

Let $P\in[0,1]^{p\times p}$ denote the resulting matrix.
To give a meaningful estimate of $P$ with adversarial noise, we define the following restriction of the Hamming adversary.
\begin{definition}[\bf $P$-restricted Hamming adversary]\label{def:p_hamming_adv}
  Given a matrix $P\in [0,1]^{p\times p}$ and $N\in \N$ samples $\inbrace{(x_i,y_i,z_i)}_{i\in [N]}$,
  for each $\ell\in [p]$, let $G_\ell\coloneqq \inbrace{i\in [N]\mid z_i=\ell}$ be the set of samples with protected attribute $\ell$. %
  For each $\ell,k\in [p]$, the $P$-restricted Hamming adversary $A_{RH}$ chooses ${P_{\ell k}\cdot |G_\ell|}$ samples $i\in [N]$ from $G_\ell$,
  and perturbs their protected attribute $z_i$ from $\ell$ to $\hz_i=k$.\footnote{We assume that $P{\ell k}\cdot N_\ell$ is integral for all $\ell,k\in[p]$. This can be ensured by slightly increasing $P{\ell k}$ or $N$.}

\end{definition}
\noindent The modifier ``$P$-restricted'' refers to the restriction placed by the matrix $P$ on the adversary.
Let $\cA_{RH}(P)$ be the set of all $P$-restricted Hamming adversaries.
Then one can show that $\cA_{RH}(P)\subseteq \cA(\eta)$ for any $$\eta\geq \max_{\ell\in [p]}\sum\nolimits_{k\in [p]}P_{\ell k}.$$

\begin{theorem}\label{thm:controlling_stoc_noise_is_insufficient}
  Suppose that there are two protected groups ($p\coloneqq 2$) and $\cX$ contains at least two distinct points.
  Then, there is a family of hypothesis classes $\cF$ such that
  for all fairness thresholds $\tau\in (0,1]$ and perturbation rates $\eta\in (0,\nfrac12)$, there is
  \begin{enumerate}[itemsep=\itemsepINTERNAL,leftmargin=\leftmarginINTERNAL]
    \item
    a distribution $\cD$ over $\cX\times \zo\times [2]$ that satisfies \cref{asmp:1} with $\lambda\coloneqq \nfrac\tau{4}$,
    \item
    a matrix $P\in [0,1]^{2\times 2}$ such that $\cA_{RH}(P)\subseteq \cA(\eta)$, and
    \item
    an adversary $A_{RH}\in \cA_{RH}(P)$ that perturbs at most $\eta$-fraction of the samples, %
  \end{enumerate}
  such that, if the fairness metric is statistical rate, then for a draw of $N\in \N$ iid samples $S$ from $\cD$,
  with probability at least {$1-e^{-\Omega(\eta^2\tau^2 N)}$} (over the draw of $S$), it holds that
  the optimal classifiers
  \begin{enumerate}[itemsep=\itemsepINTERNAL,leftmargin=\leftmarginINTERNAL]
    \item $f_{C}\negsp\in\negsp\cF$ of \cite{celis2020fairclassification}'s program with parameters $P$, $\lambda,$ and $\tau$ and samples $A_{RH}(S)$, %
    \item $f_{L}\negsp\in\negsp\cF$ of \cite{LamyZ19}'s program with parameters $P$ and $\delta_L\coloneqq \nfrac12 - \nfrac\tau{2}$ and samples $A_{RH}(S)$, and\footnote{{In this example, $(\nfrac12) - (\nfrac\tau{2})$ is the minimum value of $\delta_L$ needed to ensure that $f^\star$, an optimal solution of Program~\eqref{prog:target_fair}, is feasible for \cite{LamyZ19}'s program with $\eta=0$.}}
    \item $f_{\rm ET}\negsp\in\negsp\cF$ of \errtolerant{} with parameters $\eta,\lambda,$ and $\tau$ and samples $A_{RH}(S)$
  \end{enumerate}
  have errors
  \begin{align*}
    \mathrm{Err}_\cD(f_{C}) - \mathrm{Err}_\cD(f^\star) &\geq \frac1{4},\yesnum\label{eq:thm:appendix_eq_1}\\
    \mathrm{Err}_\cD(f_{L}) - \mathrm{Err}_\cD(f^\star) &\geq \frac1{4},\yesnum\label{eq:thm:appendix_eq_2}\\
    \mathrm{Err}_\cD(f_{\rm ET}) - \mathrm{Err}_\cD(f^\star) &\leq 2\eta.\yesnum\label{eq:thm:appendix_eq_3}
  \end{align*}
  Further, $f_{\rm ET}$ has statistical rates at least $\tau - O\inparen{\nfrac{\eta}{\tau}}$ with respect to $\cD$, i.e., $\Omega_\cD(f_{\rm ET})\geq \tau - O\inparen{\nfrac{\eta}{\tau}}$.
\end{theorem}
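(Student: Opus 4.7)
The plan is to exhibit an explicit distribution $\cD$, hypothesis class $\cF$, matrix $P$, and $P$-restricted Hamming adversary $A_{RH}$ that together produce the claimed failure of \cite{celis2020fairclassification} and \cite{LamyZ19} while leaving our framework's guarantees intact. First I would take $\cF$ to be (any class containing) all classifiers on $\inbrace{x_A,x_B}\times[2]$, so that $\cF$ shatters this set. Then I would define $\cD$ to have mass concentrated on a small number of cells: roughly, samples in group $Z=1$ carry the positive labels (on $x_A$) while samples in group $Z=2$ are mostly negative, with mass parameters tuned so that an optimal fair classifier $f^\star$ has very small error, achieves statistical rate exactly $\tau$, and satisfies $\min_{\ell}\Pr_\cD[f^\star(X,Z)=1, Z=\ell]=\tau/4$, which verifies \cref{asmp:1} with $\lambda=\tau/4$.

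Next I would design $P$ and $A_{RH}$ so that the perturbed sample $A_{RH}(S)$ fools both frameworks simultaneously. The key point is that $A_{RH}$ respects the group-conditional perturbation rates $P_{\ell k}$ in aggregate (so that \cite{celis2020fairclassification} and \cite{LamyZ19} accept $P$ as the ``true'' noise model), but selects which samples to flip in the worst case. Concretely, I would have $A_{RH}$ relabel protected attributes only on a carefully chosen subset of $(x_A,\cdot)$ and $(x_B,\cdot)$ samples, so that after the noise-correction step (inverting $P$ in \cite{celis2020fairclassification}; the equivalent reweighting in \cite{LamyZ19}), a low-accuracy classifier $f_{\rm bad}$ (for instance, the one that predicts $1$ only on $x_B$) appears to achieve statistical rate at least $\tau$ on the de-biased perturbed data, and beats $f^\star$ in empirical error on $\hS$. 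The Chernoff bound on the difference between $\hD$ and population versions gives the claimed $1-e^{-\Omega(\eta^2\tau^2 N)}$ confidence.

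The parameterization must simultaneously satisfy (i) $\max_\ell\sum_k P_{\ell k}\leq \eta$ (so that $\cA_{RH}(P)\subseteq\cA(\eta)$), (ii) ${\rm Err}_\cD(f_{\rm bad})-{\rm Err}_\cD(f^\star)\geq 1/4$, and (iii) the fairness-and-accuracy ``inversion'' described above. The most natural route is to parameterize the mass on the four cells $\inbrace{(x_A,1),(x_A,2),(x_B,1),(x_B,2)}$ by a single free parameter $t\in(0,1)$ tied to $\tau$, and to choose $P$ of the form $P_{12}=\eta$, $P_{21}=0$ (or a symmetric variant), so that a single linear calculation per framework produces the de-biased statistical rate and error of each candidate classifier. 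The guarantees for $f_{\rm ET}$ in \cref{eq:thm:appendix_eq_3} and the statistical-rate bound $\Omega_\cD(f_{\rm ET})\geq \tau-O(\eta/\tau)$ follow immediately from \cref{thm:main_result} applied with $\lambda=\tau/4$, noting that the bound $\nu=O(\eta\tau/(\lambda-2\eta))$ becomes $O(\eta/\tau)$ for $\eta<\lambda/2=\tau/8$.

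The hard part will be the joint construction that fools both \cite{celis2020fairclassification} and \cite{LamyZ19}, since their de-biasing procedures differ (\cite{celis2020fairclassification} inverts $P$ on the empirical group-conditional rates while \cite{LamyZ19} uses a different slack based on $\delta_L$). I anticipate designing the example first against \cite{celis2020fairclassification}, computing the de-biased statistical rate of $f_{\rm bad}$ explicitly from $P^{-1}$, then verifying that the same $\hS$ satisfies \cite{LamyZ19}'s relaxation at $\delta_L=\tfrac12-\tfrac\tau2$ for the same $f_{\rm bad}$. If one framework still accepts $f^\star$, I would shift a small amount of probability mass between the $(x_A,\cdot)$ and $(x_B,\cdot)$ cells so that $f^\star$ becomes infeasible (or strictly worse in empirical error on $\hS$) for both frameworks, while keeping ${\rm Err}_\cD(f_{\rm bad})-{\rm Err}_\cD(f^\star)\geq 1/4$ and $\lambda=\tau/4$ fixed by construction.
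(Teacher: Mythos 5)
Your overall skeleton matches the paper's: a distribution supported on $\inbrace{x_A,x_B}\times[2]$ with masses tuned so that $f^\star=\mathds{I}[X=x_A]$ has zero error, statistical rate exactly $\tau$, and $\min_\ell\Pr_\cD[f^\star=1,Z=\ell]=\tau/4$; an asymmetric $P$ with one zero off-diagonal entry and the other equal to $\eta$; an adversary that flips only a targeted subset of one group; and an appeal to \cref{thm:main_result} for the \errtolerant{} guarantees. However, your \emph{primary} mechanism is wrong. You propose that the adversary make a low-accuracy $f_{\rm bad}$ ``appear feasible and beat $f^\star$ in empirical error on $\hS$.'' That cannot happen: the adversary perturbs only protected attributes, not features or labels, and $f^\star$ does not use $Z$, so ${\rm Err}(f^\star,\hS)={\rm Err}(f^\star,S)\approx 0$; no classifier can undercut it. Making $f_{\rm bad}$ feasible is therefore useless so long as $f^\star$ remains feasible --- the frameworks would still return $f^\star$. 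The argument that actually works, and that the paper uses, is the one you relegate to a contingency: the adversary flips only \emph{negatively classified} ($x_B$) samples from group $2$ to group $1$; since the de-biasing step in \cite{celis2020fairclassification} (inverting $P^T$) and the slack in \cite{LamyZ19} assume the flips were distributed as in $P$, i.e.\ partly hitting positives, the corrected group-$1$ positive rate of $f^\star$ is pushed strictly below what the constraint requires, so $f^\star$ itself becomes \emph{infeasible}. This must be the centerpiece of the construction, not a fallback.

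There is also a missing second half. Showing $f^\star$ infeasible (or exhibiting one feasible bad classifier) does not yet give $\mathrm{Err}_\cD(f_C),\mathrm{Err}_\cD(f_L)\geq 1/4$: the frameworks return the \emph{minimum-empirical-error feasible} classifier, so you must lower-bound the true error of every classifier they could output. The paper does this by noting that every support point other than $(x_A,1)$ carries mass at least $1/4$, so any classifier disagreeing with $f^\star$ anywhere except $(x_A,1)$ already has error $\geq 1/4$, and the single remaining alternative (flipping only $(x_A,1)$ to $0$) drives its group-$1$ positive rate to zero and is itself infeasible for both programs. Your proposal never addresses this exhaustion step, and without it the claimed error lower bounds do not follow.
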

\vspace{-4.0mm}

\paragraph{Proof for \texorpdfstring{\cref{thm:controlling_stoc_noise_is_insufficient}}{Theorem B.3}.}
Let $S\coloneqq \inbrace{(x_i,y_i,z_i)}_{i\in [N]}$ denote $N$ iid samples from $\cD$. %

\paragraph{Setting \texorpdfstring{$P, A$, and $\cD$}{P, A, and D}.}
We let $$P\coloneqq{\begin{bmatrix}
1-\eta_1 & \eta_1\\ \eta_2 & 1-\eta_2\\
\end{bmatrix}},$$ where $\eta_1\coloneqq 0$ and $\eta_2\coloneqq \eta$.
Since $\eta=\max_{\ell\in [p]}\sum_{k\in[p]}P{\ell k}$, we can verify that $\cA_{\rm RH}(P)\subseteq \cA(\eta)$.
We fix $A\in \cA_{\rm RH}(P)$ to be the following algorithm.
\begin{mdframed}
  \begin{enumerate}[leftmargin=38pt,itemsep=1pt]
    \item[\bf Input.] A perturbation rate $\eta>0$, matrix $P\coloneqq\insquare{\begin{smallmatrix}
    1-\eta_1 & \eta_1\\ \eta_2 & 1-\eta_2\\
    \end{smallmatrix}}$, where $\eta_1,\eta_2\in [0,1]$, and samples $S\coloneqq \inbrace{(x_i,y_i,z_i)}_{i\in [N]}$ %
    \item[\bf Output.] Samples $\hS$\\\hrule
    \item {\bf For $\ell\in [2]$ do:}
    \begin{enumerate}[leftmargin=11pt,itemsep=0pt]
      \item {\bf Set} $N_\ell\coloneqq {\eta_\ell \cdot \sum_{i\in [N]} \mathds{I}[z_i=\ell]}$ %
      \item {\bf Set} $G_{A}\coloneqq\inbrace{i\in [N]\colon z_i=\ell, x_i=x_A}$ and $G_{B}\coloneqq\inbrace{i\in [N]\colon z_i=\ell, x_i=x_B}$
      \item {\bf Initialize} $C = \emptyset$\hfill \commentalg{~Corrupted samples}
      \item Pick any $\min\{N_\ell,|G_{B}|\}$ items from $G_{B}$ and add them to $C$ %
      \item Pick any $N_\ell-\min\{N_\ell,|R_B|\}$ items from $G_{A}$ and add them to $C$
      \item {\bf For $i\in C$ do:} {\bf Set} $\hz_i = 3-\ell$\hfill \commentalg{\ If $z_i=1$ the $\hz_i=2$, and if $z_i=2$ then $\hz_i=1$}
      \item {\bf For $i\in (G_A\cup G_B) \backslash C$ do:} {\bf Set} $\hz_i = \ell$
    \end{enumerate}
    \item {\bf return} $\hS\coloneqq \inbrace{(x_i,y_i,\hz_i)}_{i\in [N]}$
  \end{enumerate}
\end{mdframed}

One can verify that $A$ perturbs exactly $P{\ell k}\cdot |G_\ell|$ samples with protected attribute $\ell$ to protected attribute $k$.
Hence, $A$ is a $P$-restricted \Ham{} adversary.
Further, as $\eta_1+\eta_2= \eta$, it also follows that $A$ perturbs at most $\eta$-fraction of samples, and hence, is an $\eta$-\Ham{} adversary.

Fix $\cX$ to be any set with at least two distinct points, say $x_A$ and $x_B$.
Let $\cF$ be any hypothesis class that shatters the set $\inbrace{x_A,x_B}\times [2]$.
We define the distribution $\cD$ as follows
\begin{align*}
  \Pr\nolimits_\cD[X=x,Y=y,Z=z]\coloneqq \begin{cases}
  \nfrac{\tau}4 &\text{if } x=x_A, y=1, z=1,\\
  \nfrac14 &\text{if } x=x_A, y=1, z=2,\\
  \nfrac12 - \nfrac{\tau}4 &\text{if } x=x_B, y=0, z=1,\\
  \nfrac14 &\text{if } x=x_B, y=0, z=2,\\
  0 &\text{otherwise}.
\end{cases}\yesnum\label{eq:example_def_cd}
\end{align*}
Note that for a sample $(X,Y,Z)\sim \cD$, conditioned on $X$, the value of $Y$ is $\mathds{I}\insquare{X= x_A}$.
Thus, the classifier $f^\star(x,z)\coloneqq\mathds{I}\insquare{X= x_A}$ has 0 predictive error.
We use \cref{lem:example_2} in the proof of \cref{thm:controlling_stoc_noise_is_insufficient}.
\begin{lemma}[\bf Estimates of statistic on perturbed samples]\label{lem:example_2}
  For all $\delta\in (0,1)$,
  with probability at least $1-e^{-\Omega(\delta^2\cdot N)}$ (over the draw of $S$), the following bounds hold
  \begin{align*}
    \abs{\Pr\nolimits_{\hD}[f^\star=1, Z=1] - \Pr\nolimits_{\cD}[f^\star=1, Z=1]}\leq \delta,\yesnum\label{eq:example_bound_1}\\
    \abs{\Pr\nolimits_{\hD}[f^\star=1, Z=2] - \Pr\nolimits_{\cD}[f^\star=1, Z=2]}\leq \delta,\yesnum\label{eq:example_bound_2}\\
    \abs{\Pr\nolimits_{\hD}[Z=1] - \inparen{\Pr\nolimits_{\cD}[Z=1]+\eta\cdot \Pr\nolimits_{\cD}[Z=2]}}\leq \delta,\yesnum\label{eq:example_bound_3}\\
    \abs{\Pr\nolimits_{\hD}[Z=2] - \Pr\nolimits_{\cD}[Z=2]\cdot (1-\eta)}\leq \delta.\yesnum\label{eq:example_bound_4}
  \end{align*}
  Equivalently substituting the statistics on $\cD$ in \cref{eq:example_bound_1,eq:example_bound_2,eq:example_bound_3,eq:example_bound_4}, we get
  \begin{align*}
    \abs{\Pr\nolimits_{\hD}[f^\star=1, Z=1] - \frac{\tau}4}\leq \delta,\yesnum\label{eq:example_bound_5}\\
    \abs{\Pr\nolimits_{\hD}[f^\star=1, Z=2] - \frac{1}4}\leq \delta,\yesnum\label{eq:example_bound_6}\\
    \abs{\Pr\nolimits_{\hD}[Z=1] - \frac{1+\eta}2}\leq \delta,\yesnum\label{eq:example_bound_7}\\
    \abs{\Pr\nolimits_{\hD}[Z=2] - \frac{1-\eta}2}\leq \delta.\yesnum\label{eq:example_bound_8}
  \end{align*}
\end{lemma}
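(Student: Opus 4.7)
The plan is to reduce \cref{lem:example_2} to a Hoeffding-type concentration bound on the four cell-counts $N_{xz} \coloneqq |\{i\in[N]\colon x_i=x,\, z_i=z\}|$ for $(x,z)\in \{x_A,x_B\}\times [2]$, and then unravel the adversary $A$'s behavior on the resulting good event. All four perturbed probabilities appearing in \cref{eq:example_bound_1,eq:example_bound_2,eq:example_bound_3,eq:example_bound_4} are linear functions of these counts and of $N_2 = \eta\cdot(N_{x_A,2}+N_{x_B,2})$, so once the $N_{xz}$ are controlled, the claims reduce to arithmetic on $\hD$.

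First, I would apply a Hoeffding inequality to each of the four indicators $\mathds{I}[x_i=x,\, z_i=z]$ and union-bound over the (constant) number of cells, yielding that with probability at least $1-e^{-\Omega(\delta^2 N)}$, each $N_{xz}/N$ lies within $\delta' \coloneqq \delta/C$ of $\Pr_\cD[X=x,Z=z]$, for an absolute constant $C$ to be fixed. Because $\eta_1=0$, $A$ never touches samples with $z_i=1$, and for $\ell=2$ the adversary needs $N_2 \approx \eta N/2$ slots while $|G_B|=N_{x_B,2} \approx N/4$. Since $\eta<\nfrac12$ the gap $\nfrac14 - \nfrac{\eta}{2}$ is a positive constant, so taking $C$ large enough forces $N_2 \leq |G_B|$ on the good event. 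Consequently $A$ draws all $N_2$ corrupted samples from $G_B$ (every such sample has $x_i=x_B$) and flips their $\hz_i$ from $2$ to $1$; crucially, no $x_A$-sample is ever perturbed.

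Given this, each of the four identities follows by direct computation. Since $f^\star(x,z)=\mathds{I}[x=x_A]$ and no $x_A$-sample is touched, $\Pr_{\hD}[f^\star=1,\hZ=1] = N_{x_A,1}/N$ is within $\delta'$ of $\tau/4$, and $\Pr_{\hD}[f^\star=1,\hZ=2] = N_{x_A,2}/N$ is within $\delta'$ of $\nfrac14$. For the marginals, $\Pr_{\hD}[\hZ=1] = (N_{x_A,1}+N_{x_B,1}+N_2)/N$ is within $O(\delta')$ of $\nfrac12 + \nfrac{\eta}{2} = (1+\eta)/2$, and $\Pr_{\hD}[\hZ=2]=1-\Pr_{\hD}[\hZ=1]$ gives the final bound. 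Rescaling $C$ so that $O(\delta')\leq\delta$ yields the stated inequalities. The proof is essentially bookkeeping; the one substantive step is verifying the inequality $N_2 \leq |G_B|$ that pins down the adversary's choice, which is exactly where the hypothesis $\eta<\nfrac12$ is consumed and which dictates the $e^{-\Omega(\delta^2 N)}$ dependence, with the hidden constant absorbing both $\delta$ and the fixed gap $\nfrac14-\nfrac{\eta}{2}$.
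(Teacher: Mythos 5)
Your proof is correct and follows essentially the same route as the paper's: a Chernoff/Hoeffding bound on the empirical cell frequencies combined with a deterministic accounting of which samples the adversary flips (only $x_B$-samples in group $Z=2$, since $\eta_1=0$). You are in fact slightly more careful than the paper in explicitly verifying the high-probability event $N_2\leq|G_B|$ that guarantees the adversary never touches an $x_A$-sample, a step the paper asserts without comment.
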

\noindent The proof of \cref{lem:example_2} follows by analyzing the algorithm of $A$ and using the Chernoff bound.
The proof of \cref{lem:example_2} appears at the end of this section.
\begin{proof}[Proof of \cref{thm:controlling_stoc_noise_is_insufficient}]
  Since the distribution $\cD$ is supported on 4 points, namely $\inbrace{x_A,x_B}\times [2]$ (see \cref{eq:example_def_cd}), we only need to consider hypothesis in the restriction of $\cF$ on the set $\inbrace{x_A,x_B}\times [2]$; this restriction has $2^4$ hypothesis.

  The first observation is that $\mathds{I}[X=x_A]$ is an optimal solution for Program~\eqref{prog:target_fair} and satisfies.
  ${\rm Err}_\cD(f^\star)=0$ and $\Omega_\cD(f^\star)=1$, where $\Omega$ is the statistical rate fairness metric. %
  This is because, $\mathds{I}[X=x_A]$ satisfies the constraints of Program~\eqref{prog:target_fair} and has perfect accuracy.

  The second observation is that $f^\star$ is not feasible for \cite{celis2020fairclassification} and \cite{LamyZ19}'s programs.
  \paragraph{$f^\star$ is not feasible for \cite{celis2020fairclassification}'s program.}
  \cite{celis2020fairclassification} express their constraints (for statistical rate) in terms of vectors $u(f),w\in [0,1]^2$ (where $u(f)$ depends on $f\in \cF$).
  They define $u(f)$ and $w$ as follows
  \begin{align*}
    u(f)&\coloneqq (P^T)^{-1}\cdot \insquare{\begin{matrix}
    \Pr\nolimits_{\hD}[f=1, Z=1]\\
    \Pr\nolimits_{\hD}[f=1, Z=2]\end{matrix}},\yesnum\label{eq:abcd1}
  \end{align*}
  \begin{align*}
    w &\coloneqq (P^T)^{-1}\cdot \insquare{\begin{matrix}\Pr\nolimits_{\hD}[Z=1]\\ \Pr\nolimits_{\hD}[Z=2]\end{matrix}}.\yesnum\label{eq:abcd2}
  \end{align*}
  \cite{celis2020fairclassification} impose the following constraint
  \begin{align*}
    \frac{\min_{\ell\in [p]}\sfrac{u(f)_\ell}{w_\ell}}
    {\max_{\ell\in [p]}\sfrac{u(f)_\ell}{w_\ell}} \geq \tau.\yesnum
    \label{eq:CHKVconst}
  \end{align*}
  In our example, $$(P^T)^{-1}\coloneqq \frac{1}{1-\eta}\cdot{\begin{bmatrix} 1-\eta & -\eta \\ 0 & 1\end{bmatrix}}.$$
  Set
  \begin{align*}
    \delta\coloneqq \frac{\eta\cdot \tau}{64}.
  \end{align*}
  Substituting the value of $(P^T)^{-1}$ in \cref{eq:abcd1,eq:abcd2}, and then using \cref{lem:example_2}, we get that with probability at least $1-e^{-\Omega(\delta^2\cdot N)}$, $u(f^\star)$ and $w$ satisfy the following bounds
  \begin{align*}
    \abs{u(f^\star)_1 -  \inparen{\frac{\tau}4 -  \frac{\eta}{4(1-\eta)}}} &\leq \frac{\delta}{1-\eta},\yesnum\label{eq:just1}\\
    \abs{u(f^\star)_2 -  {\frac{1}{4(1-\eta)}}} &\leq \delta,\yesnum\label{eq:just2}\\
    \abs{w_1 -  \frac12} &\leq \frac{\delta}{1-\eta},\yesnum\label{eq:just3}\\
    \abs{w_2 -  \frac12} &\leq \delta.\yesnum\label{eq:just4}
  \end{align*}
  Suppose the \cref{eq:just1,eq:just2,eq:just3,eq:just4} hold.
  Toward computing the constraint in \cref{eq:CHKVconst}, we compute bounds for $\nfrac{u(f^\star)_1}{w_1}$ and $\nfrac{u(f^\star)_2}{w_2}$.
  \begin{align*}
    \frac{u(f^\star)_1}{w_1} &\Stackrel{}{\leq} \frac{\frac{\tau}{4}-\frac{\eta}{4(1-\eta)}-\frac{\delta}{1-\eta}}{\frac{1}{2}+\frac{\delta}{1-\eta}}\tag{Using \cref{eq:just1,eq:just3}}\\
    &\Stackrel{}{\leq} \frac{\frac{\tau}{4}-\frac{\eta}{8(1-\eta)}}{\frac{1}{2}\cdot(1-\frac{2\delta}{1-\eta})}
    \tag{Using that $\delta\leq \nfrac\eta8$}\\
    &\Stackrel{}{=} \frac{\tau}{2}\cdot \frac{1-\frac{\eta}{2\tau(1-\eta)}}{1-\frac{2\delta}{1-\eta}}\\
    &\Stackrel{}{<}\frac{\tau}{2},
    \tagnum{Using that $\delta\leq \nfrac{\eta}{(4\tau)}$}\customlabel{eq:example2:conclusion1}{\theequation}\\
    \frac{u(f^\star)_2}{w_2} &\Stackrel{}{\geq} \frac{\frac{1}{4(1-\eta)}-\delta}{\frac{1}{2}+\delta}\tag{Using \cref{eq:just2,eq:just4}}\\
    &\Stackrel{}{=} \frac{1}{2(1-\eta)}\cdot \frac{1-4\delta(1-\eta)}{1+2\delta}\\
    &\Stackrel{}{\geq} \frac{1}{2(1-\eta)}\cdot \inparen{1-4\delta(1-\eta)}\cdot\inparen{1-2\delta}\tag{Using that for all $x\in \R$, $\frac{1}{1+x}\geq 1-x.$}\\
    &\Stackrel{}{\geq} \frac{1}{2(1-\eta)}\cdot\inparen{1-6\delta}\tag{Using that $\delta,\eta>0$}\\
    &\Stackrel{}{>}\frac{1}{2}.\tagnum{Using that $\delta < \nfrac\eta6$}\customlabel{eq:example2:conclusion2}{\theequation}
  \end{align*}
  Substituting Equations~\eqref{eq:example2:conclusion1} and \eqref{eq:example2:conclusion2} in \cref{eq:CHKVconst}, we get that
  \begin{align*}
    \frac{\min_{\ell\in [p]}\sfrac{u(f)_\ell}{w_\ell}}
    {\max_{\ell\in [p]}\sfrac{u(f)_\ell}{w_\ell}}
    \quad &\leq\quad \frac{\sfrac{u(f)_1}{w_1}}{\sfrac{u(f)_2}{w_2}}
    \qquad \Stackrel{\rm \eqref{eq:example2:conclusion1},\eqref{eq:example2:conclusion2}}{<}\qquad \tau.
  \end{align*}
  Thus, $f^\star$ is not feasible for \cite{celis2020fairclassification}'s optimization program.

  \paragraph{$f^\star$ is not feasible for \cite{LamyZ19}'s program.}
  For any $f\in \cF$, \cite{LamyZ19} impose the constraint
  \begin{align*}
    \abs{\Pr\nolimits_{\hD}[f=1\mid Z=1]-\Pr\nolimits_{\hD}[f=1\mid Z=2]}\leq \delta_L\cdot (1-\alpha-\beta),\yesnum\label{eq:lamyconst}
  \end{align*}
  where $\alpha,\beta\in [0,1]$ are some function of $\eta_1$ and $\eta_2$.
  In particular, it holds that if $\eta_1>0$ (respectively $\eta_2>0$) then $\alpha>0$ (respectively $\beta>0$), otherwise $\alpha=0$ (respectively $\beta=0$)
  In our example, $\eta_1=0$ and $\eta_2=\eta>0$.
  Thus, $\alpha=0$ and $\beta>0$.
  Recall that $$\delta_L\coloneqq \frac12-\frac\tau2.$$
  To show that $f^\star$ does not satisfy \cref{eq:lamyconst}, we bound $\Pr\nolimits_{\hD}[f=1\mid Z=1]$ and $\Pr\nolimits_{\hD}[f=1\mid Z=2]$.
  \begin{align*}
    \Pr\nolimits_{\hD}[f=1\mid Z=1]
    &\leq \frac{\frac{\tau}4 + \delta }{\frac{1+\eta}{2}-\delta}\tag{Using \cref{eq:example_bound_5,eq:example_bound_7}}\\
    &= \frac{\tau}{2(1+\eta)}\cdot \frac{1 + \frac{4\delta}{\tau} }{1-\frac{2\delta}{1+\eta}}\\
    &\leq \frac{\tau}{2(1+\eta)}\cdot (1 + \frac{\eta}{16})\cdot\inparen{1+\frac{4\delta}{1+\eta}}\tag{Using that $\delta\coloneqq \frac{\eta\tau}{64}$ and $\frac{4\delta}{1+\eta}\in [0,\nfrac12]$}\\
    &\leq \frac{\tau}{2(1+\eta)}\cdot \inparen{1 + \frac{\eta}{8}}^2\tag{Using that $\delta\leq \frac{\eta}{4}$}\\
    &< \frac\tau2,\tagnum{Using that $\eta\leq 1$}\customlabel{eq:boundLamy1}{\theequation}\\
    \Pr\nolimits_{\hD}[f=1\mid Z=2]
    &\geq \frac{\frac{1}4 - \delta }{\frac{1-\eta}{2}+\delta}\tag{Using \cref{eq:example_bound_6,eq:example_bound_8}}\\
    &= \frac{1}{2(1-\eta)}\cdot \frac{1 - 4\delta}{1+\frac{2\delta}{1+\eta}}\\
    &\geq \frac{1}{2(1-\eta)}\cdot \inparen{1 - \frac{\eta}{8} }\cdot\inparen{1-\frac{2\delta}{1+\eta}}\tag{Using that $4\delta\leq \nfrac\eta8$ and for all $x\in \R$, $(1+x)^{-1}\geq 1-x$}\\
    &\geq \frac{1}{2(1-\eta)}\cdot \inparen{1 - \frac{\eta}{8}}^2\tag{Using that $2\delta\leq \nfrac\eta8$}\\
    &\geq \frac{1-\frac{\eta}4}{2(1-\eta)}\\
    &> \frac12.\tagnum{Using that $\eta>0$}\customlabel{eq:boundLamy2}{\theequation}
  \end{align*}
  Thus, combining Equations~\eqref{eq:boundLamy1} and \eqref{eq:boundLamy2} and the fact that $\beta>0$ and $\alpha=0$, it follows that $f^\star$  is not feasible for \cref{eq:lamyconst}.

  \paragraph{\cite{LamyZ19}'s and \cite{celis2020fairclassification}'s frameworks output a classifier with large error.}
  Since $f^\star$ is not feasible for \cite{LamyZ19}'s and \cite{celis2020fairclassification}'s programs, they must output some other classifier $f_{\rm Alt}\in \cF$.
  Toward a contradiction, suppose that $${\rm Err}_\cD(f_{\rm Alt})<\nfrac14.$$
  Consider the set $$U\coloneqq \inbrace{x_A,x_B}\times [2]\backslash\inbrace{(x_A,1)}.$$
  Each point in the $U$ has probability mass at least $1/4$.
  Thus, if $f_{\rm Alt}$ has different outcome than $f^\star$ on the set $U$, then $${\rm Err}_\cD(f_{\rm Alt})\geq \nfrac14.$$
  So we must have $$f_{\rm Alt}(r,s)=f^\star(r,s)$$ for all $(r,s)\in U.$
  Because $f_{\rm Alt}$ is different than $f^\star$, its outcome must differ from $f^\star$ on at least one point in the support of $\cD$.
  The only remaining point is $(x_A,1)$.
  Thus, we must have $f_{\rm Alt}(x_A,1) = 0$.
  However, in this case, one can show that
  \begin{align*}
    \Pr\nolimits_{\hD}[f_{\rm Alt}=1,Z=1]=0\quad \text{and} \quad \Pr\nolimits_{\hD}[f_{\rm Alt}=1,Z=1]>0.
  \end{align*}
  Substituting this in \cref{eq:CHKVconst,eq:lamyconst} we get, that $f_{\rm Alt}$ is not feasible for  \cite{celis2020fairclassification}'s and \cite{LamyZ19}'s optimization programs.
  This is a contradiction since we assumed that \cite{celis2020fairclassification} and \cite{LamyZ19} output $f_{\rm Alt}$.
  This proves Equations~\eqref{eq:thm:appendix_eq_1} and \eqref{eq:thm:appendix_eq_2}.

  Finally, Equations~\eqref{eq:thm:appendix_eq_3} and the bound on $\Omega_\cD(f_{ET})$ follows from \cref{thm:main_result} because $f^\star$ satisfies \cref{asmp:1} with constant $\lambda=\nfrac{\tau}4$.
\end{proof}
\begin{remark}[\bf Choice of $P$]
  In \cref{thm:controlling_stoc_noise_is_insufficient}, we fix $$P\coloneqq \begin{bmatrix}
  1 & 0\\
  \eta & 1-\eta
  \end{bmatrix}.$$
  However, we can show that \cref{thm:controlling_stoc_noise_is_insufficient} holds for  $P\coloneqq \begin{bmatrix}
  1-\eta_1 & \eta_1\\
  \eta_2 & 1-\eta_2
  \end{bmatrix}$ where $0\leq \eta_1<\eta_2<\eta$.
  The only change is that
  the high probability guarantee changes from
  \begin{align*}
    1-e^{-\Omega(\min\{\tau,\eta\}\cdot N)}
    \quad \text{to}\quad
    1-e^{-\Omega(\min\{\tau,\abs{\eta_2-\eta_1}\}\cdot N)}
  \end{align*}
  Note that the distribution $\cD$ does not change.
\end{remark}

\paragraph{Proof of \cref{lem:example_2}.}
\begin{proof}[Proof of \cref{lem:example_2}]
  We give a proof of \cref{eq:example_bound_1,eq:example_bound_3}.
  The proofs of
  \cref{eq:example_bound_2,eq:example_bound_4} follow by replacing $Z=1$ by $Z=2$ in the following argument.

  Since $A$ only flips samples with feature $x_B$ and $f^\star(x_B,z)=0$ for all $z\in [2]$, we have that
  \begin{align*}
    \Pr\nolimits_{\hD}[f^\star=1,Z=1] &= \Pr\nolimits_{D}[f^\star=1,Z=1]\yesnum\label{eq:2above1}
  \end{align*}
  Using the Chernoff bound, it follows that the next inequality holds with probability at least $1-2e^{-\frac{16}{3\tau^2}\cdot\delta^2N}$
  \begin{align*}
    \abs{\Pr\nolimits_{D}[f^\star=1,Z=1]-\Pr\nolimits_{\cD}[f^\star=1,Z=1]} &\leq \delta\yesnum\label{eq:2above3}
  \end{align*}
  \cref{eq:example_bound_1} follows from \cref{eq:2above1,eq:2above3} by using the triangle inequality for the absolute value function.
  Since $A$ flips $\eta$-fraction of the samples with $Z=2$ to $Z=1$, we have that
  \begin{align*}
    \Pr\nolimits_{\hD}[Z=1] &= \Pr\nolimits_{D}[Z=1]+\eta\cdot \Pr\nolimits_{D}[Z=2],\yesnum\label{eq:above1}\\
    \Pr\nolimits_{\hD}[Z=2] &= \Pr\nolimits_{D}[Z=2]\cdot(1-\eta).\yesnum\label{eq:above2}
  \end{align*}
  Using the Chernoff bound, it follows that the next inequality holds with probability at least $1-4e^{-\frac{16}{3}\cdot\delta^2N}$
  \begin{align*}
    \abs{\Pr\nolimits_{D}[Z=1]-\Pr\nolimits_{\cD}[Z=1]} &\leq \delta\yesnum\label{eq:above3}
  \end{align*}
  \cref{eq:example_bound_3} follows from \cref{eq:above1,eq:above3} by using the triangle inequality for the absolute value function.
\end{proof}

\subsubsection{{\texorpdfstring{\cite{wang2020robust}}{[59]}'s Distributionally Robust Framework Can Output Classifiers with Low Accuracy}}\label{sec:comparison_to_wang}
In this section, for any $\eta\in (0,\nfrac14)$,
we give an example where with high probability
\cite{wang2020robust}'s distributionally robust optimization (DRO) framework outputs a classifier $f_{\rm DRO}\in \cF$ whose error is at least {$\nfrac12-\nfrac{\eta}{2}$} (\cref{rem:comparing_to_wang})
On the same example, an optimal solution of {\errtolerant{} has error at most $2\eta$ and additively violates the fairness constraint by at most $O(\eta)$.}

\cite{wang2020robust}, in their distributionally robust approach, assume that for each protected group its feature and label distributions in the true data and the perturbed data are a known total variation distance away from each other.
Formally, given a distribution $\cP$, for each $\ell\in [p]$, let $\cP_\ell$ be the distribution of features and labels in group $\ell$ when the data is drawn from $\cP$, i.e., $\cP_\ell$ is the distribution of $(X,Y)\mid Z=\ell$ when $(X,Y,Z)\sim \cP$. %
Let $\cD$ be the true distribution of samples and
let $\hD$ be the (empirical) distribution of perturbed samples. %
Define a vector $\gamma\in [0,1]^p$ as follows: For all $\ell\in [p]$
\begin{align*}
  \gamma_\ell\coloneqq {\rm TV}(\cD_\ell,\hD_\ell).\yesnum\label{def:gamma}
\end{align*}
\cite{wang2020robust} assume that an upper bound on $\gamma$ is known.
One can show that in presence of an $\eta$-\Ham{} adversary a tight upper bound on $\gamma_\ell$ is $\frac{\eta}{\Pr[Z=\ell]}$; It is achieved by the adversary that, given $N$ samples, perturbs the protected attribute of $\eta\cdot N$ samples with protected attribute $Z=\ell$.

Let $\mathfrak{D}(\gamma)$ be the set of all distributions $\cP$ which satisfy that:
\begin{align*}
  \text{for all $\ell\in [p]$,}\quad {\rm TV}(\hD_\ell, \cP_\ell)\leq \gamma_\ell.\yesnum\label{eq:def_of_set_of_dist_wang}
\end{align*}
Then, \cite{wang2020robust}'s output a classifier $f_{\rm DRO}\in \cF$ which has the highest accuracy on $\hD$ such that it satisfies their fairness constraints
for all distributions $\cP\in \mathfrak{D}(\gamma)$;
for statistical rate, they solve
\begin{align*}
  \min_{f\in \cF}\ \  {\rm Err}_{\hD}(f),\quad \st, \quad \text{for all $ \cP\in\mathfrak{D}(\gamma)$}, \quad \Ex\nolimits_\cP[f=1\mid Z=\ell]= \Ex\nolimits_\cP[f=1].\yesnum\label{prog:wang}
\end{align*}

\begin{theorem}\label{rem:comparing_to_wang}
  Suppose that there are two protected groups ($p\coloneqq 2$) and $\cX$ contains at least three distinct points.
  Then, there is a family of hypothesis classes $\cF$ such that
  for all perturbation rates $\eta\in (0,\nfrac14)$, there is an adversary $A\in \cA(\eta)$ and a distribution $\cD$ over $\cX\times \zo\times [2]$ that satisfies
  \begin{enumerate}[itemsep=\itemsepINTERNAL,leftmargin=\leftmarginINTERNAL]
    \item \cref{asmp:1} with $\lambda\coloneqq \nfrac1{4}$, and
    \item $\Pr_\cD[Z=1]=\Pr_\cD[Z=2]=\nfrac12$,
  \end{enumerate}
  such that,
  for a draw of $N\in \N$ iid samples $S$ from $\cD$,
  with probability at least {$1-O(e^{-N})$} (over the draw of $S$), it holds that
  the optimal solution {$f_{\rm DRO}\negsp\in\negsp\cF$ of Program~\eqref{prog:wang} with parameter $\gamma=(2\eta,2\eta)$\footnote{Following the fact mentioned earlier, for each $\ell\in [2]$, we set $\gamma_\ell\coloneqq \frac{\eta}{\Pr[Z=\ell]}$.} and samples $A(S)$ has error}
  \begin{align*}
    \mathrm{Err}_\cD(f_{\rm DRO}) &\geq \frac1{2}-\eta,\yesnum\label{eq:thm:appendix_eq_1:3}
  \end{align*}
  and the optimal solution $f_{\rm ET}\negsp\in\negsp\cF$ of \errtolerant{} with parameters $\eta,\lambda,$ and $\tau=1$ and samples $A(S)$ has error
  \begin{align*}
    \mathrm{Err}_\cD(f_{\rm ET}) &\leq 2\eta.\yesnum\label{eq:thm:appendix_eq_2:3}
  \end{align*}
  While $$\Omega_\cD(f_{\rm DRO})=1\quad\text{and}\quad\Omega_\cD(f_{\rm ET})\geq 1 - O\inparen{\eta}.$$
\end{theorem}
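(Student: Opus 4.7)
The plan is to exhibit an explicit instance and show that the DRO feasible set collapses to the two constant classifiers while \errtolerant{} remains able to recover $f^\star$. Take $\cX := \{x_A, x_B, x_C\}$ with three distinct points, $\cF := \zo^{\cX \times [2]}$, and let $\cD$ place mass $\nfrac14$ on each of $(x_A, 1, 1), (x_B, 0, 1), (x_A, 1, 2), (x_B, 0, 2)$ and mass $0$ on every other triple. Then $\Pr_\cD[Z=1] = \Pr_\cD[Z=2] = \nfrac12$, the classifier $f^\star(x, z) := \mathds{I}[x = x_A]$ satisfies $\mathrm{Err}_\cD(f^\star) = 0$ and $\Omega_\cD(f^\star) = 1$, and \cref{asmp:1} holds with $\lambda = \min_\ell \Pr_\cD[f^\star = 1, Z = \ell] = \nfrac14$. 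For the adversary $A \in \cA(\eta)$, take the identity adversary (which does not perturb any sample); standard Chernoff bounds show that with probability $1 - O(e^{-N})$ the empirical counts in each of the four supported cells of $\hD$ deviate from their expectations by at most $o(1)$, which is all the concentration that is needed downstream.

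The central step is the DRO analysis. With $\gamma = (2\eta, 2\eta)$, I claim that the only $f \in \cF$ feasible for Program~\eqref{prog:wang} are the two constant classifiers $f \equiv 0$ and $f \equiv 1$. The DRO constraint for statistical rate amounts to
\[
\Pr\nolimits_{\cP_1}[f(X, 1) = 1] \;=\; \Pr\nolimits_{\cP_2}[f(X, 2) = 1] \qquad \text{for all } \cP \in \mathfrak{D}(\gamma),
\]
and since the marginal of $\cP$ on $Z$ is unconstrained, the conditionals $\cP_1$ and $\cP_2$ may be chosen independently inside their respective TV-balls of radius $\gamma_\ell = 2\eta > 0$. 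Consequently both sides must equal the same constant as $\cP_1$ and $\cP_2$ vary. But as $\cP_\ell$ ranges over the TV-ball around $\hD_\ell$, the value $\Pr_{\cP_\ell}[f(X, \ell) = 1]$ sweeps an interval of positive width whenever $f(\cdot, \ell)$ is non-constant on $\cX$: one can shift up to $\gamma_\ell$ mass onto or off of the level set $\{x : f(x, \ell) = 1\}$, using $x_C$ as a reservoir (even when $x_C$ has no mass under $\hD_\ell$, one may place up to $\gamma_\ell$ mass on it while remaining in the TV-ball). Hence the constraint forces $f(\cdot, 1)$ and $f(\cdot, 2)$ to each be constant on $\cX$ and to agree, giving $f \equiv 0$ or $f \equiv 1$. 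Both constant classifiers have error exactly $\nfrac12$ on $\cD$ since $\Pr_\cD[Y=1] = \nfrac12$, which proves \eqref{eq:thm:appendix_eq_1:3}, and $\Omega_\cD(f_{\rm DRO}) = 1$ follows by direct calculation.

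The \errtolerant{} bounds then follow from \cref{thm:main_result} applied with $\tau = 1$, $\lambda = \nfrac14$, and a sufficiently small $\Delta$: for $\eta < \lambda/2 = \nfrac18$, the theorem guarantees $\mathrm{Err}_\cD(f_{\rm ET}) \leq 2\eta$ and $\Omega_\cD(f_{\rm ET}) \geq 1 - \nfrac{8\eta}{(\lambda - 2\eta)} = 1 - O(\eta)$, matching \eqref{eq:thm:appendix_eq_2:3}. The remaining sub-range $\eta \in [\nfrac18, \nfrac14)$ is handled by a direct check: $f^\star$ remains feasible for \errtolerant{} whenever $\eta < \lambda$ (the only obstruction in the proof of \cref{coro:optf_feasible} was the denominator $\lambda - 2\eta$, which stays positive), and feasibility of $f^\star$ together with the sample-based comparison in \cref{lem:conditionalAccuracyGuarantee} delivers the $2\eta$ accuracy bound; the fairness bound in this wider range is obtained by retracing the proof of \cref{lem:approx_feasible} with the same parameters.

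\textit{Main obstacle.} The only non-routine step is verifying that every non-constant $f \in \cF$ violates the DRO constraint. The subtlety lies in exploiting the independence between $\cP_1$ and $\cP_2$ and in explicitly producing two conditional distributions $\cP_\ell, \cP_\ell' \in B_\ell$ whose values of $\Pr[f(X, \ell) = 1]$ differ, where $x_C$ serves as a reservoir for mass reallocation (so that the construction is valid even when $x_C \notin \supp\hD_\ell$). Everything else --- the construction of $\cD$, the verification of \cref{asmp:1}, the computation of the error of constant classifiers, and the invocation of \cref{thm:main_result} --- is mechanical.
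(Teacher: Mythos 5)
Your proposal is correct in substance but takes a genuinely different route for the central step. The paper's proof uses an asymmetric distribution (mass $\eta$ on $(x_A,1)$, none on $(x_A,2)$, with $Y=\mathds{I}[X\neq x_C]$) so that $f^\star=\mathds{I}[X=x_B]$ has error $\eta$; it then exhibits just \emph{two} distributions in $\mathfrak{D}(\gamma)$ (namely $\cD$ itself and one explicit TV-$\eta$ shift of each conditional) and solves the resulting pair of linear equality constraints to show the only DRO-feasible classifiers have error $\nfrac12-\eta$ or $\nfrac12$. You instead use a symmetric distribution with $\mathrm{Err}_\cD(f^\star)=0$ and exploit the full universal quantifier over $\mathfrak{D}(\gamma)$: since $\cP_1$ and $\cP_2$ vary independently in their TV-balls and $\Pr_{\cP_\ell}[f=1]$ sweeps an interval of width $\min(2\eta,1)>0$ whenever $f(\cdot,\ell)$ is non-constant, only globally constant classifiers survive. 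Your mass-shifting argument is sound (the reservoir point $x_C$ handles the boundary cases where $\hD_\ell$ is already supported entirely inside or outside the level set) and is arguably cleaner than the paper's case analysis; the paper's version buys a feasible set that is pinned down by only two witnesses, which is a slightly stronger statement about the DRO program.

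Two points need attention. First, in your construction the two constant classifiers are (nearly) tied in empirical error, so with probability roughly $\nfrac12$ the DRO optimum is $f\equiv 0$, whose statistical rate is $0/0$; the claim $\Omega_\cD(f_{\rm DRO})=1$ then requires either a convention or a small asymmetrization of $\cD$ (e.g., mass $\nfrac14+\epsilon'$ on the $x_A$ cells and $\nfrac14-\epsilon'$ on the $x_B$ cells with $\epsilon'\le\nfrac\eta2$, which preserves $\Pr_\cD[Z=\ell]=\nfrac12$, keeps $\lambda\ge\nfrac14$, makes $f\equiv 1$ the unique optimal constant, and still gives error $\nfrac12-2\epsilon'\ge\nfrac12-\eta$). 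Second, your handling of $\eta\in[\nfrac18,\nfrac14)$ is asserted rather than proved: the stability machinery of \cref{lem:suff_cond_for_stable_app} genuinely degenerates there (the stability factor becomes nonpositive once $\lambda-\eta-\Delta\le\eta+\Delta$), so ``retracing the proof of \cref{lem:approx_feasible}'' does not go through as stated. Since your adversary is the identity, the clean fix is to argue fairness of $f_{\rm ET}$ directly from uniform concentration of $\Omega(\cdot,\hS)$ to $\Omega_\cD(\cdot)$ over the classifiers satisfying \cref{eq:lower_bound} (whose group-wise positive rates are bounded below), combined with constraint \eqref{eq:fairness_constraint_in_err_tol}; feasibility of $f^\star$ and \cref{lem:conditionalAccuracyGuarantee} then give the $2\eta+O(\Delta)$ accuracy bound on the whole range. (For what it is worth, the paper's own proof shares this gap: it invokes \cref{thm:main_result}, which requires $\eta<\nfrac\lambda2$, while claiming $\lambda=\nfrac12$ for a construction in which $\min_\ell\Pr_\cD[f^\star=1,Z=\ell]=\nfrac14$.)
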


\paragraph{{Proofs for \texorpdfstring{\cref{sec:comparison_to_wang}}{Section B.2.3}}.}
\paragraph{Setting $P$, $\cD$, and $A$.}
Fix $\cX$ to be any set with at least three distinct points, say $x_A,x_B$ and $x_C$.
Let $\cF$ be any hypothesis class that shatters the set $\inbrace{x_A,x_B,x_C}\times [2]\subseteq \cX\times [p]$.
Define $\cD$ as the unique distribution such that for a draw $(X,Y,Z)\sim \cP$, $Y$ takes the value $\mathds{I}[X\neq x_C]$ and that has the following marginal distribution:
\begin{center}
  \vspace{2mm}
  \begin{tabular}{c>{\centering\arraybackslash}p{1.75cm}>{\centering\arraybackslash}p{1.75cm}>{\centering\arraybackslash}p{1.75cm}}
    \toprule
    \midsepremove{}
    & $x_A$ & $x_B$ & $x_C$\\
    \midrule
    $1$ & $\eta$ & $\nfrac{1}4$ & $\nfrac14- \eta$\\
    $2$ & $0$ & $\nfrac{1}4$ & $\nfrac14$\\
    \bottomrule
  \end{tabular}
  \vspace{2mm}
\end{center}
Where for each $(r,s)\in\inbrace{x_A,x_B,x_C}\times \insquare{2}$, the corresponding cell denotes $\Pr\nolimits_{(X,Y,Z)\sim\cP}\insquare{(X,Z)=(r,s)}.$
Because $Y$ takes the value $\mathds{I}[X\neq x_C]$, the classifier $f^\star(x,z)\coloneqq\mathds{I}\insquare{X\neq x_C}$ has 0 predictive error.
We fix $A\in \cA(\eta)$ to be the adversary that does not perturb any samples.
(This suffices to prove the claim in \cref{rem:comparing_to_wang}, but one can also other adversaries in $\cA(\eta)$.)
\paragraph{Supporting lemmas.}
We use the following lemmas in the proof of \cref{rem:comparing_to_wang}.
\begin{lemma}\label{lem:opt_fair_classifier:example3}
  The classifier $f^\star(X,Z)=\mathds{I}[X=x_B]$ is an optimal solution for Program~\eqref{prog:target_fair} with $\tau=1$:
  \begin{align*}
    {\rm Err}_\cD(f^\star)=\eta\quad\text{and}\quad\Omega_\cD(f^\star)=1,
  \end{align*}
  where $\Omega$ is the statistical rate fairness metric. %
\end{lemma}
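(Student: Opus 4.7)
The plan is to first verify directly that $f^\star(X,Z)=\mathds{I}[X=x_B]$ is feasible for Program~\eqref{prog:target_fair} with $\tau=1$, using the explicit marginal table for $\cD$. The error comes only from misclassifying the atom $(x_A,1)$ whose total probability is $\eta$ (and $Y=1$ there), so ${\rm Err}_\cD(f^\star)=\eta$. Since $\Pr_\cD[Z=1]=\Pr_\cD[Z=2]=\tfrac12$ and $\Pr_\cD[X=x_B,Z=\ell]=\tfrac14$ for $\ell\in\{1,2\}$, both conditional rates $q_\ell(f^\star)=\Pr_\cD[f^\star=1\mid Z=\ell]$ equal $\tfrac12$, giving $\Omega_\cD(f^\star)=1$.

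The main part is optimality: every $f\in\cF$ with $\Omega_\cD(f)=1$ satisfies ${\rm Err}_\cD(f)\geq\eta$. Because $\cD$ is supported on $\{x_A,x_B,x_C\}\times[2]$ with the atom $(x_A,2)$ having probability $0$, and $\cF$ shatters this six-point set by assumption, I only need to enumerate behaviors of $f$ on the five atoms of positive mass. I would parametrize any such $f$ by $a_1,b_1,c_1,b_2,c_2\in\{0,1\}$, standing for $f$'s value at $(x_A,1),(x_B,1),(x_C,1),(x_B,2),(x_C,2)$, and write the closed-form expressions
\[
{\rm Err}_\cD(f)\ =\ \eta(1-a_1)+\tfrac14(1-b_1)+(\tfrac14-\eta)c_1+\tfrac14(1-b_2)+\tfrac14 c_2,
\]
\[
q_1(f)\ =\ 2\eta\, a_1+\tfrac{b_1}{2}+(\tfrac12-2\eta)c_1,\qquad q_2(f)\ =\ \tfrac{b_2}{2}+\tfrac{c_2}{2}.
\]

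The case analysis then splits on $a_1$. If $a_1=0$, the error already contains the term $\eta$ and every other term is non-negative, so ${\rm Err}_\cD(f)\geq\eta$; equality forces $(b_1,c_1,b_2,c_2)=(1,0,1,0)$, which is $f^\star$ itself, and one checks that this choice gives $q_1=q_2=\tfrac12$ so the constraint $\tau=1$ is met. If $a_1=1$, the four binary values of $(b_1,c_1)$ yield $q_1\in\{2\eta,\tfrac12,\tfrac12+2\eta,1\}$, while $q_2\in\{0,\tfrac12,\tfrac12,1\}$; matching $q_1=q_2$ (needed for statistical rate $1$) forces $q_1\in\{\tfrac12,1\}$, leaving only the patterns $(b_1,c_1)=(0,1)$ with $(b_2,c_2)\in\{(0,1),(1,0)\}$, and $(b_1,c_1)=(1,1)$ with $(b_2,c_2)=(1,1)$. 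Substituting into the error formula gives minimum value $\tfrac12-\eta$, which strictly exceeds $\eta$ by the assumption $\eta<\tfrac14$.

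The only subtlety is the degenerate case where the maximum group performance vanishes, making $\Omega_\cD(f)$ ill-defined; this happens exactly when $f\equiv 0$ on the positive-mass atoms, in which case ${\rm Err}_\cD(f)=\tfrac12+\eta>\eta$ and so $f$ does not improve on $f^\star$ regardless of how one resolves the $0/0$. Combining the two cases yields ${\rm Err}_\cD(f)\geq\eta$ for every feasible $f$, with $f^\star$ attaining equality, completing the proof of the lemma.
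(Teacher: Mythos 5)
Your proof is correct, and it takes a more exhaustive route than the paper's. The paper argues indirectly: the unique zero-error classifier $\mathds{I}[X\neq x_C]$ has statistical rate strictly below $1$ and is therefore infeasible, so any feasible classifier must disagree with it on some positive-mass atom, and the error is then lower-bounded by that atom's mass. You instead enumerate all $2^5$ behaviors on the positive-mass atoms, compute $q_1,q_2$ and the error in closed form, and check which patterns achieve statistical rate $1$. Your enumeration is airtight and in fact patches a small imprecision in the paper's one-liner: the paper asserts that every atom in the support has mass at least $\eta$, but $(x_C,1)$ has mass $\nfrac14-\eta$, which is smaller than $\eta$ once $\eta>\nfrac18$; your case analysis shows the conclusion survives anyway because the classifiers that differ from the perfect one only at that atom (e.g.\ $q_1=1$, $q_2=\nfrac12$) are themselves infeasible, and the cheapest feasible alternatives with $a_1=1$ cost $\nfrac12-\eta>\eta$. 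You also correctly work with $f^\star=\mathds{I}[X=x_B]$ throughout, whereas the paper's proof contains a typo calling it $\mathds{I}[X=x_A]$. The only cost of your approach is length; the benefit is that every step is verifiable without any appeal to a mass bound that does not hold uniformly over $\eta\in(0,\nfrac14)$.
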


\begin{proof}
  One can verify that the classifier with the perfect accuracy, $f_{\rm OPT}\coloneqq \mathds{I}[X\neq x_C]$, has a statistical rate strictly smaller than 1.
  So it is not feasible for Program~\eqref{prog:target_fair} for $\tau=1$.
  Any feasible classifier $f\in \cF$ must differ from $f_{\rm OPT}$ on some point in the support of $\cD$.
  Since (by construction) all points in the support of $\cD$ have a probability mass at least $\eta$,
  it follows that any $f\in \cF$ feasible Program~\eqref{prog:target_fair} must have an error at least $\eta$.
  Now the result follows since $f^\star\coloneqq \mathds{I}[X=x_A]$ is feasible for  Program~\eqref{prog:target_fair} and has the optimal error, ${\rm Err}_\cD(f^\star)=\eta$.
\end{proof}
\begin{lemma}\label{lem:aboveDroexample}
  Consider the distribution $\cP$, such that, for a draw $(X,Y,Z)\sim \cP$, $Y\coloneqq \mathds{I}[X\neq x_C]$ and that has the following marginal distribution:
  \begin{center}
    \begin{tabular}{c>{\centering\arraybackslash}p{1.75cm}>{\centering\arraybackslash}p{1.75cm}>{\centering\arraybackslash}p{1.75cm}}
      \toprule
      \midsepremove{}
      & $x_A$ & $x_B$ & $x_C$\\
      \midrule
      $1$ & $\eta$ & $\nfrac{1}4- \eta$ & $\nfrac14 $\\
      $2$ & $0$ & $\nfrac{1}4$ & $\nfrac14$\\
      \bottomrule
    \end{tabular}
  \end{center}
  Where for each $(r,s)\in\inbrace{x_A,x_B,x_C}\times \insquare{2}$, the corresponding cell denotes $\Pr\nolimits_{(X,Y,Z)\sim\cP}\insquare{(X,Z)=(r,s)}.$
  Given a draw of $N$ iid samples from $\cD$, with probability at least $1-e^{-\Omega(N)}$, it holds that $\cP\in \mathfrak{D}(\gamma)$ and $\cD\in \mathfrak{D}(\gamma)$.
\end{lemma}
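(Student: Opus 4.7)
The plan is to apply standard concentration inequalities to control the empirical distribution of the sample (which, since the adversary $A$ performs no perturbation, coincides with $\hD$) around $\cD$, and then deduce both membership statements via the triangle inequality. The two supports of $\cD$ and $\cP$ both lie in the common finite set $T \subseteq \cX \times \zo \times [2]$ of size at most $6$; this finiteness is what drives the exponential concentration rate.

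First, apply Chernoff's inequality separately to the empirical frequency of each cell $(x,y,z) \in T$ and to each marginal $\Pr[Z=\ell]$, then union-bound over the $O(1)$ cells. Choosing a deviation parameter that is a sufficiently small constant multiple of $\eta$, this yields that with probability at least $1 - e^{-\Omega(N)}$, every cell-wise empirical frequency deviates from its true probability under $\cD$ by at most some $\kappa \ll \eta$, and in particular $\mathrm{TV}(\hD_\ell, \cD_\ell) \leq O(\kappa)$ for each $\ell \in [2]$. Call this concentration event $\evG$.

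Conditioning on $\evG$, the inclusion $\cD \in \mathfrak{D}(\gamma)$ is immediate since $\mathrm{TV}(\hD_\ell, \cD_\ell) \leq O(\kappa) \ll \gamma_\ell = 2\eta$. For $\cP \in \mathfrak{D}(\gamma)$, first compute the population distances directly from the marginal tables: the distributions $\cD_2$ and $\cP_2$ coincide (both assign mass $1/2$ to $x_B$ and $1/2$ to $x_C$, conditioned on $Z=2$), so $\mathrm{TV}(\cD_2, \cP_2) = 0$; while $\cP_1$ is obtained from $\cD_1$ by transferring exactly $2\eta$ of conditional mass from $(x_B,1)$ to $(x_C,0)$, giving $\mathrm{TV}(\cD_1, \cP_1) = 2\eta$. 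Combining with the triangle inequality yields $\mathrm{TV}(\hD_\ell, \cP_\ell) \leq \mathrm{TV}(\hD_\ell, \cD_\ell) + \mathrm{TV}(\cD_\ell, \cP_\ell) \leq O(\kappa) + 2\eta$, which sits just at the boundary $\gamma_\ell = 2\eta$ up to sampling slack.

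The main obstacle is handling this tightness issue in the $\cP \in \mathfrak{D}(\gamma)$ direction, since the triangle inequality overshoots $\gamma_1 = 2\eta$ by exactly the $O(\kappa)$ sampling deviation. Two natural ways to close the gap are: (i) decompose $\hD_1 - \cP_1 = (\hD_1 - \cD_1) + (\cD_1 - \cP_1)$ and exploit that $\cD_1 - \cP_1$ is supported only on $\{(x_B,1),(x_C,0)\}$ with signed masses $+2\eta, -2\eta$, so the cell-wise overshoot is controlled component by component rather than summed globally, allowing a sharper (non-triangle) bound on $\mathrm{TV}(\hD_1,\cP_1)$ conditional on $\evG$; or (ii) absorb the $O(\kappa)$ slack into $\gamma$, invoking the fact that $\kappa$ can be taken as any vanishing function of $N$ while preserving the $1-e^{-\Omega(N)}$ confidence, and that the downstream use in \cref{rem:comparing_to_wang} is unaffected by an infinitesimal inflation of $\gamma$. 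Route (i) is the cleaner finish and is the step where most of the technical care will be spent; once carried out, combining the two inclusions on $\evG$ gives the lemma.
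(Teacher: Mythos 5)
Your overall route is the same as the paper's: Chernoff concentration of the empirical (conditional) group distributions around $\cD_\ell$, the observation that the population distances ${\rm TV}(\cD_\ell,\cP_\ell)$ are controlled by construction, and the triangle inequality. Two remarks. First, your computation ${\rm TV}(\cD_1,\cP_1)=2\eta$ is correct (after normalizing by $\Pr[Z=1]=\nfrac12$, the $2\eta$ of joint mass moved from $(x_B,1)$ to $(x_C,1)$ becomes $2\eta$ of conditional mass), whereas the paper's proof asserts ${\rm TV}(\cP_\ell,\cD_\ell)=\eta$ for both $\ell$ --- an apparent slip of working with the unnormalized group measures --- which is precisely why the paper's triangle inequality lands exactly at $\gamma_1=2\eta$ and yours overshoots. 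So the tightness issue you flag is real and is not addressed in the paper. Second, your route (i) does not close the gap: writing $\hD_1-\cP_1=(\hD_1-\cD_1)+(\cD_1-\cP_1)$, the positive part of $\hD_1-\cP_1$ at $(x_B,1)$ equals $2\eta+(\hD_1-\cD_1)(x_B,1)$ whenever the empirical fluctuation there is positive, which happens with probability roughly $\nfrac12$; no component-wise cancellation rescues this, so ${\rm TV}(\hD_1,\cP_1)>2\eta$ with constant probability. The honest fix is your route (ii): take $\gamma_\ell=2\eta+\kappa$ for any $\kappa>0$ vanishing with $N$ (or read the constraint ${\rm TV}\leq\gamma_\ell$ with that slack), which preserves the $1-e^{-\Omega(N)}$ confidence and changes nothing in the downstream use of the lemma in \cref{rem:comparing_to_wang}, where only the membership of both $\cD$ and $\cP$ in $\mathfrak{D}(\gamma)$ is needed to force the DRO constraints \eqref{cond:1}--\eqref{cond:2}.
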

\begin{proof}
  Given a sufficient number of iid samples $S$ from $\cD$, one can show that with high probability, the empirical distribution $D$ of $S$ satisfies that:
  \begin{align*}
    {\rm TV}(D_1,\cD_1)\leq\eta\quad\text{and}\quad {\rm TV}(D_2,\cD_2)\leq\eta.
  \end{align*}
  Since $\gamma\coloneqq (2\eta,2\eta)$, this implies that with high probability $\cD\in \mathfrak{D}(\gamma)$.
  Further, the construction in \cref{lem:aboveDroexample} ensures that ${\rm TV}(\cP_1,\cD_1)=\eta$ and ${\rm TV}(\cP_2,\cD_2)=\eta$.
  By using the triangle inequality of the total variation distance, it follows that with high probability,
  %
  %
  \begin{align*}
    {\rm TV}(\cP_1,\cD_1)\leq2\eta\quad\text{and}\quad {\rm TV}(\cP_2,\cD_2)\leq2\eta.
  \end{align*}
  Since $\gamma\coloneqq (2\eta,2\eta)$, it follows that with high probability $\cP\in \mathfrak{D}(\gamma)$.
\end{proof}

\begin{lemma}\label{lem:no_DR_classifier}
  Any $f\in \cF$ that satisfies the following equalities
  \begin{align*}
    \Ex\nolimits_\cD[f=1\mid Z=\ell] =  \Ex\nolimits_\cD[f=1],\yesnum\label{cond:1}\\
    \Ex\nolimits_\cP[f=1\mid Z=\ell] =  \Ex\nolimits_\cP[f=1],\yesnum\label{cond:2}
  \end{align*}
  where $\cD$ is true distribution and $\cP$ is the distribution from \cref{lem:aboveDroexample}
  must have an error
  \begin{align*}
    {\rm Err}_\cD(f)\geq \frac12 - \eta.
  \end{align*}
\end{lemma}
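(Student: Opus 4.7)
The plan is to extract from the two statistical-parity constraints a rigid combinatorial structure on $f$ restricted to the support of $\cD$, and then simply enumerate the two surviving cases to lower-bound the error. Since $\cD$ and $\cP$ are supported on $\{x_A,x_B,x_C\}\times[2]$ with $(x_A,2)$ carrying zero $\cD$-mass, I can treat $f$ as specified by the five values
\[
a := f(x_A,1),\ b_1 := f(x_B,1),\ c_1 := f(x_C,1),\ b_2 := f(x_B,2),\ c_2 := f(x_C,2)\in\{0,1\},
\]
with $f(x_A,2)$ irrelevant for the error computation.

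First I would write out conditions \eqref{cond:1} and \eqref{cond:2} explicitly. Because $\Pr_\cD[Z=\ell]=\Pr_\cP[Z=\ell]=\tfrac12$, statistical parity $\Ex_\cP[f=1\mid Z=\ell]=\Ex_\cP[f=1]$ is equivalent to $\Pr_\cP[f=1,Z=1]=\Pr_\cP[f=1,Z=2]$, and likewise for $\cD$. Plugging in the marginals from the tables, \eqref{cond:1} becomes
\[
4\eta\cdot a + b_1 + (1-4\eta)\,c_1 \;=\; b_2 + c_2,
\]
and \eqref{cond:2} becomes
\[
4\eta\cdot a + (1-4\eta)\,b_1 + c_1 \;=\; b_2 + c_2.
\]
Subtracting the first from the second and using $\eta>0$ yields $b_1=c_1$. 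Call this common value $c$; the remaining constraint then reads
\[
4\eta(a-c) + 2c \;=\; b_2 + c_2.
\]

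The key observation is an integrality argument. The right-hand side is an integer in $\{0,1,2\}$ and $2c\in\{0,2\}$, so $4\eta(a-c)$ must be an integer; since $\eta\in(0,\tfrac14)$ we have $4\eta\in(0,1)$, forcing $a=c$. Consequently $b_2+c_2=2c$, which in binary means $b_2=c_2=c$. Thus $f$ is determined on the $\cD$-support by a single bit $c\in\{0,1\}$ (together with the free value $a=c$).

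Finally I would compute ${\rm Err}_\cD(f)$ in each of the two cases, recalling $Y=\mathds{I}[X\neq x_C]$. In Case $c=0$, $f$ predicts $0$ on $(x_A,1),(x_B,1),(x_B,2)$ and $0$ on the $x_C$-points, so the error equals the $\cD$-mass of $\{Y=1\}$, i.e.\ $\eta+\tfrac14+\tfrac14=\tfrac12+\eta\ge\tfrac12-\eta$. In Case $c=1$, the only errors come from the $x_C$-points where $f=1$ but $Y=0$, giving ${\rm Err}_\cD(f)=\Pr_\cD[X=x_C]=(\tfrac14-\eta)+\tfrac14=\tfrac12-\eta$. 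Either way ${\rm Err}_\cD(f)\ge\tfrac12-\eta$, completing the proof.

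The only step requiring any care is the integrality argument that forces $a=c$; the rest is a direct substitution and enumeration. Everything else is mechanical, so I do not anticipate a real obstacle.
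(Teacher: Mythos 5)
Your proposal is correct and follows essentially the same route as the paper: both derive $f(x_B,1)=f(x_C,1)$ by subtracting the two parity constraints and then force all remaining values to agree by comparing the possible values of the two sides of the surviving equation (your integrality argument is the same step the paper carries out case-by-case), leaving only the all-zeros and all-ones classifiers on the support. Your error computations are also right — indeed, in the all-zeros case you get the exact value $\tfrac12+\eta$ where the paper states $\tfrac12$, a harmless slip on their side since both exceed $\tfrac12-\eta$.
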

\noindent Using \cref{lem:opt_fair_classifier:example3,lem:aboveDroexample,lem:no_DR_classifier}, \cref{rem:comparing_to_wang} follows as a corollary.
\begin{proof}[Proof of \cref{rem:comparing_to_wang}]
  From \cref{lem:aboveDroexample} know that with probability at least $1-e^{-\Omega(N)}$, $\cP\in \mathfrak{D}$ and $\cD\in \mathfrak{D}(\gamma)$.
  Suppose that this event happens.
  Assume that $f_{\rm DRO}\in \cF$ is the optimal solution of Program~\eqref{prog:wang}.
  Since $f_{\rm DRO}$ is feasible for Program~\eqref{prog:wang}, it must satisfy that
  \begin{align*}
    \Ex\nolimits_\cD[f_{\rm DRO}=1\mid Z=\ell]=  \Ex\nolimits_\cP[f_{\rm DRO}=1],\\
    \Ex\nolimits_\cP[f_{\rm DRO}=1\mid Z=\ell]=  \Ex\nolimits_\cP[f_{\rm DRO}=1],
  \end{align*}
  Then \cref{lem:no_DR_classifier} tells us that ${\rm Err}_\cD(f_{\rm DRO})\geq \frac12 - \eta.$

  Finally, one can verify that when statistical rate is the fairness metric, $f^\star$ (from \cref{lem:opt_fair_classifier:example3}) satisfies \cref{asmp:1} with $\lambda=\frac12$.
  Thus, \cref{eq:thm:appendix_eq_2:3} and the inequality $\Omega_\cD(f_{\rm ET})\geq 1 - O\inparen{\eta}$ follow from \cref{thm:main_result}.
\end{proof}

\begin{proof}[Proof of \cref{lem:no_DR_classifier}]
  Since that both $\cD$ and $\cP$ are supported on subsets of $\inbrace{x_A,x_B,x_C}\times [2]$,
  it suffices to consider the restriction of $\cF$ on this domain.
  Consider any classifier $f\in \cF$ and define the following variables
  \begin{align*}
    f_{A1}\coloneqq f(x_A,1),\quad &f_{B1}\coloneqq f(x_B,1),\quad f_{C1}\coloneqq f(x_C,1)\\
    f_{A2}\coloneqq f(x_A,2),\quad &f_{B2}\coloneqq f(x_B,2),\quad f_{C2}\coloneqq f(x_C,2),
  \end{align*}
  denoting the predictions of $f$ on $\inbrace{x_A,x_B,x_C}\times [2]$.
  Since $f$ satisfies \cref{cond:1}, we must have
  \begin{align*}
    2\cdot \inparen{\eta f_{A1}+\frac14 f_{A2}+ \inparen{\frac14-\eta}f_{A3} }
    &= \Ex\nolimits_\cD[f=1\mid Z=\ell] \\
    &= \Ex\nolimits_\cD[f=1]\\
    &= \eta f_{A1}+\frac14 f_{A2}+ \inparen{\frac14-\eta}f_{A3}  + \frac14 f_{B2}+ \frac14 f_{B3}.\yesnum\label{eq:condition_1}
  \end{align*}
  Similarly, since $f$ satisfies \cref{cond:2}, we must have
  \begin{align*}
    2\cdot \inparen{\eta f_{A1}+\inparen{\frac14-\eta}f_{A2}+\frac14 f_{A3} }
    &= \Ex\nolimits_\cP[f=1\mid Z=\ell] \\
    &= \Ex\nolimits_\cP[f=1]\\
    &= \eta f_{A1}+\inparen{\frac14-\eta}f_{A2}+\frac14 f_{A3} + \frac14 f_{B2}+ \frac14 f_{B3}.\yesnum\label{eq:condition_2}
  \end{align*}
  Combining \cref{eq:condition_1,eq:condition_2}, we get
  \begin{align*}
    \eta f_{A1}+\inparen{\frac14-\eta}f_{A2}+\frac14 f_{A3} &= \frac14 f_{B2}+ \frac14 f_{B3}\\
    &= \eta f_{A1}+\frac14 f_{A2}+ \inparen{\frac14-\eta}f_{A3}.
    \yesnum\label{eq:intermediate_eq}
  \end{align*}
  On canceling the like terms in the left-hand side and right-hand side, and using that $\eta>0$, we get
  \begin{align*}
    f_{A2} =f_{A3}.
  \end{align*}
  We consider two cases.\\

  \noindent {\bf (Case A) $f_{A2}=f_{A3}=1$:}
  Substituting $f_{A2}=f_{A3}=1$ in \cref{eq:intermediate_eq}, we get
  \begin{align*}
    \eta f_{A1}+{\frac12-\eta}&= \frac14 f_{B2}+ \frac14 f_{B3}.
  \end{align*}
  Here, the right-hand side can only take values $\inbrace{0,\nfrac14,\nfrac12}$ and the left-hand side can only take values $\inbrace{\nfrac12-\eta,\nfrac12}$.
  Thus, the unique solution is $f_{A1}=f_{B2}=f_{B3}=1$.
  One can verify that the unique resulting classifier has error
  ${\rm Err}_\cD(f)=\nfrac12-\eta.$

  \noindent {\bf (Case B) $f_{A2}=f_{A3}=0$:}
  Substituting $f_{A2}=f_{A3}=0$ in \cref{eq:intermediate_eq}, we get
  \begin{align*}
    \eta f_{A1}&= \frac14 f_{B2}+ \frac14 f_{B3}.
  \end{align*}
  Here, the right-hand side can only take values $\inbrace{0,\nfrac14,\nfrac12}$ and the left-hand side can only take values $\inbrace{0,\eta}$.
  Thus, the unique solution is $f_{A1}=f_{B2}=f_{B3}=0$.
  One can verify that the unique resulting classifier has error
  ${\rm Err}_\cD(f)=\nfrac12.$

  Thus, any $f\in \cF$ satisfying \cref{cond:1,cond:2}, must have an error at least $\nfrac12-\eta$.
\end{proof}

\section{{Additional Remarks about the \texorpdfstring{$\eta$}{eta}-\Ham{} Model and Theoretical Results}}\label{sec:ham_captures}

\noindent In \cref{example:ham_adv_can_reduce_fairness}, we show that the ratio of the fairness of a classifier $f\in \cF$ with respect to the perturbed samples $\hS$ and with respect to the unperturbed samples $S$ can be 0.
\begin{example}[\bf Ratio of fairness of a classifier on perturbed and unperturbed samples]\label{example:ham_adv_can_reduce_fairness}
  Let $S\coloneqq\smash{\inbrace{(x_i,y_i,z_i)}_{i\in [N]}}$ denote the $N$ unperturbed samples.
  Suppose that the fairness metric $\Omega$ is statistical rate and $S$ has an equal number of samples from each protected group (i.e., for all $\ell\in [p]$, $\sum_{i\in [N]}\mathds{I}[z_i=\ell]=N/p$.)
  Consider a classifier $f\in \cF$ that has exactly $\eta\cdot N$ positive predictions on each protected group $\ell\in [p]$, i.e., for all $\ell\in [p]$, $$\abs{\inbrace{i\in [N]\mid f(x_i,z_i)=1\text{ and } z_i=\ell}}=\eta\cdot N.$$
  This implies that $\Omega(f,S)=1$.
  Fix any protected group $\ell\in [p]$.
  An adversary $A\in \cA(\eta)$, can perturb the protected attributes of all $\eta\cdot N$ samples in the set $$\inbrace{i\in [N]\mid f(x_i,z_i)=1 \text{ and } z_i=\ell}.$$
  In this case, $\Pr\nolimits_{\hS}[f=1\mid Z=\ell]=0$.
  This implies that $\Omega(f,\hS) = 0$.
  Thus, in this example, $$\frac{\Omega(f,\hS)}{\Omega(f,S)}=0.$$
\end{example}

\noindent In \cref{rem:our_assumption_on_CF} we give two example hypothesis classes that satisfy the assumptions in \cref{thm:no_stable_classifier,thm:imposs_under_assumption_sr,thm:imposs_under_assumption_sr_2}.
\cref{thm:imposs_under_assumption_sr} assumes that there exist five distinct points ${x_A,x_B,x_C,x_D,x_E}\in \cX$ such that $\cF$ shatters the set of points $P\coloneqq \inbrace{x_A,x_B,x_C,x_D,x_E}\times [2]\subseteq \cX$.
\cref{thm:no_stable_classifier,thm:imposs_under_assumption_sr_2} makes the weaker assumption that $\cF$ shatters the set $\inbrace{x_A,x_B,x_C}\times [2]\subseteq \cX$.

\begin{remark}[]\label{rem:our_assumption_on_CF}\hfill
  \begin{enumerate}[leftmargin=\leftmarginINTERNAL,itemsep=\itemsepINTERNAL]
    \item {\bf Decision trees.}
    Suppose $\cX\coloneqq \R$.
    Then, the following hypothesis class of two-layer decision trees shatters $P$:
    On the first layer, the decision tree splits the root node into five nodes by thresholding $X\in \cX$.
    On the second layer, it further splits each node in the first layer into two leaves depending on whether $Z=1$ or $Z=2$.
    The resulting tree has 10 leaves.
    This hypothesis class shatters $P$:
    One can choose the thresholds in the first layer so that each of $x_A$, $x_B$, $x_C$, $x_D$, and $x_E$ belong to different nodes on the first layer.
    Then, the $2^{10}$ hypothesis generated by assigning different outcomes to the leaves shatter $P$.
    \item {\bf SVM with kernels.}
    Suppose $\cX\coloneqq \R^5$, and $x_A=e_1$, $x_B=e_2$, $x_C=e_3$, $x_D=e_4$, $x_E=e_5$, where $e_i$ is the $i$-th standard basis in $\R^5$.
    Consider the hypothesis class of SVM classifiers on the feature space $\cX^2$; where given a sample $(x,z)\in \cX\times [p]$, we map it to $\cX^2$ using the map $\psi\colon \cX\times [p]\to \cX^2$ defined as follows: $\psi(x,z)\coloneqq (x,z\cdot x)\in \cX^2$.
    One can verify that the hypothesis class of SVM classifiers in this feature space shatter the set $P$ (more precisely, it shatters the image of $P$ under map $\psi$).
  \end{enumerate}
\end{remark}

\begin{remark}[\bf Test errors]
  In the paper, we focus on the setting where given perturbed samples as input, the learner's goal is to find a classifier that satisfies a given set of fairness constraints
  with the highest accuracy, where both accuracy and fairness are measured with respect to {\em true distribution} $\cD$.
  In some applications, for example, when protected attributes are self-reported after deployment, the test samples can also have perturbations.
  Given a number $\Delta>0$, a sufficient number of test samples $T$, and an $\eta$-\Ham{} adversary $A$, one can show that
  $${\rm Err}_{\hD}(f_{\rm ET}) - {\rm Err}_\cD(f^\star) \leq 3\eta+\Delta\quad \text{and}\quad \Omega(f_{\rm ET},A(T)) \geq \tau - \frac{12\eta\tau}{\lambda-2\eta}-\Delta,$$
  where $A(T)$ denotes the perturbed test samples and $\hD$ is the empirical distribution of $A(T)$.
\end{remark}
\section{Reduction from \errtolerant{} to a Set of Convex Programs}\label{sec:efficient_algorithms_for_errtol}
In general, \errtolerant{} is a nonconvex optimization program.
But, we can reduce \errtolerant{} to a set of convex programs.
Formally, for any arbitrarily small $\alpha>0$, we can find an $f\in \cF$ that has the optimal objective value for \errtolerant{} and that additively violates its fairness constraint (\cref{eq:fairness_constraint_in_err_tol}) by at most $\alpha$, by solving a set of $O(\nfrac1{(\lambda\alpha)})$ convex programs.
In this section, we present this reduction.
It largely follows from \cite{celis2019classification}, but is included for completeness.

Recall that given a fairness metric $\Omega$ and corresponding events $\cE$ and $\cE^\prime$ (as in \cref{def:performance_metrics}),
perturbed samples $\hS$, whose empirical distribution is $\smash{\hD}$,
a perturbation rate $\eta\in [0,1]$, and constants $\lambda,\Delta\in (0,1]$,
\errtolerant{} is the following program:
\begin{align*}
  \min\nolimits_{f\in \cF}&\quad {\rm Err}_{\hD}(f),\quad\yesnum\\
  \st,&\quad
  \Omega_{}(f, \hS) \geq \tau \cdot \inparen{\frac{1- (\eta+\Delta)/\lambda}  {  1 + (\eta+\Delta)/\lambda}}^2,
  \\
  &\quad
  \text{$\forall\ \ell\in [p]$,\ } \Pr\nolimits_{\hD}\insquare{\cE(f),\cE^\prime(f), \hZ=\ell} \geq \lambda - \eta - \Delta.
\end{align*}
Equivalently defining scalars $\wh{\tau}\coloneqq \inparen{\frac{1- (\eta+\Delta)/\lambda}  {  1 + (\eta+\Delta)/\lambda}}^2$ and $\wh{\lambda}\coloneqq \lambda - \eta - \Delta$, our goal is to solve
\begin{align*}
  \min\nolimits_{f\in \cF}&\quad {\rm Err}_{\hD}(f),\quad\yesnum\label{prog:errTol:app}\\
  \st,&\quad
  \Omega_{}(f, \hS) \geq \wh{\tau},
  \yesnum\label{eq:fairness_const:app}\\
  &\quad
  \text{$\forall\ \ell\in [p]$,\ } \Pr\nolimits_{\hD}\insquare{\cE(f),\cE^\prime(f), \hZ=\ell} \geq \wh{\lambda}.\yesnum\label{eq:lb_const:app}
\end{align*}

\begin{remark}
  All references to the results in \cite{celis2019classification} are to its  \href{https://arxiv.org/abs/1806.06055}{arXiv version}.
\end{remark}

\begin{remark}
  In this section, all probabilities and expectations are with respect to the draw of perturbed samples $(X,Y,\hZ)$.
  Given $\hD$, the empirical distribution over $\wh{S}$, we use
  $\Pr_{\hD}[\cdot]$ to denote
  $\Pr_{(X,Y,\wh{Z})\sim \wh{D}}[\cdot]$ and $\Ex_{\hD}[\cdot]$ to denote
  $\Ex_{(X,Y,\wh{Z})\sim \wh{D}}[\cdot]$.
\end{remark}

\subsection{Performance Metrics in \texorpdfstring{\cref{def:performance_metrics}}{Definition 3.1} Are a Special Case of the Metrics in \texorpdfstring{\cite{celis2019classification}}{[13]}}
To use the results in \cite{celis2019classification}, we need to show that \cref{def:performance_metrics} is a special case of \cite[Definition 2.3]{celis2019classification}.
\begin{lemma}\label{lem:special_case}
  Suppose $\cF\coloneqq \zo^{\cX\times [p]}$.
  For all events $\cE$ and $\cE^\prime$, that can depend on $f$, the corresponding metric $q$ is a ``performance function'' as defined in \cite[Definition 2.3]{celis2019classification}.
\end{lemma}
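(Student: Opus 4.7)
The plan is to unpack the conditional probability defining $q_\ell$ into a ratio of two expectations and exhibit each of these as a linear functional of the indicator $f(X,Z)$, which is exactly the format of \cite[Definition 2.3]{celis2019classification}. Concretely, I would first rewrite
\[
q_\ell(f) \;=\; \frac{\Pr_\cD[\cE(f),\cE'(f), Z=\ell]}{\Pr_\cD[\cE'(f), Z=\ell]},
\]
so the task reduces to showing that each of $\Pr_\cD[\cE(f),\cE'(f), Z=\ell]$ and $\Pr_\cD[\cE'(f), Z=\ell]$ is an expectation of a function that is affine in $f(X,Z)$, with coefficients depending only on $(X,Y,Z)$.

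The key observation is that both events $\cE$ and $\cE'$ depend on $f$ only through the binary prediction $\wt y \coloneqq f(X,Z)\in\zo$. So for any fixed event $\cE$ (say $\cE(f)=\mathds{I}[\varphi(f(X,Z),Y)=1]$ for some $\varphi\colon\zo\times\zo\to\zo$), I can write
\[
\mathds{I}[\cE(f)] \;=\; \varphi(1,Y)\, f(X,Z) + \varphi(0,Y)\,(1-f(X,Z)),
\]
and similarly for $\cE'$ and for the joint event $\cE\cap\cE'$ (the joint indicator is still an $\{0,1\}$-valued function of $(f(X,Z),Y)$, hence also affine in $f(X,Z)$ by the same two-point interpolation). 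Multiplying by the indicator $\mathds{I}[Z=\ell]$, which does not depend on $f$, preserves this affine-in-$f$ structure. Taking expectations over $(X,Y,Z)\sim\cD$ then expresses the numerator and denominator of $q_\ell$ as affine functionals $f\mapsto \Ex_\cD[A_\ell(X,Y,Z)\,f(X,Z)+B_\ell(X,Y,Z)\,(1-f(X,Z))]$, with bounded coefficients $A_\ell,B_\ell\in[0,1]$.

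Finally, I would check that this exactly matches the linear-fractional template in \cite[Definition 2.3]{celis2019classification}: a ratio of two such expectations, indexed by the protected group $\ell$. The ``linear'' sub-case of \cref{def:performance_metrics}, in which $\cE'$ does not depend on $f$, corresponds to the denominator $\Pr_\cD[\cE'(f),Z=\ell]$ being a constant in $f$, which is the degenerate case of the same template. The only step that requires any care is being explicit about how the joint event $\cE\cap\cE'$ (both of which can depend on $f$) is affine in $f(X,Z)$; I expect this to be routine since there are only two possible values of $f(X,Z)$, so any $\zo$-valued function of $(f(X,Z),Y,Z)$ is trivially an affine combination of its two restrictions to $\{f(X,Z)=0\}$ and $\{f(X,Z)=1\}$. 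No other obstacles are anticipated.
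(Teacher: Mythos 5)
Your proposal is correct and follows essentially the same route as the paper: both decompose the numerator and denominator of $q_\ell$ according to the two possible values of $f(X,Z)$, exhibiting each as affine in the prediction and hence matching the linear-fractional template of \cite[Definition 2.3]{celis2019classification}. The only cosmetic differences are that you interpolate the indicator pointwise (with coefficients depending on $(X,Y,Z)$) and work with joint rather than $Z=\ell$-conditional probabilities, whereas the paper conditions on $f=0$ versus $f=1$ via the law of total probability; these are the same calculation.
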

\begin{proof}
  Observe that
  \begin{align*}
    q_\ell(f)\coloneqq \Pr\nolimits_{\wh{D}}\sinsquare{\cE(f)\mid \cE^\prime(f), \wh{Z}=\ell}
    = \frac{\Pr_{\wh{D}}\sinsquare{\cE(f), \cE^\prime(f)\mid \wh{Z}=\ell}}{\Pr_{\wh{D}}\sinsquare{\cE^\prime(f) \mid \wh{Z}=\ell}}.
  \end{align*}
  To simplify the notation, below, we use $f$ to denote $f(X,\hZ)$.
  We can rewrite the denominator as:
  \begin{align*}
    \Pr\nolimits_{\hD}\sinsquare{\cE^\prime(f)\mid \hZ=\ell}
    &= \Pr\nolimits_{\hD}\sinsquare{\cE^\prime(f)\mid \hZ=\ell}\\
    &= \Pr\nolimits_{\hD}\sinsquare{\cE^\prime(f)\mid f=0, \hZ=\ell}\cdot \Pr\nolimits_{\hD}\sinsquare{f=0\mid \hZ=\ell}\\
    & \hspace{0.5mm} +  \Pr\nolimits_{\hD}\sinsquare{\cE^\prime(f)\mid f=1, \hZ=\ell}\cdot \Pr\nolimits_{\hD}\sinsquare{f=1\mid \hZ=\ell}\\
    &= c_0\cdot \Pr\nolimits_{\hD}\sinsquare{f=0\mid \hZ=\ell} +  c_1\cdot \Pr\nolimits_{\hD}\sinsquare{f=1\mid \hZ=\ell},
  \end{align*}
  where we defined
  \begin{align*}
    c_0\coloneqq \Pr\nolimits_{\hD}\sinsquare{\cE^\prime(f)\mid f=0, \hZ=\ell},\\
    c_1\coloneqq \Pr\nolimits_{\hD}\sinsquare{\cE^\prime(f)\mid f=1, \hZ=\ell}.
  \end{align*}
  Let $\alpha_0\coloneqq c_0$ and $\alpha_1\coloneqq c_1-c_0$.
  Then, we have
  \begin{align*}
    \Pr\nolimits_{\hD}\sinsquare{\cE^\prime(f)\mid \hZ=\ell} &= c_0\cdot \inparen{1-\Pr\nolimits_{\hD}\sinsquare{f=1\mid \hZ=\ell}} +  c_1\cdot \Pr\nolimits_{\hD}\sinsquare{f=1\mid \hZ=\ell}\\
    &= \alpha_0 + \alpha_1\cdot \Pr\nolimits_{\hD}\sinsquare{f=1\mid \hZ=\ell}
    \tag{Using $\alpha_0\coloneqq c_0$ and $\alpha_1\coloneqq c_1-c_0$}\\
    &= \alpha_0 + \alpha_1\cdot \Pr\nolimits_{\hD}\sinsquare{f(X,\hZ)=1\mid \hZ=\ell}.
    \yesnum\label{eq:simplifying_q_1}
  \end{align*}
  Where the last equality follows due to our notation that the event $f=1$ denotes $f(X,\hZ)=1$ for random draws $(X,Y,\hZ)\sim \hD$.
  Next, by replacing $\cE^\prime(f)$ by $\cE(f)\land \cE^\prime(f)$, we get that
  \begin{align*}
    \Pr\nolimits_{\hD}\sinsquare{\cE(f),\cE^\prime(f)\mid \hZ=\ell}
    &= \beta_0 + \beta_1\cdot \Pr\nolimits_{\hD}\sinsquare{f(X,\hZ)=1\mid\hZ=\ell}.
    \yesnum\label{eq:simplifying_q_2}
  \end{align*}
  for some $0\leq \beta_0\leq 1$ and $-1\leq \beta_1\leq 1$.
  Comparing Equations~\eqref{eq:simplifying_q_1} and \eqref{eq:simplifying_q_2} with \cite[Definition 2.3]{celis2019classification}, it follows that $q_\ell(f)$ is a special case of the performance functions in \cite[Definition 2.3]{celis2019classification}.
\end{proof}

\subsection{Reduction from \errtolerant{} to a Set of Convex Programs}
Before stating the result,  we need some additional notation.
\begin{definition}
  Given a fairness metric $\Omega$, the corresponding performance metric $q_{\ell}$ (from \cref{def:performance_metrics}),
  desired fairness threshold $\tau\in (0,1]$,
  approximation parameter $\alpha\in (0,1]$,
  and lower and upper bounds $L,U\in [0,1]$,
  define the sets $K{(\tau,\alpha)},P{(L,U)}\subseteq \cF$ as
  \begin{align*}
    K{(\tau,\alpha)}
    &\coloneqq \inbrace{f\in \cF\colon \min\nolimits_{\ell\in [p]}q_\ell(f)\geq \tau\cdot \max\nolimits_{\ell\in [p]} q_\ell(f)-\alpha},\\
    P{(L,U)}
    &\coloneqq \inbrace{f\in \cF\colon\ \text{for all }\ell\in[p],\ L\leq q_\ell(f)\leq U}.
  \end{align*}
\end{definition}
\noindent Note that $K{(\tau,0)}$ (i.e., setting $\alpha=0$) is the set of classifiers that satisfy the fairness constraint $\Omega(f)\geq \tau$ exactly.
$K{(\tau,\alpha)}$ ($\alpha>0$) is the set of classifiers that satisfy a relaxation of this constraint.
{Formally, for any $\alpha>0$, the set of classifiers $\alpha$-feasible for \prog{prog:errTol:app} are all $f$ in $K{(\tau,\alpha)}$ that also satisfy \cref{eq:lb_const:app}.}
Under \cref{asmp:1},  any $\alpha$-feasible classifier $f\in \cF$ additively violates the fairness constraint in \prog{prog:errTol:app} by at most $\nfrac{\alpha}{\lambda}$.
To see this, suppose $f\in \cF$ is $\alpha$-stable, then
\begin{align*}
  \min\nolimits_{\ell\in [p]}q_\ell(f)
  &\geq \tau\cdot \max\nolimits_{\ell\in [p]}q_\ell(f)-\alpha\\
  &= \inparen{\tau-\frac{\alpha}{\max\nolimits_{\ell\in [p]}q_\ell(f)}}\cdot \max\nolimits_{\ell\in [p]}q_\ell(f)\\
  &\geq  \inparen{\tau-\frac{\alpha}{\lambda}}\cdot \max\nolimits_{\ell\in [p]}q_\ell(f),
  \tag{Using that $\max\nolimits_{\ell\in [p]}q_\ell(f)\geq \lambda$}
\end{align*}
and hence,
\begin{align*}
  \Omega_{\hD}(f)
  =
  \frac{\min\nolimits_{\ell\in [p]}q_\ell(f)}{\min\nolimits_{\ell\in [p]}q_\ell(f)}
  \geq \tau-\frac{\alpha}{\lambda}.\yesnum\label{eq:connection_bet_fea_and_vio}
\end{align*}
Using the above notation, we can write \prog{prog:errTol:app} as follows:
\begin{align*}
  \min\nolimits_{f\in \cF}&\quad {\rm Err}_{\hD}(f),\quad\\
  \st,&\quad
  f\in K{(\wh{\tau},0)},\\
  &\quad
  \text{$\forall\ \ell\in [p]$,\ } \Pr\nolimits_{\hD}\insquare{\cE(f),\cE^\prime(f), \hZ=\ell} \geq \wh{\lambda}.
\end{align*}
Here, $K{(\wh{\tau},0)}$ can be a nonconvex set.
But as \cite{celis2019classification} show, it can be approximated as a union of convex sets.
In particular, they approximate  $K{(\wh{\tau},0)}$ as the union $\bigcup_{j=1}^{J} P{(L_j,U_j)}$ for some $J\in\N$ and vectors $L,U\in [0,1]^J$.
(One can prove that for all $L,U\in [0,1]$, $P{(L,U)}$ is a convex set \cite{celis2019classification}.)
\begin{theorem}[\textbf{Implicit in \cite[Theorem 3.1]{celis2019classification}}]\label{thm:reduction}
  Given constants $\tau,\alpha\in (0,1]$,
  let $J\coloneqq \ceil{\nfrac{\tau}{\alpha}}$,
  and for all $j\in [J]$, let $L_j\coloneqq (j-1)\alpha$ and $U_j\coloneqq \nfrac{(j\alpha)}{\tau}$.
  For all fairness metrics $\Omega$ and corresponding performance metric $q_{\ell}$ (as defined in \cref{def:performance_metrics})
  it holds that
  \begin{align*}
    K{(\tau,0)}\subseteq \bigcup_{j=1}^{J} P{(L_j,U_j)} \subseteq  K{(\tau,\alpha)}.
  \end{align*}
\end{theorem}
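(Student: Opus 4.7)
\textbf{Proof proposal for \cref{thm:reduction}.} The strategy is to prove each of the two inclusions separately by direct case analysis on the performance values $\{q_\ell(f)\}_{\ell\in [p]}$. Both inclusions amount to showing that the rectangles $P(L_j,U_j)$ with $L_j = (j-1)\alpha$ and $U_j = j\alpha/\tau$ form a ``staircase'' cover whose outer boundary approximates the slope-$\tau$ line $\min_\ell q_\ell \geq \tau \cdot \max_\ell q_\ell$ up to additive error $\alpha$. The $j$-th rectangle has horizontal extent $[L_j,U_j] = [(j-1)\alpha,j\alpha/\tau]$, so consecutive rectangles overlap (since $L_{j+1}=j\alpha<j\alpha/\tau=U_j$), ensuring there are no gaps in the cover of $[0,1]$.

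For the first inclusion $K(\tau,0)\subseteq \bigcup_{j=1}^J P(L_j,U_j)$, I would fix any $f\in K(\tau,0)$ and set $m\coloneqq \max_{\ell\in[p]}q_\ell(f)$. If $m=0$ then $f\in P(L_1,U_1)$ trivially; otherwise, set $j\coloneqq \lceil m\tau/\alpha\rceil$. Since $m\leq 1$ we have $j\leq \lceil\tau/\alpha\rceil=J$, so $j\in[J]$. By the definition of $j$, $U_j = j\alpha/\tau\geq m\geq q_\ell(f)$ for every $\ell$, and $L_j=(j-1)\alpha < m\tau \leq \min_\ell q_\ell(f)$ by the defining property of $K(\tau,0)$. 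Hence $f\in P(L_j,U_j)$.

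For the second inclusion $\bigcup_{j=1}^J P(L_j,U_j)\subseteq K(\tau,\alpha)$, I would take any $f\in P(L_j,U_j)$ for some $j\in[J]$ and chain the bounds directly: $\tau\cdot \max_\ell q_\ell(f)\leq \tau\cdot U_j = j\alpha$, hence $\tau\cdot \max_\ell q_\ell(f)-\alpha\leq (j-1)\alpha = L_j\leq \min_\ell q_\ell(f)$, which is exactly the defining inequality of $K(\tau,\alpha)$.

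The argument is essentially a one-dimensional covering computation, so there is no real technical obstacle; the only point that requires a little care is the choice of the index $j$ in the first inclusion (it must be small enough that $U_j\geq m$, yet large enough that $L_j\leq \tau m$), which is resolved by taking $j=\lceil m\tau/\alpha\rceil$ and exploiting the slack $U_j - L_j = j\alpha/\tau - (j-1)\alpha \geq \alpha/\tau$ between consecutive rectangles. Combined with the earlier observation (\cref{eq:connection_bet_fea_and_vio}) that any $\alpha$-feasible classifier violates the fairness constraint by at most $\alpha/\lambda$ additively, this reduction lets us replace the nonconvex constraint \eqref{eq:fairness_const:app} in \errtolerant{} by $J=\lceil \wh{\tau}/\alpha\rceil = O(1/(\lambda\alpha))$ convex constraints of the form $f\in P(L_j,U_j)$, yielding the claimed algorithmic consequence.
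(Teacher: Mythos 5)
Your proof is correct. Note that the paper itself gives no argument for \cref{thm:reduction} at all—it is stated as ``implicit in [13, Theorem 3.1]'' and the surrounding section only proves \cref{lem:special_case} and \cref{thm:our_result}—so your write-up supplies a self-contained verification of a result the paper merely cites. Both inclusions check out: for $K(\tau,0)\subseteq\bigcup_j P(L_j,U_j)$, the choice $j=\ceil{m\tau/\alpha}$ with $m=\max_\ell q_\ell(f)$ gives $U_j = j\alpha/\tau \geq m$ and $L_j=(j-1)\alpha < m\tau \leq \min_\ell q_\ell(f)$ (using $\ceil{x}-1<x$ and the defining inequality of $K(\tau,0)$), and $j\leq\ceil{\tau/\alpha}=J$ since $m\leq 1$; the degenerate case $m=0$ lands in $P(L_1,U_1)$ because the $q_\ell$ are nonnegative. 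The reverse inclusion is the one-line chain $\tau\max_\ell q_\ell(f)-\alpha\leq \tau U_j-\alpha=(j-1)\alpha=L_j\leq\min_\ell q_\ell(f)$, which is exactly membership in $K(\tau,\alpha)$. Your closing remark tying this to \cref{eq:connection_bet_fea_and_vio} and the count of $J=\ceil{\wh\tau/\alpha}$ convex programs also matches how the paper uses the theorem in \cref{thm:our_result}.
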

\noindent \cref{thm:reduction} allows one to reduce the problem of finding an $\alpha$-feasible classifier to solving a set of $\ceil{\nfrac{\tau}{\alpha}}$ convex programs of the following form:
For some $L,U\in [0,1]$
\begin{align*}
  \min\nolimits_{f\in \cF}&\quad {\rm Err}_{\hD}(f),\label{prog:progA}\yesnum\\
  \st,&\quad
  f\in P{(L,U)},\yesnum\label{eq:const12}\\
  &\quad
  \text{$\forall\ \ell\in [p]$,\ } \Pr\nolimits_{\hD}\insquare{\cE(f),\cE^\prime(f), \hZ=\ell} \geq \wh{\lambda}.\yesnum\label{eq:const23}
\end{align*}
\begin{theorem}\label{thm:our_result}
  Given constants $\tau,\alpha\in (0,1]$,
  let $J\coloneqq \ceil{\nfrac{\tau}{\alpha}}$,
  and for all $j\in [J]$, let $L_j\coloneqq (j-1)\alpha$ and $U_j\coloneqq \nfrac{(j\alpha)}{\tau}$.
  Further, let $f_j$ be the optimal solution of \cref{prog:progA} with $L\coloneqq L_j$ and $U\coloneqq U_j$.
  Then, $f_{\alpha}\coloneqq \argmin_{f_j}{\rm Err}(f_j,\hS)$ has the optimal accuracy for \prog{prog:errTol:app} and is $\alpha$-feasible for \prog{prog:errTol:app}.
\end{theorem}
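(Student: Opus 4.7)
\textbf{Proof plan for \cref{thm:our_result}.} The plan is to use \cref{thm:reduction} as a black box: the sandwich $K(\tau,0) \subseteq \bigcup_{j=1}^J P(L_j,U_j) \subseteq K(\tau,\alpha)$ already does essentially all of the work, and everything else is definition-chasing on the auxiliary constraint \eqref{eq:lb_const:app}, which is the same across the program \eqref{prog:errTol:app} and every convex subprogram \eqref{prog:progA}.

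\textbf{Step 1: $\alpha$-feasibility.} I would first verify that $f_\alpha$ satisfies the relaxed fairness constraint and the lower-bound constraint \eqref{eq:lb_const:app}. By construction $f_\alpha = f_{j^\circ}$ for some $j^\circ \in [J]$, so $f_\alpha \in P(L_{j^\circ},U_{j^\circ})$ (from \eqref{eq:const12}) and $f_\alpha$ satisfies \eqref{eq:const23}, which is identical to \eqref{eq:lb_const:app}. Applying the right-hand inclusion of \cref{thm:reduction} with $\tau = \wh{\tau}$ yields $f_\alpha \in K(\wh{\tau},\alpha)$, i.e.\ $\min_\ell q_\ell(f_\alpha) \geq \wh{\tau}\cdot \max_\ell q_\ell(f_\alpha) - \alpha$, which is the definition of $\alpha$-feasibility for \eqref{prog:errTol:app}. (As noted around \eqref{eq:connection_bet_fea_and_vio}, this translates, under \cref{asmp:1}, to an additive fairness violation of at most $\alpha/\lambda$.)

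\textbf{Step 2: Optimality of accuracy.} Let $f^\star$ denote an optimal solution of \eqref{prog:errTol:app}. Then $f^\star \in K(\wh{\tau},0)$ and $f^\star$ satisfies \eqref{eq:lb_const:app}. Applying the left-hand inclusion of \cref{thm:reduction}, there exists $j^\star \in [J]$ with $f^\star \in P(L_{j^\star},U_{j^\star})$. Since $f^\star$ also satisfies \eqref{eq:const23} (it is the same as \eqref{eq:lb_const:app}), $f^\star$ is feasible for the $j^\star$-th instance of \eqref{prog:progA}. Because $f_{j^\star}$ is optimal for that instance,
\[
{\rm Err}_{\hD}(f_{j^\star}) \ \leq \ {\rm Err}_{\hD}(f^\star).
\]
By definition $f_\alpha = \argmin_{j\in[J]} {\rm Err}_{\hD}(f_j)$, so ${\rm Err}_{\hD}(f_\alpha) \leq {\rm Err}_{\hD}(f_{j^\star}) \leq {\rm Err}_{\hD}(f^\star)$, which is the claimed accuracy bound.

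\textbf{Main (mild) obstacle and wrap-up.} There is no serious technical obstacle here; the only thing worth being careful about is that the convex surrogates \eqref{prog:progA} share the constraint \eqref{eq:const23} with \eqref{prog:errTol:app}, so that any $f$ that meets the lower-bound condition in one meets it in the other. Once this is noted, the proof is a two-paragraph sandwich argument: one direction of \cref{thm:reduction} gives feasibility (Step 1), the other gives accuracy optimality (Step 2). I would present it in exactly this order and keep it short, since the heavy lifting is done by \cref{thm:reduction} (itself inherited from \cite{celis2019classification}, whose applicability to our metrics was established in \cref{lem:special_case}).
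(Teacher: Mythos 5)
Your proposal is correct and follows essentially the same route as the paper's proof: the right-hand inclusion of \cref{thm:reduction} gives $\alpha$-feasibility and the left-hand inclusion gives the accuracy bound, with the observation that \eqref{eq:const23} coincides with \eqref{eq:lb_const:app} carrying the auxiliary constraint across. Your Step 2 is in fact slightly more carefully worded than the paper's (you route the comparison through the specific index $j^\star$ for which $f^\star\in P(L_{j^\star},U_{j^\star})$ rather than asserting that $f_\alpha$ minimizes error over the whole union), but the argument is the same.
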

\noindent If \cref{asmp:1} holds, then using \cref{thm:our_result} and \cref{eq:connection_bet_fea_and_vio}, we can to find an $f\in \cF$ that has the optimal objective value for \errtolerant{} and that additively violates its fairness constraint \eqref{eq:fairness_constraint_in_err_tol} by at most $\alpha$ by solving a set of  $\ceil{\nfrac{\smash{\wh{\tau}}}{(\lambda\alpha)}}$ convex programs.
\begin{proof}[Proof of \cref{thm:our_result}]
  \white{.}

  \paragraph{\bf Fairness guarantee.} Since $f_{\alpha}$ is an optimal solution of \prog{prog:progA} with $L\coloneqq L_j$ and $U\coloneqq U_j$ for some $r\in [J]$, $f_{\alpha}\in P{(L_r,U_r)}$, and hence,
  $f\in \bigcup_{j=1}^{J} P{(L_j,U_j)}$.
  Therefore, by \cref{thm:reduction}, $f_{\alpha}\in K{(\tau,\alpha)}$.
  Since $f_{\alpha}\in K{(\tau,\alpha)}$ and $f_\alpha$ satisfies \cref{eq:const23}, $f_{\alpha}$ is an $\alpha$-feasible solution for \prog{prog:errTol:app}.

  \paragraph{\bf Accuracy guarantee.}
  Let $f_{\rm ET}\in \cF$ be the optimal solution of \prog{prog:errTol:app}.
  Since $f_{\rm ET}\in K{(\tau,0)}$ and by \cref{thm:reduction} the inclusion $\bigcup_{j=1}^{J} P{(L_j,U_j)}\supseteq K{(\tau,0)}$ holds, $f_{\rm ET}\in \bigcup_{j=1}^{J} P{(L_j,U_j)}$.
  Further, since $f_{\alpha}$ minimizes ${\rm Err}(\cdot,\hS)$ over $\bigcup_{j=1}^{J} P{(L_j,U_j)}$ and by \cref{thm:reduction} the containment $\bigcup_{j=1}^{J} P{(L_j,U_j)}\supseteq K{(\tau,0)}$ holds, it follows that ${\rm Err}(f_{\alpha},\hS)\leq {\rm Err}(f_{\rm ET}, \hS).$
\end{proof}

\renewcommand{\folder}{./figures/synthetic-data}
\newcommand{\chkvfpr}{{\bf CHKV-FPR}}
\newcommand{\wangdro}{{\bf WGN+DRO}}
\newcommand{\wangsw}{{\bf WGN+SW}}

\section{Implementation Details and Additional Empirical Results}\label{sec:more_empirical_results}
In this section, we give an implementation of our optimization framework using the logistic loss function (\cref{sec:log_loss_implementation}),
list the all hyper-parameters used for our approach (\cref{sec:log_loss_implementation}) and baselines (\cref{sec:base_line_hyper_params}), and present some additional empirical results (\cref{sec:additional_empirics}).

\paragraph{Code.} The code for all the simulations is available at \url{https://github.com/AnayMehrotra/Fair-classification-with-adversarial-perturbations}.

\subsection{{Implementation Details}}\label{sec:more_empirical_results:implementation_details}
\subsubsection{Implementation of Our Framework Using Logistic Regression}\label{sec:log_loss_implementation}
As an illustration, we implement our optimization framework (Program~\ref{prog:errtolerant_extnd}) using the logistic regression framework.
For some $\theta\in \R^d$, let $f_\theta$ be the linear-classifier that given an input $x\in \R^d$ predicts $$\mathds{I}\insquare{\frac{1}{{1+e^{-\inangle{x,\theta}}}} \geq 0.5}$$
(or equivalently that predicts $\mathds{I}[\inangle{x,\theta}\geq 0]$).
Then, the logistic regression framework considers the following hypothesis class: $$\cF_{\rm LR}\coloneqq \inbrace{ f_\theta \mid \theta \in \R^d },$$ parameterized by $\theta$;
see \cite{shalev2014understanding} for more details.
Several baselines (e.g., \chkv{} and \lmzv{}) do not use protected attributes for prediction.
For a fair comparison, in this implementation we do not use protected attributes for prediction.
Let the domain of the features $\cX$ satisfy $\cX\subseteq \R^t$ for some $t\in \N$.
In this case, we have $d\coloneqq t$, where $d$ is the dimension of $\theta$.
(To use protected attributes for prediction, one can set $d\coloneqq t+1$ and append the protected attribute to the features.)

Recall that Program~\ref{prog:errtolerant_extnd} takes the following values as input:
the perturbation rate $\eta\in[0,1]$, fairness threshold $\tau\in [0,1]$, and for each $\ell\in [p]$, $\lambda_\ell\in [0,1]$ and $\gamma_\ell \in[0,1]$.
Given these, we solve the following problem and initialize $s$ to be its solution
\begin{align*}
  \min_{\eta_1,\eta_2,\dots,\eta_p\geq 0}\ \min_{\ell,k\in [p]}\frac{1-\sfrac{\eta_\ell}{\lambda_\ell}}{1+\sfrac{(\eta_k-\eta_\ell)}{\gamma_\ell}}
  \cdot
  \frac{1 +\sfrac{(\eta_\ell-\eta_k)}{\gamma_k}}{1+\sfrac{\eta_\ell}{\lambda_k}},
  \quad \st, \quad \sum_{\ell\in [p]}\eta_\ell \leq \eta+\Delta.\yesnum\label{eq:internal_min}
\end{align*}
We solve Program~\eqref{eq:internal_min} once to initialize $s$, then the same value $s$ is used for all runs of Program~\ref{prog:errtolerant_extnd}.
Let $\cE(f)$ and $\cE^\prime(f)$ denote the events defining the relevant linear-fractional fairness metric (\cref{def:performance_metrics}).
Let $\hS\coloneqq \inbrace{(x_i,y_i,z_i)}_{i\in [N]}$ be the perturbed samples.
Then we solve the following constrained optimization program
\begin{align*}
  \min_{\theta\in \R^d}\quad  & \frac{1}{N}\cdot {\sum_{i=1}^N y_i\cdot \log{f_\theta(x_i)} + (1-y_i)\cdot \log{(1-f_\theta(x_i))} }\,\quad
  \tagnum{ErrTolerant+}\\
  \st,\quad
  &\qquad\qquad \min_{\ell,k\in [p]} \frac{q_{\ell}(f)}{q_k(f)} \geq \tau\cdot s\\
  &\quad\text{$\forall\ \ell\in [p]$,\ } \quad \frac1N \sum_{i\in [N]}\mathds{I}[\cE(f(x_i)), \cE^\prime(f(x_i)), Z=\ell]
  \geq \lambda_\ell - \eta - \Delta,
\end{align*}
where for each $\ell\in [p]$,
$$q_{\ell}(f)\coloneqq \frac{\sum_{i\in [N]: z_i=\ell}\mathds{I}[\cE(f(x_i)), \cE^\prime(f(x_i))]}{\sum_{i\in [N]: z_i=\ell}\mathds{I}[\cE^\prime(f(x_i))]}.$$

\noindent In particular, for statistical rate, for each $i\in [N]$,
$$\cE(f(x_i))\coloneqq \mathds{I}[f(x_i)=1] \text{ and } \cE^\prime(f(x_i))\coloneqq 1,$$
and for false-positives rate, for each $i\in [N]$,
$$\cE(f(x_i))\coloneqq \mathds{I}[f(x_i)=1] \text{ and } \cE^\prime(f(x_i))\coloneqq \mathds{I}[y_i=0].$$
By substituting the appropriate $\cE$ and $\cE^\prime$, one can extend this implementation to any linear-fractional fairness metric (\cref{def:performance_metrics}).
\paragraph{Hyper-parameters.}
As a heuristic, given $\cE$ and $\cE^\prime$, in our simulations for each $\ell\in [p]$,
we set $$\gamma_\ell=\lambda_\ell\coloneqq \Pr\nolimits_{(X,Y,\hZ)\sim\hD}[\cE^\prime(Y), \hZ=\ell],$$ where $\smash{\hD}$ is the empirical distribution of $\hS$ and $Y$ is the label in perturbed data $\widehat{S}$.
We find that these estimates suffice, and expect that a more refined approach would only improve the performance of \errtol{}.
For all simulations, we set $$\Delta\coloneqq 10^{-2}.$$

\subsubsection{SLSQP Parameters}\label{sec:slsqp_params}
For simplicity, we do not implement the algorithm mentioned in \cref{sec:efficient_algorithms_for_errtol}, and instead use existing optimization packages in our implementation.
We solve both Program~\ref{prog:errtolerant_extnd} and Program~\eqref{eq:...} using the SLSQP solver~\cite{kraft1988software} in SciPy~\cite{scipy}.
For each optimization problem, we run the solver for 1000 iterations with parameters $$\texttt{ftol}=10^{-4}\quad\text{and} \quad \texttt{eps}=10^{-4},$$ starting at a point chosen uniformly at random.
If the solver fails to find a feasible solution, we rerun the solver for up to 10 iterations.
If it does not find a feasible solution after 10 iterations, we return the infeasible point reached.

\subsubsection{Implementation Details of the Adversaries}
Across our simulations we consider three $\eta$-Hamming adversaries (which we call $A_{\rm TN}$, $A_{\rm FN}$, and $A_{\rm FP}$).
Each adversary has access to the true samples $S$, the fairness metric $\Omega$, and the desired fairness threshold $\tau$.
Using these, the adversary computes the ``optimal fair classifier'' $f^\star$ that has the highest accuracy (on $S$) subject to satisfying $\Omega(f,S)\geq \tau$.
$f^\star$ is an optimal solution of  Program~\eqref{prog:target_fair}; note that Program~\eqref{prog:target_fair} is a special case of \errtolerant{} (with $\lambda,\eta,\Delta\to 0$).
In practice, we compute $f^\star$ by solving Program~\eqref{prog:target_fair} on the unperturbed data $S$, using the SLSQP solver in the SciPy package to heuristically solve Program~\eqref{prog:target_fair} (with the same parameters as described in \cref{sec:slsqp_params}).

After computing $f^\star$, $A_{\rm TN}$ considers the set of all true negatives of $f^\star$ that have protected attribute $Z=1$, selects the $\eta \cdot |S|$ samples that are furthest from the decision boundary of $f^\star$, and perturbs their protected attribute to $\hZ=2$.
$A_{\rm FN}$ and $A_{\rm FP}$ are identical, except that they consider the set of all false negatives and false positives of $f^\star$ respectively.

\subsubsection{Baseline Parameters and Implementation}\label{sec:base_line_hyper_params}

\paragraph{\lmzv{}.}
We use the implementation of the \lmzv{} at \url{https://github.com/AIasd/noise_fairlearn} provided by \cite{LamyZ19}; where the base classifier is by \cite{AgarwalBD0W18}.
\lmzv{} takes group-specific perturbation-rates, for each $\ell\in[p]$, $\eta_\ell \coloneqq \Pr\nolimits_D[\hZ\neq Z\mid Z=\ell]$, as input, and controls for additive statistical rate.
The desired level of fairness is controlled by $\delta_L\in [0,1]$, where smaller $\delta_L$ corresponds to higher fairness.
We refer the reader to \cite{LamyZ19} for a description of these parameters.
In our simulations, we vary $\delta_L$ over $\sinbrace{10^{-2},4\cdot 10^{-2},10^{-1}}$.
We fix all other hyper-parameters to the ones suggested by the authors for COMPAS.

\paragraph{\akm{}.}
We use the implementation of \akm{} at \url{https://github.com/matthklein/equalized_odds_under_perturbation} provided by \cite{awasthi2020equalized}; \akm{} is the equalized-odds postprocessing method of \cite{hardt2016equality}.
It takes the unconstrained optimal classifier (\uncons{}) as input, and post-processes its outputs to control for equalized-odds constraints.

\paragraph{\wangdro{}.}
This is the distributionally robust framework of \cite{wang2020robust};
we use the implementation of \wangdro{} at \url{https://github.com/wenshuoguo/robust-fairness-code} provided by \cite{wang2020robust}, which controls for additive false-positive rate.
It takes true and perturbed protected attributes as input and computes the required bound on the total variation distance.
We use the following learning rates for \wangdro{}{\bf :}
\begin{align*}
  \eta_{\theta}\in \inbrace{10^{-3},10^{-2},10^{-1}},\quad
  \eta_{\lambda}\in \inbrace{\nfrac14, \nfrac12, 1, 2},\quad and \quad
  \eta_{\wt{p}_j}\in \inbrace{10^{-3},10^{-2},10^{-1}};
\end{align*}
these are the same as the learning rates used by the authors (see \cite[Table 2]{wang2020robust}).
We refer the reader to \cite{wang2020robust} for the details of the parameters.
The implementation runs \wangdro{} for all combinations of learning rates and outputs the
classifier that has the best training objective and satisfies their constraints.

\paragraph{\wangsw{}.}
This is the ``soft-weights'' framework of \cite{wang2020robust};
we use the implementation of \wangsw{} at \url{https://github.com/wenshuoguo/robust-fairness-code} provided by \cite{wang2020robust}.
It takes true and perturbed protected attributes as input and controls for additive false-positive rate.
We use the following learning rates for \wangsw{}{\bf :}
\begin{align*}
  \eta_{\theta}\in \inbrace{10^{-3},10^{-2},10^{-1}},\quad
  \eta_{\lambda}\in \inbrace{\nfrac14, \nfrac12, 1, 2},\quad and \quad
  \eta_{w}\in \inbrace{10^{-3},10^{-2},10^{-1}};
\end{align*}
these are the same as the learning rates used by the authors (see \cite[Table 2]{wang2020robust}).
See \cite{wang2020robust} for a discussion on the parameters.
Their implementation runs \wangsw{} for all combinations of learning rates and outputs the
classifier that has the best training objective and satisfies their constraints.

\paragraph{\chkv{} and \chkvfpr{}.}
We use the implementation of the \cite{celis2020fairclassification}'s framework at \url{https://github.com/vijaykeswani/Noisy-Fair-Classification} provided by \cite{celis2020fairclassification}.
We use the implementations for statistical rate and false-positive rate, which we refer to these as \chkv{} and \chkvfpr{} respectively.
Both implementations take group specific perturbation-rates, for each $\ell\in[p]$,
\begin{align*}
  \eta_\ell \coloneqq \Pr\nolimits_D[\hZ\neq Z\mid Z=\ell],
\end{align*}
as input.
The desired level of fairness is controlled by $\tau\in [0,1]$, where a larger $\tau$ corresponds to higher fairness.
In our simulations, we vary $\tau$ over $\inbrace{0.7,0.8,0.9,0.95,1.0}$; other hyper-parameters were the same as those suggested by the authors for COMPAS.

\paragraph{\kl{}.}
This is the framework of \cite{konstantinov2021fairness} which controls for true-positive rate.
It takes the perturbation rate $\eta$ and for each $\ell\in [p]$, the probability
\begin{align*}
  p_{1\ell}\coloneqq \Pr\nolimits_D[Z=\ell,Y=1]
\end{align*}
as input; where $D$ is the empirical distribution of $S$.
\cite{konstantinov2021fairness} do not provide an implementation of \kl{}.
We implement \kl{} using the logistic loss function.
In particular, we solve the following optimization problem
\begin{align*}
  \min_{\theta\in \R^d}\quad  & \frac{1}{N}\cdot {\sum_{i=1}^N y_i\cdot \log{f_\theta(x_i)} + (1-y_i)\cdot \log{(1-f_\theta(x_i))} }\,\quad
  \yesnum\label{eq:kl_21_algorithm}\\
  \st,\quad
  &\quad\text{$\forall\ \ell\in [p]$,\ } \quad
  \frac
  {\sum_{i\in [N]} \mathds{I}[f(x_i)=0, y_i=1, \hz_i=\ell]}
  {\sum_{i\in [N]}\mathds{I}[y_i=1, \hz_i=\ell]}
  \leq \frac{6\eta}{\min_{\ell\in [p]}{p_{1\ell}}+3\eta}
\end{align*}
We solve Problem~\eqref{eq:kl_21_algorithm} using the standard implementation of the SLSQP solver in SciPy~\cite{scipy}; with the same parameters \cref{sec:slsqp_params}.

\subsubsection{Computational Resources Used}\label{sec:compute_used}
All simulations were run on a \texttt{t3a.2xlarge} instance, with 8 vCPUs and 32 Gb RAM, on Amazon's Elastic Compute Cloud (EC2).
\subsection{Visualization of Synthetic Data}\label{sec:additional_empirics}

\begin{figure}[h!]
  \centering
  \hspace{-4mm}\subfigure[Samples with protected attribute $Z=1$\label{fig:syndata:1}]{
  \begin{tikzpicture}
    \node (image) at (0,0) {\includegraphics[width=0.48\linewidth, trim={0cm 0cm 0cm 1cm},clip]{\folder/synthetic-data-1.pdf}};
  \end{tikzpicture}
  }
  \hspace{-3mm}
  \subfigure[Samples with protected attribute $Z=2$\label{fig:syndata:2}]{
  \begin{tikzpicture}
    \node (image) at (0,0) {\includegraphics[width=0.48\linewidth, trim={0cm 0cm 0cm 1cm},clip]{\folder/synthetic-data-2.pdf}};
  \end{tikzpicture}
  }
  \hspace{-4mm}
  \caption{
  \ifconf\small\fi
  {\em {Samples from the synthetic data (\cref{sec:empirics}).}}
  \cref{fig:syndata:1} shows the samples with protected attribute $Z=1$ and \cref{fig:syndata:2} shows the samples with protected attribute $Z=2$.
  We consider synthetic data with 1,000 samples with two equally-sized protected groups; each sample has a binary protected attribute, two continuous features $x_1,x_2\in \R$, and a binary label.
  Conditioned on the protected attribute, ($x_1,x_2$) are independent draws from a mixture of four 2D Gaussians. %
  The distribution of the labels and features is such that 1) the protected group $Z=1$ has a higher likelihood of a positive label than the protected group $Z=2$, and 2) \uncons{} {has a near-perfect accuracy ($>99\%$) and a statistical rate of $0.8$ on $S$.}
  %
  }
  \label{fig:synthetic_dataset}
\end{figure}

\subsection{Additional Empirical Results}

\subsubsection{Simulation Varying the Size of the Protected Groups on Synthetic Data}\label{sec:vary_size}
In this simulation, we study the effect of varying the relative sizes of the protected groups in the synthetic data on the results of the simulation in  \ifexpand\cref{sec:empirics:synthetic}\else\cref{sec:empirics}\fi.
We vary the size of one protected group $\alpha$ from $0.15$ to $0.85$ (and the size of the other from $0.85$ to $0.15$.\footnote{15\% and 85\% are (roughly) the smallest and largest group sizes for which $f^\star$ has a sufficient number of true negatives to use $A_{\mathrm{TN}}$ with $\eta=5$\%.}
Recall that the synthetic data in \ifexpand\cref{sec:empirics:synthetic}\else\cref{sec:empirics}\fi\ is generated by sampling 500 samples with $Z=1$ from a distribution $\cD_1$ and sampling 500 samples with $Z=2$ from different distribution $\cD_2$ (where both $\cD_1$ and $\cD_2$ are mixtures of four 2D Gaussians).
(See \cref{fig:synthetic_dataset} for a plot of samples from $\cD_1$ and $\cD_2$).
To vary the size of the protected groups, given an $\alpha\in [0,1]$,
we draw $1000\cdot \alpha$ samples with $Z=1$ from $\cD_1$ and
$1000\cdot (1-\alpha)$ samples with $Z=2$ from $\cD_2$.
For each value of $\alpha$, we rerun the simulation from \ifexpand\cref{sec:empirics:synthetic}\else\cref{sec:empirics}\fi\  on the resulting data.

{\bf Results.}
We observe that varying $\alpha$ from from 15\% to 70\% does not change the accuracy and statistical rate of the algorithms significantly (the accuracy changed by <5\% and the statistical rate changed by <1\%).
However, as $\alpha$ approaches 85\%, we observe that \chkv{}\textbf{’s} accuracy reduced by 15\% (to $\approx$0.68) and its statistical rate increased by 4\% (to $\approx$0.86). In contrast, \errtol{} continued to have high accuracy (>0.99) and statistical rate ($\geq$0.77), without large changes in either (its accuracy changed by <1\% and statistical rate changed by <2\%).
Overall, we observe that prior approaches can fail to satisfy their guarantees under the $\eta$-\Ham{} model, and their deviation from the accuracy guarantee increases as the size of one protected group decreases (i.e., when $\alpha$ approaches 0.85).

\renewcommand{\folder}{./figures/synthetic-vary-eta}
\begin{figure}[t]
  \centering
  \tikzmath{\s = 1.2;}
  \hspace{-5mm}\includegraphics[width=0.9\linewidth, trim={0cm 0cm 5cm 0cm},clip]{\folder/../legend-medium.pdf}
  \par \vspace{-4mm}
  \subfigure[$A_{\rm TN}$ with $\eta=1.75\%$.\label{fig:vary_size:a}]{
  \begin{tikzpicture}
    \node (image) at (0,-0.17) {\includegraphics[width=0.43\linewidth, trim={1.4cm 0cm 1.5cm 1cm},clip]{\folder/accuracy.pdf}};
    \node[rotate=90, fill=white] at (-3.55,0) {\white{\large aaaaaaaaaaaaa}};
    \node[rotate=90] at (-3.9, 0) {Accuracy};
    \draw[draw=white, fill=white] (-2, -2.2) rectangle ++(4,0.35);
    \draw[draw=white, fill=white] (-2, 1.45) rectangle ++(4,0.25);
    \node[rotate=0, fill=white] at (0+0.05, 0.15-3.3*5/8-0.1+0.15)  {Size of protected group ($\Pr[Z=1]$)};
  \end{tikzpicture}
  }
  \subfigure[$A_{\rm FN}$ with $\eta=1.75\%$.\label{fig:vary_size:b}]{
  \begin{tikzpicture}
    \node (image) at (0,-0.17) {\includegraphics[width=0.43\linewidth, trim={1.4cm 0cm 1.5cm 1cm},clip]{\folder/statistical-rate.pdf}};
    \node[rotate=90, fill=white] at (-3.75,0) {\white{\large aaaaaaaaaaaaa}};
    \node[rotate=90] at (-3.9, 0) {Accuracy};
    \draw[draw=white, fill=white] (-2, -2.2) rectangle ++(4,0.35);
    \draw[draw=white, fill=white] (-2, 1.45) rectangle ++(4,0.25);
    \node[rotate=0, fill=white] at (0+0.05, 0.15-3.3*5/8-0.1+0.15)  {Size of protected group ($\Pr[Z=1]$)};
  \end{tikzpicture}
  }
  %
  \caption{
  \ifconf\small\fi
  {\em {Simulation varying the size of the protected groups on the synthetic data (see \cref{sec:vary_size}):}}
  We vary the perturbation rate $\eta$ over $\inbrace{0,0.03,0.05}$ and the size $\alpha$ of the protected group denoted by $Z=1$ from 15\% to 85\% (see \cref{sec:vary_size}).
  For each pair of $\alpha$ and $\eta$, we run \chkv{} and \errtol{} with $\tau=0.8$ and report the average accuracy and statistical rate plots \cref{fig:vary_size:a,fig:vary_size:b} (respectively).
  (In the plot, the color of the line identifies the algorithm and its style identifies the value of $\eta$).
  The $y$-axis depicts accuracy or statistical rate and the $x$-axis depicts $\alpha$; both the accuracy and the statistical rate are computed over the unperturbed test set (we refer the reader to  \cref{sec:empirics} for further details).
  We observe that prior approaches can fail to satisfy their guarantees under the $\eta$-\Ham{} model, and their deviation from the accuracy guarantee increases as the size of one protected group decreases (i.e., when $\alpha$ approaches 0.85).
  Error bars represent the standard error of the mean over 100 iterations.
  }

  \label{fig:vary_size}
\end{figure}

\subsubsection{Simulations with Stochastic Perturbations on the COMPAS Data}\label{sec:simulations_with_flipping_noise}
In this simulation, we evaluate our framework under stochastic perturbations on the COMPAS data, and show that it has a similar statistical rate and accuracy trade-off as approaches tailored for stochastic perturbations (e.g., \cite{LamyZ19} and \cite{celis2020fairclassification}).
Concretely, we consider a binary protected attribute and the perturbation model studied by \cite{celis2020fairclassification}:
Suppose we have a single binary protected attribute (i.e., $p=2$).
Given values $\eta_1,\eta_2\in [0,1]$, the protected attributes of each item with protected attribute $Z=1$ change to $\hZ=2$ with probability $\eta_1$ (independently), and similarly, the protected attributes of each item with protected attribute $Z=2$ change to $\hZ=1$ with probability $\eta_2$ (independently).
We consider the COMPAS data as preprocessed by \cite{ibm_aif360}, and consider gender (coded as binary) as the protected attribute.
We consider four values of $(\eta_1,\eta_2)$: $(0\%,0\%)$, $(0\%,3.5\%)$, $(3.5\%,0\%)$, and $(3.5\%, 3.5\%)$.

\paragraph{Results.}
The accuracy and statistical rate of \errtol{} and baselines for $\tau\in [0.7,1]$ and $\delta_L\in [0,0.1]$, averaged over 100 iterations, are reported in \cref{fig:stochastic_noise}.
For all settings of $\eta_1$ and $\eta_2$, \errtol{} attains a better statistical rate than the unconstrained classifier (\uncons{}) for a small trade-off in accuracy.
Further, \errtol{} has a similar statistical rate and accuracy trade-off as \chkv{} and \lmzv{}{\bf .}
In all cases, \akm{} a better statistical rate and accuracy trade-off than all other approaches.
Understanding why \akm{} has a better trade-off than other approaches with flipping noise requires further study.
But it is likely because \akm{} does not need to make pessimistic assumptions on the data (as it does not account for perturbations) and outputs a classifier from a richer hypothesis class compared to other approaches.
However, we recall that, when the perturbations are adversarial, \errtol{} has a better accuracy and statistical rate trade-off than \akm{} (see \cref{fig:adversarial_noise_main_body}).

\renewcommand{\folder}{./figures/stochastic-noise}
\begin{figure}[t]
  \centering
  \hspace{-5mm}\includegraphics[width=0.88\linewidth, trim={0cm 0cm 0cm 0cm},clip]{\folder/../legend-medium.pdf}
  \par\vspace{-2.5mm}
  \centering
  \par
  \hspace{-7mm}\subfigure[$(\eta_1,\eta_2)=(3.5\%,3.5\%)$\label{5a}]{
  \begin{tikzpicture}
    \node (image) at (0,-0.17) {\includegraphics[width=0.43\linewidth, trim={1.4cm 0cm 1.5cm  1cm},clip]{\folder/11-flip.pdf}};
    \node[rotate=90, fill=white] at (-3.73,0) {\white{\large aaaaaaaaaaaaa}};
    \node[rotate=90] at (-3.9, 0) {Accuracy};
    \draw[draw=white, fill=white] (-2, -2.2) rectangle ++(4,0.35);
    \draw[draw=white, fill=white] (-3, 1.45) rectangle ++(6,0.25);
    \node[rotate=0, fill=white] at (0.05, -2.2)  {Statistical rate};
    \node[rotate=0] at (2.35,-2.2) {\textit{\small(more fair)}};
    \node[rotate=0] at (-2.25,-2.2) {\textit{\small(less fair)}};
  \end{tikzpicture}
  }
  \hspace{8mm}
  \hspace{-12mm}\subfigure[$(\eta_1,\eta_2)=(3.5\%,0\%)$\label{5b}]{
  \begin{tikzpicture}
    \node (image) at (0,-0.17) {\includegraphics[width=0.43\linewidth, trim={1.4cm 0cm 1.5cm  1cm},clip]{\folder/10-flip.pdf}};
    \node[rotate=90, fill=white] at (-3.73,0) {\white{\large aaaaaaaaaaaaa}};
    \node[rotate=90] at (-3.9, 0) {Accuracy};
    \draw[draw=white, fill=white] (-2, -2.2) rectangle ++(4,0.35);
    \draw[draw=white, fill=white] (-3, 1.45) rectangle ++(6,0.25);
    \node[rotate=0, fill=white] at (0.05, -2.2)  {Statistical rate};
    \node[rotate=0] at (2.35,-2.2) {\textit{\small(more fair)}};
    \node[rotate=0] at (-2.25,-2.2) {\textit{\small(less fair)}};
  \end{tikzpicture}
  }
  \hspace{-5mm}
  \par
  \hspace{-7mm}\subfigure[$(\eta_1,\eta_2)=(0\%,3.5\%)$\label{5c}]{
  \begin{tikzpicture}
    \node (image) at (0,-0.17) {\includegraphics[width=0.43\linewidth, trim={1.4cm 0cm 1.5cm  1cm},clip]{\folder/01-flip.pdf}};
    \node[rotate=90, fill=white] at (-3.73,0) {\white{\large aaaaaaaaaaaaa}};
    \node[rotate=90] at (-3.9, 0) {Accuracy};
    \draw[draw=white, fill=white] (-2, -2.2) rectangle ++(4,0.35);
    \draw[draw=white, fill=white] (-3, 1.45) rectangle ++(6,0.25);
    \node[rotate=0, fill=white] at (0.05, -2.2)  {Statistical rate};
    \node[rotate=0] at (2.35,-2.2) {\textit{\small(more fair)}};
    \node[rotate=0] at (-2.25,-2.2) {\textit{\small(less fair)}};
  \end{tikzpicture}
  }
  \hspace{8mm}
  \hspace{-12mm}\subfigure[$(\eta_1,\eta_2)=(0\%,0\%)$\label{5d}]{
  \begin{tikzpicture}
    \node (image) at (0,-0.17) {\includegraphics[width=0.43\linewidth, trim={1.4cm 0cm 1.5cm  1cm},clip]{\folder/00-flip.pdf}};
    \node[rotate=90, fill=white] at (-3.73,0) {\white{\large aaaaaaaaaaaaa}};
    \node[rotate=90] at (-3.9, 0) {Accuracy};
    \draw[draw=white, fill=white] (-2, -2.2) rectangle ++(4,0.35);
    \draw[draw=white, fill=white] (-3, 1.45) rectangle ++(6,0.25);
    \node[rotate=0, fill=white] at (0.05, -2.2)  {Statistical rate};
    \node[rotate=0] at (2.35,-2.2) {\textit{\small(more fair)}};
    \node[rotate=0] at (-2.25,-2.2) {\textit{\small(less fair)}};
  \end{tikzpicture}
  }
  \hspace{-5mm}
  \caption{
  \ifconf\small\fi
  {\em {Simulations on COMPAS data with flipping noise (see \cref{sec:simulations_with_flipping_noise}):}}
  Perturbed data is generated using the flipping noise model of \cite[Definition 2.3]{celis2020fairclassification} with $\eta_1$ and $\eta_2$ mentioned with each subfigure.
  All algorithms are run the perturbed data varying the fairness parameters ($\tau\in [0.7,1]$ and $\delta_L\in [0,0.1]$).
  The $y$-axis depicts the accuracy and the $x$-axis depicts statistical rate; both values are computed over the unperturbed test set.
  We observe that in each case, our approach, \errtol{}, has a similar fairness-accuracy trade-off as approaches tailored for flipping noise \cite{celis2020fairclassification,LamyZ19}
  Error bars represent the standard error of the mean over 100 iterations.
  }

  \label{fig:stochastic_noise}
\end{figure}

\subsubsection{Simulations with Adversarial Perturbations and False-Positive Rate on Compas Data}\label{sec:simulations_with_false_positive_rate}
In this simulation, we evaluate our framework for false-positive rate fairness metric with adversarial perturbations against state-of-the-art fair classification frameworks for false-positive rate under stochastic perturbations: \wangsw{} \cite{wang2020robust}, \wangdro{} \cite{wang2020robust}, and \chkvfpr{} \cite{celis2020fairclassification}.
We also compare against \kl{} \cite{konstantinov2021fairness}, which controls for true-positive rate (TPR) in the presence of a Malicious adversary, and \AKM{} \cite{awasthi2020equalized} that is the post-processing method of \cite{hardt2016equality} and controls for equalized-odds fairness constraints.

\newcommand{\displayAllAlgo}{}
\renewcommand{\folder}{./figures/adversarial-noise-FPR-final-100-iter/}
\begin{figure}[t]
  \centering
  \tikzmath{\s = 1.2;}
  \hspace{-5mm}\includegraphics[width=0.9\linewidth, trim={0cm 0cm 5cm 0cm},clip]{\folder/../legend-long.png}
  \par \vspace{-2mm}
  \subfigure[$A_{\rm TN}$ with $\eta=1.75\%$.\label{fig:fpr_main_results:a}]{
  \begin{tikzpicture}
    \node (image) at (0,-0.17) {\includegraphics[width=0.43\linewidth, trim={1.4cm 0cm 1.5cm 1cm},clip]{\folder/00m-broder.pdf}};
    \node[rotate=90, fill=white] at (-3.71,0) {\white{\large aaaaaaaaaaaaa}};
    \node[rotate=90] at (-3.9, 0) {Accuracy};
    \draw[draw=white, fill=white] (-2, -2.2) rectangle ++(4,0.35);
    \draw[draw=white, fill=white] (-3, 1.45) rectangle ++(6,0.25);
    \node[rotate=0, fill=white] at (0.05, -2.2)  {False positive rate};
    \node[rotate=0] at (2.45,-2.2) {\textit{\small(more fair)}};
    \node[rotate=0] at (-2.35,-2.2) {\textit{\small(less fair)}};
  \end{tikzpicture}
  }
  %
  %
  \subfigure[$A_{\rm FN}$ with $\eta=1.75\%$.\label{fig:fpr_main_results:b}]{
  \begin{tikzpicture}
    \node (image) at (0,-0.17) {\includegraphics[width=0.43\linewidth, trim={1.4cm 0cm 1.5cm 1cm},clip]{\folder/01m-broder.pdf}};
    \node[rotate=90, fill=white] at (-3.71,0) {\white{\large aaaaaaaaaaaaa}};
    \node[rotate=90] at (-3.9, 0) {Accuracy};
    \draw[draw=white, fill=white] (-2, -2.2) rectangle ++(4,0.35);
    \draw[draw=white, fill=white] (-3, 1.45) rectangle ++(6,0.25);
    \node[rotate=0, fill=white] at (0.05, -2.2)  {False positive rate};
    \node[rotate=0] at (2.45,-2.2) {\textit{\small(more fair)}};
    \node[rotate=0] at (-2.35,-2.2) {\textit{\small(less fair)}};
  \end{tikzpicture}
  }
  \caption{
  \ifconf\small\fi
  {\em {Simulations on COMPAS with false-positive rate (see \cref{sec:simulations_with_false_positive_rate}):}}
  Perturbed data is generated using adversaries $A_{\rm TN}$ (a) and $A_{\rm FN}$ as described in \cref{sec:more_empirical_results} with $\eta=1.75\%$.
  All algorithms are run on the perturbed data by varying the fairness parameters ($\tau\in [0.7,1]$).
  The $y$-axis depicts accuracy and the $x$-axis depicts false-positive rate computed over the unperturbed test set. %
  We observe that for all adversaries our approach, \errtol{}, attains a better fairness than the unconstrained classifier (\uncons{}) with a natural trade-off in accuracy.
  Further, \errtol{} achieves a higher fairness than each baseline except \chkv{} on at least one of (a) or (b).
  \chkv{} attains a higher fairness but has a low accuracy ($\sim 55\%$) (because it outputs the always-positive classifier).
  Error bars represent the standard error of the mean over 100 iterations.
  }


  \label{fig:fpr_main_results}
\end{figure}

Similar to the simulation with statistical rate fairness metric (see \cref{sec:empirics}), \errtol{} is given the perturbation rate $\eta$.
To advantage the baselines in our comparison, we provide them with more information as needed by their approaches:
\begin{enumerate}[itemsep=\itemsepINTERNAL,leftmargin=\leftmarginINTERNAL]
  \item \wangsw{} is given both the true and perturbed protected attributes as input; it internally generates the auxiliary data needed by \cite{wang2020robust}'s ``soft-weights'' approach.
  \item \wangdro{} is given both the true and perturbed protected attributes as input; it internally computes the total variation distances needed by \cite{wang2020robust}'s distributionally robust approach.
  \item \chkvfpr{} is given group-specific perturbation rates: $\forall$ $\ell\in[p]$, $\eta_\ell \coloneqq \Pr_D[\hZ\neq Z\mid Z=\ell]$.
  \item \kl{} is given $\eta$ and for each $\ell\in [p]$, the probability $\Pr_D[Z=\ell,Y=1]$; where $D$ is the empirical distribution of $S$.
\end{enumerate}
\errtol{} implements Program~\ref{prog:errtolerant_extnd} which requires estimates of $\lambda_\ell$ and $\gamma_\ell$ for all $\ell \in [p]$.
As a heuristic, we set $$\gamma_\ell=\lambda_\ell\coloneqq \Pr\nolimits_{\hD}[Y=1,Z=\ell],$$ where $\smash{\hD}$ is the empirical distribution of $\hS$.
More generally, for a general linear-fractional fairness metric, given by $\cE$ and $\cE^\prime$, the heuristic is to set $$\gamma_\ell=\lambda_\ell\coloneqq \Pr\nolimits_{\hD}[\cE^\prime(Y), Z=\ell],$$ where $\smash{\hD}$ is the empirical distribution of $\widehat{S}$ and $Y$ is the label in perturbed data $\widehat{S}$.

\paragraph{Adversaries.}
We consider the same adversaries as in \cref{sec:empirics} (which we call $A_{\rm TN}$ and $A_{\rm FN}$).
We consider a perturbation rate of $\eta=1.75\%$.
Again, $1.75\%$ is roughly the smallest value for $\eta$ necessary to ensure that the optimal fair classifier $f^\star$ for $\tau=0.9$ (on $S$) has a false-positive rate less than the false-positive rate of \uncons{} on the perturbed data.
This ``threshold'' perturbation rate is smaller for false-positive rate than for statistical rate because to the number of false positives of a classifier $f$ is smaller than the number of positive predictions of $f$; hence, an adversary perturbing the same number of samples, perturbs a larger fraction of false positives than the fraction of positive predictions of $f$.

\paragraph{Results.}
{The accuracy and statistical rate of \errtol{} and baselines for $\tau\in [0.7,1]$ are reported in \cref{fig:fpr_main_results}.}
For both adversaries, \errtol{} attains a better false-positive rate than the unconstrained classifier (\uncons{}) for a small trade-off in accuracy.
For adversary $A_{\rm TN}$ (\cref{fig:fpr_main_results}(a)), \uncons{} has false-positive rate ($0.75$) and accuracy ($0.65$).
In contrast, \errtol{} achieves a significantly higher false-positive rate ($0.92$) with  accuracy {($0.60$)}.
{In comparison, \chkvfpr{} has a higher false-positive rate ($0.97$) but lower accuracy {($0.55$)}; this accuracy is close to the accuracy of the all always-positive classifier.}
Compared to \errtol{}, \wangsw{} has a higher accuracy ($0.64$) but a lower false-positive rate ($0.87$), and
other baselines have an even lower false-positive rate ($\leq 0.82$) with accuracy comparable to \wangsw{}{\bf .}
For adversary $A_{\rm FN}$ (\cref{fig:adversarial_noise_main_body}(b)), \uncons{} has false-positive rate {($0.75$)} and accuracy ($0.67$),  while \errtol{} has a high higher false-positive rate {($0.87$)} and accuracy {($0.61$)}.
This significantly outperforms \wangsw{} which has false-positive rate {($0.76$)} and accuracy {($0.64$)}.
\akm{} achieves the higher false-positive rate ($0.92$) with a natural reduction in accuracy to {$0.59$}.
\chkv{} achieves the higher false-positive rate ($0.94$) but with a lower accuracy {($0.55$)}.
Meanwhile, \wangdro{} has a comparable false-positive rate ($0.84$) and a comparable accuracy at the same false-positive rate ($0.65$), and
\kl{} has a lower false-positive rate ($0.81$) and lower accuracy ($0.62$).

\renewcommand{\folder}{./figures/varying-eta}
\begin{figure}[t!]
  \centering
  \includegraphics[width=0.8\linewidth, trim={0cm 0cm 0cm 0cm},clip]{\folder/../legend-medium.pdf}
  \par \vspace{-2.5mm}
  \begin{tikzpicture}
    \node (image) at (0,-0.17) {\includegraphics[width=0.8\linewidth, trim={1.4cm 0cm 1.5cm 1cm},clip]{\folder/adult-afp-eta100.pdf}};
    \node[rotate=90, fill=white] at (-6.55, 0) {\Large Accuracy};
    \draw[draw=white, fill=white] (-2, -3.55) rectangle ++(4,0.35);
    \node[rotate=0, fill=white] at (0.1, -3.5)  {\Large Statistical rate};
    \node[rotate=0] at (4.55,-3.55) {\textit{\large(more fair)}};
    \node[rotate=0] at (-4.5+0.1,-3.55) {\textit{\large(less fair)}};
  \end{tikzpicture}
  %
  %
  \caption{
  \ifconf\small\fi
  {\em {Simulations on Adult data with gender as the protected attribute:}}
  Let $A_{\rm FP}$ be the adversary that is the same as $A_{\rm TN}$, except that instead of perturbing the true negatives of $f^\star$, it perturbs the false positives of $f^\star$.
  The perturbed data is generated using adversary $A_{\rm FP}$ (as described in \cref{sec:simulations_with_adult}) with $\eta=1\%$.
  All algorithms are run on the perturbed data varying the fairness parameters ($\tau\in [0.7,1]$ and $\delta_L\in [0,0.1]$).
  The $y$-axis depicts accuracy and the $x$-axis depicts statistical rate (SR); both values are computed over the unperturbed test set.
  (Error bars represent the standard error of the mean over 100 iterations.)
  We observe that for our approach \errtol{} attains a better statistical rate than the unconstrained classifier \uncons{} with a small natural  trade-off in accuracy.
  Further, \errtol{} achieves a better fairness-accuracy trade-off than \kl{} and achieves a similar fairness-accuracy trade-off as \akm{}\textbf{,} \chkv{}\textbf{,} \lmzv{}{\bf .}
  }
  \label{fig:adult:adversarial_noise_appendix}
\end{figure}
\subsubsection{Simulations with Adversarial Perturbations on the Adult Data}\label{sec:simulations_with_adult}
In this simulation, we evaluate our framework on the Adult data~\cite{adult} with the statistical rate fairness metric.
The Adult data consists of rows corresponding to approximately 45,000 individuals, with 18 binary features and a binary class label that is 1 is the individual has an income greater than \$50,000 USD and 0 otherwise.
Among the binary features, we use gender as the protected attribute.

\renewcommand{\folder}{./figures/varying-eta/}
\begin{figure}[h!]
  \centering
  \hspace{-5mm}\includegraphics[width=0.88\linewidth, trim={0cm 0cm 0cm 0cm},clip]{\folder/../legend-medium.pdf}
  \par\vspace{-2.5mm}
  \centering
  \par
  \subfigure[$\eta=0.5\%$\label{vary_etaa}]{
  \begin{tikzpicture}
    \node (image) at (0,-0.17) {\includegraphics[width=0.50\linewidth, trim={1.4cm 0cm 1.5cm  1cm},clip]{\folder/adult-afp-eta050.pdf}};
    \node[rotate=90, fill=white] at (-3.75*5/8*0.58/0.43-1.14*0.58/0.43+0.45*0.58/0.43,0) {\white{\Large aaaaaaaaaaaaa}};
    \node[rotate=90] at (-3.75*5/8*0.58/0.43-1.15*0.58/0.43+0.5*0.58/0.43,0) {Accuracy};
    \draw[draw=white, fill=white] (-2*0.58/0.43, -3.3*5/8*0.58/0.43+0.36*0.58/0.43-0.05*0.58/0.43-0.01) rectangle ++(4*0.58/0.43,0.15*0.58/0.43+0.15);
    \draw[draw=white, fill=white] (-2*0.58/0.43, 3.3*5/8*0.58/0.43-0.66*0.58/0.43-0.17*0.58/0.43+0.02) rectangle ++(4*0.58/0.43,0.15*0.58/0.43+0.13);
    \node[rotate=0, fill=white] at (0+0.05*0.58/0.43, 0.15*0.58/0.43-3.3*5/8*0.58/0.43-0.1*0.58/0.43+0.15*0.58/0.43-0.1+0.10+0.04)  {Statistical rate};
    \node[rotate=0] at (4*5/8*0.58/0.43-0.15*0.58/0.43,0.15*0.58/0.43-3.3*5/8*0.58/0.43-0.1*0.58/0.43+0.15*0.58/0.43+0.06+0.04) {\textit{\small(more fair)}};\node[rotate=0] at (-3.75*5/8*0.58/0.43+0.2*0.58/0.43, 0.15*0.58/0.43-3.3*5/8*0.58/0.43-0.1*0.58/0.43+0.15*0.58/0.43+0.06+0.04) {\textit{\small(less fair)}};
  \end{tikzpicture}
  }
  \par
  %
  \subfigure[$\eta=1\%$\label{vary_etac}]{
  \begin{tikzpicture}
    \node (image) at (0,-0.17) {\includegraphics[width=0.50\linewidth, trim={1.4cm 0cm 1.5cm  1cm},clip]{\folder/adult-afp-eta100.pdf}};
    \node[rotate=90, fill=white] at (-3.75*5/8*0.58/0.43-1.14*0.58/0.43+0.45*0.58/0.43,0) {\white{\Large aaaaaaaaaaaaa}};
    \node[rotate=90] at (-3.75*5/8*0.58/0.43-1.15*0.58/0.43+0.5*0.58/0.43,0) {Accuracy};
    \draw[draw=white, fill=white] (-2*0.58/0.43, -3.3*5/8*0.58/0.43+0.36*0.58/0.43-0.05*0.58/0.43-0.01) rectangle ++(4*0.58/0.43,0.15*0.58/0.43+0.15);
    \draw[draw=white, fill=white] (-2*0.58/0.43, 3.3*5/8*0.58/0.43-0.66*0.58/0.43-0.17*0.58/0.43+0.02) rectangle ++(4*0.58/0.43,0.15*0.58/0.43+0.13);
    \node[rotate=0, fill=white] at (0+0.05*0.58/0.43, 0.15*0.58/0.43-3.3*5/8*0.58/0.43-0.1*0.58/0.43+0.15*0.58/0.43-0.1+0.10+0.04)  {Statistical rate};
    \node[rotate=0] at (4*5/8*0.58/0.43-0.15*0.58/0.43,0.15*0.58/0.43-3.3*5/8*0.58/0.43-0.1*0.58/0.43+0.15*0.58/0.43+0.06+0.04) {\textit{\small(more fair)}};\node[rotate=0] at (-3.75*5/8*0.58/0.43+0.2*0.58/0.43, 0.15*0.58/0.43-3.3*5/8*0.58/0.43-0.1*0.58/0.43+0.15*0.58/0.43+0.06+0.04) {\textit{\small(less fair)}};
  \end{tikzpicture}
  }
  \par
  %
  \subfigure[$\eta=1.5\%$\label{vary_etad}]{
  \begin{tikzpicture}
    \node (image) at (0,-0.17) {\includegraphics[width=0.50\linewidth, trim={1.4cm 0cm 1.5cm  1cm},clip]{\folder/adult-afp-eta150.pdf}};
    \node[rotate=90, fill=white] at (-3.75*5/8*0.58/0.43-1.14*0.58/0.43+0.45*0.58/0.43,0) {\white{\Large aaaaaaaaaaaaa}};
    \node[rotate=90] at (-3.75*5/8*0.58/0.43-1.15*0.58/0.43+0.5*0.58/0.43,0) {Accuracy};
    \draw[draw=white, fill=white] (-2*0.58/0.43, -3.3*5/8*0.58/0.43+0.36*0.58/0.43-0.05*0.58/0.43-0.01) rectangle ++(4*0.58/0.43,0.15*0.58/0.43+0.15);
    \draw[draw=white, fill=white] (-2*0.58/0.43, 3.3*5/8*0.58/0.43-0.66*0.58/0.43-0.17*0.58/0.43+0.02) rectangle ++(4*0.58/0.43,0.15*0.58/0.43+0.13);
    \node[rotate=0, fill=white] at (0+0.05*0.58/0.43, 0.15*0.58/0.43-3.3*5/8*0.58/0.43-0.1*0.58/0.43+0.15*0.58/0.43-0.1+0.10+0.04)  {Statistical rate};
    \node[rotate=0] at (4*5/8*0.58/0.43-0.15*0.58/0.43,0.15*0.58/0.43-3.3*5/8*0.58/0.43-0.1*0.58/0.43+0.15*0.58/0.43+0.06+0.04) {\textit{\small(more fair)}};\node[rotate=0] at (-3.75*5/8*0.58/0.43+0.2*0.58/0.43, 0.15*0.58/0.43-3.3*5/8*0.58/0.43-0.1*0.58/0.43+0.15*0.58/0.43+0.06+0.04) {\textit{\small(less fair)}};
  \end{tikzpicture}
  }
  %
  \caption{
  \ifconf\small\fi
  {\em {Simulations on Adult data with adversarial noise on varying $\eta$ (see \cref{sec:simulations_with_flipping_noise}):}}
  In this the results obtained by repeating the simulation in \cref{sec:simulations_with_adult} by varying $\eta$ over $\inbrace{0.5\%, 1\%, 1.5\%}$.
  The $y$-axis depicts the accuracy and the $x$-axis depicts statistical rate; both values are computed over the unperturbed test set.
  (Error bars represent the standard error of the mean over 50 iterations.)
  We observe that for all values of $\eta\in \inbrace{0.5\%, 1\%, 1.5\%}$, \errtol{} achieves a higher statistical rate than the unconstrained classifier \uncons{} at a natural tradeoff to accuracy and \errtol{} has a similar (or better) fairness-accuracy tradeoff than other baselines.
  The only exception is \akm{}, which has a better fairness-accuracy tradeoff than \errtol{} at $\eta=1.5\%$.
  For further discussion of the results, we refer the reader to \cref{rem:vary_eta}.
  }


  \label{fig:adult:vary_eta}
\end{figure}

\paragraph{Baselines.}
Like the simulation with the statistical rate fairness metric on the COMPAS data (\cref{sec:empirics}), we compare our framework with:
\begin{enumerate}[itemsep=\itemsepINTERNAL,leftmargin=\leftmarginINTERNAL]
  \item State-of-the-art fair classification frameworks for statistical rate under stochastic perturbations: \lmzv{} \cite{LamyZ19} and \chkv{} \cite{celis2020fairclassification}.
  \item \kl{} \cite{konstantinov2021fairness}, which controls for true-positive rate in the presence of a Malicious adversary.
  \item \AKM{} \cite{awasthi2020equalized} that is the post-processing method of \cite{hardt2016equality} and controls for equalized-odds fairness.
  \item The optimal unconstrained classifier, \uncons{}\textbf{;} this is the same as \cite{bshouty2002pac}'s algorithm for PAC-learning in the Nasty Sample Noise Model without fairness constraints.
\end{enumerate}

\paragraph{Adversaries and implementation details.}
We set $\eta=1\%$ and consider an adversary $A_{\rm FP}$ that is the same as $A_{\rm TN}$, except that instead of perturbing the true negatives of $f^\star$, it perturbs the false positives of $f^\star$.\footnote{We were unable to implement the analogous adversary $A_{\rm TP}$, that perturbs the true positives of $f^\star$, because on the Adult data, $f^\star$ does not have sufficient number of true positives with $Z=1$.}
Note that positive labels are rare in the Adult data (e.g., less than 4\% of the total samples are positive and annotated as women).
Thus, an adversary with $\eta\geq 4\%$ can remove all positive samples from one protected group--thereby, changing the statistical rate on the perturbed data by an arbitrary amount.
This suggests that corrupting the positive labels is a hard case for learning fair classifiers on the Adult data.
The adversary tries to reduce the performance of $f^\star$ on $Z=1$ (which is the rarer than $Z=2$) in $\hS$ by removing the samples that $f^\star$ predicts as positive.
Thus, decreasing $f^\star$'s statistical rate on $\hS$.
All other implementation details were identical to the simulation with the COMPAS data in \cref{sec:empirics}.

\paragraph{Observations.}
The accuracy and statistical rate (SR) of \errtol{} and baselines for $\tau\in [0.7,1]$ and $\delta_L\in [0,0.1]$  and averaged over 100 iterations are reported in \cref{fig:adult:adversarial_noise_appendix}.
We observe that \errtol{} attains better fairness than the unconstrained classifier \uncons{} at a small tradeoff to accuracy.
Further, \errtol{} has a fairness-accuracy tradeoff that is better than \kl{} and at least as good as \akm{}\textbf{,} \chkv{}\textbf{,} and \lmzv{}\textbf{.}
These observations are consistent with our observations on the COMPAS data in \cref{sec:empirics}.

\begin{remark}\label{rem:vary_eta}
  We also explored the effect of changing the perturbation rate $\eta$ over $\inbrace{0.5\%, 1\%, 1.5\%}$.
  We report the results from this simulation in \cref{fig:adult:vary_eta}.
  We observe that for all values of $\eta\in \inbrace{0.5\%, 1\%, 1.5\%}$, \errtol{} achieves a higher statistical rate than the unconstrained classifier \uncons{} at a natural tradeoff to accuracy and \errtol{} has a similar (or better) fairness-accuracy tradeoff than other baselines.
  The only exception is \akm{}, which has a better fairness-accuracy tradeoff than \errtol{} at $\eta=1.5\%$.
  Further, we observe that the highest statistical rate of achieved by \errtol{} decreases as $\eta$ increases and this decrease is larger than the corresponding decrease in the statistical rate of \akm{}{\bf ,} \chkv{}{\bf ,} and \lmzv{}{\bf .}
  We believe this is because $A_{\rm FP}$ is not the worst-case adversary (for this data), and hence, our approach, which ``protects'' against the worst-case adversary, outputs a ``more robust'' classifier which happens to have a low statistical rate on Adult data.
  In contrast, the prior works do not correct for the worst-case adversaries and are able to output ``less robust'' classifiers which happen to have a high statistical rate for this adversary, but may perform poorly with the worst-case adversary.
\end{remark}

\end{document}